\documentclass{article}

\usepackage{microtype}
\usepackage{graphicx}
\usepackage{subcaption}
\usepackage{booktabs} 

\usepackage{hyperref}

\usepackage[accepted]{icml2026}

\usepackage{amsmath}
\usepackage{amssymb}
\usepackage{mathtools}
\usepackage{amsthm}
\usepackage{bm}

\usepackage[capitalize,noabbrev]{cleveref}

\theoremstyle{plain}
\newtheorem{theorem}{Theorem}[section]
\newtheorem{proposition}[theorem]{Proposition}
\newtheorem{lemma}[theorem]{Lemma}

\theoremstyle{definition}
\newtheorem{definition}[theorem]{Definition}

\theoremstyle{remark}
\newtheorem{remark}[theorem]{Remark}

\usepackage[disable,textsize=tiny]{todonotes}
\newcommand{\me}[1]{\todo[color=green]{ME: #1}}
\newcommand{\pa}[1]{\todo[color=yellow]{PA: #1}}

\usepackage{customstyle}
\usepackage{enumitem}

\icmltitlerunning{Improved Analysis of the Accelerated Noisy Power Method with Applications to Decentralized PCA}

\begin{document}

\twocolumn[
  \icmltitle{Improved Analysis of the Accelerated Noisy Power Method \\
    with Applications to Decentralized PCA}



  \icmlsetsymbol{equal}{*}

  \begin{icmlauthorlist}
    \icmlauthor{Pierre Aguié}{paris}
    \icmlauthor{Mathieu Even}{mpl}
    \icmlauthor{Laurent Massoulié}{paris}
  \end{icmlauthorlist}

  \icmlaffiliation{paris}{Inria, DI ENS, PSL Research University, France}
  \icmlaffiliation{mpl}{Inria, Inserm, Université de Montpellier, France}

  \icmlcorrespondingauthor{Pierre Aguié}{pierre.aguie@inria.fr}

  \icmlkeywords{Machine Learning, ICML}

  \vskip 0.3in
]



\printAffiliationsAndNotice{}  

\begin{abstract}
  We analyze the Accelerated Noisy Power Method, an algorithm for Principal Component Analysis in the setting where only inexact matrix-vector products are available, which can arise for instance in decentralized PCA. While previous works have established that acceleration can improve convergence rates compared to the standard Noisy Power Method, these guarantees require overly restrictive upper bounds on the magnitude of the perturbations, limiting their practical applicability. We provide an improved analysis of this algorithm, which preserves the accelerated convergence rate under much milder conditions on the perturbations. We show that our new analysis is worst-case optimal, in the sense that the convergence rate cannot be improved, and that the noise conditions we derive cannot be relaxed without sacrificing convergence guarantees. We demonstrate the practical relevance of our results by deriving an accelerated algorithm for decentralized PCA, which has similar communication costs to non-accelerated methods. To our knowledge, this is the first decentralized algorithm for PCA with provably accelerated convergence. \pa{préciser bruit adversarial vs stochastique ? Intéret général de la nouvelle technique d'analyse ?}\me{oui noisy != stochastique ici

   contextualise peut être au début: pourquoi power method (PCA, c'est dit), et surtout motiver le côté 'noisy': motivations classiques puis décentralisé (même si à voir comment articuler le décentralisé).
   Puis dérouler comme tu fais (et éventuellement couper un peu si trop long certains détails)
  }\pa{done}
\end{abstract}

\section{Introduction}\label{sec:intro}

\begin{table*}
  \caption{Comparison of convergence rates and noise conditions for (Accelerated) Noisy Power Method. Notations defined in \cref{sec:notations}. $T$ is the number of iterations required to reach $\sin\theta_k(\bU_k, \bX_T) \leqslant \varepsilon$, $\bXi_t$ is the noise at iteration $t$. \citet{xu2023anpm}'s  conditions were adapted to make comparisons more direct (see \cref{appdx:xunoisecond} for more details). $\mu_k, \mu_{k+1}$ are constants verifying $\mu_i=\Omega(\log(\lambda_1/\lambda_{i})\sqrt{\lambda_k/(\lambda_k-\lambda_{k+1})})$. \textsuperscript{$\dagger$}Results for Accelerated Noisy Power Method, with optimal parameter choice $\beta = \lambda_{k+1}^2/4$.} 
  \begin{tabular*}{\textwidth}{@{\extracolsep{\fill}} l l l l c}
    \toprule
     & $T$ & Cond. on $\Vert\bU_{-k}^\top\bXi_t\Vert_2$  & Cond. on $\Vert \bU_k^\top \bXi_t \Vert_2$  \\
    \midrule
    \citet{hardt14}  & $\mathcal{O}\left(\red{\frac{\lambda_k}{\lambda_k - \lambda_{k+1}}}  \log\left(\frac{1}{\varepsilon}\right)\right)$ & $\mathcal{O}\left((\lambda_k - \lambda_{k+1})\green{\varepsilon}\right)$ & $\mathcal{O}\left((\lambda_k - \lambda_{k+1}) \, \cos\theta_k(\bU_k, \bX_t) \right)$ \\
    \addlinespace[3pt]
    \citet{xu2023anpm}\textsuperscript{$\dagger$} & $\mathcal{O}\left(\green{\sqrt{\frac{\lambda_k}{\lambda_k - \lambda_{k+1}}}}  \log\left(\frac{1}{\varepsilon}\right)\right)$ & $\tilde{\mathcal{O}}\left((\lambda_k - \lambda_{k+1})\red{\varepsilon^{\mu_{k+1}}} \right)$ & $\tilde{\mathcal{O}}\left((\lambda_k - \lambda_{k+1}) \, \cos\theta_k(\bU_k, \bX_t) \, \red{\varepsilon^{\mu_{k}}}\right)$ \\
    \addlinespace[3pt]
    \cref{th:napm} \textbf{(this paper)}\textsuperscript{$\dagger$} & $\mathcal{O}\left(\green{\sqrt{\frac{\lambda_k}{\lambda_k - \lambda_{k+1}}}} \log\left(\frac{1}{\varepsilon}\right)\right)$ & $\mathcal{O}\left((\lambda_k - \lambda_{k+1})\green{\varepsilon}\right)$ & $\mathcal{O}\left((\lambda_k - \lambda_{k+1}) \, \cos\theta_k(\bU_k, \bX_t)\right)$ \\
    \bottomrule
  \end{tabular*}
  \label{tab:comp_npm}
\end{table*}

Principal Component Analysis (PCA) is a ubiquitous task in machine learning and statistics. Given a symmetric positive semidefinite matrix $\bA \succeq \mathbf{0}$ and a target rank $k$, the goal is to estimate the subspace spanned by the $k$ leading eigenvectors of $\bA$. The Power Method \cite{golub2013matrix} is a popular algorithm for PCA that iteratively refines an estimate of this subspace, using only matrix-vector products with $\bA$. Such algorithms are called \emph{matrix-free}, in that they do not require explicit access to $\bA$ but only to the product $\bm{x}\mapsto \bA\bm{x}$. This operation can be done at a low computational cost even when $\bA$ is very large if it has a favorable structure, such as sparsity or a low-rank factorization. More recently, there has been a growing interest in studying algorithms for PCA in cases where only an approximate matrix-vector product $\bm{x} \mapsto \bA\bm{x} + \bxi$ is available, where $\bxi$ is a perturbation of \emph{bounded magnitude}.\pa{bizarre de parler de stochastic/privacy juste après? footnote peut être ?}
\me{c'est ici peut être que la nuance bruit/perturbation adversariale additive peut être explicitée?}\pa{done}
Such inexact matrix-vector products naturally arise in various practical scenarios. For instance, in private PCA \cite{chaudhuri12}, random noise is added to $\bA\bm{x}$ to ensure privacy. In stochastic or streaming PCA \cite{xu2018stochastic}, $\bA$ is the covariance matrix of a distribution, and only noisy estimates of $\bA\bm{x}$ can be computed using samples from this distribution. In decentralized PCA \cite{wai17}, a network of agents each holding a local matrix $\bA_i$ seeks to estimate the top-$k$ eigenspace of the average matrix $\bA = n^{-1}\sum_{i=1}^n \bA_i$, by only exchanging information with their neighbors, and with no central aggregating server. The agents can then only compute approximate matrix-vector products, where in that case, the approximation error $\bxi$ stems from the limited communication between the agents. We stress that in all of these examples, there is a trade-off between the magnitude of the noise $\bxi$ and the strength of the external constraints: in private PCA, stronger privacy guarantees require larger noise; in stochastic PCA, the noise magnitude increases with smaller sample sizes; in decentralized PCA, limited communication budgets lead to larger approximation errors.

\citet{hardt14} show that the Power Method with approximate matrix-vector products keeps the same convergence rate as in the noiseless case, provided that the magnitude of the noise remains $\varepsilon$-small, where $\varepsilon$ is the target precision of the estimate. The convergence rate of \citet{hardt14} is prohibitively slow for ill-conditioned problems, in which the $k$ and $(k+1)$-th eigenvalues of $\bA$ are very close. \citet{xu2023anpm} proposes an accelerated version of the Noisy Power Method that achieves a faster rate, which matches the optimal worst-case rate achievable by Krylov subspace methods in the noiseless case \cite{saad11}. This represents a significant speedup for poorly conditioned matrices, which often appear in practice \cite{musco15}. However, their analysis requires the noise magnitude to be $\varepsilon^\mu$-small, where $\mu$ is a very large power for ill-conditioned problems. Their conditions are thus significantly more restrictive than those of \citet{hardt14}, as shown in \cref{tab:comp_npm}, and render their results impractical for applications. As explained above, in practice the noise magnitude is determined by system constraints, and gets larger as the constraints get stronger. The relationship between the target precision and the magnitude of the noise leads to a trade-off between the utility of the estimate given by the algorithm and the constraints of the problem. It is as such crucial to have noise conditions that are as mild as possible, in order to allow for accurate estimates even under strong system constraints. As an example, while non-accelerated algorithms for decentralized PCA exist in the literature \cite{wai17,ye21}, we are not aware of any algorithm for decentralized PCA that converges at accelerated rates. We believe that this hole in the literature is due to the overly restrictive noise conditions required by existing analyses of accelerated methods, which prevent their application to decentralized PCA under reasonable communication budgets. Note that in these scenarios, the number of iterations required by these algorithms also leads to increased costs in terms of privacy loss, communication rounds or number of samples used. Acceleration is thus essential to improve the utility-cost trade-off of those algorithms.
\pa{also, the higher $T$, the more "costly" (for privacy, communication, number of samples etc...)}
\me{ici dans ce paragraphe peut être qu'il y a un peu trop de détails?
Dire "hardt" a noisy sous certaines conditions, mais cnovergence lente si mal conditionnée. Xu a bien une convergence rapide (et optimale), mais sous des conditions de bruit trop fortes. Sans citer exactement les taux, juste pointer vers le tableau, en éventuellement donnant + dans le related works.}

\subsection{Related Work}

\textbf{Accelerated rates for PCA.}\quad The first matrix-free method to provably achieve accelerated convergence was proposed by \citet{lanczos50} for large-scale sparse matrices. This method belongs to the class of Krylov subspace methods, described in \cite{saad11}. \citet{musco15} provide accelerated gap-independent rates for Krylov methods. Taking inspiration from \citet{polyak64}'s Heavy Ball method for convex optimization, \citet{xu2018stochastic} propose a variant of the Power Method with a momentum term that achieves acceleration for appropriate parameter choices. Similar momentum-based methods were used previously to accelerate gossip algorithms \cite{liu11}. 

\textbf{Noisy power method.}\quad \citet{hardt14} give the first analysis of the Noisy Power Method (NPM). Letting $\Delta_k$ be the relative gap between the $k$ and $(k+1)$-th eigenvalues, they show that NPM converges in $\tilde{\mathcal{O}}(\Delta_k^{-1})$\footnote{$\tilde{\mathcal{O}},\tilde{\Omega}$ and $\tilde{\Theta}$ hide logarithmic factors.} iterations, assuming that the noise $\bxi$ scales like $\mathcal{O}(\Delta_k\varepsilon)$, where $\varepsilon$ is the target precision of the estimate. This analysis was later extended by \citet{balcan2016} to account for wider gaps when the iterate $\bX_t$ has more columns than the target rank $k$. The Accelerated Noisy Power Method (ANPM), which adds a momentum term to NPM, was first introduced by \citet{mai19}. \citet{xu22} then proposed an analysis of ANPM which shows accelerated convergence in $\tilde{\mathcal{O}}(\Delta_k^{-1/2})$, but requires unnatural conditions on the noise that are hard to verify in practice. These unnatural conditions were later removed in \citet{xu2023anpm}'s analysis, which however still requires restrictive bounds on the noise's magnitude, of the form $\mathcal{O}(\Delta_k\varepsilon^\mu)$, where $\mu=\tilde{\Omega}(\Delta_k^{-1/2})$. \cref{tab:comp_npm} provides a comparison of results for noisy power methods. All of these works consider adversarial noise with bounded spectral norm, as opposed to the centered stochastic noise considered for instance in \citet{shamir16,xu2018stochastic}, which is orthogonal to our analysis. ANPM has been applied to fair PCA by \citet{zhou26}.

\textbf{Decentralized PCA.}\quad Many decentralized versions of the Power Method have been proposed \cite{kempe08,raja16,wai17}, which leverage gossip algorithms \cite{boyd06} to approximate the matrix-vector product $\bA\bm{x}$ in a decentralized manner. Such approaches require a number of communication rounds that increase with the target accuracy. \citet{ye21} propose an improved version of the decentralized power method with a communication cost that does not increase with the target precision, inspired by gradient tracking methods in decentralized optimization \cite{koloskova21}. All of these algorithms converge at a non-accelerated rate. Other approaches for decentralized PCA include decentralized versions of Oja's algorithm \cite{gang22}, which converge at a non-accelerated linear rate, and approaches based on decentralized Riemannian optimization \cite{chen21} which only guarantee convergence to a stationary point, with no guarantee of retrieving the top-$k$ eigenspace. We refer the reader to the survey of \citet{wu18} for a more complete overview of the literature on decentralized PCA. 

\me{un peu de redondance avec ce que tu dis dans les contributions. 
Suggestion possible:

1.1 Related works

1.2 Contributions: petit paragraphe insistant sur les points négatifs détaillés précédemment dans le related works, puis contribs avec tableau

Notations au début de preliminaries avant 2.1
}

\subsection{Our Contributions}

We propose a novel analysis of the Accelerated Noisy Power Method, which preserves the guarantee of an accelerated convergence rate under milder noise conditions than those given in previous works. Our contributions are as follows:

\textbf{(i)} We provide new guarantees for the Accelerated Noisy Power Method. Just like in \citet{xu2023anpm}'s work, our analysis shows that the algorithm converges at a rate linear in $1/\sqrt{\Delta_k}$. However, our noise conditions are the same as those of \citet{hardt14} for the non-accelerated Noisy Power Method, and are significantly milder than those of \citet{xu2023anpm} in cases where the eigengap $\Delta_k$ is small (see \cref{tab:comp_npm} for a comparison). \me{table ref}

\textbf{(ii)} We show that our analysis is worst-case optimal up to constants: there are instances of the algorithm which converge at a rate slower than $1/\sqrt{\Delta_k}$, and there are instances verifying relaxed versions of our noise conditions that do not converge to the target precision.\me{removed the 'relaxed', a bit too detailed, but you can put it back if you want}

\textbf{(iii)} We use our analysis to derive an accelerated algorithm for decentralized PCA, which has similar communication costs to non-accelerated methods (see \cref{tab:comp_depca} for a comparison with other decentralized algorithms for PCA). To our knowledge, this is the first decentralized algorithm for PCA with  accelerated convergence.

We stress that in our work and those of \citet{hardt14} and \citet{xu2023anpm},\me{ah ok c'est ici! peut être bouger un peu avant et mettre en valeur?} the perturbations $\bxi$ are not assumed to be stochastic, but rather to be \emph{adversarial} and of \emph{bounded norm}.

\subsection{Notations}\label{sec:notations}
\me{en section 2 plutôt}

For a positive semidefinite matrix (PSD) $\bA \succeq \mathbf{0}$, we denote by $\lambda_1 \geqslant \lambda_2 \geqslant \dots \geqslant \lambda_d \geqslant 0$ its eigenvalues in non-increasing order, and we let $\bm{u}_1, \dots,\bm{u}_d$ be corresponding orthonormal eigenvectors. For all $k\in\{1,\dots,d-1\}$, let $\bU_k := [\bm{u}_1,\dots,\bm{u}_k]$, $\bU_{-k} := [\bm{u}_{k+1},\dots,\bm{u}_d]$, $\bLambda_k := \mathrm{diag}(\lambda_1,\dots,\lambda_k)$ and $\bLambda_{-k} := \mathrm{diag}(\lambda_{k+1}, \dots,\lambda_d)$.

For all integers $d \geqslant k \geqslant 1$, we denote by $\mathrm{St}(d,k) := \{\bX \in \mathbb{R}^{d\times k} : \bX^\top \bX = \bI_k\}$ the set of $d\times k$ column-orthonormal matrices. For all $\bY\in \R^{d\times k}$, we denote by $\mathrm{QR}(\bY)$ the QR decomposition of $\bY$, which is a pair of matrices $\bX, \bR$ such that $\bY = \bX \bR$, $\bX \in \mathrm{St}(d,k)$ and $\bR \in \R^{k\times k}$ is an upper triangular matrix with non-negative diagonal coefficients. If $\bY$ is of full column rank, the QR decomposition is unique, $\bR$ is invertible, and its diagonal coefficients are positive \cite{trefethenbau1997}. $\Vert\cdot\Vert_2$ and $\Vert\cdot\Vert_\mathrm{F}$ denote the matrix spectral and Frobenius norms respectively. For a matrix $\bX$, we denote by $\sigma_\mathrm{min}(\bX)$ its smallest singular value, and by $\bX^\dagger$ its Moore-Penrose pseudoinverse.

\subsection{Principal Angles Between Subspaces}

In this work, we study algorithms that aim to approximate linear subspaces of $\R^d$. For $\bX,\bU \in \mathrm{St}(d,k)$, we quantify the distance between $\mathrm{range}(\bX)$ and $\mathrm{range}(\bU)$ with principal angles between subspaces.

\begin{definition}[\citet{knyazev02}]
  Let $k \in \{1,\dots,d-1\}$ and $\bU, \bX \in \mathrm{St}(d,k)$, and let $1 \geqslant \sigma_1 \geqslant \dots \geqslant \sigma_k \geqslant 0$ be the singular values of $\bU^\top \bX$. The principal angles $\theta_1(\bU, \bX) \leqslant \dots \leqslant \theta_k(\bU, \bX)$ between the subspaces spanned by the columns of $\bU$ and $\bX$ are defined as
  \begin{align*}
    \forall i \in \{1,\dots,k\}, \quad \theta_i(\bU, \bX) := \arccos(\sigma_i) \in [0, \pi/2].
  \end{align*}
\end{definition}

Intuitively, $\theta_k(\bU,\bX)$ is the smallest $\theta$ such that any unit vector in $\mathrm{range}(\bU)$ lies within angle $\theta$ of some unit vector in $\mathrm{range}(\bX)$. Notice that $\theta_k(\bU,\bX) = 0$ if and only if $\mathrm{range}(\bU)$ and $\mathrm{range}(\bX)$ coincide, and that the smaller $\theta_k(\bU,\bX)$ is, the closer the subspaces are. In line with previous works on noisy power methods \cite{hardt14,balcan2016,xu2023anpm}, our convergence results are expressed in terms of $\sin\theta_k(\bU_k, \bX_t)$.


\section{Accelerated Noisy Power Method}\label{sec:napm}

We now introduce the Accelerated Noisy Power Method (ANPM). We want to estimate the top-$k$ eigenspace $\bU_k$ of $\mathbf{A}\succeq \mathbf{0}$. However, we assume that we only have access to the approximate product $\bX\in\mathrm{St}(d,k)\mapsto\bA\bX+\bXi$, where $\bXi$ is a perturbation. Given a sequence of perturbations $\{\bXi_t\}_{t\geqslant 0}$, ANPM with momentum parameter $\beta > 0$ is given by
\begin{align}\label{eq:anpm}
  & \bX_0 \in \mathrm{St}(d, k), \quad \bX_1,\; \bR_1 = \mathrm{QR}\left(\frac{1}{2}\bA \bX_0 + \bXi_0\right), \nonumber \\
  & \forall t \geqslant 1, \quad \begin{cases}
    & \bY_{t+1} = \bA \bX_t - \beta \bX_{t-1}\bR_t^{-1} + \bXi_t, \\
    & \bX_{t+1}, \;\bR_{t+1} = \mathrm{QR}(\bY_{t+1}).
  \end{cases}
\end{align}
Before presenting our main result, we briefly explain the idea behind the momentum term $-\beta \bX_{t-1}\bR_t^{-1}$. In the noiseless case (i.e. $\bXi_t \equiv \mathbf{0}$), the unnormalized iterates $\bZ_t := \bX_t \bR_t \cdots \bR_1$ can be written as $\bZ_t = p_t(\bA)\bX_0$, where $p_t$ is a degree-$t$ scaled Chebyshev polynomial of the first kind, verifying $p_0(x) = 1$, $p_1(x) = x/2$ and
\begin{align}\label{eq:cheby_rec}
   p_{t+1}(x) = x p_t(x) - \beta p_{t-1}(x).
\end{align}
Compared to the monomial $x^t$ that would be obtained without momentum (i.e. with the standard Power Method), $p_t$ offers a significantly more favorable ratio between its magnitude on the interval $[-2\sqrt{\beta}, 2\sqrt{\beta}]$ and its growth outside of it. This property is formally stated in the next result:
\begin{proposition}\label{prop:min_inf_norm}
  For all $t\geqslant 1$, $p_t$ satisfies
  \begin{align*}
    p_t(x) = \arg \min_{\substack{\mathrm{deg}(p) = t \\ \mathrm{lc}(p) = 1/2}} \max_{x \in [-2\sqrt{\beta}, 2\sqrt{\beta}]} |p(x)|,
  \end{align*}
  where $\mathrm{lc}(p)$ denotes the leading coefficient of $p$.
\end{proposition}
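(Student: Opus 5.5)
The plan is to identify $p_t$ explicitly as a rescaled Chebyshev polynomial of the first kind, and then invoke (or reprove) the classical minimax property of Chebyshev polynomials via an equioscillation and root-counting argument. Let $T_t$ denote the standard Chebyshev polynomials, with $T_0(y)=1$, $T_1(y)=y$, $T_{t+1}(y)=2yT_t(y)-T_{t-1}(y)$ and $T_t(\cos\phi)=\cos(t\phi)$. The first step is to show by induction that for all $t\geqslant 0$,
\begin{align*}
  p_t(x) = \beta^{t/2}\, T_t\!\left(\frac{x}{2\sqrt{\beta}}\right).
\end{align*}
The cases $t=0,1$ are immediate, since $p_1(x)=x/2=\sqrt{\beta}\cdot x/(2\sqrt{\beta})$. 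For the inductive step, substituting $x=2\sqrt{\beta}\,y$ into \eqref{eq:cheby_rec} gives
\begin{align*}
  x p_t(x)-\beta p_{t-1}(x) = \beta^{(t+1)/2}\bigl(2yT_t(y)-T_{t-1}(y)\bigr) = \beta^{(t+1)/2}T_{t+1}(y).
\end{align*}
Since $T_t$ has leading coefficient $2^{t-1}$ for $t\geqslant 1$, the leading coefficient of $p_t$ is $\beta^{t/2}2^{t-1}/(2\sqrt{\beta})^t = 1/2$. Hence $p_t$ is admissible for the minimization.

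The second step records the key properties of $p_t$ on $[-2\sqrt{\beta},2\sqrt{\beta}]$. Here $\max|p_t| = \beta^{t/2}$, and this value is attained with alternating signs at the $t+1$ points $x_j = 2\sqrt{\beta}\cos(j\pi/t)$, $j=0,\dots,t$, where $p_t(x_j)=(-1)^j\beta^{t/2}$.

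The third step is the optimality argument. Suppose $q$ has degree $t$, leading coefficient $1/2$, and $\max_{[-2\sqrt{\beta},2\sqrt{\beta}]}|q|\leqslant \beta^{t/2}$. Then $r := p_t - q$ has degree at most $t-1$ and satisfies $(-1)^j r(x_j)\geqslant 0$ for all $j$. If the inequality on $\max|q|$ were strict, $r$ would strictly alternate in sign at $t+1$ points. It would then have at least $t$ roots, so $r\equiv 0$, a contradiction. This shows that $p_t$ is a minimizer.

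The main obstacle is uniqueness, i.e. that the argmin is the single polynomial $p_t$. In the equality case some of the values $r(x_j)$ may vanish, so the weak alternation must still be shown to force $t$ roots counted with multiplicity. I would handle this with the standard counting argument. Each interior $x_j$ with $r(x_j)=0$ is a local extremum of $p_t$, because $p_t'(x_j)=0$ there, and $q$ is squeezed between $\pm\beta^{t/2}$. Hence $q'(x_j)=0$ as well, so $x_j$ is a root of $r$ of multiplicity at least $2$. Pairing each such double root with the sign changes between consecutive nonzero values, and treating the endpoints $j=0,t$ separately, still yields at least $t$ roots of $r$ counted with multiplicity. Therefore $r\equiv 0$, so $q=p_t$. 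Alternatively, I would simply cite the classical uniqueness theorem for monic minimax polynomials on an interval, transported by the affine change of variables $x=2\sqrt{\beta}\,y$ and the rescaling of the leading coefficient.
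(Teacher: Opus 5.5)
Your proposal is correct and follows the paper's proof. You write $p_t(x)=\beta^{t/2}T_t(x/(2\sqrt{\beta}))$, use the equioscillation of $p_t$ at the points $2\sqrt{\beta}\cos(j\pi/t)$, and observe that $p_t-q$, of degree at most $t-1$, would have $t$ roots if $q$ had strictly smaller sup norm.

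Your extra uniqueness step is sound and goes slightly beyond the paper. There, interior contact points are double roots of $p_t-q$, and together with the sign changes between nonzero values they give $t$ roots counted with multiplicity. The paper only proves that no admissible polynomial does strictly better, leaving uniqueness implicit in the $\arg\min$ notation.
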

Assuming that the interval $[-2\sqrt{\beta}, 2\sqrt{\beta}]$ contains only the eigenvalues of $\bA$ smaller than $\lambda_k$, \cref{prop:min_inf_norm} implies that the polynomial $p_t$ is better at suppressing the effect of those smaller eigenvalues on the iterates than $x^t$, leading to accelerated convergence towards the top-$k$ eigenspace $\bU_k$. We note that other orthogonal polynomials could be used to leverage additional structure in $\bA$ \citep{berthier2020accelerated}.

\subsection{Main Result} 

Our main result is the following theorem, providing a convergence rate for ANPM under conditions on the noise matrices $\{\bXi_t\}_{t\geqslant 0}$ and appropriate choices of the parameter $\beta$.

\begin{theorem}\label{th:napm}
    Let $\varepsilon \in (0,1)$ and $\mathbf{A} \succeq \mathbf{0}$ such that $\lambda_k > \lambda_{k+1}$. Let $\bX_0 \in \mathrm{St}(d,k)$ such that $\cos\theta_k(\bU_k, \bX_0) > 0$, and consider the ANPM iterates $\{\bX_t\}_{t\geqslant 0}$ defined by \eqref{eq:anpm} with momentum parameter $\beta > 0$ satisfying $\lambda_k > 2\sqrt{\beta} \geqslant \lambda_{k+1}$ and perturbations $\{\bXi_t\}_{t\geqslant 0}$ satisfying, for all $t\geqslant 0$,   
    \begin{align}
        & \Vert \bU_{-k}^\top\bXi_t \Vert_2 \leqslant c (\lambda_k - 2\sqrt{\beta})\varepsilon , \label{eq:noiseconda}\\
        & \Vert \bU_k^\top \bXi_t \Vert_2 \leqslant c (\lambda_k - 2\sqrt{\beta}) \cos\theta_k(\bU_k, \bX_t) \label{eq:noisecondb},
    \end{align}
    with $c:=\frac{1}{32}$. Then, for $t \geqslant T$, $\sin\theta_k(\bU_k, \bX_t) \leqslant \varepsilon$, where
    \begin{align*}
        T = \mathcal{O}\left(\sqrt{\frac{\lambda_k}{\lambda_k - 2\sqrt{\beta}}} \log\left(\frac{\tan\theta_k(\bU_k, \bX_0)}{\varepsilon}\right)\right).
    \end{align*} 
\end{theorem}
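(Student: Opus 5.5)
The plan is to work with unnormalized iterates, so that the QR steps disappear and the momentum recursion becomes linear. Set $\bS_t := \bR_t\cdots\bR_1$, $\bS_0 := \bI_k$, and $\bZ_t := \bX_t\bS_t$. Multiplying \eqref{eq:anpm} on the right by $\bS_t$ gives
\begin{align*}
  \bZ_{t+1} = \bA\bZ_t - \beta\bZ_{t-1} + \bXi_t\bS_t .
\end{align*}
The initial step is $\bZ_1 = \tfrac12\bA\bZ_0 + \bXi_0$. Since $\bS_t$ is invertible, $\tan\theta_k(\bU_k,\bX_t) = \Vert \bU_{-k}^\top\bZ_t(\bU_k^\top\bZ_t)^{-1}\Vert_2$, so it suffices to upper bound the bottom block $\bB_t := \bU_{-k}^\top\bZ_t$ and lower bound $\sigma_{\min}$ of the top block $\bT_t := \bU_k^\top\bZ_t$. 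The key device is to write the noise as a linear operator acting on the current iterate: $\bXi_t\bS_t = \bXi_t\bX_t^\top\bZ_t =: \bE_t\bZ_t$. This is what lets the perturbation be measured relative to the signal instead of relative to the possibly ill-conditioned $\bS_t$. Concretely, $\Vert\bU_{-k}^\top\bE_t\bZ_t\Vert_2\leqslant c(\lambda_k-2\sqrt\beta)\varepsilon\Vert\bZ_t\Vert_2$ and $\Vert\bU_k^\top\bE_t\bZ_t\Vert_2\leqslant c(\lambda_k-2\sqrt\beta)\cos\theta_k\,\Vert\bZ_t\Vert_2$. Also $\cos\theta_k(\bU_k,\bX_t)\Vert\bZ_t\Vert_2 \lesssim \sigma_{\min}(\bT_t)$ up to the conditioning of $\bS_t$; I will instead use $\Vert \bU_k^\top\bXi_t\bS_t\Vert_2 \leqslant c\,\mathrm{gap}\,\sigma_{\min}(\bU_k^\top \bX_t)\Vert\bS_t\Vert_2$ and compare with $\bT_t$ directly via $\bT_t=\bU_k^\top\bX_t\bS_t$.

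\textbf{Step 1 (top block).} The top block obeys $\bT_{t+1} = \bLambda_k\bT_t - \beta\bT_{t-1} + \bU_k^\top\bE_t\bZ_t$. Condition \eqref{eq:noisecondb} makes the perturbation at most $c(\lambda_k-2\sqrt\beta)$ times the size of $\bT_t$ in the relevant sense. I would show by induction that $\sigma_{\min}(\bT_{t+1}) \geqslant \rho'\sigma_{\min}(\bT_t)$ together with a two-step invariant $\bT_{t+1}-\alpha\bT_t$ ``aligned'' with $\bT_t$. Here $\rho'$ is the larger root of $x^2-(\lambda_k - 2c\,\mathrm{gap})x+\beta$, with $\mathrm{gap} := \lambda_k-2\sqrt\beta$. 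The cleanest route is the $2k\times 2k$ companion form $\begin{pmatrix}\bLambda_k & -\beta\bI\\ \bI & 0\end{pmatrix}$. Each of its $2\times2$ diagonal blocks has eigenvalues $\rho_i > \sqrt\beta$ and $\beta/\rho_i$, which are separated by $\sqrt{\lambda_i^2-4\beta}\gtrsim\sqrt{\lambda_k\,\mathrm{gap}}$. I would diagonalize the blocks, track the component along the dominant eigenvector, and treat the noise as a perturbation of relative size $c\,\mathrm{gap}/\sqrt{\lambda_k\mathrm{gap}}$. This keeps the growth factor at least $\rho_k(1-O(\sqrt{\mathrm{gap}/\lambda_k}))$, with $\rho_k \geqslant \sqrt\beta\,(1+\Omega(\sqrt{\mathrm{gap}/\lambda_k}))$.

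\textbf{Step 2 (bottom block).} Unroll the bottom block by variation of constants:
\begin{align*}
  \bB_t = p_t(\bLambda_{-k})\bU_{-k}^\top\bX_0 + \sum_{s<t} q_{t-s-1}(\bLambda_{-k})\,\bU_{-k}^\top\bE_s\bZ_s ,
\end{align*}
where $q_j$ are the Chebyshev polynomials of the second kind for the recursion \eqref{eq:cheby_rec}, with $q_0=1$ and $q_{-1}=0$. On $[-2\sqrt\beta,2\sqrt\beta]$ these satisfy $|p_j|\leqslant\beta^{j/2}$ and $|q_j|\leqslant(j+1)\beta^{j/2}$. Dividing by the Step 1 lower bound $\sigma_{\min}(\bT_t)\gtrsim\rho^{t-s}\sigma_{\min}(\bT_s)$ and using \eqref{eq:noiseconda} gives
\begin{align*}
  \tan\theta_t \lesssim r^t\tan\theta_0 + c\,\mathrm{gap}\,\varepsilon\sum_j (j+1) r^j/\sqrt\beta ,
\end{align*}
with $r=\sqrt\beta/\rho \leqslant 1-\Omega(\sqrt{\mathrm{gap}/\lambda_k})$. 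The noise sum is then $\lesssim c\,\mathrm{gap}\,\varepsilon\cdot\frac{\lambda_k}{\mathrm{gap}}\cdot\frac1{\sqrt\beta}$. This is the delicate point: the $(j+1)$ factor costs exactly one extra $1/(1-r)$, which the gap in \eqref{eq:noiseconda} must absorb. I expect to need the sharper bound $|q_j(x)|\leqslant \min\{(j+1),\, 2\sqrt\beta/\sqrt{4\beta - x^2}\}\beta^{j/2}$, combined with the matching $\sqrt{\lambda_k\mathrm{gap}}$ denominator in the top-block Green's function, to make the sum $O(\varepsilon)$ with $c=1/32$. The term involving $\Vert\bZ_s\Vert_2$ is handled by splitting it into top and bottom parts and using the induction hypothesis $\tan\theta_s\leqslant \max(\tan\theta_0,1)$.

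\textbf{Conclusion and main obstacle.} Choosing $T$ so that $r^T\tan\theta_0\leqslant\varepsilon/2$ gives $T=O(\sqrt{\lambda_k/\mathrm{gap}}\log(\tan\theta_0/\varepsilon))$, and then $\sin\theta_t\leqslant\tan\theta_t\leqslant\varepsilon$ for all $t\geqslant T$. The main obstacle is Step 1. Because the perturbation $\bU_k^\top\bE_t\bZ_t$ does not commute with $\bLambda_k$, lower bounding $\sigma_{\min}$ of a non-normal, two-step perturbed recursion is difficult: the naive bound loses the factor $\beta/\rho$ mode and the $\sqrt{\lambda_k/\mathrm{gap}}$ condition number of the companion eigenbasis. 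I would first try a potential of the form $\sigma_{\min}(\bT_{t+1}-\tfrac{\beta}{\rho'}\bT_t)$, which evolves as a one-step recursion with factor at least $\rho'$ up to the noise.
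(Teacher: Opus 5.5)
Your skeleton (unnormalized iterates, variation of constants for the bottom block with $p_t,q_t$, a growth bound for the top block) mirrors the paper's. The gap is in how you normalize the noise, and it breaks the argument exactly where the theorem improves on prior work.

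Writing $\bXi_t\mathbf{S}_t=\bXi_t\bX_t^\top\bZ_t$ measures the perturbation against $\Vert\bZ_t\Vert_2=\Vert\mathbf{S}_t\Vert_2$, which grows at the $\lambda_1$-rate, while $\sigma_{\min}(\mathbf{T}_t)$ grows only at the $\lambda_k$-rate. In Step 2, dividing by $\sigma_{\min}(\mathbf{T}_t)$ and splitting $\Vert\bZ_s\Vert_2\leqslant(1+\tan\theta_s)\Vert\mathbf{T}_s\Vert_2$ leaves the condition number $\Vert\mathbf{T}_s\Vert_2/\sigma_{\min}(\mathbf{T}_s)$. In Step 1, the relative size of the top-block perturbation is likewise $c\,\mathrm{gap}\,\Vert\mathbf{S}_t\Vert_2/\sigma_{\min}(\mathbf{S}_t)$. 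Even with $\bXi_t\equiv\mathbf{0}$, these matrices are $p_s(\bA)\bX_0$ and $p_s(\bLambda_k)\bU_k^\top\bX_0$. So for $k\geqslant 2$, $\lambda_1>\lambda_k$ and generic $\bX_0$, the condition number grows like $(\lambda_1^+/\lambda_k^+)^s$, where $x^+$ denotes the larger root of $z^2-xz+\beta$ (your $\rho_k$ for $x=\lambda_k$). You would therefore recover the $\lambda_1$-dependent conditions of \citet{xu2023anpm}, not \eqref{eq:noiseconda}--\eqref{eq:noisecondb}.

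The fix is to factor the noise through $\bU_k^\top\bX_t$ rather than $\bX_t$. Set $\Delta:=\mathrm{gap}/\lambda_k$.
\begin{itemize}
\item Top block: $\bU_k^\top\bXi_t\mathbf{S}_t=\bLambda_k\bE_t\mathbf{T}_t$ with $\bE_t:=\bLambda_k^{-1}(\bU_k^\top\bXi_t)(\bU_k^\top\bX_t)^{-1}$ and $\Vert\bE_t\Vert_2\leqslant c\Delta$.
\item Bottom block: $\bU_{-k}^\top\bXi_s\mathbf{S}_s=\bPsi_s\mathbf{T}_s$ with $\bPsi_s:=(\bU_{-k}^\top\bXi_s)(\bU_k^\top\bX_s)^{-1}$ and $\Vert\bPsi_s\Vert_2\leqslant c\,\mathrm{gap}\,\varepsilon(1+\tan\theta_s)$.
\end{itemize}
The $s$-th noise term of $\bH_t=\mathbf{B}_t\mathbf{T}_t^{-1}$ is then $q_{t-1-s}(\bLambda_{-k})\bPsi_s\,\mathbf{T}_s\mathbf{T}_t^{-1}$. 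Here $\mathbf{S}_s$ cancels exactly, and the quantity to control is the transfer matrix $\Vert\mathbf{T}_s\mathbf{T}_t^{-1}\Vert_2$, never $\Vert\mathbf{T}_s\Vert_2/\sigma_{\min}(\mathbf{T}_t)$.

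Controlling that transfer matrix is the missing idea in Step 1, which you leave open. A per-step bound $\sigma_{\min}(\mathbf{T}_{t+1})\geqslant\rho'\sigma_{\min}(\mathbf{T}_t)$ is false even without noise: $p_1(\lambda_k)/p_0(\lambda_k)=\lambda_k/2<\rho'$, and $p_{t+1}(\lambda_k)/p_t(\lambda_k)$ only approaches $\lambda_k^+$ from below. Only a cumulative bound with an $\mathcal{O}(1)$ prefactor can hold, and the companion-form perturbation theory is unnecessary. The paper uses a matrix Riccati factorization instead.
\begin{itemize}
\item With your normalization replaced as above, $\mathbf{T}_{t+1}=\bLambda_k(\bI_k+\bE_t)\mathbf{T}_t-\beta\mathbf{T}_{t-1}$.
\item Define $\bG_0:=(\bI_k/2+\bE_0)^{-1}$ and $\bG_{t+1}:=(\bI_k-\beta\bLambda_k^{-1}\bG_t\bLambda_k^{-1}+\bE_{t+1})^{-1}$. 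This gives the exact one-step relation $\mathbf{T}_{t+1}=\bLambda_k\bG_t^{-1}\mathbf{T}_t$, with no commutation needed, hence $\mathbf{T}_s\mathbf{T}_t^{-1}=\bG_s\bLambda_k^{-1}\cdots\bG_{t-1}\bLambda_k^{-1}$.
\item Comparison with the scalar sequence $m_{t+1}=1/((1-c\Delta)-\beta m_t/\lambda_k^2)$, $m_0=1/(1/2-c\Delta)$, gives $\Vert\bG_t\Vert_2\leqslant m_t$.
\item The product $\prod_u m_u$ telescopes via $y_t=\prod_{u<t}m_u^{-1}$, which solves a linear two-term recursion. The result is $\Vert\mathbf{T}_s\mathbf{T}_t^{-1}\Vert_2\leqslant c_1(\lambda_kr_+)^{-(t-s)}$ with $c_1=31/15$, where $\lambda_kr_+$ is the larger root of $x^2-(\lambda_k-c\,\mathrm{gap})x+\beta$ — essentially your $\rho'$.
\end{itemize}

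Two smaller points.
\begin{itemize}
\item Your noise sum carries a factor $1/\rho=\mathcal{O}(1/\lambda_k)$, not $1/\sqrt\beta$. So $(1-r)^{-2}\lesssim\lambda_k/\mathrm{gap}$ is absorbed exactly by \eqref{eq:noiseconda}, and the crude bound $|q_j|\leqslant(j+1)\beta^{j/2}$ suffices; no sharper bound is needed.
\item The feedback through $1+\tan\theta_s$ cannot be closed with the hypothesis $\tan\theta_s\leqslant\max(\tan\theta_0,1)$. That only yields an error floor of order $\varepsilon\max(\tan\theta_0,1)$, which exceeds $\varepsilon$ for large $\tan\theta_0$. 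You must actually solve the renewal inequality $h_t\leqslant c_1r^th_0+\eta\sum_{s<t}(s+1)r^s(1+h_{t-1-s})$, with $h_t:=\tan\theta_t$ and $\eta=\mathcal{O}(\Delta\varepsilon)$. The paper does this with generating functions and obtains $h_t\leqslant\varepsilon/2+h_0(1-\sqrt\Delta/2)^t$.
\end{itemize}
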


We make the following comments regarding our result:

\textbf{Convergence rate.}\quad Under our assumptions, the convergence rate of ANPM matches the rate of the noiseless Power Method with Momentum given in Corollary 2 of \citet{xu2018stochastic}. The optimal rate is obtained by choosing $\beta = \beta^\star := \lambda_{k+1}^2/4$, giving a convergence rate of order $\tilde{\mathcal{O}}(\sqrt{\lambda_k/(\lambda_k - \lambda_{k+1})})$, which is the optimal worst-case rate achievable by a Krylov subspace method in the noiseless setting. In that case, the rate improves by a square root factor of the eigengap over the non-accelerated Noisy Power Method, yielding substantial speedups for small gaps.

\textbf{Noise conditions.}\quad Our conditions on the noise $\{\bXi_t\}_{t\geqslant 0}$ match those of the Noisy Power Method given in \citet{hardt14} when $\beta = \beta^\star$. Our proof highlights the different impact of the two components $\bU_k^\top \bXi_t$ and $\bU_{-k}^\top \bXi_t$ on the convergence of ANPM. The component $\bU_{-k}^\top \bXi_t$ causes a constant term of order $\varepsilon$ to appear in the upper bound on $\sin\theta_k(\bU_k, \bX_t)$, while the component $\bU_k^\top \bXi_t$ affects a geometrically decaying term in the upper bound. Condition \eqref{eq:noiseconda} then ensures that the noise does not make the estimates drift too far away from $\bU_k$, while condition \eqref{eq:noisecondb} ensures that the impact of the noise on the geometric term does not overwhelm the impact of $\bA$'s top-$k$ eigenvalues. In comparison to the work of \citet{xu2023anpm}, our noise conditions are significantly milder for small gaps, as shown in \cref{tab:comp_npm}. In particular, our bounds scale proportionally with the gap, while those of \citet{xu2023anpm} decay exponentially with it. Our conditions \eqref{eq:noiseconda}-\eqref{eq:noisecondb} depend on $\cos\theta_k(\bU_k, \bX_t)$ and involve the components of the noise in the directions $\bU_k$ and $\bU_{-k}$, which are typically unknown quantities in practice. However, for $\varepsilon \leqslant \tan\theta_k(\bU_k, \bX_0)$, using \cref{lemma:ht_exp_decay} in the Appendix, one can show that a simple sufficient condition for \eqref{eq:noiseconda}-\eqref{eq:noisecondb} to hold is 
\begin{align*}
    \Vert \bXi_t \Vert_2 \leqslant \mathcal{O}((\lambda_k - 2\sqrt{\beta})\min(\cos\theta_k(\bU_k, \bX_0), \varepsilon)),
\end{align*}
with $\Vert \bXi_t \Vert_2$ and $\cos\theta_k(\bU_k, \bX_0)$ being generally easier to control. We add that while our results focus on adversarial noise with bounded norm, they can be used in the context of stochastic noise, by using matrix concentration inequalities (see e.g. \citet{tropp2015}) to ensure that the noise conditions \eqref{eq:noiseconda}-\eqref{eq:noisecondb} hold with high probability.\pa{ajouté une partie sur applicabilité au bruit stochastique ?} \pa{dependecy on $\cos$ is ok to verify in practice, in practice how to verify those bounds ?}

 \textbf{Computational complexity.}\quad Denote by $c(\bA)$ the number of operations required to perform a noisy matrix-vector product $\bm{x}\mapsto\bA\bm{x}+\bxi$. Then, the cost of an iteration of ANPM is $\mathcal{O}(kc(\bA) + d k^2)$, where the second term is the cost of a QR factorization and of the product $\bX_{t-1}\bR_t^{-1}$. In particular, the inversion of $\bR_t$ is relatively cheap, since it is a triangular matrix of size $k\times k$. This is the same complexity as an iteration of the non-accelerated noisy power method.

\textbf{Choice of $\beta$.}\quad \cref{th:napm} requires $\beta$ to belong to the interval $[\lambda_{k+1}^2/4, \lambda_k^2/4)$, which gets smaller as the eigengap decreases, and the optimal choice $\beta^\star$ requires knowledge of $\lambda_{k+1}$, which is a priori unknown. However, we prove in \cref{th:anpm_smallbeta} in \cref{apdx:anpm_beta_small} that for all $0<\beta < \lambda_{k+1}^2/4$, ANPM still converges faster than the non-accelerated Noisy Power Method, under the same noise conditions as those of \citet{hardt14}, showing that there is generally no drawback to using ANPM with smaller values of $\beta$. 

\textbf{Adaptive $\beta$.}\quad Taking inspiration from \citet{xu2023anpm}, we propose a heuristic to adaptively tune $\beta$. Letting $\bX_t$ have $k+1$ columns\footnote{Note that adding a column to $\bX_t$ leaves the evolution of the $k$ first columns unchanged.} instead of $k$, we set at each iteration $\beta_t$ as \pa{footnote clair ?}
\begin{align}
  \beta_t = \min_{j=1,\dots,k+1} [\bX_t^\top(\bA\bX_t + \bXi_t)]_{j,j}^2/4. \label{eq:beta_heuristic}
\end{align}
Typically, $\beta_t \leqslant \beta^\star$ and $\beta_t$ approaches $\beta^\star$ as $t$ increases. We show in our experiments that this tuning-free method performs similarly to using the optimal value $\beta^\star$ in practice.

 \textbf{Random initialization.}\quad The condition $\cos \theta_k(\bU_k, \bX_0) > 0$ is satisfied almost surely when $\bX_0$ spans the column space of a random matrix with i.i.d. standard Gaussian entries. In this case, using Lemma 2.4 from \cite{hardt14}, with probability at least $1 - \tau^{-\Omega(1)} - e^{-\Omega(d)}$, we have that $\tan\theta_k(\bU_k, \bX_0) \leqslant \frac{\tau\sqrt{d}}{\sqrt{k} - \sqrt{k-1}}$ for all $\tau > 0$.

\textbf{Proof sketch.}\quad
The full proof of \cref{th:napm} is deferred to \cref{appdx:napm}. We provide here a proof sketch. We start by analyzing the evolution of the matrix $\bH_t$, defined by\me{ça peut être pas mal de dire une phrase sur pourquoi cette matrice là est intéressante, surtout vu que ça différencie vis à vis de Xu}
\begin{align*}
    \bH_t := (\bU_{-k}^\top \bX_t) (\bU_k^\top \bX_t)^{-1} \in \R^{(d-k)\times k},
\end{align*}
whose spectral norm is $\tan\theta_k(\bU_k,\bX_t)$ (see \cref{prop:tantheta} in the Appendix). This matrix is convenient to study, as its homogeneous \pa{clair?}structure allows us to write it as $\bH_t = (\bU_{-k}^\top\bY_t)(\bU_k\bY_t)^{-1}$. We can then derive the following three-term recurrence relation for $\bH_t$ using the ANPM iteration \eqref{eq:anpm} linking $\bY_{t+1}$ to $\bX_{t}$, $\bX_{t-1}$ and $\bR_t$:
\begin{align*}
  \bH_{t+1}\bC_{t+1} = \bLambda_{-k} \bH_t \bC_t - \beta \bH_{t-1} \bC_{t-1} + \bPsi_t\bC_t,
\end{align*}
where $\bPsi_t := (\bU_{-k}^\top\bXi_t)(\bU_k^\top\bX_t)^{-1}$ is a noise term controlled by condition \eqref{eq:noiseconda}, $\bC_t$ satisfies
\begin{align*}
  \bC_{t+1} = \bLambda_k \bC_t - \beta \bC_{t-1} + \bLambda_k\bE_t\bC_t,
\end{align*}
and $\bE_t := \bLambda_k(\bU_k^\top\bXi_t)(\bU_k^\top\bX_t)^{-1}$ is a noise term controlled by condition \eqref{eq:noisecondb}. These recursions allow us respectively to express $\bH_t$ in terms of scaled Chebyshev polynomials verifying \eqref{eq:cheby_rec}, and to tightly control the spectral norm of the factors depending on $\bC_t$. This leads to an upper bound on $\tan\theta_k(\bU_k,\bX_t)$ that can be decomposed as the sum of a constant term of order $\varepsilon$, stemming from $\bPsi_t$, and a term that decays geometrically at a rate $\tilde{\mathcal{O}}(\sqrt{\lambda_k/(\lambda_k - 2\sqrt{\beta})} )$.

The key argument in our proof lies in a precise analysis the evolution of the matrix $\bH_t$, enabled by the introduction of the sequence $\{\bC_t\}$. This allows us to sharply control the impact of the noise on the convergence of $\tan\theta_k(\bU_k,\bX_t)$. In contrast, \citet{xu2023anpm}'s proof instead starts by analyzing the evolution of $\bX_t$, which requires to use coarse upper bounds on $\Vert\bR_t\Vert_2$ depending on $\lambda_1$ to derive upper bounds on $\Vert\bH_t\Vert_2$. This leads to the suboptimal noise conditions involving $\mu_{k}$ and $\mu_{k+1}$, as defined in \cref{tab:comp_npm}.

\subsection{Complexity Lower Bounds, Tightness of the Noise Conditions}\label{sec:napm_opt}

\begin{table*}
  \caption{Number of iterations $T$ and number of gossip rounds per iteration $L$ required for decentralized algorithms to reach $\sin\theta_k(\bU_k,\bX_{i,t}) \leqslant \varepsilon$. Here, $M := \max_{i=1,\dots,n} \Vert \bA_i \Vert_2$ and $\gamma_\bW$ is defined in \cref{def:gossip}. The third row corresponds to applying the results of \citet{xu2023anpm} to ADePM, while the last row corresponds to \cref{th:adepm}. \textsuperscript{$\dagger$}Results for the optimal parameter $\beta = \lambda_{k+1}^2/4$.}
  \begin{tabular*}{\textwidth}{@{\extracolsep{\fill}} l l l l c}
    \toprule
    Algorithm & $T$ & $L$  \\
    \midrule
    DePM \cite{wai17}  & $\mathcal{O}\left(\red{\frac{\lambda_k}{\lambda_k - \lambda_{k+1}}}  \log\left(\frac{1}{\varepsilon}\right)\right)$ & $\mathcal{O}\left(\frac{1}{\sqrt{\gamma_\bW}}\log\left(\frac{M}{\lambda_k}\frac{\lambda_k}{\lambda_k-\lambda_{k+1}} \red{\frac{1}{\varepsilon}}\right)\right)$   \\
    \addlinespace[3pt]
    DeEPCA \cite{ye21}  & $\mathcal{O}\left(\red{\frac{\lambda_k}{\lambda_k - \lambda_{k+1}}}  \log\left(\frac{1}{\varepsilon}\right)\right)$ & $\mathcal{O}\left(\frac{1}{\sqrt{\gamma_\bW}}\log\left(\frac{M}{\lambda_k}\frac{\lambda_k}{\lambda_k-\lambda_{k+1}} \right)\right)$ \\
    \addlinespace[3pt]
    ADePM (using \cite{xu2023anpm})\textsuperscript{$\dagger$}  & $\mathcal{O}\left(\green{\sqrt{\frac{\lambda_k}{\lambda_k - \lambda_{k+1}}}} \log\left(\frac{1}{\varepsilon}\right)\right)$ &  ${\mathcal{O}}\left(\frac{\red{\log(\lambda_1/\lambda_{k+1})}}{\sqrt{\gamma_\bW}}\red{\sqrt{\frac{\lambda_k}{\lambda_k-\lambda_{k+1}}}}\log\left(\frac{M}{\lambda_k}\frac{\lambda_k}{\lambda_k-\lambda_{k+1}}\red{\frac{1}{\varepsilon}}\right)\right)$ \\
    \addlinespace[3pt]
    ADePM {(\cref{th:adepm})}\textsuperscript{$\dagger$}  & $\mathcal{O}\left(\green{\sqrt{\frac{\lambda_k}{\lambda_k - \lambda_{k+1}}}} \log\left(\frac{1}{\varepsilon}\right)\right)$ &  $\mathcal{O}\left(\frac{1}{\sqrt{\gamma_\bW}}\log\left(\frac{M}{\lambda_k}\frac{\lambda_k}{\lambda_k-\lambda_{k+1}} \red{\frac{1}{\varepsilon}}\right)\right)$ \\
    \bottomrule
  \end{tabular*}
  \label{tab:comp_depca}
\end{table*}

Our improved analysis provides milder noise conditions than \citet{xu2023anpm}'s. The theorems in this section show that our analysis is in fact tight (up to constants), in the sense that we can exhibit instances of ANPM that 1) satisfy \eqref{eq:noiseconda} and \eqref{eq:noisecondb}, and need at least $\tilde{\Omega}(\sqrt{\lambda_k/(\lambda_k-2\sqrt{\beta})})$ iterations to reach $\tan\theta_k(\bU_k,\bX_t) \leqslant \varepsilon$ and 2) satisfy either one of the conditions \eqref{eq:noiseconda} or \eqref{eq:noisecondb} with a constant larger than $c$, and fail to reach $\tan\theta_k(\bU_k,\bX_t) \leqslant \varepsilon$ in any number of iterations.
For all of the theorems in this section, we let $\lambda_k>2\sqrt{\beta} > 0$. All of our results are based on ANPM on the matrix $\bA := \mathrm{diag}(\lambda_k,\dots,\lambda_k,2\sqrt{\beta},\dots,2\sqrt{\beta})$. The first result shows that even with no noise, the iteration complexity in $\tilde{\mathcal{O}}(\sqrt{\lambda_k/(\lambda_k-2\sqrt{\beta})})$ generally cannot be improved.

\begin{theorem}[Complexity lower bound]\label{th:napm_complexity_lb}
  Let $\varepsilon \in (0,1)$ and $\bX_0 \in \mathrm{St}(d,k)$ such that $\cos\theta_k(\bU_k, \bX_0) > 0$, and consider the ANPM iterates $\{\bX_t\}_{t\geqslant 0}$ defined by \eqref{eq:anpm} with momentum parameter $\beta$ and perturbations $\bXi_t \equiv \mathbf{0}$. Then, for all $t < T$, $\tan\theta_k(\bU_k, \bX_t) > \varepsilon$, where
  \begin{align*}
    T = \Omega\left( \sqrt{\frac{\lambda_k}{\lambda_k-2\sqrt{\beta}}} \log\left(\frac{\tan\theta_k(\bU_k, \bX_0)}{\varepsilon}\right)\right).
  \end{align*}
\end{theorem}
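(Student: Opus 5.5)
The plan is to compute everything in closed form on the diagonal instance announced at the start of this section,
\[
\bA=\mathrm{diag}(\lambda_k\bI_k,\,2\sqrt{\beta}\,\bI_{d-k}),
\]
for which $\bU_k$ and $\bU_{-k}$ are the first $k$ and last $d-k$ canonical basis vectors.

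\textbf{Step 1: reduce to a scalar ratio.} With $\bXi_t\equiv\mathbf{0}$, the unnormalized iterates $\bZ_t=\bX_t\bR_t\cdots\bR_1$ satisfy $\bZ_t=p_t(\bA)\bX_0$, as noted after \eqref{eq:anpm}. This requires every $\bR_t$ to be invertible, which will follow once we know $p_t(\lambda_k)>0$ (Step 2). Consider
\[
\bH_t=(\bU_{-k}^\top\bX_t)(\bU_k^\top\bX_t)^{-1}.
\]
It is invariant under right multiplication of $\bX_t$ by an invertible matrix, so we may compute it from $\bZ_t$. Since $\bA$ acts as a scalar on each block,
\[
\bH_t=\frac{p_t(2\sqrt{\beta})}{p_t(\lambda_k)}\,\bH_0 .
\]
By the identity $\Vert\bH_t\Vert_2=\tan\theta_k(\bU_k,\bX_t)$ (\cref{prop:tantheta} in the Appendix), this gives
\[
\tan\theta_k(\bU_k,\bX_t)=\frac{|p_t(2\sqrt{\beta})|}{|p_t(\lambda_k)|}\,\tan\theta_k(\bU_k,\bX_0).
\]
Here $\bU_k^\top\bX_0$ is invertible because $\cos\theta_k(\bU_k,\bX_0)>0$.

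\textbf{Step 2: identify $p_t$ with a Chebyshev polynomial.} Check by induction that $p_t(x)=\beta^{t/2}\,T_t\big(x/(2\sqrt{\beta})\big)$, where $T_t$ is the Chebyshev polynomial of the first kind. The base cases are $p_0=1$ and $p_1(x)=\sqrt{\beta}\cdot x/(2\sqrt{\beta})=x/2$. The recurrence \eqref{eq:cheby_rec} matches $T_{t+1}(y)=2yT_t(y)-T_{t-1}(y)$ after multiplying by $\beta^{(t+1)/2}$. Hence $p_t(2\sqrt{\beta})=\beta^{t/2}T_t(1)=\beta^{t/2}$. With $y:=\lambda_k/(2\sqrt{\beta})>1$ we get $p_t(\lambda_k)=\beta^{t/2}\cosh(t\,\mathrm{arccosh}\,y)>0$, and therefore
\[
\tan\theta_k(\bU_k,\bX_t)=\frac{\tan\theta_k(\bU_k,\bX_0)}{\cosh(t\,\mathrm{arccosh}\,y)}\geqslant \tan\theta_k(\bU_k,\bX_0)\,e^{-t\,\mathrm{arccosh}\,y}.
\]
Consequently $\tan\theta_k(\bU_k,\bX_t)>\varepsilon$ whenever
\[
t<\frac{\log(\tan\theta_k(\bU_k,\bX_0)/\varepsilon)}{\mathrm{arccosh}\,y}.
\]

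\textbf{Step 3: bound $\mathrm{arccosh}\,y$.} Set $\delta:=(\lambda_k-2\sqrt{\beta})/\lambda_k=(y-1)/y$. It remains to show $\mathrm{arccosh}\,y\leqslant C\sqrt{\delta}$. I would use
\[
\mathrm{arccosh}\,y=\log\big(y+\sqrt{y^2-1}\big)\leqslant\sqrt{y^2-1}=\sqrt{(y-1)(y+1)},
\]
which follows from $\log(1+u)\leqslant u$ applied with $u=(y-1)+\sqrt{y^2-1}$, together with a slightly finer bound $\mathrm{arccosh}\,y\leqslant\sqrt{2(y-1)}\cdot\big(1+O(y-1)\big)$ for $y$ near $1$. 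For $y\leqslant 2$ (that is, $2\sqrt{\beta}\geqslant\lambda_k/2$) this gives $\mathrm{arccosh}\,y\leqslant\sqrt{3(y-1)}\leqslant\sqrt{6\delta}$. Plugging into Step 2 yields
\[
T\geqslant\frac{1}{\sqrt{6}}\sqrt{\frac{\lambda_k}{\lambda_k-2\sqrt{\beta}}}\,\log\!\Big(\frac{\tan\theta_k(\bU_k,\bX_0)}{\varepsilon}\Big).
\]

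\textbf{Main obstacle.} The delicate point is the regime where $y$ is large, i.e. $\beta\to 0$. There $\mathrm{arccosh}\,y\sim\log(2y)$ is unbounded, while $\sqrt{\lambda_k/(\lambda_k-2\sqrt{\beta})}\to1$. So a lower bound with a $\beta$-independent constant cannot hold verbatim, since the error contracts by the factor $1/y$ per step. I would therefore state the bound under the mild restriction $2\sqrt{\beta}\geqslant c'\lambda_k$ for a fixed constant $c'\in(0,1)$, which covers the relevant small-gap regime. Outside this regime the best available bound is $T\geqslant\log(\tan\theta_k(\bU_k,\bX_0)/\varepsilon)/\log(2\lambda_k/\sqrt{\beta})$, and I would record it as such.
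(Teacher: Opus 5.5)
Your proof is the paper's argument. Both use the same diagonal instance, the identity $\bZ_t=p_t(\bA)\bX_0$, and the exact formula $\tan\theta_k(\bU_k,\bX_t)=\frac{p_t(2\sqrt{\beta})}{p_t(\lambda_k)}\tan\theta_k(\bU_k,\bX_0)$. Both then lower-bound this by $(\sqrt{\beta}/\lambda_k^+)^t$, which equals your $e^{-t\,\mathrm{arccosh}\,y}$ because $\lambda_k^\pm=\sqrt{\beta}\,e^{\pm\mathrm{arccosh}\,y}$. The only difference is the last step: the paper writes $\sqrt{\beta}/\lambda_k^+=1-\sqrt{\Delta}f(\Delta)$ and invokes \cref{lemma:f_delta_bounds}, whereas you bound $\mathrm{arccosh}\,y$ directly.

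Your restriction to $2\sqrt{\beta}\geqslant c'\lambda_k$ is not a gap; it is a necessary correction. Since $\tan\theta_k(\bU_k,\bX_1)=\frac{2\sqrt{\beta}}{\lambda_k}\tan\theta_k(\bU_k,\bX_0)$ exactly, a small enough $\beta$ reaches $\varepsilon$ in one step while $\sqrt{\lambda_k/(\lambda_k-2\sqrt{\beta})}\to 1$. So no constant independent of $\beta$ can work. The paper's proof covers $\Delta\geqslant 1/2$ only through the claim $f(\Delta)\leqslant\sqrt{3}-\sqrt{2}<1$. That claim is false: $f(1/2)=\sqrt{6}-\sqrt{2}>1$, and it also contradicts $f\geqslant 1$ in the same lemma.

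One repair is needed in your Step 3. The inequality $\log(1+u)\leqslant u$ only gives $\mathrm{arccosh}\,y\leqslant (y-1)+\sqrt{y^2-1}$. The bound $\mathrm{arccosh}\,y\leqslant\sqrt{y^2-1}$ is instead $u\leqslant\sinh u$ with $y=\cosh u$. Better still, $\cosh u-1=2\sinh^2(u/2)\geqslant u^2/2$ gives $\mathrm{arccosh}\,y\leqslant\sqrt{2(y-1)}$ for every $y\geqslant 1$, so you need no expansion near $y=1$. With $\Delta=(\lambda_k-2\sqrt{\beta})/\lambda_k$ (your $\delta$), this yields, for every $\beta$, that $\tan\theta_k(\bU_k,\bX_t)>\varepsilon$ whenever
\[
t<\sqrt{\frac{\sqrt{\beta}}{\lambda_k-2\sqrt{\beta}}}\,\log\Big(\frac{\tan\theta_k(\bU_k,\bX_0)}{\varepsilon}\Big)=\sqrt{\frac{1-\Delta}{2}}\,\sqrt{\frac{\lambda_k}{\lambda_k-2\sqrt{\beta}}}\,\log\Big(\frac{\tan\theta_k(\bU_k,\bX_0)}{\varepsilon}\Big).
\]
This shows explicitly how the constant degenerates as $\Delta\to 1$.
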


The next two results respectively show the tightness of the noise conditions \eqref{eq:noiseconda} and \eqref{eq:noisecondb}. Indeed, in each theorem, we exhibit an instance of ANPM where $\bXi_t$ satisfies one of the two noise conditions with a larger constant than in \cref{th:napm}, and such that $\tan\theta_k(\bU_k, \bX_t) \leqslant \varepsilon$ is never reached.

\begin{theorem}[Tightness of condition \eqref{eq:noiseconda}]\label{th:noisecond_necessary_1}
  Let $\varepsilon \in (0,1)$. There exists $\bX_0 \in \mathrm{St}(d,k)$ such that $\cos\theta_k(\bU_k, \bX_0) > 0$ and perturbations $\{\bXi_t\}_{t\geqslant 0}$ verifying $\bU_k^\top \bXi_t = \mathbf{0}$ and
  \begin{align*}
    & \Vert \bU_{-k}^\top\bXi_t \Vert_2 \leqslant 8 (\lambda_k - 2\sqrt{\beta})\varepsilon,
  \end{align*}
  such that the ANPM iterates $\{\bX_t\}_{t\geqslant 0}$ defined by \eqref{eq:anpm} with momentum $\beta$ verify $\tan\theta_k(\bU_k, \bX_t) > \varepsilon$ for all $t\geqslant 0$. 
\end{theorem}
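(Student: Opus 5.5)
The plan is an explicit construction on the diagonal matrix $\bA := \mathrm{diag}(\lambda_k\bI_k, 2\sqrt{\beta}\,\bI_{d-k})$, so that $\bLambda_k = \lambda_k \bI_k$ and $\bLambda_{-k} = 2\sqrt\beta\,\bI_{d-k}$. The noise will be chosen so that the matrix $\bH_t = (\bU_{-k}^\top\bX_t)(\bU_k^\top\bX_t)^{-1}$ from the proof sketch of \cref{th:napm} stays \emph{frozen} at a fixed value $\bH$ with $\Vert\bH\Vert_2 = 2\varepsilon$. Since $\Vert\bH_t\Vert_2 = \tan\theta_k(\bU_k,\bX_t)$ by \cref{prop:tantheta}, this gives $\tan\theta_k(\bU_k,\bX_t) = 2\varepsilon > \varepsilon$ for all $t$. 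Concretely, take $\bH := 2\varepsilon\, \bm{e}_1\bm{e}_1^\top \in \R^{(d-k)\times k}$ and
\[
\bX_0 := (\bU_k + \bU_{-k}\bH)(\bI_k + \bH^\top\bH)^{-1/2} \in \mathrm{St}(d,k).
\]
Then $\bU_k^\top\bX_0$ is invertible, so $\cos\theta_k(\bU_k,\bX_0)>0$, and $\bH_0 = \bH$. All perturbations will satisfy $\bU_k^\top\bXi_t = \mathbf{0}$.

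\textbf{Step 1: the first iteration.} Set $\bU_{-k}^\top\bXi_0 := \tfrac12(\lambda_k - 2\sqrt\beta)\bH\,\bU_k^\top\bX_0$. Then
\[
\bU_k^\top\bY_1 = \tfrac{\lambda_k}{2}\bU_k^\top\bX_0, \qquad \bU_{-k}^\top\bY_1 = \sqrt\beta\,\bH\bU_k^\top\bX_0 + \bU_{-k}^\top\bXi_0 = \tfrac{\lambda_k}{2}\bH\,\bU_k^\top\bX_0 = \bH\,\bU_k^\top\bY_1 .
\]
Hence $\bY_1$ has full column rank, and since $\bH_1 = (\bU_{-k}^\top\bY_1)(\bU_k^\top\bY_1)^{-1}$ by right-multiplication by $\bR_1^{-1}$, we get $\bH_1 = \bH$.

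\textbf{Step 2: induction.} Suppose $\bH_t = \bH_{t-1} = \bH$, i.e. $\bU_{-k}^\top\bX_s = \bH\,\bU_k^\top\bX_s$ for $s = t-1,t$. Set
\[
\bU_{-k}^\top\bXi_t := (\lambda_k - 2\sqrt\beta)\,\bH\,\bU_k^\top\bX_t .
\]
Then
\begin{align*}
\bU_{-k}^\top\bY_{t+1} &= 2\sqrt\beta\,\bH\bU_k^\top\bX_t - \beta\bH\bU_k^\top\bX_{t-1}\bR_t^{-1} + (\lambda_k-2\sqrt\beta)\bH\bU_k^\top\bX_t \\
&= \bH\bigl(\lambda_k\bU_k^\top\bX_t - \beta\bU_k^\top\bX_{t-1}\bR_t^{-1}\bigr) = \bH\,\bU_k^\top\bY_{t+1},
\end{align*}
so $\bH_{t+1} = \bH$ as soon as $\bU_k^\top\bY_{t+1}$ is invertible. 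The noise bound is immediate, since $\Vert\bU_k^\top\bX_t\Vert_2\le 1$:
\[
\Vert\bU_{-k}^\top\bXi_t\Vert_2 \le (\lambda_k-2\sqrt\beta)\cdot 2\varepsilon \le 8(\lambda_k-2\sqrt\beta)\varepsilon .
\]

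\textbf{Step 3: invertibility (the only real obstacle).} We must show that $\bU_k^\top\bY_{t+1}$ is invertible, so that the QR steps are well defined and $\bH_{t+1}$ makes sense. Because $\bU_k^\top\bXi_t = \mathbf{0}$, the top block evolves exactly as in the noiseless method. Setting $\bW_t := \bU_k^\top\bX_t\bR_t\cdots\bR_1$, the identities $\bU_k^\top\bY_{t+1} = \bU_k^\top\bX_{t+1}\bR_{t+1}$ and the definition of $\bY_{t+1}$ give
\[
\bW_1 = \tfrac{\lambda_k}{2}\bU_k^\top\bX_0, \qquad \bW_{t+1} = \lambda_k\bW_t - \beta\bW_{t-1}.
\]
Hence $\bW_t = p_t(\lambda_k)\,\bU_k^\top\bX_0$ with $p_t$ the scaled Chebyshev polynomials of \eqref{eq:cheby_rec}. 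I then check that $p_t(x)>0$ for $x>2\sqrt\beta$. Writing $x = \sqrt\beta(r + r^{-1})$ with $r>1$, one gets $p_t(x) = \tfrac12\beta^{t/2}(r^t + r^{-t}) > 0$; alternatively, $p_t(x)/p_{t-1}(x) > \sqrt\beta$ follows by induction from the recurrence. Thus $\bW_t$ is invertible for all $t$. Since the $\bR_s$ are invertible inductively, $\bU_k^\top\bX_t$ and $\bU_k^\top\bY_{t+1}$ are invertible too, and the $\bY_{t+1}$ have full column rank.

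Combining the three steps gives $\tan\theta_k(\bU_k,\bX_t) = \Vert\bH\Vert_2 = 2\varepsilon > \varepsilon$ for every $t\ge 0$, with noise satisfying $\bU_k^\top\bXi_t = \mathbf{0}$ and the stated bound on $\Vert\bU_{-k}^\top\bXi_t\Vert_2$.
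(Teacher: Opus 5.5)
Your proof is correct, but it takes a different route from the paper's.

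\textbf{The paper's route.} The paper uses a fixed, state-independent perturbation $\bXi_t \equiv 8(\lambda_k-2\sqrt{\beta})\varepsilon[\mathbf{0},\dots,\mathbf{0},\bv_{k+1}]$ and reduces to a two-dimensional dynamics. It then uses the closed form of $\bH_t$ from \cref{lemma:htct}, with the Chebyshev closed forms for $p_t,q_t$, to derive the lower bound $h_t \geqslant 2\varepsilon(1-t(1-\gamma)\gamma^t) \geqslant 2\varepsilon(1-1/e)$. There the iterates genuinely move. It is the accumulated $\sum_s (s+1)\gamma^s$ noise term, the one the upper-bound analysis controls, that keeps them at an $\varepsilon$-level distance from $\bU_k$.

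\textbf{Your route.} You choose the perturbation adaptively, $\bU_{-k}^\top\bXi_t = (\lambda_k-2\sqrt{\beta})\bH\bU_k^\top\bX_t$, halved at $t=0$. This is exactly noiseless ANPM on the non-symmetric operator $\bA + (\lambda_k-2\sqrt{\beta})\bU_{-k}\bH\bU_k^\top$. For that operator, $\mathrm{range}(\bU_k+\bU_{-k}\bH)$ is an invariant subspace with eigenvalue $\lambda_k$, so the iterates never leave $\mathrm{range}(\bX_0)$ and $\bH_t$ stays frozen. This is the same mechanism as the paper's own proof of \cref{th:noisecond_necessary_2}, transposed to the other noise component. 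Adaptivity is legitimate here, since the statement only asks for the existence of a sequence and the dynamics are deterministic.

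\textbf{What each route buys.} Your route is purely algebraic: positivity of $p_t(\lambda_k)$ is needed only for well-posedness. It also gives a better constant, $2$ instead of $8$. In fact, because your construction has $\Vert\bU_{-k}^\top\bXi_t\Vert_2 = (\lambda_k-2\sqrt{\beta})\sin\theta_k(\bU_k,\bX_t)$, taking $\Vert\bH\Vert_2 \in (\varepsilon, \varepsilon/\sqrt{1-\varepsilon^2}]$ brings the constant down to $1$. The paper's route shows instead that a perturbation which does not look at the iterates at all already prevents reaching accuracy $\varepsilon$. So the failure is not an artifact of an adversary tracking $\bX_t$, and it exhibits the drift mechanism behind the stationary $\varepsilon$-term in \cref{th:napm}.
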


\begin{theorem}[Tightness of condition \eqref{eq:noisecondb}]\label{th:noisecond_necessary_2}
  Let $\varepsilon \in (0,1)$. There exists $\bX_0 \in \mathrm{St}(d,k)$ such that $\cos\theta_k(\bU_k, \bX_0) > 0$ and perturbations $\{\bXi_t\}_{t\geqslant 0}$ verifying $\bU_{-k}^\top\bXi_t  = \mathbf{0}$ and
  \begin{align*}
    & \Vert \bU_k^\top \bXi_t \Vert_2 \leqslant (\lambda_k - 2\sqrt{\beta}) \cos\theta_k(\bU_k, \bX_t),
  \end{align*}
  such that the ANPM iterates $\{\bX_t\}_{t\geqslant 0}$ defined by \eqref{eq:anpm} with momentum $\beta$ verify $\tan\theta_k(\bU_k, \bX_t) > \varepsilon$ for all $t\geqslant 0$.
\end{theorem}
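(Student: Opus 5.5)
The plan is to build a noise sequence that exactly cancels the gap between the top-$k$ eigenvalue $\lambda_k$ and the momentum threshold $2\sqrt{\beta}$ inside $\mathrm{range}(\bU_k)$. With $\bA=\mathrm{diag}(\lambda_k,\dots,\lambda_k,2\sqrt{\beta},\dots,2\sqrt{\beta})$, ANPM then behaves as if it were run noiselessly on $2\sqrt{\beta}\,\bI_d$. Such an algorithm cannot distinguish $\bU_k$ from $\bU_{-k}$, so it never moves the subspace at all. I assume $d\geqslant 2k$ so that $\bU_{-k}$ has at least $k$ columns; otherwise one argues with $k$ reduced accordingly, or one allows the bottom part to have rank less than $k$.

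\textbf{Construction.} Fix $\theta\in(0,\pi/2)$ with $\tan\theta>\varepsilon$. Let the columns of $\bX_0$ be $\cos\theta\,\bm{u}_i+\sin\theta\,\bm{u}_{k+i}$ for $i=1,\dots,k$. Then $\bX_0\in\mathrm{St}(d,k)$, and $\bU_k^\top\bX_0=\cos\theta\,\bI_k$, so $\cos\theta_k(\bU_k,\bX_0)=\cos\theta>0$ and $\tan\theta_k(\bU_k,\bX_0)=\tan\theta>\varepsilon$. Define the noise as
\begin{align*}
\bXi_0=-\tfrac12(\lambda_k-2\sqrt{\beta})\,\bU_k\bU_k^\top\bX_0,\qquad \bXi_t=-(\lambda_k-2\sqrt{\beta})\,\bU_k\bU_k^\top\bX_t \quad (t\geqslant 1).
\end{align*}
Clearly $\bU_{-k}^\top\bXi_t=\mathbf{0}$. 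Moreover, since $\bA\bU_k\bU_k^\top=\lambda_k\bU_k\bU_k^\top$ and $\bA\bU_{-k}\bU_{-k}^\top=2\sqrt{\beta}\,\bU_{-k}\bU_{-k}^\top$, we get $\bA\bX_t+\bXi_t=2\sqrt{\beta}\,\bX_t$. Likewise $\frac12\bA\bX_0+\bXi_0=\sqrt{\beta}\,\bX_0$.

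\textbf{Invariance of the subspace.} Next I show by induction that $\mathrm{range}(\bX_t)=\mathrm{range}(\bX_0)$ for all $t$, with each $\bY_t$ of full column rank. The cleanest route is through the unnormalized iterates $\bZ_t:=\bX_t\bR_t\cdots\bR_1$. With the effective operator $2\sqrt{\beta}\,\bI$, these satisfy $\bZ_t=p_t(2\sqrt{\beta})\bX_0$, where $p_t$ follows \eqref{eq:cheby_rec}, exactly as in the noiseless discussion after \eqref{eq:anpm}. At $x=2\sqrt{\beta}$ the characteristic polynomial has the double root $\sqrt{\beta}$. The initial conditions $p_0=1$ and $p_1=\sqrt{\beta}$ then force $p_t(2\sqrt{\beta})=\beta^{t/2}>0$. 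Hence $\bZ_t$ has full rank, every $\bR_t$ is invertible, and $\bX_t=\bX_0\bQ_t$ with $\bQ_t$ orthogonal. This is the step to check carefully. The identity $\bZ_t=p_t(\cdot)\bX_0$ relies on the fact that $\bX_{t-1}\bR_t^{-1}$ equals $\bZ_{t-1}(\bR_t\cdots\bR_1)^{-1}$, and I verify this directly from the recursion for $\bY_{t+1}$.

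\textbf{Conclusion.} From $\bX_t=\bX_0\bQ_t$ we get $\bU_k^\top\bX_t=\cos\theta\,\bQ_t$. All its singular values equal $\cos\theta$, so $\cos\theta_k(\bU_k,\bX_t)=\cos\theta$ and $\Vert\bU_k^\top\bX_t\Vert_2=\cos\theta$. Therefore $\Vert\bU_k^\top\bXi_t\Vert_2=(\lambda_k-2\sqrt{\beta})\cos\theta_k(\bU_k,\bX_t)$ for $t\geqslant1$, and half of that for $t=0$. This is precisely the required noise bound. Finally, $\tan\theta_k(\bU_k,\bX_t)=\tan\theta>\varepsilon$ for every $t\geqslant0$. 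The choice $\bU_k^\top\bX_0\propto\bI_k$ matters: it makes the largest and smallest singular values coincide, which lets the bound hold with $\cos\theta_k$ rather than $\cos\theta_1$.
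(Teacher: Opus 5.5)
Your mechanism is the paper's. You choose $\bXi_t$ so that $\bA\bX_t+\bXi_t$ acts as $2\sqrt{\beta}$ on the mixed directions, so ANPM runs noiselessly on an operator that leaves $\mathrm{range}(\bX_0)$ invariant. Your $\bZ_t$ argument is a clean way to see that the iterates freeze; in fact $\bX_t=\bX_0$, since an orthogonal upper-triangular matrix with positive diagonal is $\bI_k$. For $d\geqslant 2k$ your proof is complete.

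The gap is the range $k<d<2k$ (e.g.\ $d=3$, $k=2$). The theorem covers it, and your remark about ``reducing $k$'' does not handle it, because $k$ is fixed by the statement. In that range, for every $\bX\in\mathrm{St}(d,k)$ the matrix $\bI_k-(\bU_k^\top\bX)^\top(\bU_k^\top\bX)=(\bU_{-k}^\top\bX)^\top(\bU_{-k}^\top\bX)$ has rank at most $d-k<k$, so $\Vert\bU_k^\top\bX\Vert_2=1$. Your noise $-(\lambda_k-2\sqrt{\beta})\bU_k\bU_k^\top\bX_t$ therefore has $\Vert\bU_k^\top\bXi_t\Vert_2=\lambda_k-2\sqrt{\beta}$. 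This strictly exceeds $(\lambda_k-2\sqrt{\beta})\cos\theta_k(\bU_k,\bX_t)$ whenever $\tan\theta_k(\bU_k,\bX_t)>\varepsilon$, which is exactly the regime you need. No choice of $\bX_0$ fixes this; the noise itself has to change.

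The missing idea is to cancel the gap only along the single mixed direction, which is what the paper does. It takes $\bX_0=[\bm{u}_1,\dots,\bm{u}_{k-1},\cos\theta\,\bm{u}_k+\sin\theta\,\bm{u}_{k+1}]$. The noise is supported on the last column, along $\bm{u}_k$, with size $(\lambda_k-2\sqrt{\beta})\cos\theta_k(\bU_k,\bX_t)$, and the factor $\frac12$ at $t=0$ as in yours. In your framework this is $\bXi_t=-(\lambda_k-2\sqrt{\beta})\bm{u}_k\bm{u}_k^\top\bX_t$. Then $\bA\bX_t+\bXi_t=\bA'\bX_t$, where $\bA'=\bA-(\lambda_k-2\sqrt{\beta})\bm{u}_k\bm{u}_k^\top$ has only $k-1$ eigenvalues equal to $\lambda_k$. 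Every column of $\bX_0$ is an eigenvector of $\bA'$, so your argument gives $\bZ_t=\bX_0\,\mathrm{diag}(p_t(\lambda_k),\dots,p_t(\lambda_k),p_t(2\sqrt{\beta}))$ with positive entries. Hence $\bX_t=\bX_0$, and $\Vert\bU_k^\top\bXi_t\Vert_2=(\lambda_k-2\sqrt{\beta})\Vert\bm{u}_k^\top\bX_0\Vert_2=(\lambda_k-2\sqrt{\beta})\cos\theta_k(\bU_k,\bX_t)$ for every $d\geqslant k+1$. The paper reaches the same conclusion by reducing to the two-dimensional dynamics of the last column in $\mathrm{Span}(\bm{u}_k,\bm{u}_{k+1})$, where the effective operator is $2\sqrt{\beta}\,\bI_2$.
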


The proofs for these results can be found in \cref{apdx:napm_opt_proofs}. The result from \cref{th:napm_complexity_lb} is not surprising: it corresponds to the worst-case complexity of block Krylov methods for top-$k$ PCA in the noiseless setting. The results from \cref{th:noisecond_necessary_1,th:noisecond_necessary_2} provide insights on the impact of the two noise components $\bU_{-k}^\top \bXi_t$ and $\bU_k^\top \bXi_t$ on the evolution of $\{\bX_t\}$. In the proof of \cref{th:noisecond_necessary_1}, we show that a sufficiently large component $\bU_{-k}^\top \bXi_t$ makes the estimates drift away from the subspace spanned by $\bU_k$, preventing convergence. In the proof of \cref{th:noisecond_necessary_2}, we show that a sufficiently large component $\bU_k^\top \bXi_t$ can effectively render ANPM equivalent to a Power Method with momentum on a matrix with eigengap $0$, which does not converge to $\bU_k$.

\section{Application to Decentralized PCA}\label{sec:adepm}

We now apply our results on ANPM to the problem of decentralized PCA. We consider a connected undirected graph $G = (V,E)$ with $V := \{1,\dots,n\}$, representing a decentralized communication network with $n$ agents. Each agent $i \in V$ has access locally to a matrix-vector product $\bm{x} \mapsto \mathbf{A}_i \bm{x}$. The objective of decentralized PCA is to compute the top-$k$ eigenspace of the matrix $\mathbf{A}:= n^{-1} \sum_{i=1}^n \mathbf{A}_i \succeq \mathbf{0}$ through local computations and communications between neighboring agents only. This setting arises for instance when a dataset $\bPhi = [\bPhi_1^\top,\dots,\bPhi_n^\top]^\top\in\R^{m\times d}$ is distributed over $G$ so that agent $i\in V$ locally holds $\bPhi_i \in \R^{m_i \times d}$ with $\sum_{i=1}^n m_i = m$. The goal is then to estimate the principal components of the empirical covariance matrix $\mathbf{A} = \frac{1}{m} \bPhi^\top \bPhi = \frac{1}{n} \sum_{i=1}^n \mathbf{A}_i$, where $\mathbf{A}_i := \frac{n}{m} \bPhi_i^\top \bPhi_i$.
We provide another application in our experimental section (\Cref{sec:exp_adepm}) to decentralized spectral clustering.

\subsection{Gossip Algorithms}

The method we propose for decentralized PCA is based on the idea of approximating at each iteration the matrix vector product $\bm{x} \mapsto \mathbf{A}\bm{x}$ using only neighbor-to-neighbor communications. Gossip algorithms \cite{boyd06} are iterative methods for decentralized averaging over networks. At each iteration, each agent $i$ performs a weighted averaging of their estimate with those of their neighbors $j\in\mathcal{N}_i$. These weights define the gossip matrix:

\begin{definition}\label{def:gossip}
  A gossip matrix $\mathbf{W} \in \R^{n\times n}$ is a symmetric matrix with non-negative coefficients which is doubly stochastic (i.e. $\mathbf{W}\mathbf{1} = \mathbf{W}^\top\mathbf{1}= \mathbf{1}$) and such that for all $i,j \in \{1,\dots,n\}$, $w_{i,j} > 0$ if and only if $i=j$ or $(i,j) \in E$. We define its absolute spectral gap\footnote{From the Perron-Frobenius theorem (see e.g. Chapter 7 from \citet{meyer23}), we have $1>\vert\lambda_i(\bW)\vert$ for all $i=2,\dots,n$.} as $ \gamma_\mathbf{W} := 1 - \max\{|\lambda_2(\mathbf{W})|, |\lambda_n(\mathbf{W})|\} \in (0,1]$, where $1 = \lambda_1(\mathbf{W}) \geqslant \dots \geqslant \lambda_n(\mathbf{W})$ are the eigenvalues of $\mathbf{W}$.
\end{definition}

\begin{algorithm}
    \caption{Accelerated Gossip}
    \label{alg:acc_gossip}
    \begin{algorithmic}[1]
        \REQUIRE Gossip matrix $\mathbf{W} \in \R^{n\times n}$, $L\geqslant 1$, initialization $\{\bY_{i,0}\}_{i=1}^n = \{\bY_{i,-1}\}_{i=1}^n$ in $\R^{d\times k}$.
        \STATE $\omega := \frac{1 - \sqrt{\gamma_\mathbf{W}(2-\gamma_\bW)}}{1 + \sqrt{\gamma_\mathbf{W}(2-\gamma_\bW)}}$
        \FOR{$\ell=0$ to $L-1$}
            \FOR{each agent $i \in \{1,\dots,n\}$ \textbf{in parallel}}
                \STATE $\mathbf{Y}_{i,\ell+1} = (1 + \omega) \sum_{j\in\mathcal{N}_i \cup \{i\}} w_{i,j} \mathbf{Y}_{j,\ell} - \omega \mathbf{Y}_{i,\ell-1}$
            \ENDFOR
        \ENDFOR
    \end{algorithmic}
\end{algorithm}

The convergence speed of each agent's estimate to the network-wide average depends on the spectral gap $\gamma_\bW$ of the gossip matrix. For our decentralized PCA application, we will use an accelerated gossip algorithm introduced in \cite{liu11} which is described in \cref{alg:acc_gossip}. Instead of simply performing a weighted averaging at each iteration with their neighbors, each agent adds a momentum term to the weighted average. As shown in \cref{prop:accgossip}, this allows the algorithm to converge at the rate $\tilde{\mathcal{O}}(1/\sqrt{\gamma_\mathbf{W}})$ instead of the standard $\tilde{\mathcal{O}}(1/\gamma_\mathbf{W})$ rate achieved by classical gossip algorithms \cite{boyd06}, thus reducing the communication costs of our method.

\begin{proposition}[\citet{ye21}]\label{prop:accgossip}
  Let $\bar{\bY} := n^{-1} \sum_{i=1}^n \bY_{i,0}$. For all $L \geqslant 1$, for all agents $i \in \{1,\dots,n\}$, \cref{alg:acc_gossip} outputs $\bY_{i,L}$ satisfying
  \begin{align*}
    \Vert \bY_{i,L} - \bar{\bY} \Vert_\mathrm{F} \leqslant \left(1 - \sqrt{\gamma_\bW}\right)^L \sqrt{n} \max_{j=1,\dots,n} \Vert \bY_{j,0} - \bar{\bY} \Vert_\mathrm{F}.
  \end{align*}
\end{proposition}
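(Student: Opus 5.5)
The statement is Proposition (accelerated gossip, attributed to \citet{ye21}). I would prove it by reducing it to a scalar Chebyshev/heavy-ball recursion on the eigenvalues of $\bW$.

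\textbf{Stacking and centering.} Stack the agents' iterates into $\mathcal{Y}_\ell\in\R^{n\times dk}$, with row $i$ equal to $\mathrm{vec}(\bY_{i,\ell})^\top$. The update of \cref{alg:acc_gossip} then reads
\begin{align*}
  \mathcal{Y}_{\ell+1}=(1+\omega)\bW\mathcal{Y}_\ell-\omega\mathcal{Y}_{\ell-1},
  \qquad \mathcal{Y}_{-1}=\mathcal{Y}_0 .
\end{align*}
Since $\bW$ is doubly stochastic, $\mathbf{1}^\top\bW=\mathbf{1}^\top$. The coefficients satisfy $(1+\omega)-\omega=1$ and the initialization is consistent, so the average row $\bar{\bY}$ is preserved at every step. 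Subtracting it, the centered iterate $\mathcal{D}_\ell:=\mathcal{Y}_\ell-\mathbf{1}\mathrm{vec}(\bar{\bY})^\top$ obeys the same recursion. Its columns stay in $\mathbf{1}^\perp$, on which $\bW$ has eigenvalues $\lambda\in[-(1-\gamma_\bW),1-\gamma_\bW]$.

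\textbf{Eigen-decomposition.} Diagonalize $\bW$ on $\mathbf{1}^\perp$. Each eigencomponent evolves as $a_{\ell+1}=(1+\omega)\lambda a_\ell-\omega a_{\ell-1}$ with $a_{-1}=a_0$, so $\mathcal{D}_\ell=q_\ell(\bW)\mathcal{D}_0$ for the polynomial $q_\ell$ defined by this recursion. It then suffices to show
\begin{align*}
  \sup_{|\lambda|\leqslant 1-\gamma_\bW}|q_\ell(\lambda)|\leqslant (1-\sqrt{\gamma_\bW})^\ell .
\end{align*}
Granting this, row $i$ of $\mathcal{D}_L$ has norm at most $\Vert\mathcal{D}_L\Vert_\mathrm{F}\leqslant(1-\sqrt{\gamma_\bW})^L\Vert\mathcal{D}_0\Vert_\mathrm{F}$. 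Finally $\Vert\mathcal{D}_0\Vert_\mathrm{F}\leqslant\sqrt n\max_j\Vert\bY_{j,0}-\bar{\bY}\Vert_\mathrm{F}$, which is where the $\sqrt n$ factor comes from.

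\textbf{Scalar bound (the main work).} Write $\lambda_{\max}=1-\gamma_\bW$. The value of $\omega$ is exactly the critical damping choice, which makes the discriminant $(1+\omega)^2\lambda^2-4\omega$ vanish at $|\lambda|=\lambda_{\max}$. One checks this via
\begin{align*}
  \sqrt\omega=\frac{1-\sqrt{1-\lambda_{\max}^2}}{\lambda_{\max}},
  \qquad\text{using } 1-\lambda_{\max}^2=\gamma_\bW(2-\gamma_\bW).
\end{align*}
For $|\lambda|<\lambda_{\max}$ the characteristic roots are complex with modulus $\sqrt\omega$. Writing $\lambda=\lambda_{\max}\cos\phi$ gives the closed form
\begin{align*}
  q_\ell(\lambda)=\omega^{\ell/2}\bigl(\cos(\ell\phi)+c\,\sin(\ell\phi)/\sin\phi\bigr),
\end{align*}
where $c$ is a constant fixed by $a_{-1}=a_0$. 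At $|\lambda|=\lambda_{\max}$ there is a double root and the form becomes $\omega^{\ell/2}(1+c'\ell)$.

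This is the obstacle: the $\ell$ or $1/\sin\phi$ prefactors must be absorbed. Two facts should do it. First, $\sqrt\omega\leqslant 1-\sqrt{\gamma_\bW}$, with room to spare since $\sqrt{\omega}\approx 1-\sqrt{2\gamma_\bW}$. Second, the initialization $a_{-1}=a_0$ makes $c$ small. If a clean pointwise inequality proves awkward, I would fall back on the bound $|\sin(\ell\phi)/\sin\phi|\leqslant\ell$. I would then show $\ell\,\omega^{\ell/2}\lesssim(1-\sqrt{\gamma_\bW})^\ell$ using the gap between $\sqrt{2\gamma_\bW}$ and $\sqrt{\gamma_\bW}$, possibly at the cost of a constant, which I would check against \citet{ye21}'s stated form.
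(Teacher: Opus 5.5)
The paper does not prove this proposition; it imports it from \citet{ye21}. So the only question is whether your argument closes, and it does not. The scalar bound $\sup_{|\lambda|\leqslant 1-\gamma_{\mathbf{W}}}|q_\ell(\lambda)|\leqslant(1-\sqrt{\gamma_{\mathbf{W}}})^\ell$ is false. Your reduction is tight enough that the same failure refutes the proposition as stated.

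Solving your recursion with $q_{-1}=q_0=1$ gives $c=\cos\phi-\sqrt{\omega}$, which depends on $\lambda$. At the double roots,
\begin{align*}
  q_\ell(\pm\lambda_{\max})=\bigl(1+(1\mp\sqrt{\omega})\,\ell\bigr)\,(\pm\sqrt{\omega})^{\ell}.
\end{align*}
So $a_{-1}=a_0$ makes $c$ small ($\approx\sqrt{2\gamma_{\mathbf{W}}}$) only near $+\lambda_{\max}$; near $-\lambda_{\max}$ you get $c\approx-2$.

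The room between $\sqrt{\omega}$ and $1-\sqrt{\gamma_{\mathbf{W}}}$ is only a factor $1-(\sqrt{2}-1)\sqrt{\gamma_{\mathbf{W}}}$ per step. That absorbs the prefactor $1+\sqrt{2\gamma_{\mathbf{W}}}\,\ell$ only up to a constant tending to $(2+\sqrt{2})e^{-1/\sqrt{2}}\approx 1.68>1$. It absorbs $1+2\ell$, or your fallback $\ell\,\omega^{\ell/2}$, only up to a factor of order $\gamma_{\mathbf{W}}^{-1/2}$.

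Concretely, take $n=2$ and $\mathbf{Y}_{2,0}=-\mathbf{Y}_{1,0}$. Then $\bar{\mathbf{Y}}=\mathbf{0}$ and $\mathbf{Y}_{1,L}=q_L(\lambda_2(\mathbf{W}))\,\mathbf{Y}_{1,0}$, so the proposition asserts $|q_L(\lambda_2(\mathbf{W}))|\leqslant\sqrt{2}\,(1-\sqrt{\gamma_{\mathbf{W}}})^L$.
\begin{itemize}
  \item Off-diagonal weights $\gamma/2$, so $\mathbf{W}\succeq\mathbf{0}$ and $\lambda_2=1-\gamma$. With $\gamma=10^{-2}$ one has $\sqrt{\omega}\approx0.8676$, and $q_{20}(0.99)\approx0.213>\sqrt{2}\,(0.9)^{20}\approx0.172$.
  \item Off-diagonal weights $1-\gamma/2$, so $\lambda_2=\gamma-1$. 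This is admissible because $\gamma_{\mathbf{W}}$ is the absolute gap. Already $|q_1|=2\sqrt{\omega}+\omega\approx2.49>0.9\sqrt{2}$.
\end{itemize}

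Your method does prove something weaker. For any $\mathbf{W}$, $|c|\leqslant1+\sqrt{\omega}$ gives $|q_\ell(\lambda)|\leqslant(1+2\ell)\,\omega^{\ell/2}$.

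If $\mathbf{W}\succeq\mathbf{0}$, then $\phi\in[0,\pi/2]$. Split $c=(\cos\phi-1)+(1-\sqrt{\omega})$. Bound the first part using $|\sin(\ell\phi)|\leqslant1$ and $(1-\cos\phi)/\sin\phi=\tan(\phi/2)\leqslant1$. Bound the second part using $|\sin(\ell\phi)/\sin\phi|\leqslant\ell$. Combined with the elementary facts $1-\sqrt{\omega}\leqslant\sqrt{2\gamma_{\mathbf{W}}}$ and $\sqrt{\omega}\leqslant1-\sqrt{\gamma_{\mathbf{W}}}$, this yields
\begin{align*}
  |q_\ell(\lambda)|\leqslant\bigl(2+\sqrt{2\gamma_{\mathbf{W}}}\,\ell\bigr)(1-\sqrt{\gamma_{\mathbf{W}}})^{\ell}\leqslant 4\,\bigl(1-\tfrac{1}{2}\sqrt{\gamma_{\mathbf{W}}}\bigr)^{\ell}.
\end{align*}

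Even for $\mathbf{W}\succeq\mathbf{0}$, a universal constant in front of the exact rate $(1-\sqrt{\gamma_{\mathbf{W}}})^L$ is not enough. As $\gamma_{\mathbf{W}}\to1$, $\sqrt{\omega}/(1-\sqrt{\gamma_{\mathbf{W}}})\to1$ while $1-\sqrt{\omega}\to1$. Hence $\sup_\ell q_\ell(\lambda_{\max})/(1-\sqrt{\gamma_{\mathbf{W}}})^\ell$ grows roughly like $4/(e(1-\gamma_{\mathbf{W}}))$.

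A correct version therefore needs two changes: $\mathbf{W}\succeq\mathbf{0}$ (or running the scheme on $(\mathbf{I}+\mathbf{W})/2$), and a constant with a slightly degraded rate, as in the display above. In the ADePM analysis this only changes the constants in the lower bound on $L$.
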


\subsection{ADePM: Accelerated Decentralized Power Method}

We now present our Accelerated Decentralized Power Method (ADePM) for decentralized PCA, which is described in \cref{alg:adepm}. The idea is to approximate at each iteration the matrix vector product $\bm{x} \mapsto \mathbf{A}\bm{x}$ through gossiping. Each agent $i$ maintains a local estimate $\mathbf{X}_{i,t} \in \mathrm{St}(d,k)$ of the top-$k$ eigenspace of $\mathbf{A}$, and at each iteration performs a local matrix vector product with $\mathbf{A}_i$, adds momentum, and gossips to approximate the average over the network. The next theorem provides convergence guarantees for ADePM.

\begin{theorem}\label{th:adepm}
    Let $\varepsilon \in (0,1)$ and $\{\bA_i\}_{i=1}^n$ be matrices in $\R^{d\times d}$ locally held by each node in $G$, and let $\mathbf{A} := n^{-1} \sum_{i=1}^n \mathbf{A}_i \succeq \mathbf{0}$ such that $\lambda_k > \lambda_{k+1}$. Let $\bX_0 \in \mathrm{St}(d,k)$ such that $\cos\theta_k(\bU_k, \bX_0) > 0$, and consider the ADePM iterates $\{\bX_{i,t}\}$ given by \cref{alg:adepm} with momentum $\beta > 0$ satisfying $\lambda_k > 2\sqrt{\beta} \geqslant \lambda_{k+1}$. Assume that the number of gossip rounds per iteration $L$ satisfies
    \begin{align*}
      L \geqslant \mathcal{O}\left(\frac{1}{\sqrt{\gamma_\bW}} \log\left( \frac{M}{\lambda_k}\frac{\lambda_k}{\lambda_k-2\sqrt{\beta}}\frac{\tan\theta_k(\bU_k,\bX_0)}{\varepsilon}\right)\right)
    \end{align*}
    where $M := \max_{i} \Vert \bA_i \Vert_2$. Then, for all $i \in \{1,\dots,n\}$, for all $t\geqslant T$, we have $\sin\theta_k(\bU_k, \bX_{i,t}) \leqslant \varepsilon$, where
    \begin{align*}
      T = \mathcal{O}\left(\sqrt{\frac{\lambda_k}{\lambda_k - 2\sqrt{\beta}}} \log\left(\frac{\tan\theta_k(\bU_k, \bX_0)}{\varepsilon}\right)\right).
    \end{align*}
\end{theorem}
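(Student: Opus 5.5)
Algorithm~\ref{alg:adepm} itself is not shown above, so I will assume a gradient-tracking style scheme as in DeEPCA \cite{ye21}. Each agent keeps a tracking variable $\bS_{i,t}$ updated by
\[
\bS_{i,t+1} = \mathrm{Gossip}_L\bigl(\bS_{i,t} + \bA_i\bX_{i,t} - \bA_i\bX_{i,t-1} - \beta(\bX_{i,t-1}\bR_{i,t}^{-1} - \bX_{i,t-2}\bR_{i,t-1}^{-1})\bigr),
\]
and then sets $\bX_{i,t+1}, \bR_{i,t+1} = \mathrm{QR}(\bS_{i,t+1})$. Since gossip preserves averages, the network average satisfies
\[
\bar{\bS}_{t+1} = \frac{1}{n}\sum_i\bigl(\bA_i \bX_{i,t} - \beta \bX_{i,t-1}\bR_{i,t}^{-1}\bigr)
\]
by induction. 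The plan is to view the iterates as an instance of ANPM \eqref{eq:anpm} run on a reference sequence and then invoke \cref{th:napm}.

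\textbf{Reduction to ANPM.} Let $\bX_t, \bR_t$ be the QR factors of $\bar{\bS}_t$. Then
\[
\bar{\bS}_{t+1} = \bA\bX_t - \beta\bX_{t-1}\bR_t^{-1} + \bXi_t,
\]
where the perturbation $\bXi_t$ collects the terms $n^{-1}\sum_i \bA_i(\bX_{i,t}-\bX_t)$ and $-\beta n^{-1}\sum_i(\bX_{i,t-1}\bR_{i,t}^{-1}-\bX_{t-1}\bR_t^{-1})$. It is therefore bounded by
\[
\Vert\bXi_t\Vert_2 \lesssim M\max_i\Vert\bX_{i,t}-\bX_t\Vert + \beta\max_i\Vert \bX_{i,t-1}\bR_{i,t}^{-1}-\bX_{t-1}\bR_t^{-1}\Vert .
\]
Using the sufficient condition stated after \cref{th:napm}, it is enough to keep
\[
\Vert\bXi_t\Vert_2 \leqslant c'(\lambda_k-2\sqrt\beta)\min\bigl(\cos\theta_k(\bU_k,\bX_0),\varepsilon\bigr).
\]
Once this holds, \cref{th:napm} gives $\sin\theta_k(\bU_k,\bX_t)\leqslant \varepsilon/2$ for $t\geqslant T$. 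A final perturbation step, bounding $\sin\theta_k(\bU_k,\bX_{i,t})$ by $\sin\theta_k(\bU_k,\bX_t) + \Vert\bX_{i,t}-\bX_t\Vert$, transfers the guarantee to every agent.

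\textbf{Consensus control by induction.} Next I need a joint induction on $t$ for the consensus error $\delta_t := \max_i \Vert\bS_{i,t}-\bar{\bS}_t\Vert_\mathrm{F}$.

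First, QR is locally Lipschitz. If $\sigma_{\min}(\bar{\bS}_t)$ is bounded below, then $\Vert \bX_{i,t}-\bX_t\Vert$ and $\Vert\bR_{i,t}^{-1}-\bR_t^{-1}\Vert$ are $\lesssim \delta_t/\sigma_{\min}(\bar{\bS}_t)$, with an additional factor $1/\sigma_{\min}$ for the inverse. The lower bound comes from the ANPM analysis: $\sigma_{\min}(\bR_t)\geqslant \sigma_{\min}(\bU_k^\top\bar{\bS}_t)$, which is of order $\lambda_k\cos\theta_k$ once the $\bC_t$ recursion from the proof sketch is controlled. I will also use $\Vert\bR_t^{-1}\Vert\lesssim 2/\lambda_k$, since $\bR_t$ behaves like the Chebyshev ratio $p_{t}(\lambda_k)/p_{t-1}(\lambda_k)\geqslant \lambda_k/2$.

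Second, the gossip step contracts. By \cref{prop:accgossip}, $\delta_{t+1}\leqslant \rho\sqrt n\,(\delta_t + \text{increment deviations})$, with $\rho=(1-\sqrt{\gamma_\bW})^L$. The increment deviations are bounded by $(M+\beta/\lambda_k)/\sigma_{\min}$ times $(\delta_t+\delta_{t-1}+\delta_{t-2})$, plus the spread of the $\bA_i\bX_t$ terms, which is at most $M\Vert\bX_t-\bX_{t-1}\Vert\leqslant 2M$.

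This yields a linear recursion of the form $\delta_{t+1}\leqslant \rho\sqrt n\,K(\delta_t+\delta_{t-1}+\delta_{t-2}) + \rho\sqrt n\, M$, with $K\approx M/\lambda_k$ up to constants. Choosing
\[
L \gtrsim \gamma_\bW^{-1/2}\log\Bigl(\frac{nM}{\lambda_k-2\sqrt\beta}\cdot\frac{\tan\theta_k(\bU_k,\bX_0)}{\varepsilon}\Bigr)
\]
makes $\rho\sqrt n K\leqslant 1/6$, so $\delta_t$ stays at most a small multiple of $\rho\sqrt n M$. This is below the required $(\lambda_k-2\sqrt\beta)\varepsilon\cos\theta_k(\bU_k,\bX_0)$ scale. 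The $\cos\theta_k$ factor sits inside the log through $\tan\theta_k(\bU_k,\bX_0)$, since $\cos\theta_k(\bU_k,\bX_0)\geqslant(1+\tan^2)^{-1/2}$.

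\textbf{Main obstacle.} The delicate step is the uniform lower bound on $\sigma_{\min}(\bar{\bS}_t)$ and the upper bound on $\Vert\bR_t^{-1}\Vert$ along the whole trajectory. These are needed to make QR Lipschitz, but they themselves depend on the perturbations being small, which is circular. I would handle this by strong induction. Assuming the noise conditions hold up to time $t$, the intermediate bounds from the proof of \cref{th:napm} apply, namely the bounds on $\bC_t$ and the monotone control of $\cos\theta_k(\bU_k,\bX_t)$ from \cref{lemma:ht_exp_decay}. These give $\sigma_{\min}(\bR_{t})\gtrsim\lambda_k-2\sqrt\beta$ and $\cos\theta_k(\bU_k,\bX_t)\gtrsim\cos\theta_k(\bU_k,\bX_0)$. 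Feeding these into the consensus recursion shows the noise condition at time $t+1$, which closes the induction.
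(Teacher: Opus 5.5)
Your architecture is the paper's. The QR factors of the network average of the gossip inputs define a reference sequence $\Xbar_t$ that runs ANPM on $\bA$ with perturbation $\bXi_t=n^{-1}\sum_i\bigl[\bA_i(\bX_{i,t}-\Xbar_t)-\beta(\bX_{i,t-1}\bR_{i,t}^{-1}-\Xbar_{t-1}\Rbar_t^{-1})\bigr]$. QR perturbation bounds (\cref{lemma:q-bound,lemma:r-bound}) turn consensus errors into bounds on $\bXi_t$, and a joint induction keeps the noise conditions valid. The guarantee is then transferred through $\sin\theta_k(\bU_k,\bX_{i,t})\leqslant\sin\theta_k(\bU_k,\Xbar_t)+\Vert\bX_{i,t}-\Xbar_t\Vert_2$.

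\textbf{You analyze a different algorithm.} \cref{alg:adepm} is shown in the paper and has no tracking variable. At every iteration each agent forms $\bY_{i,t+1/2}=\bA_i\bX_{i,t}-\beta\bX_{i,t-1}\bR_{i,t}^{-1}$ afresh (and $\frac12\bA_i\bX_0$ at initialization). It then runs \cref{alg:acc_gossip} for $L$ rounds on these and takes the QR of the output. As written, your consensus recursion is about an accelerated DeEPCA-type method, not ADePM.

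For the actual algorithm this step is simpler. The Frobenius spread of the gossip inputs is bounded directly by $\sqrt{k}\,(2M+2\beta\max_j\Vert\bR_{j,t}^{-1}\Vert_2)$. A single application of \cref{prop:accgossip} then gives $\max_i\Vert\bY_{i,t+1}-\Ybar_{t+1}\Vert_\mathrm{F}\lesssim(1-\sqrt{\gamma_\bW})^L\sqrt{nk}\,M/\alpha_0$. Here $\alpha_0:=(1+(\varepsilon/2+\tan\theta_k(\bU_k,\bX_0))^2)^{-1/2}$ is the uniform lower bound on $\cos\theta_k(\bU_k,\Xbar_t)$ from \cref{lemma:ht_exp_decay}. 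The past enters only through the bound on $\Vert\bR_{j,t}^{-1}\Vert_2$ supplied by the induction hypothesis, so there is no recursion in the consensus error. Your own non-vanishing forcing term $(1-\sqrt{\gamma_\bW})^L\sqrt{n}M$ already shows that tracking buys nothing in this analysis: $L$ still needs $\log(1/\varepsilon)$.

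\textbf{Two auxiliary bounds are false and drop an essential cosine factor.} Neither $\Vert\bR_t^{-1}\Vert_2\lesssim 2/\lambda_k$ nor $\sigma_{\min}(\bR_t)\gtrsim\lambda_k-2\sqrt{\beta}$ holds in general. Take $k=1$, $\lambda_2=\dots=\lambda_d=0$, small $\beta>0$, and $\bX_0=\cos\theta_0\,\bm{u}_1+\sin\theta_0\,\bm{u}_2$. Then $\Ybar_1=\frac12\bA\bX_0=\frac12\lambda_1\cos\theta_0\,\bm{u}_1$, so $\sigma_{\min}(\Rbar_1)=\frac12\lambda_1\cos\theta_0$, which is arbitrarily small compared with $\lambda_1-2\sqrt\beta$. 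Your Chebyshev heuristic describes only the factor $\bLambda_k\bG_{t-1}^{-1}$.

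The correct bound, which you also wrote earlier, is $\sigma_{\min}(\Ybar_t)\geqslant\sigma_{\min}(\bU_k^\top\Ybar_t)=\sigma_{\min}(\bLambda_k\bG_{t-1}^{-1}\bU_k^\top\Xbar_{t-1})\geqslant(1/2-c)\lambda_k\alpha_0$. It follows from \eqref{eq:ukty_relation} and the uniform bound \eqref{eq:gt_bound} on $\bG_t$, not from the $\bC_t$ recursion. \cref{th:weyl} then passes it to the local $\bR_{i,t}$.

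Similarly, \cref{lemma:r-bound} carries a factor $\Vert\Rbar_t\Vert_2=\Vert\Ybar_t\Vert_2\lesssim M/\alpha_0$, which your ``additional factor $1/\sigma_{\min}$'' omits. These factors enter only polynomially inside the logarithm, so your condition on $L$ survives; this is where the factor $1/\alpha_0$ in \eqref{eq:L_condition} comes from. The induction must nevertheless be run with them.
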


\begin{algorithm}[t]
    \caption{ADePM}
    \label{alg:adepm}
    \begin{algorithmic}[1]
        \REQUIRE Gossip matrix $\mathbf{W} \in \R^{n\times n}$, $\beta > 0$, $L \geqslant 1$, $T\geqslant 1$, $\bX_0\in\mathrm{St}(d,k)$.
        \\\textbf{Initialization:}
        \STATE $\forall i=1,\dots,n, \; \mathbf{X}_{i,0} = \mathbf{X}_0$
        \STATE $\{\bY_{i,1}\}_{i=1}^n = \mathrm{AccGossip}(\bW, L, \{\frac{1}{2}\mathbf{A}_i \mathbf{X}_{0}\}_{i=1}^n)$
        \FOR{each agent $i \in \{1,\dots,n\}$ \textbf{in parallel}}
          \STATE $\bX_{i,1}, \bR_{i,1} = \mathrm{QR}\left(\bY_{i,1}\right)$
        \ENDFOR        
        \\\textbf{Iterations:}
        \FOR{$t=1$ to $T-1$}
            \FOR{each agent $i \in \{1,\dots,n\}$ \textbf{in parallel}}
                \STATE $\mathbf{Y}_{i,t+1/2} = \mathbf{A}_i \mathbf{X}_{i,t} - \beta \mathbf{X}_{i,t-1}\mathbf{R}_{i,t}^{-1}$
            \ENDFOR
            \STATE $\{\bY_{i,t+1}\}_{i=1}^n = \mathrm{AccGossip}(\mathbf{W}, L, \{\bY_{i,t+1/2}\}_{i=1}^n)$
            \FOR{each agent $i \in \{1,\dots,n\}$ \textbf{in parallel}}
                \STATE $\mathbf{X}_{i,t+1}, \mathbf{R}_{i,t+1} = \mathrm{QR}(\mathbf{Y}_{i,t+1})$
            \ENDFOR
        \ENDFOR
    \end{algorithmic}
\end{algorithm}


\textbf{Convergence rate.}\quad For the optimal parameter $\beta =\beta^\star = \lambda_{k+1}^2/4$, ADePM converges at the accelerated rate $\tilde{\mathcal{O}}(\sqrt{\lambda_k/(\lambda_k - \lambda_{k+1})})$, significantly improving over the standard rate $\tilde{\mathcal{O}}(\lambda_k/(\lambda_k - \lambda_{k+1}))$ reached by other classical methods. We are not aware of any other decentralized PCA algorithm achieving this accelerated rate.

\textbf{Communication cost.}\quad In comparison to other decentralized power methods \cite{wai17,ye21}, ADePM requires a comparable number of gossip steps per iteration, as shown in \cref{tab:comp_depca}. The communication costs are negatively impacted by small eigengaps $1 - \lambda_{k+1}/\lambda_k$, client heterogeneity (which is quantified by the constant $M$), and poorly connected communication networks (i.e. small values of $\gamma_\bW$). We show in \cref{tab:comp_depca} the communication costs of ADePM had we used the result from \citet{xu2023anpm}, which represents a significant increase over our result and over previous decentralized algorithms. This shows the importance of our refined analysis of ANPM for the design of communication-efficient decentralized algorithms.

\begin{remark}
  \citet{ye21} achieve a communication cost $L$ independent of $\varepsilon$ using a subspace tracking technique, relying on tight inequalities. While we do not consider such methods in this paper, our tight analysis of ANPM would be a necessary first step towards accelerating DeEPCA.  
\end{remark}

The proof for \cref{th:adepm} is deferred to \cref{apdx:adepm_proof}. The idea is to use \cref{th:napm} to obtain the convergence rate. To do so, we define a ``network-average" iterate $\Xbar_t$ which remains close to all local estimates $\bX_{i,t}$ and follows the ANPM iteration on the average matrix $\bA$ and with noise $\bXi_t$ induced by the gossiping errors. The remainder of the then proof consists in establishing a relation between the number of gossip communications $L$ at each step and the magnitude of the noise $\bXi_t$, to show that the conditions \eqref{eq:noiseconda}-\eqref{eq:noisecondb} are satisfied whenever $L$ satisfies the assumption in \cref{th:adepm}. These relations are derived from \cref{prop:accgossip}, and from perturbation bounds on the QR decomposition.

\section{Experiments}

We provide experimental results for ANPM on synthetic instances, and for ADePM on real datasets. More details on the experimental setups and additional experimental results are provided in \cref{apdx:exp}. The code used for the experiments is available at \url{https://github.com/pierreaguie/ANPM}.

\subsection{ANPM}

\begin{figure*}
  \includegraphics[width=\textwidth]{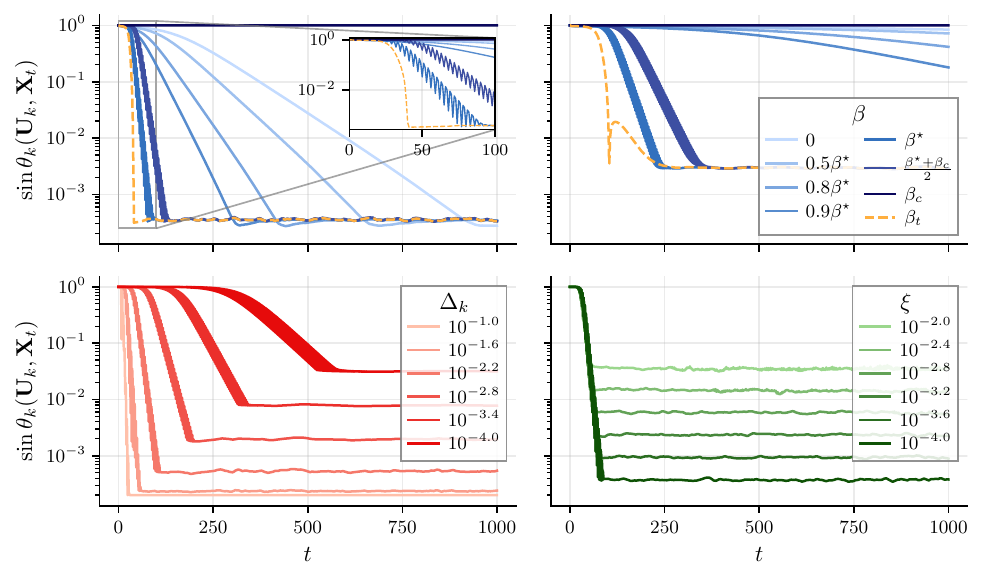}
  \caption{Results for (A)NPM. \textbf{(Top left)} fixed $\xi = 10^{-4}$ and $\Delta_k = 10^{-2}$, varying $\beta$; \textbf{(Top right)} fixed $\xi = 10^{-4}$ and $\Delta_k = 10^{-3}$, varying $\beta$; \textbf{(Bottom left)} fixed  $\xi = 10^{-4}$ and $\beta = \beta^\star(\Delta_k)$, varying $\Delta_k$; \textbf{(Bottom right)} fixed $\Delta_k = 10^{-2}$ and $\beta = \beta^\star$, varying $\xi$.}
  \label{fig:anpm_experiments}
\end{figure*}

We conduct experiments on synthetic datasets for NPM and ANPM. The aim is to highlight the impact of the eigengap $\Delta_k := 1 - \lambda_{k+1}/\lambda_k$, the norm of the noise $\xi := \Vert\bXi_t\Vert_2$, and the momentum parameter $\beta$ on the convergence speed and final precision of (A)NPM. Here, the noise $\bXi_t$ is sampled randomly using a distribution inspired by the adversarial examples used for the proofs of \cref{sec:napm_opt}. We show in \cref{fig:anpm_experiments} the impact of these parameters on the evolution of $\sin\theta_k(\bU_k, \bX_t)$ by varying $\beta$, $\Delta_k$ and $\xi$, all other parameters being fixed. $\beta^\star = \lambda_{k+1}^2/4$ refers to the optimal momentum parameter, $\beta_c := \lambda_k^2/4$ to the upper bound on valid choices of $\beta$ in \cref{th:napm}, and $\beta_t$ to the adaptive tuning heuristic defined in \eqref{eq:beta_heuristic}. More details on the synthetic instance generation are provided in \cref{apdx:exp_anpm}. We make several comments on our results:

\textbf{Transient and stationary regimes.}\quad All plots shown in \cref{fig:anpm_experiments} display a transient regime, in which $\sin\theta_k(\bU_k,\bX_t)$ decays geometrically, and a stationary regime, in which it stays almost constant at a final accuracy $\varepsilon$, which depends on $\Delta_k$ and $\xi$. This was expected from our proof of \cref{th:napm}.

\textbf{Impact of $\beta$.}\quad ANPM with $\beta = \beta^\star$ significantly improves the convergence speed over NPM (i.e. $\beta = 0$), especially for small eigengaps, while leaving the final accuracy unchanged. This confirms our theoretical results, which show that acceleration comes at no extra cost in terms of final precision in comparison to NPM. We note that smaller values $0<\beta<\beta^\star$ and larger values $\beta^\star < \beta < \beta_c$ still lead to faster convergence than NPM (though slower than the optimal tuning), and that setting $\beta = \beta_c$ does not allow the algorithm to converge, suggesting that the interval of valid $\beta$ values in \cref{th:napm} cannot be improved. The tuning heuristic $\beta_t$ attains similar convergence speed to the optimal tuning. For larger values of $\beta$, we observe oscillations in the transient regime, corresponding to the oscillatory behavior of the Chebyshev polynomials $p_t$ defined in \eqref{eq:cheby_rec} in the interval $[-2\sqrt{\beta}, 2\sqrt{\beta}]$.

\textbf{Impact of the eigengap.}\quad Smaller gaps lead to slower convergence and worse final accuracies at fixed noise magnitude. The relationship between final accuracy and gap in \cref{fig:anpm_experiments} is near linear (as suggested by \cref{th:napm}), except between $\Delta_k = 10^{-1}$ and $10^{-1.6}$, which we suspect is due to the fact that the component in $\mathrm{range}(\bU_{-k})$ of the noise we generate is not fully contained in $\mathrm{Span}(\bm{u}_{k+1})$.

\textbf{Impact of the noise magnitude.}\quad The final accuracy scales proportionally with $\xi$, as suggested by \cref{th:napm}. On the ranges of noise norm $\xi$ considered, the convergence rate in the transient regime is not significantly impacted by $\xi$.

\subsection{ADePM}
\label{sec:exp_adepm}

\begin{figure*}
  \centering
  \includegraphics[width=\textwidth]{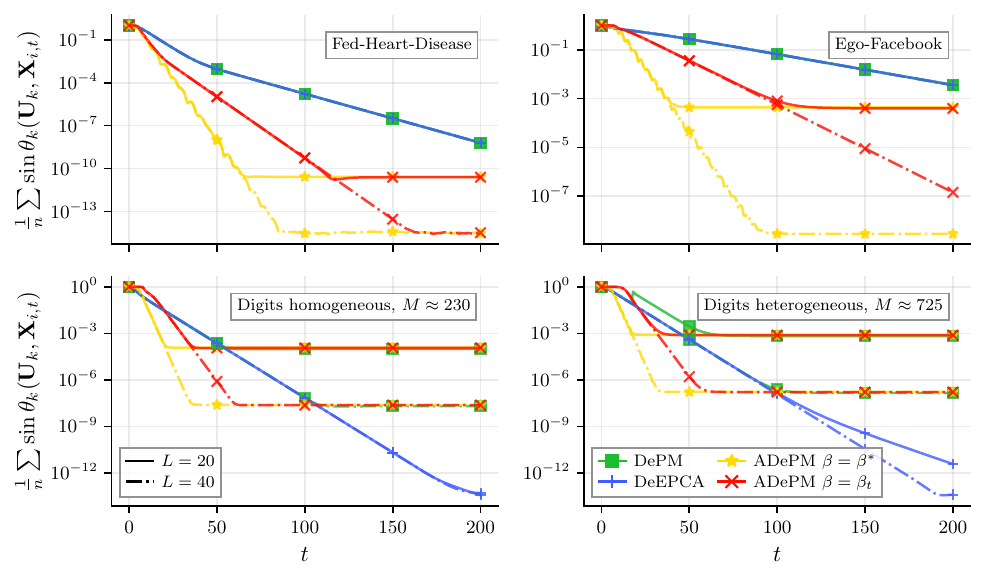}
  \caption{Results for decentralized PCA.}
  \label{fig:adepm}
\end{figure*}

We present results for ADePM on decentralized PCA on the Fed-Heart-Disease dataset from FLamby \cite{flamby} and two different splits (homogeneous and heterogeneous) of the digits dataset \cite{alpaydin1998}, and on decentralized spectral clustering on a subset of the Ego-Facebook graph from \cite{leskovec12}. More details are provided in \cref{apdx:exp_adepm}. We compare ADePM to DePM \cite{wai17} and DeEPCA \cite{ye21}. The results are shown in \cref{fig:adepm}. $\beta_t$ refers to an adaptation of the heuristic defined in \eqref{eq:beta_heuristic} to the decentralized setting, which is detailed in \cref{apdx:exp_adepm}.

\textbf{Communication costs.}\quad Just like for ANPM, we observe for DePM and ADePM an exponentially decaying transient regime, followed by a stationary regime where the error stabilizes, due to the dependence of the final accuracy on the number of gossip communications $L$. The final accuracy reached is roughly the same for DePM and ADePM at fixed $L$. DeEPCA does not reach a stationary regime, which is consistent with the independence of $L$ from the target accuracy $\varepsilon$ for this algorithm.

\textbf{Impact of heterogeneity.} \quad For the digits dataset, we consider two ways of splitting the data across agents: one where the data is split randomly across agents (homogeneous split), and one where each agent only has access to data points corresponding to a specific digit (heterogeneous split). $M$ is significantly larger in the heterogeneous setting. The impact of heterogeneity is reflected in the final accuracy reached at fixed $L$, which is worse in the heterogeneous setting than in the homogeneous one for all algorithms and all values of $L$.

\textbf{Convergence speed.}\quad Both versions of ADePM (with fixed optimal $\beta=\beta^\star$ or with adaptive $\beta = \beta_t$) significantly outperform DePM and DeEPCA in terms of convergence speed. In scenarios where fast convergence is prioritized over final accuracy, ADePM is a better choice than DeEPCA.

\section{Conclusion}

We provided convergence guarantees for ANPM, showing that it converges at an accelerated rate under milder noise conditions than previous analyses. We showed that our analysis is tight, and applied our results to design ADePM, an accelerated algorithm for decentralized PCA with comparable communication costs to non-accelerated decentralized algorithms. While our work is mainly of theoretical nature, our experimental results show that using heuristics to adaptively tune the momentum can lead to significant speedups over non-accelerated methods without requiring manual parameter tuning.

\citet{balcan2016} and \citet{xu2023anpm} provide convergence rates for (A)NPM that depend on the wider gap $\lambda_k-\lambda_{p+1}$ whenever $\bX_t$ has $p$ columns but only the top-$k$ eigenspace is estimated, with $p>k$. This can represent significant improvements in terms of convergence speed and noise conditions in some cases. Extending our analysis to this setting is an interesting direction for future work.

\section*{Acknowledgements}

PA acknowledges funding from PEPR IA (grant REDEEM ANR-23-PEIA-0005). LM acknowledges funding from PR[AI]RIE-PSAI – Paris School of Artificial Intelligence, reference ANR-23-IACL-0008.

\section*{Impact Statement}

This paper presents work whose goal is to advance the field of Machine Learning. There are many potential societal consequences of our work, none of which we feel must be specifically highlighted here.

\bibliography{bibliography}
\bibliographystyle{icml2026}

\newpage
\appendix
\crefalias{section}{appendix}
\crefalias{subsection}{appendix}

\onecolumn

\section{Useful Linear Algebra Results}

\subsection{Formulas for Principal Angles Between Subspaces}

We provide here useful formulas for the cosines, sines and tangents of the principal angles between two subspaces in terms of their orthonormal bases. These formulas will be used extensively in our proofs, in particular to control the evolution of $\tan\theta_k(\bU_k, \bX_t)$ along the iterations of the algorithms we study.

\begin{proposition}[\citet{stewart90}, Corollary 5.4]\label{prop:tantheta}
    Let $\bU, \bX \in \mathrm{St}(d,k)$, and let $\bV \in \mathrm{St}(d,d-k)$ be a matrix whose columns span the orthogonal complement of the range of $\bU$.  Then,
  \begin{align*}
    & \cos\theta_k(\bU, \bX) = \sigma_{\min}(\bU^\top \bX), \\
    & \sin\theta_k(\bU, \bX) = \Vert\bV^\top \bX\Vert_2.
  \end{align*}
  If $\cos\theta_k(\bU, \bX) > 0$, $\bU^\top \bX$ is invertible and
  \begin{align*}
    \tan\theta_k(\bU, \bX) = \Vert (\bV^\top \bX) (\bU^\top \bX)^{-1} \Vert_2.
  \end{align*}
\end{proposition}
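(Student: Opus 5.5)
The plan is to reduce everything to the singular value decomposition of $\bU^\top\bX$ together with a complementary decomposition of $\bX$. Let $\bP := \bU\bU^\top$ and $\bI - \bP = \bV\bV^\top$. Since $[\bU,\bV]$ is orthogonal, for any unit vector $\bm{w}\in\R^k$ we have
\begin{align*}
\Vert \bU^\top\bX\bm{w}\Vert_2^2 + \Vert \bV^\top\bX\bm{w}\Vert_2^2 = \Vert \bX\bm{w}\Vert_2^2 = 1 .
\end{align*}
This identity gives $(\bU^\top\bX)^\top(\bU^\top\bX) + (\bV^\top\bX)^\top(\bV^\top\bX) = \bI_k$.

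For the cosine formula, nothing needs proving beyond the definition. The singular values of $\bU^\top\bX$ are $\sigma_1\geqslant\dots\geqslant\sigma_k$, and $\theta_k = \arccos\sigma_k$ because $\arccos$ is decreasing. Hence $\cos\theta_k = \sigma_k = \sigma_{\min}(\bU^\top\bX)$.

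For the sine formula, we use the identity above. The matrix $\bM := (\bV^\top\bX)^\top(\bV^\top\bX) = \bI_k - (\bU^\top\bX)^\top(\bU^\top\bX)$ has eigenvalues $1-\sigma_i^2$, so its largest eigenvalue is $1-\sigma_k^2 = \sin^2\theta_k$. Therefore $\Vert\bV^\top\bX\Vert_2 = \sqrt{\lambda_{\max}(\bM)} = \sin\theta_k$, with $\sin\theta_k\geqslant 0$ on $[0,\pi/2]$.

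For the tangent formula, suppose $\sigma_k>0$. Then $\bU^\top\bX$ is a square $k\times k$ matrix with positive smallest singular value, so it is invertible. Write $\bB := \bU^\top\bX$ and $\bC := \bV^\top\bX$, and compute $\Vert\bC\bB^{-1}\Vert_2^2 = \lambda_{\max}(\bB^{-\top}\bC^\top\bC\bB^{-1})$. Substituting $\bC^\top\bC = \bI - \bB^\top\bB$ gives
\begin{align*}
\bB^{-\top}(\bI - \bB^\top\bB)\bB^{-1} = (\bB\bB^\top)^{-1} - \bI .
\end{align*}
The eigenvalues of $\bB\bB^\top$ are the $\sigma_i^2$, so the eigenvalues of this matrix are $\sigma_i^{-2}-1$. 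The largest is $\sigma_k^{-2}-1 = \tan^2\theta_k$, and taking square roots gives the result.

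The only delicate point is the identity $\bB^\top\bB+\bC^\top\bC=\bI$, which relies on $\bX^\top\bX=\bI_k$ and $\bU\bU^\top+\bV\bV^\top=\bI_d$. Once that is in hand, every step is a direct spectral computation, and I do not expect a real obstacle. One remark on the statement itself: the right-hand sides do not depend on the choice of $\bV$. Any two valid choices differ by right multiplication by an orthogonal $(d-k)\times(d-k)$ matrix, and that multiplication preserves spectral norms.
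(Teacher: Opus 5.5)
Your proof is correct and follows essentially the paper's route: both rest on the identity $(\bU^\top\bX)^\top(\bU^\top\bX) + (\bV^\top\bX)^\top(\bV^\top\bX) = \bI_k$ (written in the paper as $\bX^\top\bV\bV^\top\bX = \bI_k - \bX^\top\bU\bU^\top\bX$) and read $\sin\theta_k$ off its spectrum. The only difference is the tangent step: the paper builds SVDs $\bU^\top\bX = \bP\bSigma\bQ^\top$ and $\bV^\top\bX = \bP'\bSigma'\bQ^\top$ with a shared right factor and multiplies them out, whereas your congruence $(\bU^\top\bX)^{-\top}(\bV^\top\bX)^\top(\bV^\top\bX)(\bU^\top\bX)^{-1} = (\bU^\top\bX\bX^\top\bU)^{-1} - \bI_k$ gives the same spectrum without constructing $\bP'$, and so also avoids the paper's implicit assumption $\bP'\in\mathrm{St}(d-k,k)$, i.e.\ $d-k\geqslant k$.
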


We give here a simple proof of these formulas for completeness.

\begin{proof}
  By definition of the $k$-th principal angle, we have that $\theta_k(\bU, \bX) = \arccos(\sigma_k(\bU^\top \bX))$, where $\sigma_k(\bU^\top \bX)$ is the $k$-th largest (i.e. the smallest) singular value of $\bU^\top \bX$. This proves the first part of the proposition. 
  
  Let $\bP \in \mathrm{St}(k,k), \bQ \in \mathrm{St}(k,k)$ be the left and right singular vectors of $\bU^\top \bX$ such that $\bU^\top \bX = \bP \bSigma \bQ^\top$, where $\bSigma := \mathrm{diag}(\sigma_1(\bU^\top \bX), \dots, \sigma_k(\bU^\top \bX)) = \mathrm{diag}(\cos\theta_1(\bU, \bX), \dots, \cos\theta_k(\bU, \bX))$. Since $\bV$ spans the orthogonal complement of the range of $\bU$, we have $\bV\bV^\top + \bU \bU^\top = \bI_d$. Then, we can check that the right singular vectors of $\bV^\top\bX$ are also $\bQ$ and that its singular values are $\sigma_i(\bV^\top \bX) = \sin\theta_i(\bU, \bX)$ for all $i\in \{1,\dots,k\}$. Indeed,
  \begin{align*}
    \bX^\top\bV\bV^\top\bX & = \bX^\top(\bI_d - \bU\bU^\top)\bX = \bI_k - \bX^\top \bU \bU^\top \bX = \bQ(\bI_k - \bSigma^2)\bQ^\top, \\
    & = \bQ \mathrm{diag}\left(1 - \cos^2\theta_1(\bU, \bX), \dots, 1 - \cos^2\theta_k(\bU, \bX)\right) \bQ^\top \\
    & = \bQ {\underbrace{\mathrm{diag}\left(\sin\theta_1(\bU, \bX), \dots, \sin\theta_k(\bU, \bX)\right)}_{\bSigma'}}^2 \bQ^\top.
  \end{align*}
  From this, we deduce that the largest singular value of $\bV^\top \bX$ is $\sin\theta_k(\bU, \bX)$.

  Then, assuming that $\cos\theta_k(\bU, \bX) > 0$, $\bU^\top \bX$'s smallest singular value is positive and it is thus invertible. Denote by $\bP' \in \mathrm{St}(d-k,k)$ the left singular vectors of $\bV^\top \bX$ such that $\bV^\top\bX = \bP' \bSigma' \bQ^\top$. We then have
  \begin{align*}
    (\bV^\top \bX)(\bU^\top \bX)^{-1} & = \bP' \bSigma' {\bQ^\top \bQ} \bSigma^{-1} \bP^\top = \bP' \mathrm{diag}\left( \frac{\sin\theta_1(\bU, \bX)}{\cos\theta_1(\bU, \bX)}, \dots, \frac{\sin\theta_k(\bU, \bX)}{\cos\theta_k(\bU, \bX)} \right) \bP^\top.
  \end{align*}
  As such, the singular values of $(\bV^\top \bX)(\bU^\top \bX)^{-1}$ are $\tan\theta_1(\bU, \bX), \dots, \tan\theta_k(\bU, \bX)$, and its spectral norm is $\tan\theta_k(\bU, \bX)$.
\end{proof}

\subsection{Perturbation Bounds for the QR Decomposition}

We state here two useful perturbation bounds for the QR decomposition, which will be used for the analysis of our decentralized algorithm. The first one provides a bound on the perturbation of the Q-factor of a full column rank matrix under additive perturbations.

\begin{theorem}[{\citet{chang2012}, Theorem 3.1}]\label{lemma:q-bound}
    Let $\mathbf{X} \in \R^{d \times p}$ (with $p \leqslant d$) be of full column rank with QR factorization $\mathbf{X} = \mathbf{Q}\mathbf{R}$, and $\Delta \mathbf{X} \in \R^{d \times p}$ a perturbation. If 
    \begin{align*}
        \Vert \mathbf{X}^\dagger\Vert_2 \Vert \Delta\mathbf{X}\Vert_2 < 1,
    \end{align*}
    then $\mathbf{X} + \Delta\mathbf{X}$ has the unique QR factorization
    \begin{align*}
        \mathbf{X} + \Delta\mathbf{X} = (\mathbf{Q} + \Delta \mathbf{Q})(\mathbf{R} + \Delta\mathbf{R}),
    \end{align*}
    and the following bound holds
    \begin{align*}
        \Vert \Delta\mathbf{Q} \Vert_\mathrm{F} \leqslant \frac{\sqrt{2}\Vert\mathbf{X}^\dagger\Vert_2 \Vert \Delta\mathbf{X}\Vert_\mathrm{F}}{1 - \Vert\mathbf{X}^\dagger\Vert_2 \Vert \Delta\mathbf{X}\Vert_2}.
    \end{align*}
\end{theorem}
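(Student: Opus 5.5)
The plan is to reduce to perturbing the Q-factor of an orthonormal matrix, and then to exploit the triangular structure of the correction factor. Throughout, write $\eta := \Vert\bX^\dagger\Vert_2\Vert\Delta\bX\Vert_2 < 1$.

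\textbf{Step 1: existence and uniqueness.} Since $\bX$ has full column rank, $\Vert\bX^\dagger\Vert_2 = 1/\sigma_{\min}(\bX) = \Vert\bR^{-1}\Vert_2$. Weyl's inequality gives $\sigma_{\min}(\bX+\Delta\bX) \geqslant \sigma_{\min}(\bX) - \Vert\Delta\bX\Vert_2 > 0$. Hence $\bX+\Delta\bX$ has full column rank, and its QR factorization is unique, with positive diagonal in the R-factor (as recalled in the notations).

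\textbf{Step 2: reduction to an orthonormal base point.} Factor $\bX+\Delta\bX = (\bQ + \bF)\bR$ with $\bF := \Delta\bX\bR^{-1}$. This gives $\Vert\bF\Vert_2 \leqslant \eta$ and $\Vert\bF\Vert_\mathrm{F} \leqslant \Vert\bX^\dagger\Vert_2\Vert\Delta\bX\Vert_\mathrm{F}$. Let $\bQ+\bF = \bN\bS$ be the QR factorization of $\bQ+\bF$. Then $\bX+\Delta\bX = \bN(\bS\bR)$, and $\bS\bR$ is upper triangular with positive diagonal. By uniqueness, $\bQ+\Delta\bQ = \bN$ and $\bR+\Delta\bR = \bS\bR$. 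Moreover $\sigma_{\min}(\bS) = \sigma_{\min}(\bQ+\bF) \geqslant 1-\eta$, so $\Vert\bS^{-1}\Vert_2 \leqslant 1/(1-\eta)$. It therefore suffices to prove
\[
\Vert\bN-\bQ\Vert_\mathrm{F} \leqslant \sqrt{2}\,\Vert\bF\bS^{-1}\Vert_\mathrm{F},
\]
since $\Vert\bF\bS^{-1}\Vert_\mathrm{F} \leqslant \Vert\bF\Vert_\mathrm{F}/(1-\eta)$.

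\textbf{Step 3: decomposition.} Complete $\bQ$ with $\bQ_\perp$, and set $\bD := \bQ^\top(\bN-\bQ)$ and $\bE := \bQ_\perp^\top\bN = \bQ_\perp^\top\bF\bS^{-1}$. Two facts follow:
\begin{itemize}
\item From $\bN = (\bQ+\bF)\bS^{-1}$ we get $\bD = \bQ^\top\bF\bS^{-1} - (\bI - \bS^{-1})$. Since $\bI-\bS^{-1}$ is upper triangular, the strictly lower triangular part of $\bD$ equals that of $\bQ^\top\bF\bS^{-1}$.
\item From $\bN^\top\bN = \bI$ we get $\bD+\bD^\top + \bD^\top\bD + \bE^\top\bE = \mathbf{0}$. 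Taking the trace yields the identity
\[
\Vert\bN-\bQ\Vert_\mathrm{F}^2 = \Vert\bD\Vert_\mathrm{F}^2+\Vert\bE\Vert_\mathrm{F}^2 = -2\,\mathrm{tr}(\bD).
\]
\end{itemize}

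\textbf{Step 4: the main obstacle.} The remaining task is to bound $\Vert\bD\Vert_\mathrm{F}^2+\Vert\bE\Vert_\mathrm{F}^2$ by $2\Vert\bQ^\top\bF\bS^{-1}\Vert_\mathrm{F}^2 + 2\Vert\bE\Vert_\mathrm{F}^2$, using that the symmetric relation determines the diagonal and strictly upper part of $\bD$ from its strictly lower part plus the quadratic term $\bD^\top\bD+\bE^\top\bE$. This is where the constant $\sqrt 2$ must emerge. Write $\bD = \mathrm{low}(\bD) + \mathrm{diag}(\bD) + \mathrm{up}(\bD)$. The relation gives $\mathrm{up}(\bD) = -\mathrm{low}(\bD)^\top - \mathrm{up}(\bD^\top\bD+\bE^\top\bE)$ and $2\,\mathrm{diag}(\bD) = -\mathrm{diag}(\bD^\top\bD+\bE^\top\bE)$. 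Naive expansion of these loses a factor, so I would instead work directly with the exact trace identity $-2\,\mathrm{tr}(\bD)$. The first approach I would try is a Cauchy--Schwarz argument in which $\bN-\bQ$ is compared with $\bF\bS^{-1}$ column by column, using Gram--Schmidt: column $j$ of $\bN$ is the normalized projection of column $j$ of $\bQ+\bF$ off the previous columns. If the sharp constant resists this direct approach, I would fall back on the first-order "$\mathrm{low}+\mathrm{low}^\top$" bound, which gives $\sqrt2$ at first order, and then absorb the quadratic remainder into the $1/(1-\eta)$ factor.
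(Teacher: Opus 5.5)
Your Steps 1--3 are correct, but the inequality you reduce to at the end of Step 2, $\Vert\mathbf{N}-\mathbf{Q}\Vert_\mathrm{F}\leqslant\sqrt{2}\,\Vert\mathbf{F}\mathbf{S}^{-1}\Vert_\mathrm{F}$, is false, and so is the equivalent Step 4 target. The plan therefore cannot be completed.

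Here is a counterexample. Take $d=p=2$ and $\mathbf{X}=\mathbf{Q}=\mathbf{R}=\mathbf{I}_2$. Let $\mathbf{N}$ be the rotation by an angle $\phi\in(0,1/2]$, let $\mathbf{T}$ be its upper triangular part, and set $\mathbf{S}:=\mathbf{T}^{-1}$ and $\Delta\mathbf{X}=\mathbf{F}:=\mathbf{N}\mathbf{S}-\mathbf{I}_2$. Then:
\begin{itemize}
\item $\mathbf{Q}+\mathbf{F}=\mathbf{N}\mathbf{S}$ is the QR factorization;
\item $\mathbf{F}$ has rows $(0,0)$ and $(\tan\phi,\tan^2\phi)$, so $\eta=\Vert\mathbf{F}\Vert_2=\tan\phi/\cos\phi<1$;
\item $\mathbf{F}\mathbf{S}^{-1}=\mathbf{N}-\mathbf{T}$ has the single nonzero entry $\sin\phi$, in position $(2,1)$.
\end{itemize}
Hence
\begin{align*}
\Vert\mathbf{N}-\mathbf{Q}\Vert_\mathrm{F}^2 = 4(1-\cos\phi) > 2(1-\cos\phi)(1+\cos\phi) = 2\Vert\mathbf{F}\mathbf{S}^{-1}\Vert_\mathrm{F}^2 .
\end{align*}

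In your Step 4 notation, $\mathbf{E}=\mathbf{0}$ and $\mathbf{Q}^\top\mathbf{F}\mathbf{S}^{-1}$ is purely strictly lower triangular. However, $2\,\mathrm{diag}(\mathbf{D})=-\mathrm{diag}(\mathbf{D}^\top\mathbf{D})$ forces $\mathrm{diag}(\mathbf{D})=-(1-\cos\phi)\mathbf{I}_2$. This term adds to $\Vert\mathbf{D}\Vert_\mathrm{F}^2$ on top of $2\Vert\mathrm{low}(\mathbf{D})\Vert_\mathrm{F}^2$, and nothing on the right-hand side pays for it.

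The theorem itself holds comfortably in this example. The real problem is that Step 2 spends the whole factor $1/(1-\eta)$ on $\Vert\mathbf{S}^{-1}\Vert_2$, leaving an exact inequality with no slack, which the second-order terms violate. For the same reason, your fallback of absorbing the quadratic remainder into $1/(1-\eta)$ has nothing left to absorb into.

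The missing idea is to apply your correct first-order skew-part structure infinitesimally and then integrate, rather than in one shot. Let $\mathbf{X}(t):=\mathbf{X}+t\Delta\mathbf{X}$ for $t\in[0,1]$. By Weyl it has full column rank, so $\mathbf{R}(t)$ (the Cholesky factor of $\mathbf{X}(t)^\top\mathbf{X}(t)$) and $\mathbf{Q}(t)=\mathbf{X}(t)\mathbf{R}(t)^{-1}$ are smooth. By uniqueness, $\mathbf{Q}(1)=\mathbf{Q}+\Delta\mathbf{Q}$.

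Differentiating $\mathbf{X}(t)=\mathbf{Q}(t)\mathbf{R}(t)$ gives $\mathbf{Q}^\top\dot{\mathbf{Q}}=\mathbf{W}-\dot{\mathbf{R}}\mathbf{R}^{-1}$, with $\mathbf{W}:=\mathbf{Q}^\top\Delta\mathbf{X}\mathbf{R}^{-1}$. The left side is skew and $\dot{\mathbf{R}}\mathbf{R}^{-1}$ is upper triangular, so $\mathbf{Q}^\top\dot{\mathbf{Q}}=\mathrm{low}(\mathbf{W})-\mathrm{low}(\mathbf{W})^\top$. Also $(\mathbf{I}-\mathbf{Q}\mathbf{Q}^\top)\dot{\mathbf{Q}}=(\mathbf{I}-\mathbf{Q}\mathbf{Q}^\top)\Delta\mathbf{X}\mathbf{R}^{-1}$. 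Using $\Vert\mathbf{R}(t)^{-1}\Vert_2=1/\sigma_{\min}(\mathbf{X}(t))$ and Weyl again,
\begin{align*}
\Vert\dot{\mathbf{Q}}(t)\Vert_\mathrm{F}\leqslant\sqrt{2}\,\Vert\Delta\mathbf{X}\,\mathbf{R}(t)^{-1}\Vert_\mathrm{F}\leqslant\frac{\sqrt{2}\,\Vert\mathbf{X}^\dagger\Vert_2\Vert\Delta\mathbf{X}\Vert_\mathrm{F}}{1-t\eta}.
\end{align*}
Then $\Vert\Delta\mathbf{Q}\Vert_\mathrm{F}\leqslant\int_0^1\Vert\dot{\mathbf{Q}}(t)\Vert_\mathrm{F}\,dt$. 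The integral produces the factor $-\eta^{-1}\log(1-\eta)\leqslant 1/(1-\eta)$, which gives exactly the stated bound.

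The paper itself gives no proof of this statement; it imports it from Chang (2012).
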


\begin{theorem}[{\citet{sun91}, Theorem 1.6}]\label{lemma:r-bound}
    Under the same hypotheses as \cref{lemma:q-bound}, the following bound holds:
    \begin{align*}
        \Vert \Delta \mathbf{R}\Vert_\mathrm{F} \leqslant \frac{\sqrt{2}\Vert\mathbf{X}^\dagger\Vert_2 \Vert \Delta\mathbf{X}\Vert_\mathrm{F}}{1 - \Vert\mathbf{X}^\dagger\Vert_2 \Vert \Delta\mathbf{X}\Vert_2} \Vert \mathbf{R}\Vert_2.
    \end{align*}
\end{theorem}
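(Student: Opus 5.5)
The plan is to reduce to the case where $\bX$ has orthonormal columns, and then control the R-factor of a perturbed orthonormal matrix along a straight-line path, using the operator that extracts the triangular part of a symmetric matrix.

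\textbf{Step 1: preliminaries.} Write $\eta := \Vert\bX^\dagger\Vert_2\Vert\Delta\bX\Vert_2<1$. Since $\Vert\bX^\dagger\Vert_2 = 1/\sigma_{\min}(\bX)$, Weyl's inequality gives $\sigma_{\min}(\bX+t\Delta\bX)\geqslant \sigma_{\min}(\bX)(1-t\eta)>0$ for all $t\in[0,1]$. Hence every matrix $\bX+t\Delta\bX$ has full column rank. Its QR factorization $\bQ(t)\bR(t)$ is therefore unique with positive diagonal, and it depends smoothly on $t$, because $\bR(t)$ is the Cholesky factor of $(\bX+t\Delta\bX)^\top(\bX+t\Delta\bX)$. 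This already gives existence and uniqueness of the factorization in \cref{lemma:q-bound}.

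\textbf{Step 2: reduction.} Set $\bE := \Delta\bX\,\bR^{-1}$. Because $\Vert\bR^{-1}\Vert_2 = \Vert\bX^\dagger\Vert_2$, we have $\Vert\bE\Vert_2\leqslant \eta$ and $\Vert\bE\Vert_\mathrm{F}\leqslant \Vert\bX^\dagger\Vert_2\Vert\Delta\bX\Vert_\mathrm{F}$. Now $(\bX+\Delta\bX)\bR^{-1}=\bQ+\bE$. If $\bQ+\bE = \tilde\bQ(\bI+\bG)$ with $\bI+\bG$ upper triangular with positive diagonal, uniqueness gives $\bR+\Delta\bR = (\bI+\bG)\bR$, so $\Delta\bR = \bG\bR$. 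Therefore $\Vert\Delta\bR\Vert_\mathrm{F}\leqslant\Vert\bG\Vert_\mathrm{F}\Vert\bR\Vert_2$. It remains to show
\begin{align*}
\Vert\bG\Vert_\mathrm{F}\leqslant \frac{\sqrt2\,\Vert\bE\Vert_\mathrm{F}}{1-\Vert\bE\Vert_2},
\end{align*}
which is the statement in the normalized case $\bX=\bQ$, $\bR=\bI$.

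\textbf{Step 3: differential identity.} Let $\bR(t)$ be the R-factor of $\bQ+t\bE$. Differentiate $\bR(t)^\top\bR(t) = (\bQ+t\bE)^\top(\bQ+t\bE)$, then multiply by $\bR(t)^{-\top}$ on the left and $\bR(t)^{-1}$ on the right. This gives
\begin{align*}
\dot\bR\bR^{-1} + (\dot\bR\bR^{-1})^\top = \bB+\bB^\top, \qquad \bB:=\bQ(t)^\top\bE\,\bR(t)^{-1}.
\end{align*}
The matrix $\dot\bR\bR^{-1}$ is upper triangular, so $\dot\bR\bR^{-1}=\mathrm{up}(\bB+\bB^\top)$, where $\mathrm{up}$ keeps the strict upper triangle and halves the diagonal. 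A direct entrywise computation gives $\Vert\mathrm{up}(\bB+\bB^\top)\Vert_\mathrm{F}\leqslant\sqrt2\Vert\bB\Vert_\mathrm{F}$. This is where the $\sqrt2$ comes from. Combined with $\Vert\bR(t)^{-1}\Vert_2\leqslant (1-t\Vert\bE\Vert_2)^{-1}$, we get $\Vert\dot\bR\bR^{-1}\Vert_\mathrm{F}\leqslant \sqrt2\Vert\bE\Vert_\mathrm{F}/(1-t\Vert\bE\Vert_2)$.

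\textbf{Step 4: integration, the main obstacle.} Integrating $\dot\bR = (\dot\bR\bR^{-1})\bR$ naively costs an extra factor $\Vert\bR(t)\Vert_2\leqslant 1+t\Vert\bE\Vert_2$ and produces logarithms. The result would be of the right order but not exactly $\sqrt2/(1-\eta)$.

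To obtain the sharp constant, I would first try a different route. From the algebraic identity $\bG+\bG^\top+\bG^\top\bG = \bQ^\top\bE+\bE^\top\bQ+\bE^\top\bE$, write $\bG = \mathrm{up}\bigl(\bQ^\top\bE+\bE^\top\bQ+\bE^\top\bE-\bG^\top\bG\bigr)$. Then use the factorization $\bI+\bG = \tilde\bQ^\top(\bQ+\bE)$ to express $\bG=\tilde\bQ^\top\bE + (\tilde\bQ^\top\bQ-\bI)$, and obtain a fixed-point inequality of the form $\Vert\bG\Vert_\mathrm{F}\leqslant \sqrt2\Vert\bE\Vert_\mathrm{F}+\Vert\bE\Vert_2\Vert\bG\Vert_\mathrm{F}$, which rearranges to the claim.

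If that route fails, I would fall back on the path argument with a refined Gronwall-type comparison on $\Vert\bR(t)-\bI\Vert_\mathrm{F}$. Either way, this step, which is the content of \citet{sun91}, is where the real work lies. Steps 1 to 3 are routine.
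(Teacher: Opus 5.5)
The paper gives no proof of this statement; it only cites \citet{sun91}. Your argument therefore has to stand on its own.

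Steps 1 to 3 are correct. The reduction $\Delta\mathbf{R}=\mathbf{G}\mathbf{R}$ with $\mathbf{E}=\Delta\mathbf{X}\mathbf{R}^{-1}$, the identity $\dot{\mathbf{R}}\mathbf{R}^{-1}=\mathrm{up}(\mathbf{B}+\mathbf{B}^\top)$, and the entrywise $\sqrt2$ bound all hold.

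The genuine gap is Step 4, which you leave unresolved, and the route you favour there is not substantiated. The inequality $\Vert\mathbf{G}\Vert_\mathrm{F}\leqslant\sqrt2\Vert\mathbf{E}\Vert_\mathrm{F}+\Vert\mathbf{E}\Vert_2\Vert\mathbf{G}\Vert_\mathrm{F}$ is asserted, not derived, and neither starting point you name produces it:
\begin{itemize}
\item Working from $\mathbf{G}=\mathrm{up}(\mathbf{Q}^\top\mathbf{E}+\mathbf{E}^\top\mathbf{Q}+\mathbf{E}^\top\mathbf{E}-\mathbf{G}^\top\mathbf{G})$ gives a term quadratic in $\Vert\mathbf{G}\Vert_\mathrm{F}$. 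That needs a continuity argument and only closes for small $\Vert\mathbf{E}\Vert$, not on the whole range $\Vert\mathbf{E}\Vert_2<1$.
\item In the splitting $\mathbf{G}=\tilde{\mathbf{Q}}^\top\mathbf{E}+(\tilde{\mathbf{Q}}^\top\mathbf{Q}-\mathbf{I})$, the second term is of first order in $\mathbf{E}$. To first order it is the skew-symmetric rotation part, which is generically nonzero. So it cannot be absorbed into $\Vert\mathbf{E}\Vert_2\Vert\mathbf{G}\Vert_\mathrm{F}$, which is second order.
\end{itemize}

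You were wrong that the straightforward integration loses the sharp constant, so the fix is already in your hands. Let $e:=\Vert\mathbf{E}\Vert_2\in(0,1)$; the case $e=0$ is trivial. Since $\Vert\mathbf{R}(t)\Vert_2=\Vert\mathbf{Q}+t\mathbf{E}\Vert_2\leqslant 1+te$, your Step 3 gives
\begin{align*}
\Vert\mathbf{G}\Vert_\mathrm{F}\leqslant\int_0^1\Vert\dot{\mathbf{R}}(t)\Vert_\mathrm{F}\,dt\leqslant\sqrt2\,\Vert\mathbf{E}\Vert_\mathrm{F}\int_0^1\frac{1+te}{1-te}\,dt=\sqrt2\,\Vert\mathbf{E}\Vert_\mathrm{F}\left(\frac{2}{e}\ln\frac{1}{1-e}-1\right).
\end{align*}
Expanding the logarithm,
\begin{align*}
\frac{2}{e}\ln\frac{1}{1-e}-1=1+\sum_{j\geqslant 1}\frac{2}{j+1}\,e^j\leqslant 1+\sum_{j\geqslant 1}e^j=\frac{1}{1-e}.
\end{align*}
The logarithm is dominated by $1/(1-e)$ term by term. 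You therefore get exactly $\Vert\mathbf{G}\Vert_\mathrm{F}\leqslant\sqrt2\,\Vert\mathbf{E}\Vert_\mathrm{F}/(1-\Vert\mathbf{E}\Vert_2)$, and together with your Step 2 this proves the theorem. Drop the fixed-point detour and the Gronwall fallback: the path argument you already set up is a complete proof with the sharp constant.
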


\subsection{Weyl's Inequalities}

We will often need to bound the difference between the singular values of two matrices. To do so, a useful result will be the following theorem, which is a consequence of Weyl's inequality.

\begin{theorem}[\citet{horn85}, Corollary 7.3.5]\label{th:weyl}
    Let $\mathbf{X}, \mathbf{Y} \in \R^{n\times m}$ and $q:= \min(m,n)$. Let $\sigma_1(\mathbf{X}) \geqslant \dots \geqslant \sigma_q(\mathbf{X}) \geqslant 0$ (resp. $\sigma_1(\mathbf{Y}) \geqslant \dots \geqslant \sigma_q(\mathbf{Y}) \geqslant 0$) be the non-increasingly ordered singular values of $\mathbf{X}$ (resp. $\mathbf{Y}$). Then, for all $i\in \{1,\dots,q\}$,
    \begin{align*}
        \vert\sigma_i(\mathbf{X}) - \sigma_i(\mathbf{Y})\vert \leqslant \Vert \mathbf{X} - \mathbf{Y}\Vert_2.
    \end{align*}
\end{theorem}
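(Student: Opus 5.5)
The statement is Weyl's inequality for singular values: $|\sigma_i(\bX)-\sigma_i(\bY)|\leqslant\Vert\bX-\bY\Vert_2$. I would derive it from the Courant--Fischer min-max characterization of singular values. For $\bM\in\R^{n\times m}$ and $i\leqslant q$,
\begin{align*}
  \sigma_i(\bM) = \max_{\substack{S\subseteq\R^m\\ \dim S = i}} \; \min_{\substack{\bm{x}\in S\\ \Vert\bm{x}\Vert_2=1}} \Vert \bM\bm{x}\Vert_2 .
\end{align*}
This is a standard fact. It follows from the Courant--Fischer theorem applied to the symmetric PSD matrix $\bM^\top\bM$, whose eigenvalues are $\sigma_i(\bM)^2$, followed by taking square roots.

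\textbf{Step 1 (one-sided bound).} Set $\bE:=\bX-\bY$. For every unit vector $\bm{x}$, the triangle inequality gives
\begin{align*}
  \Vert\bX\bm{x}\Vert_2\leqslant\Vert\bY\bm{x}\Vert_2+\Vert\bE\bm{x}\Vert_2\leqslant\Vert\bY\bm{x}\Vert_2+\Vert\bE\Vert_2 .
\end{align*}
Fix any $i$-dimensional subspace $S$. Taking the minimum over unit $\bm{x}\in S$ on both sides preserves the inequality, because the additive constant $\Vert\bE\Vert_2$ does not depend on $\bm{x}$. Taking the maximum over all $S$ of dimension $i$ then gives $\sigma_i(\bX)\leqslant\sigma_i(\bY)+\Vert\bE\Vert_2$.

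\textbf{Step 2 (symmetry).} Swapping the roles of $\bX$ and $\bY$ and using $\Vert\bY-\bX\Vert_2=\Vert\bX-\bY\Vert_2$ gives the reverse inequality. Combining the two yields the claim.

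\textbf{Alternative route.} One could instead use Weyl's eigenvalue inequality on the Hermitian dilation $\begin{pmatrix}\mathbf{0}&\bM\\ \bM^\top&\mathbf{0}\end{pmatrix}$. Its eigenvalues are $\pm\sigma_j(\bM)$ together with zeros, and the spectral norm of the dilation of $\bE$ equals $\Vert\bE\Vert_2$. I prefer the min-max route because it avoids having to index eigenvalues of the dilation carefully.

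\textbf{Main obstacle.} The only delicate point is justifying the min-max formula when $n<m$. In that case $\bM^\top\bM$ has $m-n$ extra zero eigenvalues, so one has to check that its $i$-th largest eigenvalue is still $\sigma_i(\bM)^2$ for every $i\leqslant q$. This holds because those extra eigenvalues are zero and therefore sit below all the $\sigma_i^2$ with $i\leqslant q$ in the ordering. With that checked, the argument is complete.
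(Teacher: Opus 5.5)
Your proof is correct. The paper does not prove this statement itself. It imports it from Horn--Johnson, where (as far as I recall) it is obtained from the Hermitian dilation $\begin{pmatrix}\mathbf{0}&\mathbf{M}\\ \mathbf{M}^\top&\mathbf{0}\end{pmatrix}$ combined with Weyl's eigenvalue inequality, which is essentially your alternative route.

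Your min-max argument is more elementary and self-contained. It needs only Courant--Fischer for $\mathbf{M}^\top\mathbf{M}$, and the one-sided bound comes straight from the pointwise estimate $\Vert\mathbf{X}\bm{x}\Vert_2\leqslant\Vert\mathbf{Y}\bm{x}\Vert_2+\Vert\mathbf{X}-\mathbf{Y}\Vert_2$, with no eigenvalue indexing. Your treatment of the case $n<m$ is exactly right: the $m-n$ extra zero eigenvalues of $\mathbf{M}^\top\mathbf{M}$ fall at the bottom of the ordering. You could also sidestep that case by transposing, since $\mathbf{M}$ and $\mathbf{M}^\top$ have the same singular values and $\Vert\mathbf{E}^\top\Vert_2=\Vert\mathbf{E}\Vert_2$.

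What the dilation buys is uniformity: every Hermitian eigenvalue inequality transfers to singular values at once. This gives the general Weyl inequalities $\sigma_{i+j-1}(\mathbf{X}+\mathbf{Y})\leqslant\sigma_i(\mathbf{X})+\sigma_j(\mathbf{Y})$ and, through Cauchy interlacing, the row-deletion fact the paper cites next (\cref{th:sv_delete}). Your framework handles that fact just as easily in the case the paper needs, where the reduced matrix still has at least as many rows as columns. Deleting rows can only decrease $\Vert\mathbf{M}\bm{x}\Vert_2$ for every $\bm{x}$, so the minimum over unit vectors can only go down.
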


Another useful consequence of Weyl's inequality is the following result on the impact of deleting a row of a thin matrix on its smallest singular value.

\begin{theorem}[\citet{horn85}, Corollary 7.3.6]\label{th:sv_delete}
    Let $\mathbf{X}\in \R^{d\times p}$ and with $d\geqslant p$. Let $\hat{\mathbf{X}}$ be a matrix obtained from $\mathbf{X}$ by deleting one of its rows. Then,
    \begin{align*}
        \sigma_{\min} (\mathbf{X}) \geqslant \sigma_{\min} (\hat{\mathbf{X}}).
    \end{align*}
\end{theorem}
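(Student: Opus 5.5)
The statement immediately above is \cref{th:sv_delete}: deleting a row of a tall matrix $\bX\in\R^{d\times p}$ ($d\geqslant p$) cannot increase its smallest singular value. I will argue directly through the variational characterization of the smallest singular value rather than through a general interlacing theorem.

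Since $d\geqslant p$, the smallest singular value is the $p$-th one, and $\sigma_{\min}(\bX)=\min_{\Vert \bm{v}\Vert_2=1}\Vert \bX\bm{v}\Vert_2$, with $\bm{v}\in\R^p$. After deleting one row, $\hat{\bX}$ has $d-1$ rows, and there are two cases.

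\textbf{Case $d-1\geqslant p$.} Here $\hat\bX$ is still tall, so the same formula applies: $\sigma_{\min}(\hat\bX)=\min_{\Vert\bm{v}\Vert_2=1}\Vert\hat\bX\bm{v}\Vert_2$. Write $\bm{x}_j^\top$ for the deleted row. For every unit vector $\bm{v}$,
\begin{align*}
\Vert\bX\bm{v}\Vert_2^2=\Vert\hat\bX\bm{v}\Vert_2^2+(\bm{x}_j^\top\bm{v})^2\geqslant\Vert\hat\bX\bm{v}\Vert_2^2 .
\end{align*}
Taking the minimum over unit $\bm{v}$ on both sides gives $\sigma_{\min}(\bX)\geqslant\sigma_{\min}(\hat\bX)$.

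\textbf{Case $d=p$.} Now $\hat\bX$ is $(p-1)\times p$, so it has a nontrivial kernel. If $\sigma_{\min}(\hat\bX)$ is read as the $p$-th singular value (padded by zero), it equals $0$, since $\min_{\Vert\bm{v}\Vert_2=1}\Vert\hat\bX\bm{v}\Vert_2=0$. The inequality then holds trivially because $\sigma_{\min}(\bX)\geqslant 0$.

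The only delicate point is this convention in the square case $d=p$. Once we fix that $\sigma_{\min}$ means the $p$-th singular value, defined through the min over unit vectors in $\R^p$, the whole result follows from the one-line row-deletion identity above. As a cross-check, the same conclusion also follows from Cauchy interlacing applied to the Gram matrices, since $\bX^\top\bX=\hat\bX^\top\hat\bX+\bm{x}_j\bm{x}_j^\top\succeq\hat\bX^\top\hat\bX$.
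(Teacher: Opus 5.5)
Your proof is correct, but it takes a different route from the paper. The paper gives no argument of its own: it imports the result from Horn--Johnson, where it comes out of the interlacing theory for singular values, and introduces it as a consequence of Weyl's inequality. That theory gives the whole chain $\sigma_i(\mathbf{X})\geqslant\sigma_i(\hat{\mathbf{X}})\geqslant\sigma_{i+1}(\mathbf{X})$ for every $i$, of which only the bottom inequality is ever used. You prove just that inequality, in a self-contained way, from $\sigma_{\min}(\mathbf{X})=\min_{\Vert\bm v\Vert_2=1}\Vert\mathbf{X}\bm v\Vert_2$ and the row-splitting identity. A side benefit is that the identity handles several deleted rows at once, which is how the result is actually used in \cref{lemma:bxi_implies_ydagger}: $\Vert\overline{\mathbf{Y}}_t\bm v\Vert_2^2=\Vert\mathbf{U}_k^\top\overline{\mathbf{Y}}_t\bm v\Vert_2^2+\Vert\mathbf{U}_{-k}^\top\overline{\mathbf{Y}}_t\bm v\Vert_2^2$ gives $\sigma_{\min}(\overline{\mathbf{Y}}_t)\geqslant\sigma_{\min}(\mathbf{U}_k^\top\overline{\mathbf{Y}}_t)$ in one line. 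Your cross-check is, strictly speaking, Weyl's inequality $\lambda_p(\hat{\mathbf{X}}^\top\hat{\mathbf{X}}+\bm x_j\bm x_j^\top)\geqslant\lambda_p(\hat{\mathbf{X}}^\top\hat{\mathbf{X}})+\lambda_{\min}(\bm x_j\bm x_j^\top)$ rather than Cauchy interlacing. Cauchy interlacing would instead be applied to $\mathbf{X}\mathbf{X}^\top$ and its principal submatrix $\hat{\mathbf{X}}\hat{\mathbf{X}}^\top$. The Weyl form is exactly the argument the paper alludes to. Finally, you are right to single out $d=p$, and the convention there is not a formality. If $\sigma_{\min}(\hat{\mathbf{X}})$ meant the smallest of the $\min(d-1,p)=p-1$ singular values of $\hat{\mathbf{X}}$, as the paper's notation and \cref{th:weyl} suggest, the statement would be false. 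For example, take $\mathbf{X}=\mathrm{diag}(1,1/10)$ and delete its second row: then $\sigma_{\min}(\hat{\mathbf{X}})=1>1/10$. Your zero-padding reading, equivalently $\sigma_{\min}(\hat{\mathbf{X}})^2:=\lambda_{\min}(\hat{\mathbf{X}}^\top\hat{\mathbf{X}})$, is therefore necessary. The paper never needs it, since it only deletes rows down to the square block $\mathbf{U}_k^\top\overline{\mathbf{Y}}_t$, i.e.\ it stays in your first case.
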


\section{Derivation of the Noise Conditions for \cite{xu2023anpm} in \cref{tab:comp_npm}}\label{appdx:xunoisecond}

The noise conditions shown in our work and those of \citet{hardt14} and \citet{balcan2016} are time-independent (except for the dependence of $\Vert \bU_k^\top\bXi_t\Vert_2$ in $\cos\theta_k(\bU_k,\bX_t)$). This is not the case for the conditions of \citet{xu2023anpm}, where the upper bounds decay geometrically as the iteration count increases. In order to compare our results with those of \citet{xu2023anpm}, we report an upper bound on the spectral norms of $\bU_k^\top\bXi_t$ and $\bU_{-k}^\top\bXi_t$ that must be verified at some iteration of ANPM for their result to hold. We first restate their main theorem below.
\begin{theorem}[\citet{xu2023anpm}, Theorem 3.1]\label{th:xuanpm}
    Let $\varepsilon \in (0,1)$, $k \in \{1,\dots,d-1\}$, $\mathbf{A} \succeq \mathbf{0}$ with eigenvalues $\lambda_1 \geqslant \dots \geqslant \lambda_k > \lambda_{k+1} \geqslant \dots \geqslant \lambda_d \geqslant 0$ and $\bU_k \in \mathrm{St}(d,k)$ its top-$k$ eigenvectors. Let $\bX_0 \in \mathrm{St}(d,k)$ such that $\cos\theta_k(\bU_k, \bX_0) > 0$, and consider the ANPM iterates $\{\bX_t\}_{t\geqslant 0}$ defined by \eqref{eq:anpm} with momentum parameter $\beta > 0$ satisfying $\lambda_k > 2\sqrt{\beta} \geqslant \lambda_{k+1}$ and perturbations $\{\bXi_t\}_{t\geqslant 0}$ satisfying, for all $t\in \{0,\dots,T\}$,   
    \begin{align}
        & \Vert \bU_{-k}^\top\bXi_t \Vert_2 = \mathcal{O} \left(\frac{1}{T(T-t+1)} \left(\frac{\sqrt{\beta}}{\lambda_1^+}\right)^t \sqrt{\beta} \sin\theta_k(\bU_k, \bX_0)\right) , \label{eq:xunoisecond1} \\
        & \Vert \bU_k^\top \bXi_t \Vert_2 = \mathcal{O} \left( \frac{1}{T(T-t+1)} \left(\frac{\lambda_k^+}{\lambda_1^+}\right)^t \lambda_k^+ \cos\theta_k(\bU_k, \bX_0)\right), \label{eq:xunoisecond2}
    \end{align}
    where the $\lambda_i^+$'s are defined as in \eqref{eq:def_xpm}:
    \begin{align*}
        & \lambda_1^+ := \frac{\lambda_1 + \sqrt{\lambda_1^2 - 4\beta}}{2}, \qquad \lambda_k^+ := \frac{\lambda_k + \sqrt{\lambda_k^2 - 4\beta}}{2}.
    \end{align*}
    Then, for all $t \geqslant T$, $\sin\theta_k(\bU_k, \bX_t) \leqslant \varepsilon$, where
    \begin{align*}
        T = \Theta\left(\sqrt{\frac{\lambda_k}{\lambda_k - 2\sqrt{\beta}}} \log\left(\frac{\tan\theta_k(\bU_k, \bX_0)}{\varepsilon}\right)\right).
    \end{align*} 
\end{theorem}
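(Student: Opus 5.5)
We would follow the unnormalized-iterate approach that the restated theorem is built around. Let $\bS_t := \bR_t\bR_{t-1}\cdots\bR_1$ with $\bS_0 := \bI_k$, and let $\bZ_t := \bX_t\bS_t$. Multiplying the update \eqref{eq:anpm} on the right by $\bS_t$ gives the perturbed three-term recurrence
\begin{align*}
  \bZ_{t+1} = \bA\bZ_t - \beta\bZ_{t-1} + \bXi_t\bS_t .
\end{align*}
Its initialization is $\bZ_0 = \bX_0$ and $\bZ_1 = \tfrac12\bA\bX_0 + \bXi_0$. Since this recurrence is linear, variation of constants gives
\begin{align*}
  \bZ_t = p_t(\bA)\bX_0 + \sum_{s=0}^{t-1} q_{t-1-s}(\bA)\,\bXi_s\bS_s ,
\end{align*}
where $q_j$ satisfies \eqref{eq:cheby_rec} with $q_0 = 1$ and $q_1(x) = x$ (a scaled Chebyshev polynomial of the second kind). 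Because $\bX_t$ and $\bZ_t$ have the same range, we have $\tan\theta_k(\bU_k,\bX_t) = \Vert(\bU_{-k}^\top\bZ_t)(\bU_k^\top\bZ_t)^{-1}\Vert_2$ by \cref{prop:tantheta}. It therefore suffices to bound $\Vert\bU_{-k}^\top\bZ_t\Vert_2$ from above and $\sigma_{\min}(\bU_k^\top\bZ_t)$ from below.

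\textbf{Scalar estimates.} Write $\lambda^\pm = (\lambda \pm \sqrt{\lambda^2-4\beta})/2$. For $x \geqslant 2\sqrt{\beta}$ we have $p_t(x) \asymp (x^+)^t$ and $q_j(x) = \big((x^+)^{j+1}-(x^-)^{j+1}\big)/(x^+-x^-)$, so $q_j(x) \leqslant (j+1)(x^+)^j$. For $|x| \leqslant 2\sqrt{\beta}$ we have $|p_t(x)| \leqslant (\sqrt{\beta})^t$ and $|q_j(x)| \leqslant (j+1)(\sqrt{\beta})^j$. Applied to the diagonal blocks, these give
\begin{align*}
  \sigma_{\min}\big(p_t(\bLambda_k)\big) \gtrsim (\lambda_k^+)^t, \qquad \Vert p_t(\bLambda_{-k})\Vert_2 \leqslant (\sqrt{\beta})^t ,
\end{align*}
together with the analogous bounds $(t-s)(\lambda_1^+)^{t-1-s}$ and $(t-s)(\sqrt\beta)^{t-1-s}$ on $\Vert q_{t-1-s}(\bLambda_k)\Vert_2$ and $\Vert q_{t-1-s}(\bLambda_{-k})\Vert_2$.

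\textbf{Controlling $\bS_s$, the main obstacle.} The noise terms carry the factor $\bS_s$, and $\Vert\bS_s\Vert_2 = \Vert\bZ_s\Vert_2$ since $\bX_s$ has orthonormal columns. We would prove by induction on $s \leqslant T$ that
\begin{align*}
  \Vert\bZ_s\Vert_2 \leqslant C\,(s+1)(\lambda_1^+)^s .
\end{align*}
The noiseless part satisfies this bound. Each noise contribution is at most $(t-s)(\lambda_1^+)^{t-1-s}\Vert\bXi_s\Vert_2\,C(s+1)(\lambda_1^+)^s$. Conditions \eqref{eq:xunoisecond1}--\eqref{eq:xunoisecond2} carry a factor $\frac{1}{T(T-s+1)}$, which absorbs the polynomial factors. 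After summing over $s$, the total noise contribution is a small constant times $(\lambda_1^+)^t$, which closes the induction. This coarse $\lambda_1^+$ growth is exactly why the conditions must decay like $(\sqrt\beta/\lambda_1^+)^t$ and $(\lambda_k^+/\lambda_1^+)^t$.

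\textbf{Bottom block.} Projecting onto $\bU_{-k}$ and using \eqref{eq:xunoisecond1} gives
\begin{align*}
  \Vert\bU_{-k}^\top\bZ_t\Vert_2 \leqslant (\sqrt\beta)^t\sin\theta_k(\bU_k,\bX_0) + \sum_{s<t}(t-s)(\sqrt\beta)^{t-1-s}\,\frac{(\sqrt\beta/\lambda_1^+)^s\sqrt\beta\sin\theta_k(\bU_k,\bX_0)}{T(T-s+1)}\,C(s+1)(\lambda_1^+)^s .
\end{align*}
Since $(t-s)\leqslant T-s+1$ and $(s+1)\leqslant T$, each summand is $O\big((\sqrt\beta)^t\sin\theta_k(\bU_k,\bX_0)/T\big)$, and there are at most $T$ of them. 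Hence $\Vert\bU_{-k}^\top\bZ_t\Vert_2 = O\big((\sqrt\beta)^t\sin\theta_k(\bU_k,\bX_0)\big)$.

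\textbf{Top block.} Projecting onto $\bU_k$ and using \eqref{eq:xunoisecond2}, the same summation bounds the noise contribution by a small constant times $(\lambda_k^+)^t\cos\theta_k(\bU_k,\bX_0)$. The main term satisfies $\sigma_{\min}\big(p_t(\bLambda_k)\bU_k^\top\bX_0\big) \gtrsim (\lambda_k^+)^t\cos\theta_k(\bU_k,\bX_0)$. By Weyl's inequality (\cref{th:weyl}), $\sigma_{\min}(\bU_k^\top\bZ_t) \gtrsim (\lambda_k^+)^t\cos\theta_k(\bU_k,\bX_0)$.

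\textbf{Conclusion.} Combining the two blocks,
\begin{align*}
  \tan\theta_k(\bU_k,\bX_t) \lesssim (\sqrt\beta/\lambda_k^+)^t\tan\theta_k(\bU_k,\bX_0).
\end{align*}
From $\lambda_k^+/\sqrt\beta \geqslant 1 + \Omega\big(\sqrt{(\lambda_k-2\sqrt\beta)/\lambda_k}\big)$, this drops below $\varepsilon$ after $T = \Theta\big(\sqrt{\lambda_k/(\lambda_k-2\sqrt\beta)}\,\log(\tan\theta_k(\bU_k,\bX_0)/\varepsilon)\big)$ iterations. Since $\sin\theta_k \leqslant \tan\theta_k$, this gives the claim.
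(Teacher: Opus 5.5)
The paper gives no proof of this statement. It quotes it from \citet{xu2023anpm} for the comparison in \cref{tab:comp_npm}, and only remarks that Xu's proof controls $\bX_t$ through coarse, $\lambda_1$-dependent bounds on the $\bR_t$'s, which is the structure of your argument. Your decomposition $\bZ_t = p_t(\bA)\bX_0 + \sum_{s<t} q_{t-1-s}(\bA)\bXi_s\bS_s$ and your scalar estimates are correct, but two steps fail as written.

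\textbf{The factor $(s+1)$.} The factor $(s+1)$ in your hypothesis $\Vert\bZ_s\Vert_2\leqslant C(s+1)(\lambda_1^+)^s$ breaks both block estimates. With it, the $s$-th summand of your bottom-block bound is exactly $C(\sqrt\beta)^t\sin\theta_k(\bU_k,\bX_0)\,\frac{(t-s)(s+1)}{T(T-s+1)}$. The two inequalities you invoke bound this fraction by $1$, not by $1/T$. At $t=T$ it is bounded below by an absolute constant for all $s\in[T/3,2T/3]$, so the sum is of order $T(\sqrt\beta)^t\sin\theta_k(\bU_k,\bX_0)$. In the top block the same factor produces a relative perturbation of order $T$, even after the repair below.

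The fix is to drop the factor. Let $c_0$ be the constant hidden in the $\mathcal{O}$'s. Then \eqref{eq:xunoisecond1}--\eqref{eq:xunoisecond2} give $\Vert\bXi_s\Vert_2\leqslant 2c_0\lambda_1^+/(T(T-s+1))$. Hence for $t\leqslant T$ the noise contribution to $\Vert\bZ_t\Vert_2$ is at most $2c_0C(\lambda_1^+)^t\sum_{s<t}\frac{t-s}{T(T-s+1)}\leqslant 2c_0C(\lambda_1^+)^t$. This closes the induction for $\Vert\bZ_s\Vert_2\leqslant C(\lambda_1^+)^s$.

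\textbf{The top block.} This is the more serious gap: it cannot be handled by ``the same summation'' plus additive Weyl. The only available bound is $\Vert q_{t-1-s}(\bLambda_k)\Vert_2\leqslant(t-s)(\lambda_1^+)^{t-1-s}$. So under \eqref{eq:xunoisecond2} the $s$-th noise term of $\bU_k^\top\bZ_t$ is bounded only by $\frac{c_0C(t-s)}{T(T-s+1)}(\lambda_1^+)^{t-1-s}(\lambda_k^+)^{s+1}\cos\theta_k(\bU_k,\bX_0)$. This exceeds $(\lambda_k^+)^t\cos\theta_k(\bU_k,\bX_0)$ by the factor $(\lambda_1^+/\lambda_k^+)^{t-1-s}$, which is exponentially large in $t$ whenever $\lambda_1>\lambda_k$. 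Noise entering along $\bm{u}_1$ is amplified at rate $\lambda_1^+$. It must be compared with the signal in that same direction, not with $\sigma_{\min}(p_t(\bLambda_k))$.

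The repair is multiplicative. Write
\begin{align*}
  \bU_k^\top\bZ_t = p_t(\bLambda_k)\Bigl[\bU_k^\top\bX_0 + \sum_{s<t} p_t(\bLambda_k)^{-1}q_{t-1-s}(\bLambda_k)\,\bU_k^\top\bXi_s\bS_s\Bigr],
\end{align*}
and use $q_{t-1-s}(x)/p_t(x)\leqslant 2(t-s)(x^+)^{-(s+1)}\leqslant 2(t-s)(\lambda_k^+)^{-(s+1)}$ for $x\geqslant\lambda_k$. Each term inside the bracket then has norm at most $\frac{2c_0C(t-s)}{T(T-s+1)}\cos\theta_k(\bU_k,\bX_0)$, so the terms sum to at most $2c_0C\cos\theta_k(\bU_k,\bX_0)$ for $t\leqslant T$. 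Weyl applied to the bracket gives $\sigma_{\min}(\bU_k^\top\bZ_t)\geqslant\frac12(1-2c_0C)(\lambda_k^+)^t\cos\theta_k(\bU_k,\bX_0)$. This is where the factor $(\lambda_k^+/\lambda_1^+)^s$ in \eqref{eq:xunoisecond2} is actually used. The same bound gives invertibility of $\bU_k^\top\bZ_t$, and hence of the $\bR_t$'s, within the induction.

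With both repairs your conclusion holds for all $t\leqslant T$. The hypotheses constrain $\bXi_t$ only up to $t=T$, so that range is all that can be obtained. The ``for all $t\geqslant T$'' in the restatement should therefore be read as ``at $t=T$''.
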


\begin{remark}
  In the original statement in \cite{xu2023anpm}, the condition on $\Vert \bU_{-k}^\top\bXi_t \Vert_2$ is actually a condition on $\Vert\bXi_t\Vert_2$, which is more restrictive. However, their proof actually only requires a bound on $\Vert \bU_{-k}^\top\bXi_t \Vert_2$. We chose to state the theorem in this slightly improved form, in order to make the comparison with our results more direct. Similarly, \citet{hardt14} state their noise condition in terms of $\Vert \bXi_t \Vert_2$, but their proof only requires a bound on $\Vert \bU_{-k}^\top\bXi_t \Vert_2$.
\end{remark}

The next proposition shows that at a certain time step $t$, the noise conditions \eqref{eq:xunoisecond1} and \eqref{eq:xunoisecond2} imply the bounds given in \cref{tab:comp_npm}.

\begin{proposition}
    Consider the same setting as in \cref{th:xuanpm}. Then, at a certain iteration $t \in \{0,\dots,T\}$, the conditions \eqref{eq:xunoisecond1} and \eqref{eq:xunoisecond2} imply
    \begin{align*}
        & \Vert \bU_{-k}^\top\bXi_t \Vert_2 = \tilde{\mathcal{O}} \left( (\lambda_k - 2\sqrt{\beta}) \left(\frac{\varepsilon}{\tan\theta_k(\bU_k,\bX_0)}\right)^{\mu_\beta} \right) , \\
        & \Vert \bU_k^\top \bXi_t \Vert_2 = \tilde{\mathcal{O}} \left( (\lambda_k - 2\sqrt{\beta}) \left(\frac{\varepsilon}{\tan\theta_k(\bU_k,\bX_0)}\right)^{\mu_k} \cos\theta_k(\bU_k, \bX_t)\right),
    \end{align*}
    where $\mu_\beta, \mu_k$ are constants verifying
    \begin{align*}
        & \mu_\beta = \Omega\left(\log\left(\frac{\lambda_1}{2\sqrt{\beta}}\right)\sqrt{\frac{\lambda_k}{\lambda_k-2\sqrt{\beta}}}\right),\\
        & \mu_k = \Omega\left(\log\left(\frac{\lambda_1}{\lambda_k}\right)\sqrt{\frac{\lambda_k}{\lambda_k-2\sqrt{\beta}}}\right).
    \end{align*}
\end{proposition}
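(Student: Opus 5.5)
The plan is to pick one iteration, roughly $t \approx T/2$ (any $t$ with both $t$ and $T-t$ of order $T$ works), and plug it into \eqref{eq:xunoisecond1}--\eqref{eq:xunoisecond2}. At such a $t$ the prefactor $1/(T(T-t+1))$ is $\Theta(1/T^2)$. Since $T = \Theta(\sqrt{\lambda_k/(\lambda_k-2\sqrt{\beta})}\log(\tan\theta_k(\bU_k,\bX_0)/\varepsilon))$, this prefactor is only polylogarithmic and polynomial in the condition number, so it is absorbed into the $\tilde{\mathcal{O}}$. All the action is then in the geometric factors $(\sqrt{\beta}/\lambda_1^+)^t$ and $(\lambda_k^+/\lambda_1^+)^t$.

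\textbf{Rewriting the geometric factors.} For the first factor, write $(\sqrt{\beta}/\lambda_1^+)^t = \exp(-t\log(\lambda_1^+/\sqrt{\beta}))$. Then use $\lambda_1^+ \geqslant \lambda_1/2$, so that $\lambda_1^+/\sqrt{\beta} \geqslant \lambda_1/(2\sqrt{\beta})$. Substituting $t = \Theta(T)$ and the expression of $T$ gives
\begin{align*}
  \left(\frac{\sqrt{\beta}}{\lambda_1^+}\right)^t \leqslant \left(\frac{\varepsilon}{\tan\theta_k(\bU_k,\bX_0)}\right)^{c'\log(\lambda_1/(2\sqrt{\beta}))\sqrt{\lambda_k/(\lambda_k-2\sqrt{\beta})}},
\end{align*}
which defines $\mu_\beta$. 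The second factor is handled the same way. Here we need $\log(\lambda_1^+/\lambda_k^+) = \Omega(\log(\lambda_1/\lambda_k))$, which holds because $x\mapsto (x+\sqrt{x^2-4\beta})/2$ is increasing and satisfies $x/2\leqslant x^+\leqslant x$. This defines $\mu_k$. The exponent statements are $\Omega(\cdot)$ lower bounds, which is exactly what is needed to turn the factors into upper bounds of the stated form.

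\textbf{Remaining prefactors.} We must convert $\sqrt{\beta}\sin\theta_k(\bU_k,\bX_0)$ and $\lambda_k^+\cos\theta_k(\bU_k,\bX_0)$ into $(\lambda_k-2\sqrt{\beta})$ times the quantities in the table.
\begin{itemize}[leftmargin=*]
  \item For $\bU_{-k}$: $\sqrt{\beta}\leqslant\lambda_k$, and $\lambda_k/(\lambda_k-2\sqrt{\beta})$ is polynomial in the gap. It can be hidden by slightly increasing the exponent, since $(\varepsilon/\tan\theta_k)^{\mu}$ with $\mu\gtrsim\sqrt{\lambda_k/(\lambda_k-2\sqrt\beta)}$ dominates any fixed power of the gap ratio up to constants in $\mu$. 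Alternatively it goes into the $\tilde{\mathcal{O}}$.
  \item For $\bU_k$: replace $\cos\theta_k(\bU_k,\bX_0)$ by $\cos\theta_k(\bU_k,\bX_t)$. The comparison can go either way, but we only need to state a bound holding at some iteration. In the regime where their theorem applies the iterates converge, so $\cos\theta_k(\bU_k,\bX_t)\geqslant\cos\theta_k(\bU_k,\bX_0)/\mathrm{poly}$ (for instance via a monotonicity or exponential-decay property of $\tan\theta_k$ as in \cref{lemma:ht_exp_decay}). Alternatively, choose $t=0$ for the cosine factor if needed. I would first try $t$ proportional to $T$ and use $\cos\theta_k(\bU_k,\bX_t)\leqslant 1$ together with $\cos\theta_k(\bU_k,\bX_0)\leqslant 1$, rescaling appropriately.
\end{itemize}

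\textbf{Expected main obstacle.} The delicate part is this last conversion, namely making the $\cos\theta_k(\bU_k,\bX_t)$ and gap prefactors appear legitimately without hidden polynomial losses that are not logarithmic. I would handle it by absorbing the polynomial losses into the large exponents $\mu_\beta,\mu_k$, at the price of constants.
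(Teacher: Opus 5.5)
Your skeleton matches the paper's: evaluate Xu's conditions at $t\approx T/2$, turn the geometric factors into powers of $\varepsilon/\tan\theta_k(\bU_k,\bX_0)$ via $\lambda_1^+\geqslant\lambda_1/2$, and use convergence to trade $\cos\theta_k(\bU_k,\bX_0)$ for $\cos\theta_k(\bU_k,\bX_t)$. The gap is in the prefactor. You discard $1/(T(T-t+1))=\Theta(1/T^2)$ as harmless. You are then left with no source for the factor $\lambda_k-2\sqrt{\beta}$, and you try to recover it by hiding $\lambda_k/(\lambda_k-2\sqrt{\beta})$ either in the $\tilde{\mathcal{O}}$ or in the exponent. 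Neither works.

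\begin{itemize}
\item $\tilde{\mathcal{O}}$ hides only logarithmic factors. If polynomial factors of the gap could be hidden, the factor $\lambda_k-2\sqrt{\beta}$ in the statement would be vacuous.
\item The exponent need not be large. Take $\lambda_1=\lambda_k$ and $\Delta=1-2\sqrt{\beta}/\lambda_k$ small. Then $\log(\lambda_1^+/\sqrt{\beta})=\Theta(\sqrt{\Delta})$. With $t=\Theta(\Delta^{-1/2}\log(\tan\theta_k(\bU_k,\bX_0)/\varepsilon))$ you only get $(\sqrt{\beta}/\lambda_1^+)^t=(\varepsilon/\tan\theta_k(\bU_k,\bX_0))^{\Theta(1)}$. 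This is consistent with the claimed $\mu_\beta=\Omega(\sqrt{\Delta})$ here, but it cannot absorb a factor $1/\Delta$ when $\log(1/\Delta)\gg\log(\tan\theta_k(\bU_k,\bX_0)/\varepsilon)$.
\end{itemize}

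The factor you threw away is exactly the one you need. Since $T=\Theta(\Delta^{-1/2}\log(\tan\theta_k(\bU_k,\bX_0)/\varepsilon))$, one has $1/T^2=\tilde{\Theta}((\lambda_k-2\sqrt{\beta})/\lambda_k)$. Combined with $\sqrt{\beta}\leqslant\lambda_k^+\leqslant\lambda_k$ and $\sin\theta_k\leqslant 1$, this gives $\sqrt{\beta}\sin\theta_k(\bU_k,\bX_0)/T^2=\tilde{\mathcal{O}}(\lambda_k-2\sqrt{\beta})$ and $\lambda_k^+/T^2=\tilde{\mathcal{O}}(\lambda_k-2\sqrt{\beta})$ directly. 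No absorption is needed; this is how the paper proceeds.

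Two smaller points need fixing.

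\textbf{The bound for $\mu_k$.} Your justification of $\log(\lambda_1^+/\lambda_k^+)=\Omega(\log(\lambda_1/\lambda_k))$ does not follow from monotonicity plus $x/2\leqslant x^+\leqslant x$. Those facts only give $\lambda_1^+/\lambda_k^+\geqslant\max(1,\lambda_1/(2\lambda_k))$, which is useless when $\lambda_1<2\lambda_k$. What you need is that $x\mapsto x^+/x=(1+\sqrt{1-4\beta/x^2})/2$ is increasing on $[2\sqrt{\beta},\infty)$. This yields $\lambda_1^+/\lambda_k^+\geqslant\lambda_1/\lambda_k$, the inequality the paper uses.

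\textbf{The cosine conversion.} Your first option is the right one and matches the paper. The paper invokes the geometric decay of $\tan\theta_k(\bU_k,\bX_t)$ from Xu's own analysis to get $\cos\theta_k(\bU_k,\bX_0)=\mathcal{O}(\cos\theta_k(\bU_k,\bX_t))$. Do not cite \cref{lemma:ht_exp_decay} for this. That lemma is proved under this paper's conditions \eqref{eq:noiseconda}--\eqref{eq:noisecondb} with the explicit constant $c$, and Xu's conditions are not shown to imply them. Your fallbacks fail:
\begin{itemize}
\item Taking $t=0$ for the cosine factor uses a different iteration than the other bound, whereas the statement asks for a single $t$. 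At $t=0$ the factor $(\varepsilon/\tan\theta_k(\bU_k,\bX_0))^{\mu_k}$ also disappears.
\item Using $\cos\theta_k(\bU_k,\bX_t)\leqslant 1$ points the wrong way, since you need a lower bound on $\cos\theta_k(\bU_k,\bX_t)$.
\end{itemize}
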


\begin{proof}
  At $t = \lfloor T/2 \rfloor$, the conditions \eqref{eq:xunoisecond1} and \eqref{eq:xunoisecond2} become
  \begin{align*}
      & \Vert \bU_{-k}^\top\bXi_t \Vert_2 = \mathcal{O} \left(\frac{1}{T^2} \left(\frac{\sqrt{\beta}}{\lambda_1^+}\right)^{\lfloor T/2 \rfloor} \sqrt{\beta} \sin\theta_k(\bU_k, \bX_0)\right) , \\
      & \Vert \bU_k^\top \bXi_t \Vert_2 = \mathcal{O} \left( \frac{1}{T^2} \left(\frac{\lambda_k^+}{\lambda_1^+}\right)^{\lfloor T/2 \rfloor} \lambda_k^+ \cos\theta_k(\bU_k, \bX_0)\right).
  \end{align*}
  From \cref{th:xuanpm}, we have that 
  \begin{align*}
    \frac{1}{T^2} = \tilde{\Theta}\left(\frac{\lambda_k-2\sqrt{\beta}}{\lambda_k} \right).
  \end{align*}
  We also have that $\sqrt{\beta} \leqslant \lambda_k^+ \leqslant \lambda_k$ and $\sin\theta_k(\bU_k, \bX_0) \leqslant 1$. Furthermore, from the proof of \cref{th:xuanpm} in \cite{xu2023anpm}, $\tan\theta_k(\bU_k, \bX_t)$ decays geometrically, so that $\cos\theta_k(\bU_k, \bX_0) = \mathcal{O}(\cos\theta_k(\bU_k, \bX_t))$. Using these inequalities, we obtain
  \begin{align*}
      & \Vert \bU_{-k}^\top\bXi_t \Vert_2 = \tilde{\mathcal{O}} \left( (\lambda_k - 2\sqrt{\beta}) \left(\frac{\sqrt{\beta}}{\lambda_1^+}\right)^{\lfloor T/2 \rfloor}  \right) , \\
      & \Vert \bU_k^\top \bXi_t \Vert_2 = \tilde{\mathcal{O}} \left( (\lambda_k - 2\sqrt{\beta}) \left(\frac{\lambda_k^+}{\lambda_1^+}\right)^{\lfloor T/2 \rfloor} \cos\theta_k(\bU_k, \bX_t)\right).
  \end{align*}
  We then have
  \begin{align*}
    \left(\frac{\sqrt{\beta}}{\lambda_1^+}\right)^{\lfloor T/2 \rfloor} & = \exp\left(\Theta\left(\sqrt{\frac{\lambda_k}{\lambda_k-2\sqrt{\beta}}}\log\left(\frac{\lambda_1^+}{\sqrt{\beta}}\right) \log\left(\frac{\varepsilon}{\tan\theta_k(\bU_k,\bX_0)}\right) \right) \right) = \left(\frac{\varepsilon}{\tan\theta_k(\bU_k,\bX_0)}\right)^{\mu_\beta}, \\
    \left(\frac{\lambda_k^+}{\lambda_1^+}\right)^{\lfloor T/2 \rfloor} & = \exp\left(\Theta\left(\sqrt{\frac{\lambda_k}{\lambda_k-2\sqrt{\beta}}}\log\left(\frac{\lambda_1^+}{\lambda_k^+}\right) \log\left(\frac{\varepsilon}{\tan\theta_k(\bU_k,\bX_0)}\right) \right) \right) = \left(\frac{\varepsilon}{\tan\theta_k(\bU_k,\bX_0)}\right)^{\mu_k},
  \end{align*}
  where $\mu_\beta$ and $\mu_k$ verify
  \begin{align*}
    \mu_\beta = \Omega\left(\log\left(\frac{\lambda_1}{2\sqrt{\beta}}\right)\sqrt{\frac{\lambda_k}{\lambda_k-2\sqrt{\beta}}}\right), \quad
    \mu_k = \Omega\left(\log\left(\frac{\lambda_1}{\lambda_k}\right)\sqrt{\frac{\lambda_k}{\lambda_k-2\sqrt{\beta}}}\right),
  \end{align*}
  since $\lambda_1^+/\lambda_k^+\geqslant \lambda_1/\lambda_k$ and $\lambda_1^+ \geqslant \lambda_1/2$. This concludes the proof.
\end{proof}

\section{Proofs for \cref{sec:napm}}

\subsection{Variational Property of Scaled Chebyshev Polynomials}

Let $\beta > 0$. We prove here \cref{prop:min_inf_norm}, which states that the polynomial $p_t$ defined recursively as
\begin{align*}
  p_0(x) = 1, \quad p_1(x) = \frac{x}{2}, \quad p_{t+1}(x) = x p_t(x) - \beta p_{t-1}(x), \quad \forall t\geqslant 1,
\end{align*}
minimizes the infinity norm on the interval $[-2\sqrt{\beta}, 2\sqrt{\beta}]$ among all degree-$t$ polynomials with leading coefficient equal to $1/2$. The result is restated here for convenience.

\begin{proposition}[\cref{prop:min_inf_norm}]\label{prop:min_inf_norm_apdx}
  For all $t\geqslant 1$, the polynomial $p_t$ defined above satisfies
  \begin{align}\label{eq:pt_variational_apdx}
    p_t(x) = \arg \min_{\substack{p \in \mathbb{R}[x] \\ \mathrm{deg}(p) = t \\ \mathrm{lc}(p) = 1/2}} \max_{x \in [-2\sqrt{\beta}, 2\sqrt{\beta}]} |p(x)|,
  \end{align}
  where $\mathrm{lc}(p)$ denotes the leading coefficient of $p$.
\end{proposition}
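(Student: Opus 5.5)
The plan is to identify $p_t$ explicitly with a rescaled Chebyshev polynomial of the first kind and then invoke the classical equioscillation (minimax) property of Chebyshev polynomials.

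\textbf{Step 1: explicit form.} Let $T_t$ be the Chebyshev polynomial of the first kind, with $T_0=1$, $T_1(y)=y$ and $T_{t+1}(y)=2yT_t(y)-T_{t-1}(y)$. I claim that for all $t\geqslant 0$,
\begin{align*}
  p_t(x) = \beta^{t/2}\, T_t\!\left(\frac{x}{2\sqrt{\beta}}\right).
\end{align*}
I will prove this by induction. The cases $t=0$ and $t=1$ are immediate: $\beta^{1/2}\cdot x/(2\sqrt\beta)=x/2$. For the inductive step, set $q_t(x):=\beta^{t/2}T_t(x/(2\sqrt\beta))$. Then
\begin{align*}
  q_{t+1}(x) = \beta^{(t+1)/2}\left(\frac{x}{\sqrt\beta}T_t - T_{t-1}\right) = x q_t(x) - \beta q_{t-1}(x),
\end{align*}
which is exactly the recurrence \eqref{eq:cheby_rec}. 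Since $\mathrm{lc}(T_t)=2^{t-1}$ for $t\geqslant1$, the leading coefficient of $p_t$ is $\beta^{t/2}2^{t-1}(2\sqrt\beta)^{-t}=1/2$. So $p_t$ is admissible, and $\max_{[-2\sqrt\beta,2\sqrt\beta]}|p_t|=\beta^{t/2}$.

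\textbf{Step 2: optimality via equioscillation.} Write $x_j:=2\sqrt\beta\cos(j\pi/t)$ for $j=0,\dots,t$. These $t+1$ points lie in the interval and satisfy $p_t(x_j)=(-1)^j\beta^{t/2}$. Suppose $p$ is admissible with $\max|p|\leqslant\beta^{t/2}$. Then $r:=p_t-p$ has degree at most $t-1$, because the leading coefficients cancel. At each $x_j$ we have $(-1)^j r(x_j)\geqslant 0$, so $r$ weakly alternates in sign across the $t+1$ ordered nodes.

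\textbf{Step 3: the main obstacle, handling weak inequalities.} I need to show that weak alternation at $t+1$ nodes forces $r\equiv0$; this gives both the minimum and its uniqueness. The strict case is easy: if $\max|p|<\beta^{t/2}$, then $r$ alternates strictly, has $t$ roots, and so $r\equiv0$, which is a contradiction. For the non-strict case I would use the standard counting argument. On each closed interval $[x_{j+1},x_j]$, the polynomial $r$ has a zero. When a zero lands on a shared node $x_j$ with $0<j<t$, that node is an interior extremum of $p_t$, with $p_t'(x_j)=0$, while $|p|\leqslant\beta^{t/2}$ is attained there. Hence $p'(x_j)=0$ as well, so $r$ has a double zero at $x_j$. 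Counting zeros with multiplicity then gives at least $t$ zeros, so $r\equiv0$.

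An alternative that avoids the multiplicity bookkeeping is to perturb: apply the strict argument to $(1-\eta)p+\eta p_t$ and use a compactness argument. Either way the conclusion is $p=p_t$, so $p_t$ is the unique minimizer in \eqref{eq:pt_variational_apdx}.
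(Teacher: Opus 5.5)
Your proof is correct and follows the paper's route: the paper also writes $p_t(x)=\sqrt{\beta}^{\,t}T_t(x/(2\sqrt{\beta}))$, reads off $\max|p_t|=\sqrt{\beta}^{\,t}$, and derives a contradiction from the sign alternation of $r=p_t-q$ at the nodes $2\sqrt{\beta}\cos(j\pi/t)$. The paper only treats the strict case $\max|q|<\sqrt{\beta}^{\,t}$, so your double-zero counting in Step~3 goes further and proves the uniqueness that the $\arg\min$ notation asserts but the paper leaves unaddressed. Drop the perturbation alternative, which does not work as written: $(1-\eta)p+\eta p_t$ only satisfies $\max|\cdot|\leqslant\sqrt{\beta}^{\,t}$, not the strict inequality the easy case requires.
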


\begin{proof}
  This proof relies on the oscillatory behavior of the scaled Chebyshev polynomials on $[-2\sqrt{\beta},2\sqrt{\beta}]$. For all $t\geqslant 1$, let $T_t(x) := \frac{p_t(2\sqrt{\beta}x)}{\sqrt{\beta}^t}$. $T_t$ is the $t$-th Chebyshev polynomial of the first kind, as it satisfies
  \begin{align*}
    T_0(x) = 1, \quad T_1(x) = x, \quad T_{t+1}(x) = 2x T_t(x) - T_{t-1}(x), \quad \forall t\geqslant 1.
  \end{align*}
  Then, from Definition 1.1 of \cite{mason2002chebyshev}, we have that for all $t \geqslant 0$ and for all $\theta \in \R$,
  \begin{align*}
    T_t(\cos \theta) = \cos(t\theta).
  \end{align*}
  We deduce from it that for all $x \in [-2\sqrt{\beta}, 2\sqrt{\beta}]$,
  \begin{align*}
    p_t(x) & = \sqrt{\beta}^t \cos\left(t \arccos\left(\frac{x}{2\sqrt{\beta}}\right)\right).
  \end{align*}
  As such, 
  \begin{align*}
    \max_{x \in [-2\sqrt{\beta}, 2\sqrt{\beta}]} |p_t(x)| & = \sqrt{\beta}^t.
  \end{align*}
  Now, let $q$ be a degree-$t$ polynomial with leading coefficient equal to $1/2$, and such that $\max_{x \in [-2\sqrt{\beta}, 2\sqrt{\beta}]} |q(x)| < \sqrt{\beta}^t$. Since $q$ and $p_t$ have the same leading coefficient and are of degree $t$, the polynomial $r := p_t -q$ is of degree at most $t-1$. However, since $|q(x)| < \sqrt{\beta}^t$ for all $x \in [-2\sqrt{\beta}, 2\sqrt{\beta}]$, we have that for all $k \in \{0,\dots,t\}$,
  \begin{align*}
    r\left(2\sqrt{\beta}\cos\left(\frac{k\pi}{t}\right) \right) & = p_t\left(\cos\left(\frac{k\pi}{t}\right) 2\sqrt{\beta}\right) - q\left(\cos\left(\frac{k\pi}{t}\right) 2\sqrt{\beta}\right) \\
    & = (-1)^k \sqrt{\beta}^t - q\left(\cos\left(\frac{k\pi}{t}\right) 2\sqrt{\beta}\right) \\
    &  \begin{cases}
      > 0, & \text{if } k \text{ is even}, \\
      < 0, & \text{if } k \text{ is odd}.
    \end{cases}
  \end{align*}
  Then, from the intermediate value theorem, $r$ has at least one root in each interval $\left(2\sqrt{\beta}\cos\left(\frac{(k+1)\pi}{t}\right), 2\sqrt{\beta}\cos\left(\frac{k\pi}{t}\right)\right)$ for all $k \in \{0,\dots,t-1\}$. As such, $r$ has at least $t$ distinct roots, which is impossible since $\mathrm{deg}(r) \leqslant t-1$. We deduce that no such polynomial $q$ exists, which concludes the proof.

\end{proof}

\subsection{Proof of \cref{th:napm}}\label{appdx:napm}

We recall the assumptions and the notations introduced in the main body. We consider a PSD matrix $\mathbf{A} \succeq \mathbf{0}$ of size $d\times d$ with eigenvalues $\lambda_1 \geqslant \dots \geqslant \lambda_d \geqslant 0$ and corresponding eigenvectors $\mathbf{u}_1, \dots, \mathbf{u}_d$. For $k \in \{1,\dots,d-1\}$, we assume that $\lambda_k > \lambda_{k+1}$ and introduce the following matrices:
\begin{align*}
  & \bU_k := [\bm{u}_1, \dots, \bm{u}_k] \in \mathrm{St}(d,k), \qquad &&\bU_{-k} := [\bm{u}_{k+1}, \dots, \bm{u}_d] \in \mathrm{St}(d,d-k), \\
  & \bLambda_k := \mathrm{diag}(\lambda_1, \dots, \lambda_k) \in \R^{k\times k}, \qquad &&\bLambda_{-k} := \mathrm{diag}(\lambda_{k+1}, \dots, \lambda_d) \in \R^{(d-k)\times (d-k)}.
\end{align*}

Given $\mathbf{X}_0 \in \mathrm{St}(d, k)$, such that $\cos \theta_k(\bU_k, \bX_0) > 0$, we consider the ANPM iterates $\{\bX_t\}_{t\geqslant 0}$ defined by
\begin{align}
  & \bY_1 := \frac{1}{2}\bA \bX_0 + \bXi_0, \quad \bX_1,\; \bR_1 = \mathrm{QR}\left(\bY_1\right), \label{eq:qrx1r1}  \\
  & \forall t \geqslant 1, \quad \begin{cases}
    & \bY_{t+1} = \bA \bX_t - \beta \bX_{t-1}\bR_t^{-1} + \bXi_t, \\
    & \bX_{t+1}, \;\bR_{t+1} = \mathrm{QR}(\bY_{t+1}).
  \end{cases} \label{eq:anpm_appdx}
\end{align}

with momentum parameter $\beta > 0$ satisfying $\lambda_k > 2\sqrt{\beta} \geqslant \lambda_{k+1}$ and perturbations $\{\bXi_t\}_{t\geqslant 0}$ satisfying for some $\varepsilon\in(0,1)$, for all $t\geqslant 0$,
\begin{align}
    & \Vert \bU_{-k}^\top\bXi_t \Vert_2 \leqslant c(\lambda_k - 2\sqrt{\beta})\varepsilon ,\label{eq:noisecond1} \\
    & \Vert \bU_k^\top \bXi_t \Vert_2 \leqslant c (\lambda_k - 2\sqrt{\beta}) \cos\theta_k(\bU_k, \bX_t), \label{eq:noisecond2}
\end{align}
where $c:=1/32$. For convenience, we restate here \cref{th:napm}.

\begin{theorem}[\cref{th:napm}]\label{th:napm_apdx}
  Let $\varepsilon \in (0,1)$ and consider the ANPM iterates $\{\bX_t\}_{t\geqslant 0}$ defined by \eqref{eq:qrx1r1}-\eqref{eq:anpm_appdx} with momentum parameter $\beta > 0$ satisfying $\lambda_k > 2\sqrt{\beta} \geqslant \lambda_{k+1}$ and perturbations $\{\bXi_t\}_{t\geqslant 0}$ satisfying the noise conditions \eqref{eq:noisecond1} and \eqref{eq:noisecond2}. Then, for all $t \geqslant T$, $\sin\theta_k(\bU_k, \bX_t) \leqslant \varepsilon$, where
  \begin{align*}
      T := \frac{1}{-\log\left(1 - \frac{1}{2}\sqrt{\frac{\lambda_k-2\sqrt{\beta}}{\lambda_k}}\right)}\log\left(\frac{2h_0}{\varepsilon}\right) = \mathcal{O}\left(\sqrt{\frac{\lambda_k}{\lambda_k - 2\sqrt{\beta}}} \log\left(\frac{\tan\theta_k(\bU_k, \bX_0)}{\varepsilon}\right)\right).
  \end{align*}
\end{theorem}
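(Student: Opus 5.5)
We follow the route suggested in the proof sketch and track $\bH_t := (\bU_{-k}^\top \bX_t)(\bU_k^\top \bX_t)^{-1}$, whose spectral norm is $\tan\theta_k(\bU_k,\bX_t)$ by \cref{prop:tantheta}.

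\textbf{Step 1: scale-invariant recurrences.} Since $\bX_t = \bY_t\bR_t^{-1}$, we have $\bH_t = (\bU_{-k}^\top\bY_t)(\bU_k^\top\bY_t)^{-1}$. Project the update $\bY_{t+1} = \bA\bX_t - \beta\bX_{t-1}\bR_t^{-1} + \bXi_t$ onto $\bU_k$ and $\bU_{-k}$, and write $\bB_t := \bU_k^\top\bX_t$, which is invertible by induction. This gives
\begin{align*}
\bU_k^\top\bY_{t+1} = \bLambda_k\bB_t - \beta\bB_{t-1}\bR_t^{-1} + \bU_k^\top\bXi_t, \qquad \bU_{-k}^\top\bY_{t+1} = \bLambda_{-k}\bH_t\bB_t - \beta\bH_{t-1}\bB_{t-1}\bR_t^{-1} + \bU_{-k}^\top\bXi_t .
\end{align*}
We then pick unnormalized $k\times k$ matrices $\bC_t$ by telescoping the $\bR$ factors, essentially $\bC_t = \bB_t\bR_t\cdots\bR_1$, so that the normalizations cancel. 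The $\bU_k$-block then satisfies $\bC_{t+1} = (\bLambda_k + \bE_t)\bC_t - \beta\bC_{t-1}$, with $\bE_t := (\bU_k^\top\bXi_t)\bB_t^{-1}$. The $\bU_{-k}$-block gives $\bG_t := \bH_t\bC_t$, which satisfies $\bG_{t+1} = \bLambda_{-k}\bG_t - \beta\bG_{t-1} + \bPsi_t\bC_t$, with $\bPsi_t := (\bU_{-k}^\top\bXi_t)\bB_t^{-1}$. Since $\Vert\bB_t^{-1}\Vert_2 = 1/\cos\theta_k$, we get $\Vert\bE_t\Vert_2\le c(\lambda_k-2\sqrt\beta)$ from \eqref{eq:noisecond2}. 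Also $\Vert\bPsi_t\Vert_2\le c(\lambda_k-2\sqrt\beta)\varepsilon/\cos\theta_k(\bU_k,\bX_t)\le c(\lambda_k-2\sqrt\beta)\varepsilon\sqrt{1+h_t^2}$, where $h_t := \Vert\bH_t\Vert_2$.

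\textbf{Step 2: solve the $\bG$ recurrence with Chebyshev polynomials.} The $\bG$ recurrence is linear with diagonal coefficient $\bLambda_{-k}$, whose entries lie in $[0,2\sqrt\beta]$. By variation of constants,
\begin{align*}
\bG_t = p_t(\bLambda_{-k})\bG_0 + \sum_{s=1}^{t-1} q_{t-s}(\bLambda_{-k})\bPsi_s\bC_s ,
\end{align*}
where $q_j$ is the second-kind analogue of $p_j$ (same recurrence, with $q_0=0$, $q_1=1$). It is bounded on $[-2\sqrt\beta,2\sqrt\beta]$ by $j\sqrt\beta^{\,j-1}$, and $p_t$ is bounded by $\sqrt\beta^{\,t}$ (\cref{prop:min_inf_norm_apdx}). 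Hence
\begin{align*}
h_t \le \sqrt\beta^{\,t}h_0\Vert\bC_t^{-1}\Vert_2 + \sum_s (t-s)\sqrt\beta^{\,t-s-1}\Vert\bPsi_s\Vert_2\Vert\bC_s\bC_t^{-1}\Vert_2 .
\end{align*}

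\textbf{Step 3, the main obstacle: lower-bound the growth of $\bC_t$.} We need $\Vert\bC_s\bC_t^{-1}\Vert_2\lesssim \rho^{-(t-s)}$ with $\rho$ close to $\lambda_k^+ = (\lambda_k+\sqrt{\lambda_k^2-4\beta})/2$, even though $\bC_t$ is non-commuting and perturbed by $\bE_t$. We would first rewrite the two-term recurrence as a first-order one, $\bK_t := \bC_t\bC_{t-1}^{-1}$, with $\bK_{t+1} = \bLambda_k + \bE_t - \beta\bK_t^{-1}$. We then show by induction that $\sigma_{\min}(\bK_t)\ge\rho$, using the fixed point $\rho = \lambda_k - c(\lambda_k-2\sqrt\beta) - \beta/\rho$. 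The map $x\mapsto \lambda - \beta/x$ preserves $[\rho,\infty)$ provided $\lambda_k - c(\lambda_k-2\sqrt\beta) > 2\sqrt\beta$. A bound on $\Vert(\cdot)^{-1}\Vert$ via Weyl (\cref{th:weyl}) handles the matrix version, since $\sigma_{\min}(\bLambda_k + \bE - \beta\bK^{-1})\ge \lambda_k - \Vert\bE\Vert - \beta/\sigma_{\min}(\bK)$. The first step $\bC_1$ (with the $1/2$ factor) is checked directly. Then $\Vert\bC_s\bC_t^{-1}\Vert_2\le\rho^{-(t-s)}$.

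Solving the quadratic gives $\sqrt\beta/\rho\le 1-\tfrac12\sqrt{(\lambda_k-2\sqrt\beta)/\lambda_k}=:r$ for $c=1/32$. Indeed, $\rho-\sqrt\beta\approx\sqrt{\sqrt\beta(\lambda_k-2\sqrt\beta)(1-c)}$, which dominates $\tfrac12\sqrt{\lambda_k(\lambda_k-2\sqrt\beta)}$ times $\sqrt\beta/\rho$. A polynomial factor $t\,r^{t}$ can be absorbed by slightly weakening the rate, using the constant margin.

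\textbf{Step 4: close the bootstrap.} Combining the bounds,
\begin{align*}
h_t\le r^t h_0 + \frac{c(\lambda_k-2\sqrt\beta)\varepsilon}{\sqrt\beta}\sum_j j\,r^{j}\max_s\sqrt{1+h_s^2} .
\end{align*}
The sum is $\sum_j j r^j\lesssim \lambda_k/(\lambda_k-2\sqrt\beta)$, so its prefactor is $O(c\,\varepsilon\,\lambda_k/\sqrt\beta)$. Since $\sqrt\beta\ge\lambda_{k+1}/2$ may be small, we would instead keep the factor $\rho^{-1}$ from the first step and obtain a bound of order $c\varepsilon\cdot O(1)$. We then argue by induction that $h_s\le 2\max(h_0,\varepsilon)$, so that $\sqrt{1+h_s^2}$ is controlled. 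Care is needed because $h_0$ may be large: the $\cos\theta$ factor in $\Vert\bPsi_s\Vert$ is compensated by $\bC_s$ having $\bB_s$ inside, so one should bound $\Vert\bPsi_s\bC_s\Vert_2\le\Vert\bU_{-k}^\top\bXi_s\Vert_2\,\Vert\bR_s\cdots\bR_1\Vert_2$ directly. This yields $h_t\le r^t h_0+\varepsilon/2$, which falls below $\varepsilon$ once $t\ge T$ as defined, and $\sin\theta_k\le\tan\theta_k$ concludes.
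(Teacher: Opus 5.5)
Your Steps 1--2 reproduce \cref{lemma:ht_3term,lemma:htct}. Your $\bC_t$ is the paper's $\bC_t$ multiplied on the right by $\bU_k^\top\bX_0$, so the ratios $\bC_s\bC_t^{-1}$ coincide. Your $\mathbf{K}_t=\bC_t\bC_{t-1}^{-1}$ equals $\bLambda_k\bG_{t-1}^{-1}$, where $\bG$ is the paper's Riccati sequence, not your $\bH_t\bC_t$. (Minor: the variation-of-constants sum must start at $s=0$, and its first term carries $\Vert\bC_0\bC_t^{-1}\Vert_2$.) The argument breaks in Steps 3 and 4.

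\textbf{Step 3.} The induction $\sigma_{\min}(\mathbf{K}_t)\geqslant\rho$ has no base case.
\begin{itemize}
\item Since $\mathbf{K}_1=\frac{1}{2}\bLambda_k+\bE_0$, already for $\bXi_t\equiv\mathbf{0}$ you get $\Vert\mathbf{K}_1^{-1}\Vert_2=2/\lambda_k>1/\rho$, because $\rho>\lambda_k/2$.
\item Set $a:=\lambda_k-c(\lambda_k-2\sqrt{\beta})$. The map $x\mapsto a-\beta/x$ is increasing with fixed point $\rho$, so every later lower bound also stays strictly below $\rho$.
\item This is attained: for $k=1$ and $\bU_k^\top\bXi_t=-c(\lambda_k-2\sqrt{\beta})\bU_k^\top\bX_t$, which satisfies \eqref{eq:noisecond2}, one has $\mathbf{K}_t<\rho$ for all $t$.
\end{itemize}
Hence $\Vert\bC_s\bC_t^{-1}\Vert_2\leqslant\rho^{-(t-s)}$ is false. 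What is needed is a quantitative bound on the accumulated transient deficit, which is the content of \cref{lemma:gt_inv_bound}. There, $\Vert\bG_t\Vert_2$ is dominated by the scalar Riccati sequence $m_{t+1}=1/((1-c\Delta)-\beta m_t/\lambda_k^2)$ with $m_0=(1/2-c\Delta)^{-1}$. This sequence is linearized by $y_t=\prod_{s<t}m_s^{-1}$, which solves $y_{t+1}=(1-c\Delta)y_t-(\beta/\lambda_k^2)y_{t-1}$. The ratio of the two weights in its closed form is bounded ($\kappa\leqslant 16/15$). This gives $\Vert\bC_{t-1-s}\bC_t^{-1}\Vert_2\leqslant c_1((1-c)\lambda_k^++c\sqrt{\beta})^{-(s+1)}$ with $c_1=31/15$. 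Some constant larger than $1$ is unavoidable for any base exceeding $\lambda_k/2$.

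\textbf{Step 4.} Replacing $\sqrt{1+h_s^2}$ by its maximum leaves an additive term of order $\varepsilon(1+h_0)$ that never decays. So once $h_0$ exceeds a moderate constant, you cannot conclude $h_t\leqslant\varepsilon$. Your remedy does not fix this.
\begin{itemize}
\item To combine $\bPsi_s\bC_s=(\bU_{-k}^\top\bXi_s)\bR_s\cdots\bR_1$ with $\bC_t^{-1}$, you must bound $\bR_s\cdots\bR_1\bC_t^{-1}=(\bU_k^\top\bX_s)^{-1}\bC_s\bC_t^{-1}$. This reinstates exactly $1/\cos\theta_k(\bU_k,\bX_s)$.
\item Bounding $\Vert\bR_s\cdots\bR_1\Vert_2$ and $\Vert\bC_t^{-1}\Vert_2$ separately instead destroys the cancellation inside $\bC_s\bC_t^{-1}$. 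It brings in growth at rate $\lambda_1$, the coarse step behind the $\varepsilon^{\mu}$ conditions of \citet{xu2023anpm}.
\end{itemize}
The $1/\cos$ factor is genuine. The one-step effect of $\bU_{-k}^\top\bXi_s$ on $\bH_{s+1}$ is $\bPsi_s\mathbf{K}_{s+1}^{-1}$, and $\Vert\bPsi_s\Vert_2$ can be as large as $\Vert\bU_{-k}^\top\bXi_s\Vert_2/\cos\theta_k(\bU_k,\bX_s)$.

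The missing idea is to keep $h_{t-1-s}$ inside the convolution, as in \cref{lemma:ht_bound},
\begin{align*}
  h_t\leqslant c_1\gamma^th_0+\eta\sum_{s=0}^{t-1}(s+1)\gamma^s(1+h_{t-1-s}),\qquad \eta=c_2\Delta\varepsilon,\quad 1-\gamma\geqslant(1-c)\sqrt{\Delta},
\end{align*}
and then to solve this linear inequality. The paper does so with generating functions. The feedback only moves the rate from $\gamma$ to $1/\rho_-\leqslant\gamma+\mathcal{O}(\sqrt{\eta})\leqslant 1-\sqrt{\Delta}/2$, and it creates the floor $\eta/((1-\gamma)^2-\eta)\leqslant\varepsilon/2$.

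A strong induction on $h_t\leqslant Ar^th_0+\varepsilon/2$ with a constant $A$ also works. The reason is that $\sum_s(s+1)(\gamma/r)^s\leqslant r^2/((1/2-c)^2\Delta)$ cancels the $\Delta$ in $\eta$. This uses up the $\Theta(\sqrt{\Delta})$ margin between $\gamma$ and $r$, so the kernel must keep $\gamma$. With $r$ in both the kernel and the ansatz, the convolution grows like $t^2r^t$.

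Finally, $A=1$ is impossible. With $d=2$, $k=1$, $\bA=\mathrm{diag}(\lambda_k,2\sqrt{\beta})$ and $\bXi_t\equiv\mathbf{0}$, one has $h_1=(1-\Delta)h_0>(1-\sqrt{\Delta}/2)h_0$ whenever $\Delta<1/4$. The same caveat applies to \cref{lemma:ht_exp_decay} as stated. There, the partial-fraction expansion of $Y(z)$ has an extra constant term $-(c_1-1)h_0$, so $\kappa_++\kappa_-=c_1h_0-\kappa_1$. This only replaces $2h_0$ by $2c_1h_0$ in $T$.
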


As explained in \cref{sec:napm}, the proof relies on studying the evolution of $\tan\theta_k(\bU_k,\bX_t)$. More specifically, we will show that $\tan\theta_k(\bU_k,\bX_t)$ is upper bounded by a constant term of order $\varepsilon$, which is related to the component $\bU_{-k}^\top \bXi_t$ of the noise, plus a term that decreases geometrically in $t$. To do so, we study the evolution of the matrix $\bH_t$ defined as 
\begin{align}\label{eq:ht_def_appdx}
    \bH_t := (\bU_{-k}^\top \bX_t) (\bU_k^\top \bX_t)^{-1} \in \R^{(d-k)\times k},
\end{align}
whose spectral norm is $h_t := \tan\theta_k(\bU_k, \bX_t)$. We also introduce the matrix sequence $\{\bG_t\}$ defined by
\begin{align}
  \forall t\geqslant 0, \quad & \bG_{t+1} = (\bI_k - \beta\bLambda^{-1}_k \bG_t \bLambda_k^{-1} + \bE_{t+1})^{-1},    \label{eq:gt_def_appdx} \\
  & \bG_0 = (\bI_k/2 + \bE_0)^{-1}, \nonumber
\end{align} 
where for all $t\geqslant 0$, $\bE_t$ is a scaled noise matrix defined by
\begin{align}
  \bE_t := \bLambda_k^{-1} (\bU_k^\top \bXi_t) (\bU_k^\top \bX_t)^{-1}. \label{eq:et_def_appdx}
\end{align}
In particular, because of \eqref{eq:noisecond2}, $\Vert\bE_t\Vert_2\leqslant c\Delta$, where $\Delta$ is the gap defined as 
\begin{align}
  \Delta := \frac{\lambda_k - 2\sqrt{\beta}}{\lambda_k} \in (0,1).
\end{align}

Before getting to the proof of \cref{th:napm}, we draw attention to a point that was not addressed in \cref{sec:napm}, regarding the well-definedness of the quantities we will manipulate. We will first prove that the various sequences we introduced up until now are well-defined. More specifically, we need to show:
\begin{itemize}
  \item that the ANPM iterates given by \eqref{eq:anpm_appdx} are well defined, i.e. that $\bY_t$ is of full column rank for all $t\geqslant 1$, which justifies that $\bR_t$ is invertible for all $t\geqslant 1$ and that the QR decomposition is unique,
  \item that the matrices $\bH_t$ given by \eqref{eq:ht_def_appdx} and $\bE_t$ given by \eqref{eq:et_def_appdx} are well defined, i.e. that $\bU_k^\top \bX_t$ is invertible for all $t\geqslant 0$,
  \item that the sequence $\{\bG_t\}$ given by \eqref{eq:gt_def_appdx} is well defined, i.e. that the matrices $\bI_k/2 + \bE_0$ and $\bI_k - \beta\bLambda^{-1}_k \bG_t \bLambda_k^{-1} + \bE_t$ are invertible for all $t\geqslant 0$.
\end{itemize}
We show in the next proposition that all of these properties are verified under the noise condition \eqref{eq:noisecond2} and the assumption that $\cos\theta_k(\bU_k,\bX_0) > 0$. To do so, we leverage a relationship between $\bY_{t+1}$, $\bG_t$ and $\bU_k^\top\bX_t$ and a uniform upper bound on the spectral norm of $\bG_t$. This lemma also provides an expression of $\bG_t$ in terms of $\bX_t$, $\bX_{t-1}$ and $\bR_t$, which will be useful later for the proof of \cref{th:napm}.

\begin{proposition}\label{lemma:well_defined}
  Assume that condition \eqref{eq:noisecond2} holds for all $t\geqslant 0$ and that $\cos\theta_k(\bU_k,\bX_0) > 0$. Then, for all $t\geqslant 0$, $\bU_k^\top \bX_t$ is invertible (which implies that $\bE_t$ is well defined), $\bG_t$ is well defined, and $\bU_k^\top\bY_{t+1}$ and $\bY_{t+1}$ are of rank $k$. Furthermore,
  \begin{align}
    & \Vert \bG_t \Vert_2 \leqslant \frac{1}{1/2 - c\Delta}, \label{eq:gt_bound}
  \end{align}
  $\bG_t$ has the following closed-form expression for all $t\geqslant 0$:
  \begin{align}\label{eq:gt_closed_form}
    \bG_t = \left( \bI_k - \beta\bLambda_k^{-1}(\bU_k^\top \bX_{t-1}) (\bU_k^\top \bX_{t}\bR_t)^{-1} + \bE_t\right)^{-1},
  \end{align}  
  where $\bX_{-1} := \frac{1}{2\beta}\bA\bX_0$, and the following relationship holds for all $t\geqslant 0$:
  \begin{align}
    \bU_k^\top\bY_{t+1} = \bLambda_k \bG_t^{-1} (\bU_k^\top \bX_t). \label{eq:ukty_relation}
  \end{align}
\end{proposition}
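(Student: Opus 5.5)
We proceed by strong induction on $t$, proving simultaneously at step $t$ four facts. (a) $\bU_k^\top\bX_t$ is invertible. (b) $\bG_t$ is well defined and satisfies \eqref{eq:gt_bound}. (c) The closed form \eqref{eq:gt_closed_form} holds. (d) The relation \eqref{eq:ukty_relation} holds, so that $\bU_k^\top\bY_{t+1}$, and hence $\bY_{t+1}$, has rank $k$. The last point then makes $\bR_{t+1}$ invertible, the QR factor unique, and $\bU_k^\top\bX_{t+1} = (\bU_k^\top\bY_{t+1})\bR_{t+1}^{-1}$ invertible, which feeds the next step.

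\textbf{Base case $t=0$.} Here $\bU_k^\top\bX_0$ is invertible because $\cos\theta_k(\bU_k,\bX_0)>0$ (\cref{prop:tantheta}). Condition \eqref{eq:noisecond2} gives $\Vert\bE_0\Vert_2 \leqslant \Vert\bLambda_k^{-1}\Vert_2\,\Vert\bU_k^\top\bXi_0\Vert_2/\sigma_{\min}(\bU_k^\top\bX_0) \leqslant c(\lambda_k-2\sqrt\beta)/\lambda_k = c\Delta$. Since $c\Delta<1/2$, the matrix $\bI_k/2+\bE_0$ has smallest singular value at least $1/2-c\Delta>0$ (\cref{th:weyl}), which yields both invertibility and \eqref{eq:gt_bound}.

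For the closed form at $t=0$ we adopt the convention $\bR_0 := \bI_k$. With $\bX_{-1} = \frac{1}{2\beta}\bA\bX_0$ we get $\beta\bLambda_k^{-1}\bU_k^\top\bX_{-1}(\bU_k^\top\bX_0)^{-1} = \frac12\bI_k$, which matches the definition of $\bG_0$. Relation \eqref{eq:ukty_relation} is then a direct computation:
\begin{align*}
\bU_k^\top\bY_1 = \tfrac12\bLambda_k\bU_k^\top\bX_0 + \bU_k^\top\bXi_0 = \bLambda_k(\bI_k/2+\bE_0)\bU_k^\top\bX_0 .
\end{align*}

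\textbf{Inductive step, algebra.} Project the update onto $\bU_k$:
\begin{align*}
\bU_k^\top\bY_{t+1} = \bLambda_k\bU_k^\top\bX_t - \beta\,\bU_k^\top\bX_{t-1}\bR_t^{-1} + \bU_k^\top\bXi_t .
\end{align*}
Factoring $\bLambda_k(\cdot)\bU_k^\top\bX_t$ on the left and right (legitimate since $\bU_k^\top\bX_t$ is invertible by the previous step), we obtain $\bU_k^\top\bY_{t+1} = \bLambda_k\bM_t\bU_k^\top\bX_t$, where
\begin{align*}
\bM_t := \bI_k - \beta\bLambda_k^{-1}(\bU_k^\top\bX_{t-1})(\bU_k^\top\bX_t\bR_t)^{-1} + \bE_t .
\end{align*}

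Next we show that $\bM_t$ coincides with the recursive expression in \eqref{eq:gt_def_appdx}. By the induction hypothesis \eqref{eq:ukty_relation} at $t-1$, together with $\bU_k^\top\bY_t = \bU_k^\top\bX_t\bR_t$, we have $\bU_k^\top\bX_t\bR_t = \bLambda_k\bG_{t-1}^{-1}\bU_k^\top\bX_{t-1}$. Hence $(\bU_k^\top\bX_{t-1})(\bU_k^\top\bX_t\bR_t)^{-1} = \bG_{t-1}\bLambda_k^{-1}$, and therefore
\begin{align*}
\bM_t = \bI_k - \beta\bLambda_k^{-1}\bG_{t-1}\bLambda_k^{-1} + \bE_t .
\end{align*}
So once $\bM_t$ is invertible, $\bG_t = \bM_t^{-1}$ is well defined, \eqref{eq:gt_closed_form} holds, and \eqref{eq:ukty_relation} follows.

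\textbf{Inductive step, the norm bound (main obstacle).} The key point is to keep the bound on $\Vert\bG_t\Vert_2$ uniform in $t$ rather than letting it deteriorate. Write $g := 1/(1/2-c\Delta)$. As in the base case, $\Vert\bE_t\Vert_2\leqslant c\Delta$ by \eqref{eq:noisecond2} and the invertibility of $\bU_k^\top\bX_t$. Using $\Vert\bLambda_k^{-1}\Vert_2 = 1/\lambda_k$ and $4\beta = \lambda_k^2(1-\Delta)^2$, the induction hypothesis $\Vert\bG_{t-1}\Vert_2\leqslant g$ gives
\begin{align*}
\sigma_{\min}(\bM_t) \geqslant 1 - \frac{(1-\Delta)^2}{4}\,g - c\Delta .
\end{align*}
It therefore suffices to check the scalar inequality
\begin{align*}
1 - \frac{(1-\Delta)^2}{4(1/2-c\Delta)} - c\Delta \geqslant \frac12 - c\Delta ,
\end{align*}
which is equivalent to $(1-\Delta)^2\leqslant 2(1-2c\Delta)$. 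This holds for all $\Delta\in(0,1)$ and $c=1/32$, because the left side is at most $1$ while the right side is at least $2-4c = 15/8$. With this margin, $\sigma_{\min}(\bM_t)\geqslant 1/2-c\Delta>0$, which gives invertibility and $\Vert\bG_t\Vert_2\leqslant g$, closing the induction.

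The only delicate issue I expect is the index bookkeeping at $t=1$: the $t-1$ step must use the base case with $\bR_0=\bI_k$ and $\bX_{-1}$, which is consistent with the derivation above.
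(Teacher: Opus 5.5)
Your induction is essentially the paper's. There are two cosmetic differences. You identify the closed-form matrix $\mathbf{M}_t$ with the recursive expression by invoking \eqref{eq:ukty_relation} at step $t-1$, whereas the paper substitutes the closed form of $\bG_t$ into the recursion and simplifies; these are equivalent manipulations. You also bound $\Vert\bG_t\Vert_2$ through $\sigma_{\min}$ and Weyl's inequality rather than a Neumann-series estimate. Both variations are fine.

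There is, however, an algebra error in the step you flag as the main obstacle. Multiplying $\tfrac12 \geqslant \frac{(1-\Delta)^2}{4(1/2-c\Delta)}$ by $4(1/2-c\Delta)$ gives $(1-\Delta)^2 \leqslant 2(1/2-c\Delta) = 1-2c\Delta$, not $(1-\Delta)^2 \leqslant 2(1-2c\Delta)$.

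Your justification (left side at most $1$, right side at least $15/8$) does not prove the correct inequality. Its right-hand side $1-2c\Delta$ lies in $(1-2c,1)$, strictly below $1$, and the two sides coincide in the limit $\Delta\to 0$. So no comparison of crude constant bounds can work. This is not cosmetic, because in your write-up both the invertibility of $\mathbf{M}_t$ and the uniform bound $\Vert\bG_t\Vert_2\leqslant 1/(1/2-c\Delta)$ rest on this inequality.

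The repair is one line. For $\Delta\in(0,1)$ and $2c\leqslant 1$, you have $(1-\Delta)^2\leqslant 1-\Delta\leqslant 1-2c\Delta$. Equivalently, $(1-\Delta)^2-(1-2c\Delta) = -\Delta(2-2c-\Delta)<0$. With this fix the proof is complete.

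Your worry about the index bookkeeping at $t=1$ is unnecessary. That step only uses \eqref{eq:ukty_relation} at $t=0$, which you verified directly. The convention $\bR_0=\bI_k$ is needed only to state \eqref{eq:gt_closed_form} at $t=0$.
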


\begin{proof}
  We will prove the result by induction. 
  
  \textbf{Base case:} For $t=0$, by assumption $\cos\theta_k(\bU_k,\bX_0) > 0$, so that $\bU_k^\top \bX_0$ is invertible. Then $\bE_0$ is well defined, and $\Vert\bE_0\Vert_2 \leqslant c\Delta$. Notice that
  \begin{align*}
    \frac{1}{2}\bI_k + \bE_0 = \bI_k - \left(\frac{1}{2}\bI_k - \bE_0\right),
  \end{align*}
  and that $\Vert \bI_k/2 - \bE_0 \Vert_2 \leqslant 1/2 + c\Delta \leqslant 1/2 + c < 1$. As such, $\bI_k/2+ \bE_0$ is non-singular and $\bG_0$ is well defined. Furthermore, we have that
  \begin{align*}
    \Vert \bG_0 \Vert_2 & \leqslant \frac{1}{1 - \Vert \bI_k/2 - \bE_0 \Vert_2} \leqslant \frac{1}{1 - (1/2 + c\Delta)} = \frac{1}{1/2 - c\Delta}.
  \end{align*}
  We show the closed-form expression of $\bG_0$ by simply plugging in the definition of $\bX_{-1}$ into the right-hand side of \eqref{eq:gt_closed_form}:
  \begin{align*}
    \bI_k - \beta\bLambda_k^{-1}(\bU_k^\top \bX_{-1}) (\bU_k^\top \bX_{0}\bR_0)^{-1} + \bE_0 & = \bI_k - \beta\bLambda_k^{-1}\left(\frac{1}{2\beta}\bLambda_k (\bU_k^\top \bX_0)\right)(\bU_k^\top \bX_0)^{-1} + \bE_0 \\
    & = \bI_k - \frac{1}{2}\bI_k + \bE_0 = \frac{1}{2}\bI_k + \bE_0,
  \end{align*}
  which gives
  \begin{align*}
    \bG_0 = \left( \frac{1}{2}\bI_k + \bE_0\right)^{-1} = \left( \bI_k - \beta\bLambda_k^{-1}(\bU_k^\top \bX_{-1}) (\bU_k^\top \bX_{0}\bR_0)^{-1} + \bE_0\right)^{-1}.
  \end{align*}
  We now need to show that $\bU_k^\top \bY_1$ is of rank $k$, which immediately implies that $\bY_1$ is of rank $k$. To do so, we will prove \eqref{eq:ukty_relation}: we have from \eqref{eq:qrx1r1} that
  \begin{align*}
    \bU_k^\top\bY_1 & = \frac{1}{2}\bU_k^\top\bA \bX_0 + \bU_k^\top\bXi_0 = \frac{1}{2}\bLambda_k(\bU_k^\top\bX_0) + \bU_k^\top\bXi_0 = \bLambda_k\left(\frac{\bI_k}{2} + \bE_0\right)(\bU_k^\top\bX_0) = \bLambda_k \bG_0^{-1} (\bU_k^\top \bX_0).
  \end{align*}
  Since $\bLambda_k$, $\bG_0^{-1}$ and $(\bU_k^\top\bX_0)$ are all non-singular, $\bU_k^\top\bY_1$ is non-singular, and thus $\bY_1$ is of rank $k$. From this, we conclude that $\bX_1, \bR_1$ are well defined. This concludes the base case.

  \textbf{Induction:} Now let $t\geqslant 0$ and assume that \cref{lemma:well_defined} is true for all steps in $\{0,\dots,t\}$. Then, we have that
  \begin{align*}
    \bU_k^\top\bX_{t+1} = \bU_k^\top\bY_{t+1}\bR_{t+1}^{-1}.
  \end{align*}
  The right hand side is well-defined and invertible because of the induction hypothesis which is verified for step $t$. Thus $\bU_k^\top\bX_{t+1}$ is invertible and $\bE_{t+1}$ is well defined. We now prove that $\bG_{t+1}$ is well defined. By the induction hypothesis, we have that
  \begin{align*}
    \Vert \beta\bLambda^{-1}_k \bG_t \bLambda_k^{-1} - \bE_{t+1} \Vert_2 & \leqslant \beta \Vert \bLambda_k^{-1} \Vert_2^2 \Vert \bG_t \Vert_2 + \Vert \bE_{t+1} \Vert_2 \leqslant \frac{\beta}{\lambda_k^2} \cdot \frac{1}{1/2 - c\Delta} + c\Delta \\
    & \leqslant \frac{1}{2 - 4c\Delta} + c\Delta \leqslant \frac{1}{2 - 4c} + c < 1,
  \end{align*}
  where we used the fact that $\beta \leqslant \lambda_k^2/4$ and $\Delta \leqslant 1$. As such, $\bI_k - \beta\bLambda^{-1}_k \bG_t \bLambda_k^{-1} + \bE_{t+1}$ is non-singular and $\bG_{t+1}$ is well defined. Furthermore, we have that
  \begin{align*}
    \Vert \bG_{t+1} \Vert_2 & \leqslant \frac{1}{1 - \Vert \beta\bLambda^{-1}_k \bG_t \bLambda_k^{-1} - \bE_{t+1} \Vert_2} \leqslant \frac{1}{1 - \frac{\beta}{\lambda_k^2}\Vert\bG_t\Vert_2 - c\Delta} = \frac{1}{1 - \frac{1}{4}\frac{(1-\Delta)^2}{1/2 - c\Delta} - c\Delta} \stackrel{\Delta\in[0,1]}{\leqslant} \frac{1}{1/2 - c\Delta}.
  \end{align*}
  We now prove the closed-form expression of $\bG_{t+1}$. By the induction hypothesis, we have that
  \begin{align*}
      \bG_{t+1} & = (\bI_k - \beta\bLambda_k^{-1}\bG_t\bLambda_k^{-1} + \bE_{t+1})^{-1} \\
      & = \left(\bI_k - \beta\bLambda_k^{-1}\left( \bI_k - \beta\bLambda_k^{-1}(\bU_k^\top\bX_{t-1})(\bR_t)^{-1} (\bU_k^\top\bX_t)^{-1} + \bE_t \right)^{-1}\bLambda_k^{-1} + \bE_{t+1}\right)^{-1} \\
      & = \left(\bI_k - \beta\bLambda_k^{-1}(\bU_k^\top\bX_t)\left( \bLambda_k\bU_k^\top\bX_t - \beta(\bU_k^\top\bX_{t-1})(\bR_t)^{-1} + \bU_k^\top\bXi_t \right)^{-1} + \bE_{t+1}\right)^{-1} \\
      & = \left(\bI_k - \beta\bLambda_k^{-1}(\bU_k^\top\bX_t)\left(\bU_k^\top\left( \bA\bX_t - \beta\bX_{t-1}(\bR_t)^{-1} + \bXi_t \right)\right)^{-1} + \bE_{t+1}\right)^{-1} \\
      & = \left( \bI_k - \beta\bLambda_k^{-1}(\bU_k^\top\bX_t)(\bU_k^\top\bX_{t+1}\bR_{t+1})^{-1} + \bE_t \right)^{-1},   
  \end{align*}
  where the last equality is because $\bX_{t+1} \bR_{t+1} = \bA\bX_{t} - \beta \bX_{t-1}\bR_t^{-1} + \bXi_{t}$ since $\bX_{t+1}, \bR_{t+1} = \mathrm{QR}(\bA\bX^{(t)} - \beta \bX_{t-1}\bR_t^{-1} + \bXi_t)$. Note that this is also verified for $t=0$ using the definition of $\bX_{-1}$. 
  
  Finally, we show that $\bU_k^\top\bY_{t+2}$ is of rank $k$. We have from \eqref{eq:anpm_appdx} that
  \begin{align}
    \bU_k^\top\bY_{t+2} & = \bU_k^\top\bA \bX_{t+1} - \beta \bU_k^\top \bX_{t} \bR_{t+1}^{-1} + \bU_k^\top\bXi_{t+1} \nonumber \\
    & = \bLambda_k \left( \bI_k - \beta\bLambda_k^{-1}(\bU_k^\top \bX_{t}) (\bU_k^\top \bX_{t+1}\bR_{t+1})^{-1} + \bE_{t+1}\right) (\bU_k^\top \bX_{t+1}) \nonumber \\
    & = \bLambda_k \bG_{t+1}^{-1} (\bU_k^\top \bX_{t+1}),
  \end{align}
  where the last equality is due to \eqref{eq:gt_closed_form}. Both $\bG_{t+1}$ and $\bU_k^\top \bX_{t+1}$ are of rank $k$, which proves that $\bU_k^\top \bY_{t+2}$ is of rank $k$, and thus that $\bY_{t+2}$ is of rank $k$. This concludes the induction and the proof.
\end{proof}

We now have the necessary tools to start the proof of \cref{th:napm}. The proof is structured around multiple technical lemmas. The roadmap is the following: in \cref{lemma:ht_3term}, we derive a three-term recurrence relation on the matrices $\bH_t$, in which the matrix sequence $\{\bG_t\}$ appears. This allows us to rewrite $\bH_t$ using scaled Chebyshev polynomials in \cref{lemma:htct}. Then, by upper bounding the spectral norm of the different factors in this expression, which is done in \cref{lemma:ctpt_bound}, we show in \cref{lemma:ht_bound} that $h_t$ is upper bounded by a geometrically decaying term, a constant term of order $\varepsilon$, and a linear combination of the previous $h_i$'s. We derive from this the geometric decay of $h_t$ up to a constant term of order $\varepsilon$ in \cref{lemma:ht_exp_decay}, which allows us to conclude the proof of \cref{th:napm}.

We first prove the three-term recurrence relation on $\bH_t$.

\begin{lemma}\label{lemma:ht_3term}
  For all $t\geqslant 1$,
  \begin{align}\label{eq:3termrec}
    &\bH_{t+1}\bC_{t+1} = \bLambda_{-k}  \bH_t \bC_t  - \beta \bH_{t-1}\bC_{t-1} + \bPsi_t \bC_{t},
  \end{align}
  where for all $t\geqslant 0$,
  \begin{align}
    & \bC_t := \prod_{s=0}^{t-1} \bLambda_k\bG_{t-1-s}^{-1}, \\
    & \bPsi_t :=  (\bU_{-k}^\top \bXi_t) (\bU_k^\top \bX_t)^{-1}. \label{eq:psi_def}
  \end{align}
  Furthermore, we have that
  \begin{align}\label{eq:3termrect1}
    \bH_1\bC_1 = \frac{1}{2}\bLambda_{-k}\bH_0 \bC_0+ \bPsi_0 \bC_0.
  \end{align}
\end{lemma}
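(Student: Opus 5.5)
The plan is to project the ANPM recursion \eqref{eq:anpm_appdx} onto $\mathrm{range}(\bU_{-k})$ and then right-multiply by suitable inverses so that every term becomes $\bH_s$ times an invertible ``$\bU_k$-factor''. The key identity needed is a closed form for $(\bU_k^\top\bX_t)$ in terms of $\bC_t$.

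\textbf{Step 1: express $\bU_k^\top\bX_t$ through $\bC_t$.} By \eqref{eq:ukty_relation} of \cref{lemma:well_defined}, $\bU_k^\top\bY_{s+1} = \bLambda_k\bG_s^{-1}(\bU_k^\top\bX_s)$. Since $\bX_{s+1}\bR_{s+1}=\bY_{s+1}$, this gives
\[
\bU_k^\top\bX_{s+1}\bR_{s+1} = \bLambda_k\bG_s^{-1}(\bU_k^\top\bX_s).
\]
Iterating from $s=0$ to $t-1$ yields
\[
\bU_k^\top\bX_t\bR_t\cdots\bR_1 = \bLambda_k\bG_{t-1}^{-1}\cdots\bLambda_k\bG_0^{-1}(\bU_k^\top\bX_0)=\bC_t(\bU_k^\top\bX_0).
\]
Write $\bS_t:=\bR_t\cdots\bR_1$, with $\bS_0=\bI_k$, so that $\bC_t = (\bU_k^\top\bX_t)\bS_t(\bU_k^\top\bX_0)^{-1}$. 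All matrices involved are invertible by \cref{lemma:well_defined}. Consequently
\[
\bH_t\bC_t = (\bU_{-k}^\top\bX_t)\bS_t(\bU_k^\top\bX_0)^{-1},
\]
and similarly $\bPsi_t\bC_t=(\bU_{-k}^\top\bXi_t)\bS_t(\bU_k^\top\bX_0)^{-1}$. This unnormalized form is the heart of the argument.

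\textbf{Step 2: project the recursion.} Multiply the update $\bX_{t+1}\bR_{t+1} = \bA\bX_t-\beta\bX_{t-1}\bR_t^{-1}+\bXi_t$ on the left by $\bU_{-k}^\top$. Using $\bU_{-k}^\top\bA=\bLambda_{-k}\bU_{-k}^\top$, this gives
\[
(\bU_{-k}^\top\bX_{t+1})\bR_{t+1} = \bLambda_{-k}(\bU_{-k}^\top\bX_t) - \beta(\bU_{-k}^\top\bX_{t-1})\bR_t^{-1} + \bU_{-k}^\top\bXi_t.
\]
Right-multiply by $\bS_t(\bU_k^\top\bX_0)^{-1}$, noting that $\bR_{t+1}\bS_t=\bS_{t+1}$ and $\bR_t^{-1}\bS_t=\bS_{t-1}$. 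By Step 1, each term becomes respectively $\bH_{t+1}\bC_{t+1}$, $\bLambda_{-k}\bH_t\bC_t$, $\beta\bH_{t-1}\bC_{t-1}$ and $\bPsi_t\bC_t$, which is exactly \eqref{eq:3termrec}.

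\textbf{Step 3: the case $t=0$.} Project $\bX_1\bR_1=\tfrac12\bA\bX_0+\bXi_0$ onto $\bU_{-k}$ and right-multiply by $(\bU_k^\top\bX_0)^{-1}$. With $\bS_1=\bR_1$ and $\bC_0=\bI_k$, this gives \eqref{eq:3termrect1}.

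\textbf{Main obstacle.} The only delicate point is bookkeeping in Step 1. The product defining $\bC_t$ must be taken in the correct order, with the factor $\bLambda_k\bG_{t-1}^{-1}$ leftmost, and the $\bR$-factors must telescope correctly on the right. Once the identity $\bC_t(\bU_k^\top\bX_0)=\bU_k^\top\bX_t\bS_t$ is established by induction, the rest of the argument is mechanical.
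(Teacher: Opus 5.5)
Your proof is correct, but it takes a different route from the paper's.

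The paper works directly with $\bH_{t+1} = (\bU_{-k}^\top\bY_{t+1})(\bU_k^\top\bY_{t+1})^{-1}$. It applies \eqref{eq:ukty_relation} at step $t$ to replace $(\bU_k^\top\bY_{t+1})^{-1}$ by $(\bU_k^\top\bX_t)^{-1}\bG_t\bLambda_k^{-1}$. It applies it again at step $t-1$ to rewrite $(\bU_k^\top\bX_t\bR_t)^{-1} = (\bU_k^\top\bX_{t-1})^{-1}\bG_{t-1}\bLambda_k^{-1}$. This gives the unscaled recursion $\bH_{t+1} = \bLambda_{-k}\bH_t\bG_t\bLambda_k^{-1} - \beta\bH_{t-1}\bG_{t-1}\bLambda_k^{-1}\bG_t\bLambda_k^{-1} + \bPsi_t\bG_t\bLambda_k^{-1}$. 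Only then does the paper multiply on the right by $\bC_{t+1} = \bLambda_k\bG_t^{-1}\bLambda_k\bG_{t-1}^{-1}\bC_{t-1}$.

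You instead telescope \eqref{eq:ukty_relation} over all steps. This identifies $\bC_t = (\bU_k^\top\bZ_t)(\bU_k^\top\bX_0)^{-1}$, where $\bZ_t := \bX_t\bS_t$ is the unnormalized iterate. Hence $\bH_t\bC_t = (\bU_{-k}^\top\bZ_t)(\bU_k^\top\bX_0)^{-1}$, and the lemma reduces to projecting the linear recursion $\bZ_{t+1} = \bA\bZ_t - \beta\bZ_{t-1} + \bXi_t\bS_t$ onto $\mathrm{range}(\bU_{-k})$. Your ordering of the product defining $\bC_t$, the identities $\bR_{t+1}\bS_t=\bS_{t+1}$ and $\bR_t^{-1}\bS_t=\bS_{t-1}$, and the base case with $\bC_0=\bI_k$ all check out.

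The two routes buy different things:
\begin{itemize}
\item \textbf{Your route} explains why a linear three-term recurrence appears at all. It also says what $\bC_t$ is: the $\bU_k$-block of the unnormalized iterates. This ties the noisy analysis to the $\bZ_t = p_t(\bA)\bX_0$ picture the paper uses in the noiseless lower bound, and it avoids the nested-inverse manipulations.
\item \textbf{The paper's route} is purely local, using one-step relations at $t$ and $t-1$ with no telescoping and no auxiliary $\bS_t$. As a byproduct, it produces the recursion for $\bH_{t+1}$ itself.
\end{itemize}
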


\begin{proof}
  Let $t\geqslant 1$. From the definition of $\bH_{t+1}$ in \eqref{eq:ht_def_appdx} and the ANPM update in \eqref{eq:anpm_appdx}, we have that
  \begin{align}
    \bH_{t+1} &= \left(\bU_{-k}^\top\bX_{t+1}\bR_{t+1}\right) \left(\bU_{k}^\top\bX_{t+1}\bR_{t+1}\right)^{-1}  \nonumber\\
    &= \left(\bU_{-k}^\top\left( \bA\bX_t - \beta\bX_{t-1}(\bR_t)^{-1} + \bXi_t\right)\right) \left(\bU_{k}^\top\left( \bA\bX_t - \beta\bX_{t-1}(\bR_t)^{-1} + \bXi_t\right)\right)^{-1} \nonumber\\
    &= \left(\bLambda_{-k}\bU_{-k}^\top\bX_t - \beta\bU_{-k}^\top\bX_{t-1}(\bR_t)^{-1} + \bU_{-k}^\top\bXi_t\right)\left(\bLambda_{k}\bU_{k}^\top\bX_t - \beta\bU_{k}^\top\bX_{t-1}(\bR_t)^{-1} + \bU_{k}^\top\bXi_t\right)^{-1}\nonumber \\
    &= \left(\bLambda_{-k}\bU_{-k}^\top\bX_t - \beta\bU_{-k}^\top\bX_{t-1}(\bR_t)^{-1} + \bU_{-k}^\top\bXi_t\right) \left(\bU_k^\top\bX_t\right)^{-1}\bG_t\bLambda_k^{-1}\nonumber \\
    &= \left(\bLambda_{-k}\bU_{-k}^\top\bX_t - \beta\bU_{-k}^\top\bX_{t-1}(\bR_t)^{-1} + \bU_{-k}^\top\bXi_t\right) \left(\bU_k^\top\bX_t\right)^{-1}\bG_t\bLambda_k^{-1}\nonumber \\
    & = \bLambda_{-k} \bH_t \bG_t \bLambda_k^{-1} - \beta  (\bU_{-k}^\top\bX_{t-1})(\bU_k^\top\bX_t\bR_t)^{-1} \bG_t \bLambda_k^{-1} + \bPsi_t \bG_t \bLambda_k^{-1}\label{eq:htplus1_lastline}.
  \end{align}
  Then, from the expression of $\bU_k^\top\bY_t$ in \eqref{eq:ukty_relation}, we have that for all $t\geqslant 1$,
  \begin{align*}
    \bU_k^\top\bX_t\bR_t & = \bU_k^\top \bY_t = \bLambda_k \bG_{t-1}^{-1} (\bU_k^\top \bX_{t-1}),
  \end{align*}
  so that
  \begin{align*}
    (\bU_k^\top\bX_t\bR_t)^{-1} & = (\bU_k^\top \bX_{t-1})^{-1} \bG_{t-1} \bLambda_k^{-1}.
  \end{align*}
  Plugging this into the expression of $\bH_{t+1}$ in \eqref{eq:htplus1_lastline}, we obtain
  \begin{align*}
    \bH_{t+1} & = \bLambda_{-k} \bH_t \bG_t \bLambda_k^{-1} - \beta \bH_{t-1} \bG_{t-1} \bLambda_k^{-1} \bG_t \bLambda_k^{-1} + \bPsi_t \bG_t \bLambda_k^{-1}.
  \end{align*}
  Multiplying both sides by $\bC_{t+1} = \bLambda_k \bG_t^{-1} \bC_t = \bLambda_k\bG_t^{-1}\bLambda_k^{-1}\bG_{t-1}^{-1}\bC_{t-1}$ on the right, we obtain the desired recurrence:
  \begin{align*}
    \bH_{t+1}\bC_{t+1} & = \bLambda_{-k} \bH_t \bC_t - \beta \bH_{t-1} \bC_{t-1} + \bPsi_t \bC_t.
  \end{align*}
  We now prove the equality $\bH_1\bC_1 = \frac{1}{2}\bLambda_{-k}\bH_0 \bC_0+ \bPsi_0 \bC_0$. From the initialization \eqref{eq:qrx1r1}, we have that
  \begin{align*}
    \bH_1 & = \left(\bU_{-k}^\top\bY_1\right) \left(\bU_k^\top \bY_1\right)^{-1} \\
    & = \left(\frac{1}{2}\bLambda_{-k}\bU_{-k}^\top\bX_0 + \bU_{-k}^\top\bXi_0\right) \left(\frac{1}{2}\bLambda_k \bU_k^\top\bX_0 + \bU_k^\top\bXi_0\right)^{-1} \\
    & = \left(\frac{1}{2}\bLambda_{-k}\bU_{-k}^\top\bX_0 + \bU_{-k}^\top\bXi_0\right) \left(\bU_k^\top\bX_0\right)^{-1} \bG_0 \bLambda_k^{-1} \\
    & = \frac{1}{2}\bLambda_{-k} \bH_0 \bG_0 \bLambda_k^{-1} + \bPsi_0 \bG_0 \bLambda_k^{-1},
  \end{align*}
  which gives the desired result after multiplying both sides by $\bC_1 = \bLambda_k \bG_0^{-1} \bC_0$ on the right.
\end{proof}

Define the following sequences of scaled Chebyshev polynomials $\{p_t\}$ and $\{q_t\}$ as
\begin{align}
  & p_0(x) = 1, \quad p_1(x) = \frac{x}{2}, \quad p_{t+1}(x) = x p_t(x) - \beta p_{t-1}(x), \label{eq:pt_def_appdx} \\
  & q_0(x) = 1, \quad q_1(x) = x, \quad q_{t+1}(x) = x q_t(x) - \beta q_{t-1}(x). \label{eq:qt_def_appdx}
\end{align}

We show using the previously obtained three-term recurrence relation that $\bH_t$ can be simply written in terms of the polynomials $\{p_t\}$ and $\{q_t\}$.

\begin{lemma}\label{lemma:htct}
  For all $t\geqslant 0$,
  \begin{align}\label{eq:ht_pq}
    \bH_t  = p_t(\bLambda_{-k}) \bH_0 \bC_t^{-1}+ \sum_{s=0}^{t-1} q_{s}(\bLambda_{-k}) \bPsi_{t-1-s} \bC_{t-1-s}\bC_t^{-1}.
  \end{align}
\end{lemma}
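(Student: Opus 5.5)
We work with the products $\bK_t := \bH_t\bC_t$. Lemma~\ref{lemma:ht_3term} shows that they satisfy a linear, inhomogeneous three-term recurrence with constant coefficients, driven by $\bF_t := \bPsi_t\bC_t$:
\begin{align*}
  \bK_{t+1} = \bLambda_{-k}\bK_t - \beta\bK_{t-1} + \bF_t \quad (t\geqslant 1), \qquad \bK_1 = \tfrac{1}{2}\bLambda_{-k}\bK_0 + \bF_0 .
\end{align*}
Since $\bC_t$ is a product of invertible matrices (Proposition~\ref{lemma:well_defined}), the statement is equivalent to
\begin{align*}
  \bK_t = p_t(\bLambda_{-k})\bK_0 + \sum_{s=0}^{t-1} q_s(\bLambda_{-k})\bF_{t-1-s},
\end{align*}
because $\bC_0 = \bI_k$ (empty product) gives $\bK_0 = \bH_0$. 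We then right-multiply by $\bC_t^{-1}$ to recover \eqref{eq:ht_pq}.

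We prove this identity by strong induction on $t$.

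\textbf{Base cases.} For $t=0$ the sum is empty and $p_0=1$. For $t=1$ we have $p_1(x)=x/2$ and $q_0=1$, so the identity is exactly \eqref{eq:3termrect1}.

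\textbf{Inductive step.} Assume the identity for $t$ and $t-1$, with $t\geqslant 1$. Substitute both into the recurrence. For the homogeneous part, $\bLambda_{-k}p_t(\bLambda_{-k}) - \beta p_{t-1}(\bLambda_{-k}) = p_{t+1}(\bLambda_{-k})$ by \eqref{eq:pt_def_appdx}. This uses that $\bLambda_{-k}$ commutes with polynomials in itself, and that every multiplication acts on the left of $\bK_0$ and $\bF_j$.

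For the forcing part, collect the coefficient of $\bF_j$:
\begin{itemize}
  \item For $j\leqslant t-2$, it is $\bLambda_{-k}q_{t-1-j} - \beta q_{t-2-j} = q_{t-j}$, using \eqref{eq:qt_def_appdx} with index $t-1-j\geqslant 1$.
  \item For $j=t-1$, only the first sum contributes, giving $\bLambda_{-k}q_0 = q_1$ since $q_1(x)=x$.
  \item For $j=t$, the new forcing term contributes $q_0\bF_t$.
\end{itemize}
These coefficients reassemble exactly into $\sum_{s=0}^{t} q_s(\bLambda_{-k})\bF_{t-s}$, which closes the induction.

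The main point of care is the index bookkeeping at the boundary terms $j=t-1$ and $j=t$. That is where the choice $q_1(x)=x$ rather than $x/2$ matters: the forcing term enters the recurrence with unit weight, whereas the initial step uses the factor $1/2$. We also need to check that the general recurrence of Lemma~\ref{lemma:ht_3term} is only invoked for $t\geqslant 1$, so the $t=1$ base case must come from \eqref{eq:3termrect1}. Beyond this, the argument is routine.
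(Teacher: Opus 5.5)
Your proof is correct and follows the same route as the paper. Both argue by induction on $t$, substituting the hypotheses for $t$ and $t-1$ into the three-term recurrence of Lemma~\ref{lemma:ht_3term}, with the $t=1$ base case taken from \eqref{eq:3termrect1}, and both close the step using the defining recurrences of $p_t$ and $q_t$. The only cosmetic difference is that you work directly with $\bH_t\bC_t$ and $\bPsi_t\bC_t$ and handle all inductive steps with one uniform coefficient count, whereas the paper treats $t=1$ and $t\geqslant 2$ separately.
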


\begin{proof}
  The proof is by induction. The case $t=0$ is immediate as $p_0(x) = 1$ and $\bC_0 = \bI_k$. For $t=1$, we have from \eqref{eq:3termrect1} that $\bH_1\bC_1 = p_1(\bLambda_{-k}) \bH_0 \bC_0 + q_0(\bLambda_{-k}) \bPsi_0 \bC_0$, which gives the desired result after multiplying both sides by $\bC_1^{-1}$ on the right. Now, let $t\geqslant 1$ and assume that the result is true for $t-1$ and $t$. From \eqref{eq:3termrec}, we have that
  \begin{align*}
    \bH_{t+1}\bC_{t+1} & = \bLambda_{-k} \bH_t \bC_t - \beta \bH_{t-1} \bC_{t-1} + \bPsi_t \bC_t.
  \end{align*}
  Plugging in the induction hypothesis, we obtain, if $t\geqslant 2$,
  \begin{align*}
    \bH_{t+1}\bC_{t+1} & = \bLambda_{-k} \left( p_t(\bLambda_{-k}) \bH_0 \bC_t^{-1}+ \sum_{s=0}^{t-1} q_{s}(\bLambda_{-k}) \bPsi_{t-1-s} \bC_{t-1-s}\bC_t^{-1} \right) \bC_t \\
    & \quad - \beta \left( p_{t-1}(\bLambda_{-k}) \bH_0 \bC_{t-1}^{-1}+ \sum_{s=0}^{t-2} q_{s}(\bLambda_{-k}) \bPsi_{t-2-s} \bC_{t-2-s}\bC_{t-1}^{-1} \right) \bC_{t-1} + \bPsi_t \bC_t \\
    & = \left( \bLambda_{-k} p_t(\bLambda_{-k}) - \beta p_{t-1}(\bLambda_{-k}) \right) \bH_0 + \sum_{s=0}^{t-1} \left( \bLambda_{-k} q_{s}(\bLambda_{-k}) - \beta q_{s-1}(\bLambda_{-k}) \right) \bPsi_{t-1-s} \bC_{t-1-s} + \bPsi_t \bC_t, \\
    & = p_{t+1}(\bLambda_{-k}) \bH_0 + \sum_{s=0}^{t} q_{s}(\bLambda_{-k}) \bPsi_{t-s} \bC_{t-s}.
  \end{align*}
  If $t=1$, we have
  \begin{align*}
    \bH_2\bC_2 & = \bLambda_{-k} \left( p_1(\bLambda_{-k}) \bH_0 \bC_1^{-1}+ q_{0}(\bLambda_{-k}) \bPsi_{0} \bC_{0}\bC_1^{-1} \right) \bC_1  - \beta  p_{0}(\bLambda_{-k}) \bH_0 \bC_{0}^{-1} +  \bPsi_1 \bC_1 \\
    & = \left( \bLambda_{-k} p_1(\bLambda_{-k}) - \beta p_{0}(\bLambda_{-k}) \right) \bH_0 + \bLambda_{-k} q_{0}(\bLambda_{-k}) \bPsi_{0} \bC_{0} + \bPsi_1 \bC_1, \\
    & = p_{2}(\bLambda_{-k}) \bH_0 + \sum_{s=0}^{1} q_{s}(\bLambda_{-k}) \bPsi_{1-s} \bC_{1-s}.
  \end{align*}

  In both cases, we obtain the desired result after multiplying both sides by $\bC_{t+1}^{-1}$ on the right. This concludes the induction and the proof.
\end{proof}

The next step of the proof consists in upper bounding the different factors appearing in the expression of $\bH_t$ in \eqref{eq:ht_pq}. Before proving these bounds in \cref{lemma:ctpt_bound}, we need two intermediate results. The first one regards the sequences of polynomials $\{p_t\}$ and $\{q_t\}$. More specifically, we give closed-form expressions of these polynomials, which will allow us to upper bound their values on the interval $[0, 2\sqrt{\beta}]$.

\begin{lemma}[\citet{xu2023anpm}, Lemma 3.4]\label{lemma:ptqt_closed}
  For all $t\geqslant 0$ and $x\in \mathbb{R}$, we have
  \begin{align*}
    & p_t(x) = \frac{1}{2}\left( (x^+)^t + (x^-)^t \right), \\
    & q_t(x) = \sum_{s=0}^{t} (x^+)^s (x^-)^{t-s}, 
  \end{align*}
  where for all $x \in \R$, $x^\pm$ are the roots of the polynomial $z^2 - xz + \beta$:
  \begin{align}\label{eq:def_xpm}
    x^\pm := \begin{cases}
      \frac{x \pm \sqrt{x^2 - 4\beta}}{2} \quad & \mbox{if} \quad x^2 \geqslant 4\beta, \\
      \frac{x \pm \textit{\textbf{i}} \sqrt{4\beta - x^2}}{2} \quad & \mbox{if} \quad x^2 < 4\beta,
    \end{cases}
  \end{align}
  where $\textit{\textbf{i}}$ is the imaginary unit.
\end{lemma}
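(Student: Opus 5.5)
The plan is to verify the closed forms by showing they satisfy the same linear three-term recurrence with the same initial conditions. Fix $x\in\mathbb{R}$ and let $x^\pm$ be as in \eqref{eq:def_xpm}. In both cases (real or complex-conjugate roots) they are the two roots of $z^2 - xz + \beta$. Vieta's formulas then give
\begin{align*}
  x^+ + x^- = x, \qquad x^+x^- = \beta,
\end{align*}
and each root satisfies $(x^\pm)^2 = x\,x^\pm - \beta$.

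\textbf{The sequence $p_t$.} Multiplying $(x^\pm)^2 = x\,x^\pm - \beta$ by $(x^\pm)^{t-1}$ shows that every sequence of the form $a(x^+)^t + b(x^-)^t$ obeys $u_{t+1} = x u_t - \beta u_{t-1}$. Hence the candidate $\tilde p_t := \frac12((x^+)^t + (x^-)^t)$ satisfies the recurrence \eqref{eq:pt_def_appdx}. It remains to check the initial values:
\begin{align*}
  \tilde p_0 = \tfrac12(1+1) = 1, \qquad \tilde p_1 = \tfrac12(x^+ + x^-) = \tfrac{x}{2}.
\end{align*}
A second-order recurrence is determined by its first two terms, so induction on $t$ gives $p_t = \tilde p_t$. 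This argument does not use distinct roots, so it also covers $x^2 = 4\beta$.

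\textbf{The sequence $q_t$.} For the candidate $\tilde q_t := \sum_{s=0}^t (x^+)^s(x^-)^{t-s}$, the initial values are $\tilde q_0 = 1$ and $\tilde q_1 = x^+ + x^- = x$. To verify the recurrence, I will avoid dividing by $x^+ - x^-$, which may vanish, and compute directly:
\begin{align*}
  x\tilde q_t = (x^+ + x^-)\sum_{s=0}^t (x^+)^s(x^-)^{t-s} = \sum_{s=0}^{t}(x^+)^{s+1}(x^-)^{t-s} + \sum_{s=0}^{t}(x^+)^{s}(x^-)^{t+1-s}.
\end{align*}
The union of these two sums is $\tilde q_{t+1}$ (each term once) plus the terms with $1\leqslant s\leqslant t$ counted a second time. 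Those duplicated terms equal
\begin{align*}
  \sum_{s=1}^{t}(x^+)^s(x^-)^{t+1-s} = x^+x^-\,\tilde q_{t-1} = \beta\,\tilde q_{t-1}.
\end{align*}
Therefore $x\tilde q_t = \tilde q_{t+1} + \beta\tilde q_{t-1}$, which is exactly \eqref{eq:qt_def_appdx}. Induction then gives $q_t = \tilde q_t$.

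The only point needing care is this reindexing of the double sum, together with noting that the complex case gives real values: $x^- = \overline{x^+}$, so both expressions are symmetric under conjugation and hence real. There is no genuine obstacle beyond this.
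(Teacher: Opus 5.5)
Your proof is correct, but it takes a different route from the paper's. The paper \emph{derives} the closed forms with generating functions. It writes $\Pi(z)=\sum_t \pi_t(x)z^t = \frac{\pi_0+z(\pi_1-x\pi_0)}{1-xz+\beta z^2}$ and factors the denominator as $(1-x^+z)(1-x^-z)$. It then reads off $\pi_t = C_+(x^-)^t + C_-(x^+)^t$ from a partial-fraction decomposition, with the constants $C_\pm$ fixed by the initial conditions. You instead \emph{verify} the stated formulas: you check the recurrence and the two initial values and invoke uniqueness for a second-order recurrence. Your reindexing of the double sum for $\tilde q_t$ is correct and gives exactly $x\tilde q_t = \tilde q_{t+1}+\beta\tilde q_{t-1}$.

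What your route buys is that it never divides by $x^+-x^-$. The paper's constants $C_\pm$ carry the factor $1/(1-x^\pm/x^\mp)$, and a decomposition into two simple fractions does not exist at a double root. So the paper's argument silently excludes $x=\pm 2\sqrt{\beta}$. That excluded case is used later in the paper:
\begin{itemize}
  \item when $\beta=\lambda_{k+1}^2/4$, where $\lambda_{k+1}=2\sqrt{\beta}$ is a diagonal entry of $\bLambda_{-k}$;
  \item in the lower-bound constructions, where $\bLambda_{-k}=2\sqrt{\beta}\bI_{d-k}$ and $p_t(2\sqrt{\beta})$, $q_s(2\sqrt{\beta})$ are evaluated.
\end{itemize}
The paper would need a continuity argument to cover these points, whereas your induction covers them directly.

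What the paper's route buys is a derivation rather than a check of a guessed formula. It also gives a template valid for arbitrary initial conditions, and the explicit generating function $\sum_s q_s(x)z^s = 1/(1-xz+\beta z^2)$, which is reused in the proof of the small-$\beta$ result. That generating function also follows immediately from the recurrence, so your approach loses nothing essential.
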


\begin{proof}
 Our proof follows the same principle as the proof of Lemma 20 in \cite{xu2018stochastic}. Let $\{\pi_t\} \in \left\{ \{p_t\}, \{q_t\} \right\}$. Then, for all $t\geqslant 0$, we have $\pi_{t+2}(x) = x\pi_{t+1}(x) - \beta\pi_t(x)$. Let $x \in \mathbb{R}$ and denote $\Pi(z) := \sum_{t=0}^\infty \pi_t(x) z^t$ the generating function of the sequence $\{\pi_t(x)\}$. Then, we have
  \begin{align*}
    \Pi(z) - \pi_0(x) - \pi_1(x) z & = xz \left( \Pi(z) - \pi_0(x) \right) - \beta z^2 \Pi(z), \\
    \Pi(z) & = \frac{\pi_0(x) + z(\pi_1(x) - x\pi_0(x))}{1 - xz + \beta z^2}, \\
    \Pi(z) & = \frac{C_+(x)}{1 - \beta z/x^+} + \frac{C_-(x)}{1 - \beta z/x^-}, 
  \end{align*}
  where $C_\pm(x)$ are constants that depend on $x$ and the initial conditions of the sequence $\{\pi_t\}$. These equalities are valid for all $z$ such that $\vert z \vert < \min\{ \vert x^+/\beta \vert, \vert x^-/\beta \vert \}$, which is non-zero. Then, multiplying both sides by $(1 - \beta z/x^+)$ (resp. $(1 - \beta z/x^-)$) and taking the limit $z \to x^+/\beta$ (resp. $z \to x^-/\beta$) gives
  \begin{align*}
    C_+(x) & =  \frac{\pi_0(x) + \frac{x^+}{\beta}(\pi_1(x) - x\pi_0(x))}{1 - x^+ / x^-}, \\
    C_-(x) & = \frac{\pi_0(x) + \frac{x^-}{\beta}(\pi_1(x) - x\pi_0(x))}{1 - x^-/x^+}.
  \end{align*}

  Furthermore, we have that 
  \begin{align*}
    \Pi(z) & = \sum_{t=0}^{+\infty} \left(C_+(x) \left(\frac{\beta}{x^+}\right)^{t} + C_-(x) \left(\frac{\beta}{x^-}\right)^{t}\right) z^t.
  \end{align*}
  Identifying the coefficients then gives for all $t\geqslant 0$,
  \begin{align*}
    \pi_t(x) & = C_+(x) \left(\frac{\beta}{x^+}\right)^{t} + C_-(x) \left(\frac{\beta}{x^-}\right)^{t}, \\
    & = C_+(x) (x^-)^{t} + C_-(x) (x^+)^{t},
  \end{align*}
  Where the last equality is because $x^+ x^- = \beta$. Plugging in the initial conditions for $\{p_t\}$ and $\{q_t\}$ gives the desired closed-form expressions. For $\{p_t\}$, we have $p_0(x) = 1, p_1(x) = x/2$, which gives
  \begin{align*}
    C_+(x) & = \frac{1 - \frac{x^+}{\beta}\frac{x}{2}}{1 - x^+/x^-} = \frac{1}{2}, \qquad C_-(x)  = \frac{1 - \frac{x^-}{\beta}\frac{x}{2}}{1 - x^-/x^+} = \frac{1}{2},
  \end{align*}
  which gives for all $t \geqslant 0$,
  \begin{align*}
    p_t(x) & = \frac{1}{2} (x^+)^t + \frac{1}{2} (x^-)^t.
  \end{align*}
  For $\{q_t\}$, we have $q_0(x) = 1, q_1(x) = x$, which gives
  \begin{align*}
    C_+(x) & = \frac{x^-}{x^- - x^+}, \qquad C_-(x)  = \frac{x^+}{x^+ - x^-},
  \end{align*}
  which gives for all $t \geqslant 0$,
  \begin{align*}
    q_t(x) & = \frac{(x^+)^{t+1} - (x^-)^{t+1}}{x^+ - x^-} = \sum_{s=0}^{t} (x^+)^s (x^-)^{t-s}.
  \end{align*}
\end{proof}

The second intermediate result we need is an upper bound on the spectral norm of $\bG_t$ that is tighter than the one shown in \cref{lemma:well_defined}.

\begin{lemma}\label{lemma:gt_inv_bound}
  For all $t\geqslant 0$,
  \begin{align*}
    \Vert \bG_t \Vert_2 \leqslant \frac{r_+^t + \kappa r_-^t}{r_+^{t+1} + \kappa r_-^{t+1}},
  \end{align*}
  where
  \begin{align*}
    & r_\pm := \frac{(1 - c\Delta) \pm \sqrt{(1 - c\Delta)^2 - \frac{4\beta}{\lambda_k^2}}}{2}, \\
    & \kappa := 1 + 2c\frac{\Delta}{\sqrt{(1-c\Delta)^2 - 4\beta/\lambda_k^2} - c\Delta}.
  \end{align*}
  Furthermore, we have $\kappa \leqslant 16/15$.
\end{lemma}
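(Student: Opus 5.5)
The plan is to compare $\Vert\bG_t\Vert_2$ with a scalar sequence that follows the same (Riccati-type) recursion, and then solve that scalar recursion exactly through a linear three-term recurrence.

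\textbf{Step 1 (scalar comparison).} Set $a := 1-c\Delta$, $\gamma := \beta/\lambda_k^2 \leqslant (1-\Delta)^2/4$, and $f(g) := 1/(a-\gamma g)$ on $[0,a/\gamma)$. The map $f$ is increasing there. The computation in the proof of \cref{lemma:well_defined} (Neumann series, together with $\Vert\bE_{t+1}\Vert_2\leqslant c\Delta$ and $\Vert\bLambda_k^{-1}\Vert_2 = 1/\lambda_k$) gives
\begin{align*}
\Vert\bG_{t+1}\Vert_2 \leqslant f(\Vert\bG_t\Vert_2),
\end{align*}
provided $\gamma\Vert\bG_t\Vert_2 + c\Delta<1$. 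Define $g_0 := 1/(1/2-c\Delta) = 1/(a-1/2)$ and $g_{t+1} := f(g_t)$. Since $\Vert\bG_0\Vert_2\leqslant g_0$, monotonicity of $f$ gives $\Vert\bG_t\Vert_2\leqslant g_t$ by induction. This needs $g_t<a/\gamma$ at every step, which will follow from the explicit positive form of $g_t$ found below.

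\textbf{Step 2 (linearization).} Write $g_t = u_t/u_{t+1}$ with $u_{t+1} = a u_t - \gamma u_{t-1}$. Indeed,
\begin{align*}
f(u_t/u_{t+1}) = \frac{u_{t+1}}{a u_{t+1}-\gamma u_t} = \frac{u_{t+1}}{u_{t+2}}.
\end{align*}
The characteristic roots of this linear recurrence are exactly $r_\pm$, the roots of $z^2-az+\gamma$. They are real with $r_+>r_->0$ once we show $s := \sqrt{a^2-4\gamma}>0$. Take $u_t = r_+^t+\kappa r_-^t$, which yields the claimed formula. It remains to choose $\kappa$ so that the initial condition holds, i.e. $u_0/u_1 = g_0$, which is equivalent to
\begin{align*}
r_+ + \kappa r_- = (1+\kappa)\left(a-\tfrac12\right).
\end{align*}
Using $r_\pm = (a\pm s)/2$, we get $r_+-(a-\tfrac12) = (s+c\Delta)/2$ and $r_- -(a-\tfrac12) = (c\Delta-s)/2$. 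Hence
\begin{align*}
\kappa = \frac{s+c\Delta}{s-c\Delta} = 1 + \frac{2c\Delta}{s-c\Delta},
\end{align*}
which is precisely the stated $\kappa$. If $s>c\Delta$, then $\kappa>0$, so $u_t>0$ for all $t$. Moreover each $g_t = f(g_{t-1})$ is then positive and finite, and positivity of $u_{t+2} = a u_{t+1}-\gamma u_t$ gives $\gamma g_t = \gamma u_t/u_{t+1} < a$. This validates the range condition needed in Step 1.

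\textbf{Step 3 (the bound on $\kappa$).} This step also supplies $s>c\Delta$, which is the main point to check. From $4\gamma\leqslant(1-\Delta)^2$,
\begin{align*}
s^2 \geqslant (1-c\Delta)^2-(1-\Delta)^2 = \Delta(1-c)\bigl(2-(1+c)\Delta\bigr) \geqslant (1-c)^2\Delta,
\end{align*}
since $2-(1+c)\Delta \geqslant 1-c$ for $\Delta\leqslant 1$. So $s\geqslant(1-c)\sqrt{\Delta}\geqslant(1-c)\Delta>c\Delta$. Then
\begin{align*}
\kappa \leqslant 1+\frac{2c\Delta}{(1-2c)\Delta} = \frac{1}{1-2c} = \frac{16}{15}
\end{align*}
for $c=1/32$. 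The only delicate points are this lower bound on $s$ and the range condition justifying monotonicity in Step 1; both are handled as above.
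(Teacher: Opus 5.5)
Your proof is correct and follows essentially the same route as the paper: compare $\Vert\bG_t\Vert_2$ with the scalar Riccati sequence $m_{t+1} = 1/\bigl((1-c\Delta) - \frac{\beta}{\lambda_k^2} m_t\bigr)$, linearize it as a ratio $u_t/u_{t+1}$ of solutions of $u_{t+1} = (1-c\Delta)u_t - \frac{\beta}{\lambda_k^2}u_{t-1}$, and fix $\kappa$ from the initial condition $m_0 = 1/(1/2-c\Delta)$. The differences are minor and both versions work: you justify the range condition through positivity of $u_{t+2}$, where the paper uses the a priori bound $m_t \leqslant (1/2-c\Delta)^{-1}$; and you lower-bound $\sqrt{(1-c\Delta)^2 - 4\beta/\lambda_k^2}$ by factoring $(1-c\Delta)^2-(1-\Delta)^2$, where the paper uses a concavity argument.
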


\begin{proof}
  Notice first that $(1-c\Delta)^2 - 4\beta/\lambda_k^2 > 0$, so that $r_\pm$ are well-defined. Consider the sequence $\{m_t\}$ defined by the following Riccati difference equation:
  \begin{align*}
    & m_{t+1} = \frac{1}{(1- c\Delta) - \frac{\beta}{\lambda_k^2} m_t}, \\
    & m_0  = \frac{1}{1/2 - c\Delta}.
  \end{align*}
  Since $\beta/\lambda_k^2 = (1-\Delta)^2$, we have that for all $t\geqslant 0$, $0 < m_t \leqslant (1/2 - c\Delta)^{-1}$ (this can be shown by a simple induction).

  We will now prove by induction that for all $t \geqslant 0$, $\Vert \bG_t \Vert_2 \leqslant m_t$. The base case $t=0$ is true from \cref{lemma:well_defined}. Now, let $t\geqslant 0$ and assume that $\Vert \bG_t \Vert_2 \leqslant m_t$. Then, we have
  \begin{align*}
    \Vert \bG_{t+1} \Vert_2 = \Vert (\bI_k - \beta\bLambda_k^{-1}\bG_t\bLambda_k^{-1} + \bE_{t+1})^{-1} \Vert_2& \leqslant \frac{1}{1 - \frac{\beta}{\lambda_k^2} \Vert \bG_t \Vert_2 - c\Delta} \leqslant \frac{1}{(1 - c\Delta) - \frac{\beta}{\lambda_k^2} m_t} = m_{t+1},
  \end{align*}
  where the second inequality is valid due to the fact that $m_t \leqslant  (1/2 - c\Delta)^{-1} < (1-c\Delta)\lambda_k^2/\beta$. This concludes the induction.

  We will now derive a closed-form expression of $m_t$. Let $\{y_t\}$ be the sequence defined as $y_t := \prod_{s=0}^{t-1} m_s^{-1}$ (with the convention $y_0 := 1$). Then, we have that for all $t\geqslant 1$,
  \begin{align*}
    y_{t+1} & =  \frac{y_t}{m_{t}} = y_t \left((1-c\Delta) - \frac{\beta}{\lambda_k^2} m_{t-1}\right) = y_t \left((1-c\Delta) - \frac{\beta}{\lambda_k^2} \frac{y_{t-1}}{y_t}\right), \\
    y_{t+1} & = (1 - c\Delta) y_t - \frac{\beta}{\lambda_k^2} y_{t-1}.
  \end{align*}
  With initial conditions $y_0 = 1$ and $y_1 = 1/2 - c\Delta$, we can solve this linear difference equation to obtain for all $t\geqslant 0$,
  \begin{align*}
    y_t & = \kappa_+ r_+^t + \kappa_- r_-^t,
  \end{align*}
  where $r_\pm$ are defined in the statement of the lemma, and where the constants $\kappa_\pm$ are defined as
  \begin{align*}
    \kappa_\pm := \frac{1}{2}\left( 1 \mp \frac{c\Delta}{\sqrt{(1-c\Delta)^2 - 4\beta/\lambda_k^2}}\right).
  \end{align*}
  Then, since $m_t = y_{t}/y_{t+1}$ and $\Vert\bG_t \Vert_2 \leqslant m_t$ for all $t\geqslant 0$, we have that
  \begin{align*}
   \Vert\bG_t\Vert_2 \leqslant \frac{\kappa_+ r_+^t + \kappa_- r_-^t}{\kappa_+ r_+^{t+1} + \kappa_- r_-^{t+1}} = \frac{r_+^t + \kappa r_-^t}{r_+^{t+1} + \kappa r_-^{t+1}},
  \end{align*}
  where 
  \begin{align*}
    \kappa := \frac{\kappa_-}{\kappa_+} = 1 + 2\frac{c\Delta}{\sqrt{(1-c\Delta)^2 - 4\beta/\lambda_k^2} - c\Delta}.
  \end{align*}

  Finally, we prove the upper bound on $\kappa$ by noticing that because of the concavity of the function $u\mapsto \sqrt{u^2 - 4\beta/\lambda_k^2}$ over $[2\sqrt{\beta}/\lambda_k, +\infty)$, we have
  \begin{align*}
    \sqrt{(1-c\Delta)^2 - 4\beta/\lambda_k^2} & = \sqrt{\left((1-c) + c\frac{2\sqrt{\beta}}{\lambda_k}\right)^2 - \frac{4\beta}{\lambda_k^2}} \geqslant (1-c) \sqrt{1 - 4\beta/\lambda_k^2} \geqslant (1-c) \Delta,
  \end{align*}
  from which we deduce
  \begin{align*}
    \kappa & \leqslant 1 + 2\frac{c\Delta}{(1-c)\Delta - c\Delta} \leqslant 1 + 2\frac{c}{1 - 2c} = \frac{16}{15}.
  \end{align*}

\end{proof}

We can now prove the upper bounds on the spectral norms of the factors appearing in \eqref{eq:ht_pq}.

\begin{lemma}\label{lemma:ctpt_bound}
  For all $t\geqslant 1$, for all $s \in \{0, \dots, t-1\}$,
  \begin{align*}
    & \Vert \bC_t^{-1} \Vert_2 \leqslant \frac{c_1}{\left((1-c)\lambda_k^+ + c\sqrt{\beta}\right)^t}, \\
    & \Vert \bC_{t-1-s}\bC_t^{-1} \Vert_2 \leqslant \frac{c_1}{\left((1-c)\lambda_k^+ + c\sqrt{\beta}\right)^{s+1}}, \\
    & \Vert p_t(\bLambda_{-k}) \Vert_2 \leqslant \sqrt{\beta}^t, \\
    & \Vert q_t(\bLambda_{-k}) \Vert_2 \leqslant (t+1) \sqrt{\beta}^t,
  \end{align*}
  where $c_1 := 31/15$ and $\lambda_k^+$ is defined as in \eqref{eq:def_xpm} (i.e. as the largest root of $x^2 - \lambda_k x + \beta$).
\end{lemma}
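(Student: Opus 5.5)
The plan is to exploit the product structure of $\bC_t$ together with the scalar Riccati comparison from \cref{lemma:gt_inv_bound}. By definition, $\bC_t = \bLambda_k\bG_{t-1}^{-1}\bLambda_k\bG_{t-2}^{-1}\cdots\bLambda_k\bG_0^{-1}$. Hence
\begin{align*}
\bC_t^{-1} = \bG_0\bLambda_k^{-1}\bG_1\bLambda_k^{-1}\cdots\bG_{t-1}\bLambda_k^{-1},
\end{align*}
and more generally
\begin{align*}
\bC_{t-1-s}\bC_t^{-1} = \bG_{t-1-s}\bLambda_k^{-1}\cdots\bG_{t-1}\bLambda_k^{-1},
\end{align*}
which is a product of $s+1$ factors. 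Submultiplicativity together with the bound $\Vert\bG_i\Vert_2\leqslant m_i$ established in the proof of \cref{lemma:gt_inv_bound} gives
\begin{align*}
\Vert \bC_{t-1-s}\bC_t^{-1}\Vert_2 \leqslant \lambda_k^{-(s+1)}\prod_{i=t-1-s}^{t-1} m_i = \lambda_k^{-(s+1)}\frac{y_{t-1-s}}{y_t}.
\end{align*}
This telescoping uses $m_i=y_i/y_{i+1}$. The first bound is the special case $s=t-1$, for which $y_0=1$.

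Next I bound the ratio of the $y_j$'s using the closed form $y_j=\kappa_+r_+^j+\kappa_-r_-^j$. Here $\kappa_++\kappa_-=1$, and both $\kappa_\pm>0$, since $c\Delta<\sqrt{(1-c\Delta)^2-4\beta/\lambda_k^2}$ by the estimate $\sqrt{\cdot}\geqslant(1-c)\Delta$ already shown. Also $0<r_-\leqslant r_+$. Therefore the numerator satisfies $y_j\leqslant r_+^j$, and the denominator satisfies $y_{j+s+1}\geqslant\kappa_+r_+^{j+s+1}$. This yields
\begin{align*}
\frac{y_j}{y_{j+s+1}}\leqslant \kappa_+^{-1}r_+^{-(s+1)} = (1+\kappa)\,r_+^{-(s+1)}\leqslant \frac{31}{15}\,r_+^{-(s+1)},
\end{align*}
where the equality holds because $\kappa=\kappa_-/\kappa_+$ and $\kappa_++\kappa_-=1$, and the last step uses $\kappa\leqslant 16/15$. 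This is exactly where $c_1=31/15$ comes from.

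The remaining step, which I expect to be the only genuinely delicate point, is to show $\lambda_k r_+\geqslant(1-c)\lambda_k^++c\sqrt\beta$. Note that $\lambda_k r_+$ is the largest root $x^+$ of $z^2-xz+\beta$ evaluated at
\begin{align*}
x=(1-c\Delta)\lambda_k=(1-c)\lambda_k+c\cdot2\sqrt\beta.
\end{align*}
The map $x\mapsto x^+=(x+\sqrt{x^2-4\beta})/2$ is concave on $[2\sqrt\beta,\infty)$; this is the same concavity used for $\kappa$. Applying it to this convex combination gives
\begin{align*}
\lambda_kr_+\geqslant(1-c)\lambda_k^++c\,(2\sqrt\beta)^+=(1-c)\lambda_k^++c\sqrt\beta.
\end{align*}
Plugging this into the previous display gives both $\bC$ bounds.

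For the polynomial bounds, $p_t(\bLambda_{-k})$ and $q_t(\bLambda_{-k})$ are diagonal, so their spectral norms are $\max_{j>k}|p_t(\lambda_j)|$ and $\max_{j>k}|q_t(\lambda_j)|$. Every $\lambda_j$ with $j>k$ lies in $[0,2\sqrt\beta]$ because $2\sqrt\beta\geqslant\lambda_{k+1}$. For such $x$, the roots $x^\pm$ from \cref{lemma:ptqt_closed} are complex conjugates (or coincide at $x=2\sqrt\beta$) with $|x^\pm|^2=x^+x^-=\beta$. The triangle inequality applied to the closed forms of \cref{lemma:ptqt_closed} then gives $|p_t(x)|\leqslant\sqrt\beta^t$ and $|q_t(x)|\leqslant(t+1)\sqrt\beta^t$.
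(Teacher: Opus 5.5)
Your proposal is correct and follows the paper's proof step for step. Both arguments use the same four ingredients:
\begin{itemize}
\item submultiplicativity over the $s+1$ factors $\bG_u\bLambda_k^{-1}$;
\item telescoping the Riccati comparison bound from \cref{lemma:gt_inv_bound}, which yields the constant $1+\kappa\leqslant 31/15$;
\item the concavity estimate $\lambda_k r_+\geqslant(1-c)\lambda_k^++c\sqrt{\beta}$;
\item the closed forms of \cref{lemma:ptqt_closed} together with $|x^\pm|=\sqrt{\beta}$ on $[0,2\sqrt{\beta}]$.
\end{itemize}
The differences are cosmetic. You work with $\kappa_\pm$ rather than $\kappa=\kappa_-/\kappa_+$, and you apply concavity to $x\mapsto x^+$ rather than to $u\mapsto\sqrt{u^2-4\beta}$. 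You also check $\kappa_+>0$ explicitly, which the paper uses only implicitly.
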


\begin{proof}
  Recall that $\bC_t = \prod_{s=0}^{t-1} \bLambda_k \bG_{t-1-s}^{-1}$. Thus, we have
  \begin{align*}
    \Vert \bC_t^{-1} \Vert_2 & \leqslant \prod_{u=0}^{t-1} \Vert \bG_{u} \Vert_2 \Vert \bLambda_k^{-1} \Vert_2, \\
    \Vert \bC_{t-1-s} \bC_t^{-1} \Vert_2 & \leqslant \prod_{u=t-1-s}^{t-1} \Vert \bG_u \Vert_2 \Vert \bLambda_k^{-1} \Vert_2.
  \end{align*}
  From \cref{lemma:gt_inv_bound}, we have for all $u\geqslant 0$,
  \begin{align*}
    \Vert \bG_u \Vert_2 & \leqslant \frac{r_+^u + \kappa r_-^u}{r_+^{u+1} + \kappa r_-^{u+1}},
  \end{align*}
  and $\Vert\bLambda_k^{-1}\Vert_2 = 1/\lambda_k$. Thus, we have
  \begin{align*}
    \Vert \bC_t^{-1} \Vert_2 & \leqslant \frac{1}{\lambda_k^t} \prod_{u=0}^{t-1} \frac{r_+^u + \kappa r_-^u}{r_+^{u+1} + \kappa r_-^{u+1}} = \frac{1 + \kappa }{\lambda_k^t (r_+^{t} + \kappa r_-^{t})} \leqslant \frac{c_1}{\left(\lambda_k r_+\right)^t},\\
    \Vert \bC_{t-1-s} \bC_t^{-1} \Vert_2 & \leqslant \frac{1}{\lambda_k^{s+1}} \prod_{u=t-1-s}^{t-1} \frac{r_+^u + \kappa r_-^u}{r_+^{u+1} + \kappa r_-^{u+1}} = \frac{r_+^{t-1-s} + \kappa r_-^{t-1-s}}{\lambda_k^{s+1} (r_+^{t} + \kappa r_-^{t})} \leqslant \frac{c_1}{\left(\lambda_k r_+\right)^{s+1}},
  \end{align*}
  since $\kappa \leqslant c_1 - 1$ from \cref{lemma:gt_inv_bound} and since $r_+ \geqslant r_-$. Analyzing the factor $\lambda_k r_+$, we have
  \begin{align*}
    \lambda_k r_+ & = \frac{1}{2}\left( (1 - c\Delta)\lambda_k + \sqrt{(1 - c\Delta)^2\lambda_k^2  - 4\beta} \right) \\
    & = \frac{1}{2}\left( (1 - c\Delta)\lambda_k + \sqrt{\left((1 - c)\lambda_k + c\sqrt{\beta}\right)^2  - 4\beta} \right) \\
    & \geqslant \frac{1}{2} \left((1 - c\Delta)\lambda_k  + (1 - c) \sqrt{\lambda_k^2 - 4\beta} \right) \\
    & = (1 - c) \lambda_k^+ + c \sqrt{\beta},
  \end{align*}
  where the inequality is due to the concavity of the function $u \mapsto \sqrt{u^2 - 4\beta}$ over $[2\sqrt{\beta}, +\infty)$. This gives the desired bounds on $\Vert \bC_t^{-1} \Vert_2$ and $\Vert \bC_{t-1-s} \bC_t^{-1} \Vert_2$.

  The bounds on $\Vert p_t(\bLambda_{-k})\Vert_2$ and $\Vert q_t(\bLambda_{-k})\Vert_2$ are obtained through the closed-form expressions of $p_t$ and $q_t$ shown in \cref{lemma:ptqt_closed}. Recall that $\bLambda_{-k}$ is a diagonal matrix whose diagonal coefficients are all in the interval $[0, 2\sqrt{\beta}]$. Furthermore, for all $x\in [0,2\sqrt{\beta}]$, we have that
  \begin{align*}
    \vert x^\pm \vert = \frac{1}{2}\sqrt{x^2 + 4\beta - x^2} = \sqrt{\beta}
  \end{align*}
  
  Then, from \cref{lemma:ptqt_closed}, we have that for all $x \in [0,2\sqrt{\beta}]$,
  \begin{align*}
    & \vert p_t(x) \vert \leqslant \frac{1}{2}\left( \vert x^+\vert^t + \vert x^-\vert^t\right) = \sqrt{\beta}^t, \\
    & \vert q_t(x) \vert \leqslant \sum_{u=0}^t \vert x^+\vert^u \vert x^-\vert^{t-u} = (t+1)\sqrt{\beta}^t,
  \end{align*}
  which concludes the proof.
\end{proof}

Using these bounds, we can now prove a recurrence inequality on $h_t = \Vert \bH_t \Vert_2 = \tan \theta_k(\bU_k, \bX_t)$ using the expression of $\bH_t$ given in \eqref{eq:ht_pq} which depends on $p_t(\bLambda_{-k})$, $q_t(\bLambda_{-k})$, $\bC_t^{-1}$ and $\bC_{t-s-1}\bC_{t}^{-1}$.

\begin{lemma}\label{lemma:ht_bound}
  For all $t\geqslant 1$, we have
  \begin{align}\label{eq:ht_reccurence}
    h_t & \leqslant c_1\gamma^t h_0 + \eta \sum_{s=0}^{t-1} (s+1) \gamma^s (1 + h_{t-1-s}),
  \end{align}
  where
  \begin{align*}
    & \gamma := \frac{\sqrt{\beta}}{(1-c)\lambda_k^+ + c\sqrt{\beta}} \in [0,1),\\
    & \eta := c_2\Delta\varepsilon,
  \end{align*}
  and $c_2 := 2/15$.
\end{lemma}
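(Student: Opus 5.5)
Starting from the closed form \eqref{eq:ht_pq} of \cref{lemma:htct}, I will take spectral norms and apply the triangle inequality and submultiplicativity:
\begin{align*}
  h_t \leqslant \Vert p_t(\bLambda_{-k})\Vert_2 h_0 \Vert\bC_t^{-1}\Vert_2 + \sum_{s=0}^{t-1} \Vert q_s(\bLambda_{-k})\Vert_2 \Vert\bPsi_{t-1-s}\Vert_2 \Vert\bC_{t-1-s}\bC_t^{-1}\Vert_2 .
\end{align*}
For the first term, \cref{lemma:ctpt_bound} gives $\Vert p_t(\bLambda_{-k})\Vert_2\Vert\bC_t^{-1}\Vert_2 \leqslant c_1 \sqrt{\beta}^t/((1-c)\lambda_k^++c\sqrt\beta)^t = c_1\gamma^t$. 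So this term is exactly $c_1\gamma^t h_0$.

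For the $s$-th summand, the same lemma gives
\begin{align*}
  \Vert q_s(\bLambda_{-k})\Vert_2\Vert\bC_{t-1-s}\bC_t^{-1}\Vert_2 \leqslant \frac{c_1 (s+1)\gamma^s}{(1-c)\lambda_k^++c\sqrt\beta}.
\end{align*}
It remains to bound $\Vert\bPsi_u\Vert_2$ for $u=t-1-s$. I will write $\Vert\bPsi_u\Vert_2 \leqslant \Vert\bU_{-k}^\top\bXi_u\Vert_2 / \sigma_{\min}(\bU_k^\top\bX_u) = \Vert\bU_{-k}^\top\bXi_u\Vert_2/\cos\theta_k(\bU_k,\bX_u)$, using \cref{prop:tantheta}. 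Then I use $1/\cos\theta = \sqrt{1+\tan^2\theta}\leqslant 1+h_u$, and condition \eqref{eq:noisecond1} gives $\Vert\bPsi_u\Vert_2\leqslant c(\lambda_k-2\sqrt\beta)\varepsilon(1+h_u) = c\lambda_k\Delta\varepsilon(1+h_u)$.

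Combining, each summand is at most $(s+1)\gamma^s(1+h_{t-1-s})\cdot \dfrac{c c_1\lambda_k\Delta\varepsilon}{(1-c)\lambda_k^++c\sqrt\beta}$. The remaining step is to check that this prefactor is at most $c_2\Delta\varepsilon$, i.e. that $c c_1\lambda_k \leqslant c_2((1-c)\lambda_k^++c\sqrt\beta)$. Using $\lambda_k^+\geqslant \lambda_k/2$ (since $\lambda_k^2\geqslant 4\beta$) and $\sqrt\beta\geqslant 0$, the right side is at least $c_2(1-c)\lambda_k/2$. With $c=1/32$ and $c_1=31/15$, we get $cc_1 = 31/480$ and $c_2(1-c)/2 = \frac{2}{15}\cdot\frac{31}{32}\cdot\frac12 = 31/480$, so equality holds exactly and the constant $c_2=2/15$ is precisely what this computation produces.

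Finally, $\gamma<1$: $\gamma<1$ is equivalent to $\sqrt\beta<(1-c)\lambda_k^++c\sqrt\beta$, i.e. $\sqrt\beta<\lambda_k^+$. This holds because $\lambda_k>2\sqrt\beta$ forces $\lambda_k^+>\lambda_k/2>\sqrt\beta$.

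The main obstacle is the $\bPsi$ bound. One has to go through $1/\cos\theta_k\leqslant 1+h$ rather than assume $\cos$ is bounded below, since no lower bound on $\cos\theta_k(\bU_k,\bX_u)$ is yet available. This is why the terms $h_{t-1-s}$ appear on the right-hand side, making the inequality recursive; the constant bookkeeping is then routine.
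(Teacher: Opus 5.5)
Your proposal is correct and follows the paper's proof essentially step for step. You apply the triangle inequality to \eqref{eq:ht_pq}, use the bounds of \cref{lemma:ctpt_bound}, bound $\Vert\bPsi_u\Vert_2\leqslant c\lambda_k\Delta\varepsilon(1+h_u)$ via $1/\cos\theta_k=\sqrt{1+h_u^2}\leqslant 1+h_u$, and use $\lambda_k^+\geqslant\lambda_k/2$ to obtain exactly $c_2=2/15$. Your additional check that $\gamma<1$ (via $\lambda_k^+>\lambda_k/2>\sqrt{\beta}$) fills in a point the paper only asserts.
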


\begin{proof}
  Let $t\geqslant 1$. From \eqref{eq:ht_pq}, we have
  \begin{align*}
    h_t & = \Vert \bH_t \Vert_2 \leqslant \Vert p_t(\bLambda_{-k}) \Vert_2 \Vert \bH_0 \Vert_2 \Vert \bC_t^{-1} \Vert_2 + \sum_{s=0}^{t-1} \Vert q_s(\bLambda_{-k}) \Vert_2 \Vert \bPsi_{t-1-s} \Vert_2 \Vert \bC_{t-1-s} \bC_t^{-1} \Vert_2.
  \end{align*}
  Using the bounds shown in \cref{lemma:ctpt_bound}, we deduce the following upper bound on $h_t$:
  \begin{align}\label{eq:ht_ineq_proof_psi}
    h_t & \leqslant \sqrt{\beta}^t h_0 \frac{c_1}{\left((1-c)\lambda_k^+ + c\sqrt{\beta}\right)^t} + \sum_{s=0}^{t-1} (s+1) \sqrt{\beta}^s \cdot \Vert \bPsi_{t-1-s} \Vert_2  \cdot \frac{c_1}{\left((1-c)\lambda_k^+ + c\sqrt{\beta}\right)^{s+1}}.
  \end{align}
  Furthermore, from the definition of $\bPsi_t$ in \eqref{eq:psi_def}, we have for all $t\geqslant 0$,
  \begin{align*}
    \Vert \bPsi_t \Vert_2 & = \left\Vert \left(\bU_{-k}^\top \bXi_t\right) \left(\bU_k^\top \bX_t\right)^{-1} \right\Vert_2 \leqslant \Vert \bU_{-k}^\top \bXi_t \Vert_2 \Vert (\bU_k^\top \bX_t)^{-1} \Vert_2 \\
    & \leqslant c\lambda_k\Delta\varepsilon (\cos \theta_k(\bU_k, \bX_t))^{-1}  = c\lambda_k\Delta\varepsilon (1 + h_t^2)^{1/2} \\
    & \leqslant c\lambda_k\Delta\varepsilon (1 + h_t),
  \end{align*}
  where the second inequality is from the noise condition on $\bU_{-k}^\top\bXi_t$ \eqref{eq:noisecond1}, the second equality is from the trigonometric identity $\cos \theta = (1 + \tan^2 \theta)^{-1/2}$ and the last inequality holds because $\sqrt{1+x^2} \leqslant 1 + x$ for all $x\geqslant 0$. Plugging the bound on $\Vert\bPsi_t\Vert_2$ into the bound on $h_t$ \eqref{eq:ht_ineq_proof_psi} gives:
  \begin{align}\label{eq:ht_ineq_proof_cc1}
    h_t & \leqslant c_1 \gamma^t h_0 + \frac{c c_1 \lambda_k \Delta\varepsilon}{(1-c)\lambda_k^+ + c\sqrt{\beta}} \sum_{s=0}^{t-1} (s+1) \gamma^s (1 + h_{t-1-s}).
  \end{align}
  Since $\lambda_k^+ \geqslant \lambda_k/2$, we get
  \begin{align*}
    \frac{c c_1 \lambda_k}{(1-c)\lambda_k^+ + c\sqrt{\beta}} \leqslant \frac{c c_1 \lambda_k}{(1-c)\lambda_k/2} = c_2,
  \end{align*}
  which gives the desired result when plugged into \eqref{eq:ht_ineq_proof_cc1}.
\end{proof}

Finally, we show that any sequence $\{h_t\}$ satisfying the recurrence inequality \eqref{eq:ht_reccurence} decays exponentially as $(1-\sqrt{\Delta}/2)^t$ until it becomes $\varepsilon$-small.

\begin{lemma}\label{lemma:ht_exp_decay}
    For all $t\geqslant 0$,
  \begin{align}\label{eq:ht_exp_decay}
    h_t \leqslant \varepsilon / 2 + h_0 \left(1 - \sqrt{\Delta}/2\right)^t.
  \end{align}
\end{lemma}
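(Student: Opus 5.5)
We prove \eqref{eq:ht_exp_decay} by strong induction on $t$, using only the recurrence inequality \eqref{eq:ht_reccurence} of \cref{lemma:ht_bound}. The case $t=0$ is trivial. Write $\rho := 1-\sqrt{\Delta}/2$. Assume \eqref{eq:ht_exp_decay} holds for all indices $0,\dots,t-1$. Plugging $h_{t-1-s}\leqslant \varepsilon/2 + h_0\rho^{t-1-s}$ into \eqref{eq:ht_reccurence} gives
\begin{align*}
  h_t \leqslant c_1\gamma^t h_0 + \eta\left(1+\tfrac{\varepsilon}{2}\right)\sum_{s\geqslant 0}(s+1)\gamma^s + \eta h_0 \rho^{t-1}\sum_{s=0}^{t-1}(s+1)\left(\tfrac{\gamma}{\rho}\right)^s .
\end{align*}
It then suffices to show three things. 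First, the constant term is at most $\varepsilon/2$. Second, the geometric terms together are at most $h_0\rho^t$.

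The key quantitative input is a bound on $\gamma$ in terms of $\Delta$. Using $\sqrt{\beta} = \lambda_k(1-\Delta)/2$ and $\lambda_k^+ = \tfrac{\lambda_k}{2}(1+\sqrt{1-4\beta/\lambda_k^2}) \geqslant \tfrac{\lambda_k}{2}(1+\sqrt{\Delta})$, since $1-(1-\Delta)^2 = \Delta(2-\Delta)\geqslant\Delta$, we get
\[
  \gamma \leqslant \frac{1-\Delta}{(1-c)(1+\sqrt{\Delta}) + c(1-\Delta)} .
\]
From this I would derive an explicit bound of the form $\gamma \leqslant 1 - a\sqrt{\Delta}$ with $a$ comfortably larger than $1/2$; with $c=1/32$, something like $a\approx 0.9$ is plausible for $\Delta\in(0,1)$. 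Consequently $1-\gamma \geqslant a\sqrt\Delta$, and $\gamma/\rho \leqslant 1 - b\sqrt{\Delta}$ for some $b>0$, e.g. $b\approx 0.4$, since $\rho\geqslant 1/2$ and $\rho - \gamma\geqslant (a-1/2)\sqrt\Delta$.

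\textbf{Constant term.} We have $\sum_s (s+1)\gamma^s = (1-\gamma)^{-2} \leqslant 1/(a^2\Delta)$. Since $\eta = c_2\Delta\varepsilon$, the constant term is at most $c_2\varepsilon(1+\varepsilon/2)/a^2 \leqslant \tfrac{3}{2}c_2\varepsilon/a^2$. With $c_2 = 2/15$ this is below $\varepsilon/2$ provided $a^2 \geqslant 2/5$, which holds.

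\textbf{Geometric terms.} Similarly, $\sum_s (s+1)(\gamma/\rho)^s \leqslant 1/(b^2\Delta)$, so the last term is at most $h_0\rho^{t}\cdot c_2\varepsilon/(\rho b^2)$. This is a small multiple of $h_0\rho^t$ because $\varepsilon<1$, $\rho\geqslant1/2$ and $c_2/b^2$ is moderate. The remaining term is $c_1\gamma^t h_0$, and here lies the main obstacle: $c_1 = 31/15 > 1$, so $c_1\gamma^t\leqslant \rho^t$ fails for small $t$. For example, at $t=1$ we would need $c_1\gamma \leqslant \rho$, which is false when $\Delta$ is small.

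I would try two approaches to this obstacle. The first is to exploit slack in the constant term: the $\varepsilon/2$ budget is not fully used, but this does not help, since that slack is additive while $h_0$ can be large. The second is to sharpen the first-term bound directly from \cref{lemma:ctpt_bound}. The bound $(1+\kappa)/(\lambda_k^t(r_+^t+\kappa r_-^t))$ is actually at most $1$ at $t=0$ and roughly $2/(1+\kappa(r_-/r_+)^t)\,(\lambda_kr_+)^{-t}$ in general. So I would aim to show $c_1\gamma^t \leqslant \rho^t$ once $t\geqslant t_0 \sim 1/\sqrt\Delta$, using $\gamma/\rho\leqslant 1-b\sqrt\Delta$ so that $(\gamma/\rho)^{t}\leqslant e^{-bt\sqrt\Delta}\leqslant 1/c_1$. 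For $t<t_0$, I would instead use the exact product bound $(1+\kappa)/(r_+^t+\kappa r_-^t)$ scaled by $\lambda_k^{-t}$ together with $p_t$, and check that $(1+\kappa)\sqrt\beta^t/(\lambda_k^t(r_+^t+\kappa r_-^t))\leqslant\rho^t$. This holds at $t=0$ with equality, and I expect it to hold for all $t$ because $r_-\approx r_+$ in the early regime; this is where I expect most of the careful computation to be. If this fails, I would fall back to absorbing the constant into a slightly weaker rate or to a bound $h_t\leqslant\varepsilon/2+c_1h_0\rho^t$, which still yields the theorem with $\log(2c_1h_0/\varepsilon)$.
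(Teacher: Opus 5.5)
You correctly located the obstacle ($c_1>1$), but your way around it fails, and no other argument can succeed. Your small-$t$ check $(1+\kappa)\sqrt{\beta}^{\,t}/(\lambda_k^t(r_+^t+\kappa r_-^t))\leqslant\rho^t$ is false already at $t=1$. Since $r_++r_-=1-c\Delta\leqslant1$, $r_-\leqslant1/2$ and $\kappa\geqslant1$, we have $r_++\kappa r_-\leqslant(1+\kappa)/2$. So the left side is at least $2\sqrt{\beta}/\lambda_k=1-\Delta$, which exceeds $\rho=1-\sqrt{\Delta}/2$ whenever $\Delta<1/4$.

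This is not slack in the estimates: \eqref{eq:ht_exp_decay} is false as stated. Take $k=1$, $d=2$, $\mathbf{A}=\mathrm{diag}(\lambda_k,2\sqrt{\beta})$ and zero noise, so all hypotheses hold. Then $h_1=(2\sqrt{\beta}/\lambda_k)h_0=(1-\Delta)h_0$ exactly. This violates the claimed bound as soon as $\Delta<1/4$ and $h_0>\varepsilon/(\sqrt{\Delta}-2\Delta)$; for instance $\Delta=0.01$, $h_0=10$, $\varepsilon=1/2$ gives $9.9>9.75$.

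The paper's generating-function proof reaches the prefactor $1$ only through an error. As $|z|\to\infty$, $Y(z)\to(1-c_1)h_0\neq0$, because after clearing denominators the numerator and denominator both have degree $3$. So its partial-fraction decomposition is missing a constant term $\kappa_0=(1-c_1)h_0$, which affects only $y_0$. Evaluating at $z=0$ then gives $\kappa_1+\kappa_++\kappa_-=c_1h_0$, not $h_0$; for $\eta=0$ one sees directly that $y_t=c_1\gamma^th_0$ for $t\geqslant1$. The correct conclusion is exactly your fallback, $h_t\leqslant\varepsilon/2+c_1h_0\rho^t$. It still gives \cref{th:napm} with $\log(2c_1h_0/\varepsilon)$ in $T$, the same order. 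Downstream uses, such as $\alpha_0$ in the ADePM analysis, must replace $h_0$ by $c_1h_0$.

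Your induction, as set up, does not close with prefactor $c_1$ either. After the term $c_1\gamma^th_0$, the remaining budget $c_1h_0(\rho^t-\gamma^t)$ is only a fraction $O(t\sqrt{\Delta})$ of $c_1h_0\rho^t$ for small $t$. Your bound $\sum_s(s+1)x^s\leqslant(1-x)^{-2}$ with $x:=\gamma/\rho$ only controls the propagated-noise term by the fraction $q:=\eta\rho/(\rho-\gamma)^2$ of $c_1h_0\rho^t$, for every $t$. That exceeds the budget at small $t$ and forces the larger prefactor $c_1/(1-q)$.

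The clean fix is the exact finite sum $\sum_{s=0}^{t-1}(s+1)x^s=(1-(t+1)x^t+tx^{t+1})/(1-x)^2\leqslant(1-x^t)/(1-x)^2$. The geometric part is then at most $c_1h_0\rho^t\,[x^t+(1-x^t)q]$, so the induction closes with prefactor exactly $c_1$ as soon as $q\leqslant1$. This holds for every $\varepsilon\in(0,1)$. Your bound on $\gamma$ simplifies to $\gamma\leqslant(1-\sqrt{\Delta})/(1-c\sqrt{\Delta})$, hence $1-\gamma\geqslant(1-c)\sqrt{\Delta}$ and $\rho-\gamma\geqslant(\tfrac12-c)\sqrt{\Delta}$. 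Meanwhile $\eta\rho\leqslant\tfrac{2}{15}\Delta<(\tfrac{15}{32})^2\Delta$. Your constant-term estimate is fine as written.

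Since $q\leqslant1$ is equivalent to $1/\rho_-\leqslant\rho$, this elementary induction is a cleaner substitute for the corrected generating-function argument. It even proves $1/\rho_-\leqslant1-\sqrt{\Delta}/2$ for all $\varepsilon<1$. The paper's displayed chain for that step needs $\tfrac{1+\sqrt{5}}{2}\sqrt{c_2\varepsilon}\leqslant\tfrac12-c$, i.e.\ $\varepsilon\lesssim0.63$.
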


\begin{proof}
  Let $\{y_t\}$ be the sequence defined as $y_0 = h_0$ and for all $t\geqslant 1$,
  \begin{align}\label{eq:yt_def}
    y_t & = c_1 \gamma^t h_0 + \eta \sum_{s=0}^{t-1} (s+1) \gamma^s (1 + y_{t-1-s}).
  \end{align}
  Then, for all $t \geqslant 0$, we have $h_t \leqslant y_t$ (this can be shown by a simple induction using the recurrence inequality \eqref{eq:ht_reccurence}). The proof will then consist in upper bounding $y_t$ by the right-hand side of \eqref{eq:ht_exp_decay}.

  To do so, consider the generating function of the sequence $\{y_t\}$ defined as $Y(z) := \sum_{t=0}^{+\infty} y_t z^t$. We will use the following identities, which are true for all $\vert z \vert < 1/\gamma$:
  \begin{align*}
    \sum_{t=0}^{+\infty} \gamma^t z^t & = \frac{1}{1 - \gamma z}, \quad \quad
    \sum_{t=0}^{+\infty} (t+1) \gamma^t z^t = \frac{1}{(1 - \gamma z)^2}.
  \end{align*}
  
  Then, by multiplying both sides of \eqref{eq:yt_def} by $z^t$ and summing over all $t\geqslant 1$, we have
  \begin{align*}
    Y(z) - y_0 & = c_1 h_0 \sum_{t=1}^{+\infty} \gamma^t z^t + \eta z \sum_{t=1}^{+\infty} \sum_{s=0}^{t-1} (s+1) \gamma^s z^{t-1} + \eta z \sum_{t=1}^{+\infty} \sum_{s=0}^{t-1} (s+1) \gamma^s y_{t-1-s} z^{t-1}, \\
    Y(z) - h_0 & = c_1 h_0 \left( \frac{1}{1 - \gamma z} - 1 \right) + \eta z\frac{1}{(1 - \gamma z)^2} \frac{1}{1-z} + \eta z Y(z) \frac{1}{(1 - \gamma z)^2}, \\
    Y(z) \left( 1 - \frac{\eta z}{(1 - \gamma z)^2} \right) & = h_0 \left( 1 + c_1  \frac{\gamma z}{1 - \gamma z}  \right) + \eta z \frac{1}{(1 - \gamma z)^2}\frac{1}{1-z}, \\
    Y(z) & = \frac{h_0 \left( 1 + c_1 \frac{\gamma z}{1 - \gamma z} \right) + \eta z \frac{1}{(1 - \gamma z)^2}\frac{1}{1-z}}{1 - \frac{\eta z}{(1 - \gamma z)^2}}.
  \end{align*}
  Here, the second inequality used the fact that the generating function of the convolution of two sequences is the product of the two associated generating functions. These equalities are true for all $z$ whose magnitude is less than the pole of the right-hand side with lowest magnitude, which is non-zero. Indeed, the poles of $Y(z)$ are $1$ and the roots of the polynomial $(1-\gamma z)^2 - \eta z$, which are
  \begin{align*}
    \rho_\pm := \frac{2\gamma + \eta \pm \sqrt{(2\gamma + \eta)^2 - 4\gamma^2}}{2\gamma^2} > 0 .
  \end{align*}

  We can then decompose $Y(z)$ into partial fractions:
  \begin{align}\label{eq:yz_partial}
    Y(z) = \frac{\kappa_1}{1-z} + \frac{\kappa_+}{1 - z/\rho_+} + \frac{\kappa_-}{1 - z/\rho_-},
  \end{align}
  from which we deduce that for all $t\geqslant 0$,
  \begin{align}
    y_t & = \kappa_1 + \kappa_- \left(\frac{1}{\rho_-}\right)^t + \kappa_+ \left(\frac{1}{\rho_+}\right)^t \leqslant \kappa_1 + (\kappa_+ + \kappa_-)\left(\frac{1}{\rho_-}\right)^t\nonumber\\
    h_t &  \leqslant \kappa_1 + (\kappa_+ + \kappa_-)\left(\frac{1}{\rho_-}\right)^t \label{eq:ht_upper}.
  \end{align}

  Here, the first inequality is due to the fact that $\kappa_+$ is non-negative. Indeed, multiplying both sides of \eqref{eq:yz_partial} by $(1 - z/\rho_+)$ and taking the limit $z \to \rho_+$ gives after simplification
  \begin{align*}
    \kappa_+ & = \frac{h_0(1-\gamma\rho_+)(1-\gamma\rho_+ + c_1\gamma\rho_+) + \eta\rho_+/(1-\rho_+)}{1 - \rho_+/\rho_-},
  \end{align*}
  which is non-negative since $\rho_+ > \rho_-$, $1 - \gamma \rho_+ < 0$, $1 - \rho_+ < 0$ and $c_1 \geqslant 1$.

  Multiplying both sides of the partial fraction decomposition of $Y(z)$ in \eqref{eq:yz_partial} by $(1-z)$ and taking the limit $z \to 1$ gives
  \begin{align}\label{eq:kappa1def}
    \kappa_1 & = \frac{\eta}{(1-\gamma)^2 - \eta}.
  \end{align}

  Evaluating \eqref{eq:yz_partial} at $z=0$ gives
  \begin{align}\label{eq:kappa_plusminus}
    \kappa_+ + \kappa_- & = h_0 - \kappa_1.
  \end{align}

  We now analyze in detail the term $1-\gamma$. We have
  \begin{align}
    1-\gamma & = 1 - \frac{\sqrt{\beta}}{(1-c)\lambda_k^+ + c\sqrt{\beta}} \nonumber\\
    & \geqslant 1 - (1-c)\frac{\sqrt{\beta}}{\lambda_k^+} - c \nonumber\\
    & = (1-c)\left(1 - \frac{\sqrt{\beta}}{\lambda_k^+}\right), \label{eq:1-gamma_sqrtbeta}
  \end{align}
  where the inequality is due to the convexity of $u\mapsto 1/u$. Rewriting $1-\sqrt{\beta}/\lambda_k^+$ in terms of $\Delta = 1 - 2\sqrt{\beta}/\lambda_k$ gives
  \begin{align}\label{eq:1_gamma_fdelta}
    1 - \frac{\sqrt{\beta}}{\lambda_k^+} & = \frac{\lambda_k + \sqrt{\lambda_k^2 - 4\beta} - 2\sqrt{\beta}}{\lambda_k + \sqrt{\lambda_k^2 - 4\beta}} = \sqrt{\Delta} \underbrace{\frac{\sqrt{2-\Delta} - \sqrt{\Delta}}{1 - \Delta}}_{=: f(\Delta)}.
  \end{align}
  Then, according to \cref{lemma:f_delta_bounds}, for all $\Delta' \in (0,1)$, we have $f(\Delta' ) \geqslant 1$, so that
  \begin{align*}
    1 - \frac{\sqrt{\beta}}{\lambda_k^+} \geqslant \sqrt{\Delta}.
  \end{align*}

  Plugging this inequality into the previously found lower bound on $(1-\gamma)$ \eqref{eq:1-gamma_sqrtbeta} gives
  \begin{align}\label{eq:1minusgamma_ineq}
    1 - \gamma \geqslant (1-c) \sqrt{\Delta}.
  \end{align}
  Then, $(1 - \gamma)^2 - \eta \geqslant (1-c)^2 \Delta - c_2 \varepsilon \Delta > 0$, so that $\kappa_1 > 0$. We deduce from \eqref{eq:kappa_plusminus} that $\kappa_+ + \kappa_- \leqslant h_0$. Furthermore, from the expression of $\kappa_1$ in \eqref{eq:kappa1def}, we have
  \begin{align*}
    \kappa_1 & = \frac{c_2 \Delta\varepsilon}{(1-\gamma)^2 - c_2 \Delta\varepsilon} \leqslant \frac{c_2 \Delta\varepsilon}{(1-c)^2 \Delta - c_2 \Delta\varepsilon} \leqslant \frac{c_2 \varepsilon}{(1 - c)^2 - c_2} \leqslant \varepsilon/2.
  \end{align*}
  Plugging these inequalities into the upper bound on $h_t$ \eqref{eq:ht_upper} gives
  \begin{align}\label{eq:ht_intermediate}
    h_t & \leqslant  \varepsilon/2 + h_0 \left(\frac{1}{\rho_-}\right)^t.
  \end{align}

  Finally, we analyze the factor $1/\rho_-$. We have
  \begin{align*}
    \frac{1}{\rho_-} & = \gamma \frac{1}{1 + \frac{\eta}{2\gamma} - \sqrt{\left(1 + \frac{\eta}{2\gamma}\right)^2 - 1}} = \gamma \left(1 + \frac{\eta}{2\gamma} + \sqrt{\left(1 + \frac{\eta}{2\gamma}\right)^2 - 1}\right) \\
    & = \gamma + \frac{\eta}{2} + \sqrt{\left( \gamma + \frac{\eta}{2}\right)^2 - \gamma ^2} = \gamma  + \frac{\eta}{2} + \sqrt{\eta \gamma + \frac{\eta^2}{4}} \\
    & \leqslant \gamma + \frac{\eta}{2} + \sqrt{\eta + \eta^2 / 4},
  \end{align*}
  where the last inequality is because $\gamma \leqslant 1$. Since, $\eta = c_2 \varepsilon \Delta \in [0,1]$, we have $\eta^2 \leqslant \eta \leqslant \sqrt{\eta}$, so that
  \begin{align}
    \frac{1}{\rho_-} & \leqslant \gamma + \frac{1 + \sqrt{5}}{2}\sqrt{\eta} \leqslant 1 - (1 - c)\sqrt{\Delta} + \frac{1 + \sqrt{5}}{2}\sqrt{c_2 \varepsilon \Delta} \nonumber\\
    \frac{1}{\rho_-} & \leqslant  1 - \sqrt{\Delta}/2 \label{eq:rho_minus_ineq},
  \end{align}
  where the second equality follows from the inequality of $1-\gamma$ \eqref{eq:1minusgamma_ineq}. Plugging the above bound on $1/\rho_-$ \eqref{eq:rho_minus_ineq} into the inequality on $h_t$ in \eqref{eq:ht_intermediate} gives the desired result:
  \begin{align*}
    h_t & \leqslant \varepsilon/2 + h_0 \left(1 - \sqrt{\Delta}/2\right)^t.
  \end{align*}
\end{proof}

Using the geometric decay of $h_t$ we just showed, we can finally prove \cref{th:napm}.

\begin{proof}[Proof of \cref{th:napm}]
  The proof directly follows from \cref{lemma:ht_exp_decay} and the definition of $h_t = \tan \theta_k(\bU_k, \bX_t)$. Indeed, from the inequality $\sin \theta \leqslant \tan \theta$ that holds for all $\theta \in [0,\pi/2)$, we have for all $t\geqslant 0$,
  \begin{align*}
    \sin \theta_k(\bU_k, \bX_t) \leqslant h_t.
  \end{align*}
  Thus, from \cref{lemma:ht_exp_decay}, we have for all $t\geqslant 0$,
  \begin{align*}
    \sin \theta_k(\bU_k, \bX_t) & \leqslant \varepsilon/2 + \tan \theta_k(\bU_k, \bX_0) (1 - \sqrt{\Delta}/2)^t.
  \end{align*}
  For all $t\geqslant T$ where $T$ is defined in the statement of \cref{th:napm_apdx}, we have that the second term on the right-hand side is less than $\varepsilon/2$,  so that $\sin \theta_k(\bU_k, \bX_t) \leqslant \varepsilon$, which concludes the proof.
  
\end{proof}

We conclude this section by proving the technical lemma used in the proof of \cref{lemma:ht_exp_decay}, which bounds a function $f$ of the gap $\Delta$ over $[0,1)$. We prove additional bounds on this function beyond the one that is necessary for the proof of \cref{th:napm}, as these additional bounds will be useful in the proofs of the theorems in \cref{sec:napm_opt}.

\begin{lemma}\label{lemma:f_delta_bounds}
  For all $\Delta \in [0,1)$, let
  \begin{align*}
    f(\Delta) := \frac{\sqrt{2-\Delta} - \sqrt{\Delta}}{1 - \Delta}.
  \end{align*}
  Then, f is decreasing over $[0,1)$. In particular, we have $1 \leqslant f(\Delta) \leqslant \sqrt{2}$ for all $\Delta \in [0,1)$, and $f(\Delta) \leqslant \sqrt{3} - \sqrt{2} < 1$ for all $\Delta \in [1/2, 1)$.
\end{lemma}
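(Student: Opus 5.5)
The whole statement rests on one simplification: rationalize the numerator of $f$. Since $(\sqrt{2-\Delta}-\sqrt{\Delta})(\sqrt{2-\Delta}+\sqrt{\Delta}) = 2-2\Delta = 2(1-\Delta)$, for every $\Delta\in[0,1)$ we can cancel the factor $1-\Delta$ and write
\begin{align*}
  f(\Delta) = \frac{2}{g(\Delta)}, \qquad g(\Delta) := \sqrt{2-\Delta}+\sqrt{\Delta}.
\end{align*}
This removes the apparent singularity at $\Delta=1$. Every claim about $f$ then reduces to a claim about $g$.

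For monotonicity, I will show that $g$ is increasing on $[0,1]$. For $\Delta\in(0,1)$ we have $g'(\Delta) = \frac{1}{2\sqrt{\Delta}} - \frac{1}{2\sqrt{2-\Delta}}$. This is positive because $\Delta < 2-\Delta$ when $\Delta<1$. Equivalently, $g$ is concave and symmetric about $\Delta=1$, so it increases up to its maximum at $1$. Since $g>0$ and is increasing, $f=2/g$ is decreasing on $[0,1)$.

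The two-sided bounds follow from the endpoint values. At the left end, $f(0)=2/\sqrt{2}=\sqrt{2}$. As $\Delta\to1^-$, $g(\Delta)\to 2$, so $f(\Delta)\to 1$. Monotonicity then gives $1\leqslant f(\Delta)\leqslant\sqrt{2}$ on $[0,1)$.

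For the bound on $[1/2,1)$, monotonicity gives $f(\Delta)\leqslant f(1/2) = \frac{2}{\sqrt{3/2}+\sqrt{1/2}} = \frac{2\sqrt{2}}{\sqrt{3}+1} = \sqrt{6}-\sqrt{2}\approx 1.035$. I expect this step to be the real obstacle, because the constant in the statement does not survive this check. The stated bound $\sqrt{3}-\sqrt{2}<1$ is incompatible with the just-established fact $f\geqslant 1$; indeed $f(1/2)\approx 1.035$. So the bound as written is false, and the most plausible intended constant is $\sqrt{6}-\sqrt{2}$. Before relying on the last claim in the proofs of \cref{sec:napm_opt}, I would verify which form is actually used there, such as $f(\Delta)\leqslant f(1/2)$ or a bound that is strictly below $\sqrt{2}$. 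I would then state that corrected form, $f(\Delta)\leqslant\sqrt{6}-\sqrt{2}$ for $\Delta\in[1/2,1)$.
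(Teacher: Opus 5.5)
Your proof is the same as the paper's: rationalize to $f=2/g$ with $g(\Delta)=\sqrt{2-\Delta}+\sqrt{\Delta}$, show $g'>0$ on $(0,1)$, and read $1\leqslant f\leqslant\sqrt{2}$ off the endpoints. You are also right that the final claim is false: the paper's proof simply asserts $f(1/2)=\sqrt{3}-\sqrt{2}$, whereas in fact $f(1/2)=\sqrt{6}-\sqrt{2}\approx 1.035$.

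On your downstream question, that constant is used only in the proof of \cref{th:napm_complexity_lb}, to get $1-\alpha\sqrt{\Delta}>0$ for $\Delta\geqslant 1/2$. With the corrected constant this positivity fails for $\Delta>(2+\sqrt{3})/4$. Moreover, $\sqrt{\beta}/\lambda_k^+=1-\sqrt{\Delta}f(\Delta)\to 0$ as $\Delta\to 1$, so that lower bound can only hold with a uniform constant for $\Delta$ bounded away from $1$.
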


\begin{proof}
  For all $\Delta \in [0,1)$, we have that
  \begin{align*}
    f(\Delta) = 2 \frac{1-\Delta}{\sqrt{2-\Delta} + \sqrt{\Delta}} \frac{1}{1-\Delta} = \frac{2}{\sqrt{2-\Delta} + \sqrt{\Delta}}.
  \end{align*}
  Letting $g(\Delta) := \sqrt{2-\Delta} + \sqrt{\Delta}$, we have $f(\Delta) = 2/g(\Delta)$. $g$ is differentiable over $(0,1)$ with derivative $g'(\Delta) = -1/(2\sqrt{2-\Delta}) + 1/(2\sqrt{\Delta})$. Thus, $g'(\Delta) > 0$ for all $\Delta \in (0,1)$, so that $g$ is increasing over $[0,1)$. Thus, $f$ is decreasing over $[0,1)$, for all $\Delta \in [0,1)$, 
  \begin{align*}
    1 = \frac{2}{g(1)} \leqslant f(\Delta) \leqslant \frac{2}{g(0)} = \sqrt{2}.
  \end{align*}
  and for all $\Delta \in [1/2, 1)$,
  \begin{align*}
    f(\Delta) \leqslant f(1/2) = \sqrt{3} - \sqrt{2} < 1.
  \end{align*}
\end{proof}

\subsection{Behavior of ANPM with $0 <\beta < \lambda_{k+1}^2/4$}\label{apdx:anpm_beta_small}

In this section, we prove a result which is not stated in \cref{sec:napm}, which guarantees that performing ANPM with a momentum parameter $\beta$ smaller than $\lambda_{k+1}^2/4$ (a regime not covered by \cref{th:napm}) still improves the convergence rate compared to the non-accelerated noisy power method. This result is interesting as it shows that using a small momentum parameter does not make convergence worse than NPM, and that it can still be useful in practice even when the condition $\beta \geqslant \lambda_{k+1}^2/4$ is not satisfied. For all of this section, we will use the same notations as those used in \cref{appdx:napm}.

\begin{theorem}\label{th:anpm_smallbeta}
  Let $\varepsilon \in (0,1)$, let $\bA\succeq\mathbf{0}$ such that $\lambda_k>\lambda_{k+1}$ and let $\bX_0$ such that $\cos\theta_k(\bU_k,\bX_0) >0$. Consider the ANPM iterates $\{\bX_t\}_{t\geqslant 0}$ defined by \eqref{eq:qrx1r1}-\eqref{eq:anpm_appdx} with momentum parameter $\beta > 0$ satisfying $\lambda_{k+1} > 2\sqrt{\beta}$ and perturbations $\{\bXi_t\}_{t\geqslant 0}$ satisfying 
  \begin{align}
    & \Vert \bU_{-k}^\top \bXi_t \Vert_2 \leqslant c(\lambda_k-\lambda_{k+1})\varepsilon, \label{eq:noisecond_alpha}\\
    & \Vert \bU_{k}^\top \bXi_t \Vert_2 \leqslant c(\lambda_k-\lambda_{k+1})\cos\theta_k(\bU_k, \bX_t), \label{eq:noisecond_beta}
  \end{align}
  where $c = 1/32$. Then, for all $t \geqslant T$, $\sin\theta_k(\bU_k, \bX_t) \leqslant \varepsilon$, where
  \begin{align*}
      T  = \mathcal{O}\left({\frac{\lambda_k^+}{\lambda_k^+ - \lambda_{k+1}^+}} \log\left(\frac{\tan\theta_k(\bU_k, \bX_0)}{\varepsilon}\right)\right).
  \end{align*}
\end{theorem}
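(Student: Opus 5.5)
The plan is to rerun the proof of \cref{th:napm} with one change: the eigenvalues of $\bLambda_{-k}$ now lie in $[0,\lambda_{k+1}]$ with $\lambda_{k+1} > 2\sqrt{\beta}$, so they are no longer inside the oscillation interval of the Chebyshev polynomials. \cref{lemma:well_defined}, \cref{lemma:ht_3term} and \cref{lemma:htct} use no assumption on $\lambda_{k+1}$, only \eqref{eq:noisecond2}-type control of $\bE_t$ and $\beta \leqslant \lambda_k^2/4$. So they carry over once we set $\Delta' := (\lambda_k - \lambda_{k+1})/\lambda_k$, since then \eqref{eq:noisecond_beta} gives $\Vert\bE_t\Vert_2 \leqslant c\Delta'$. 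The closed form
\[
\bH_t = p_t(\bLambda_{-k})\bH_0\bC_t^{-1} + \sum_{s} q_s(\bLambda_{-k})\bPsi_{t-1-s}\bC_{t-1-s}\bC_t^{-1}
\]
therefore remains valid.

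\textbf{Bounding the factors.} Since $2\sqrt{\beta} < \lambda_{k+1}$, for $x\in[2\sqrt\beta,\lambda_{k+1}]$ the roots $x^\pm$ from \cref{lemma:ptqt_closed} are real, with $0<x^-\leqslant x^+\leqslant \lambda_{k+1}^+$, and $x\mapsto x^+$ is increasing. On $[0,2\sqrt\beta]$ we still have $|x^\pm| = \sqrt\beta \leqslant \lambda_{k+1}^+$. Hence $\Vert p_t(\bLambda_{-k})\Vert_2 \leqslant (\lambda_{k+1}^+)^t$ and $\Vert q_t(\bLambda_{-k})\Vert_2 \leqslant (t+1)(\lambda_{k+1}^+)^t$. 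For $\bC_t$, I would redo \cref{lemma:gt_inv_bound} with $c\Delta$ replaced by $c\Delta'$. The Riccati comparison sequence has characteristic roots $r_\pm$ of $r^2-(1-c\Delta')r+\beta/\lambda_k^2$. The discriminant is positive because $(1-c\Delta')\lambda_k = (1-c)\lambda_k + c\lambda_{k+1} > 2\sqrt\beta$. The same concavity argument then gives $\lambda_k r_+ \geqslant (1-c)\lambda_k^+ + c\lambda_{k+1}^+$. One must recheck that $\kappa$ stays bounded by an absolute constant. This needs $\sqrt{(1-c\Delta')^2 - 4\beta/\lambda_k^2}$ to be at least a constant multiple of $c\Delta'$, and that holds because $\sqrt{1-4\beta/\lambda_k^2} \geqslant \sqrt{1-\lambda_{k+1}^2/\lambda_k^2} \geqslant \Delta'$. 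Then $\Vert\bC_{t-1-s}\bC_t^{-1}\Vert_2 \leqslant c_1/\mu^{s+1}$ with $\mu := (1-c)\lambda_k^+ + c\lambda_{k+1}^+$.

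\textbf{Recurrence and conclusion.} Exactly as in \cref{lemma:ht_bound}, these bounds give
\[
h_t \leqslant c_1\gamma'^t h_0 + \eta'\sum_{s=0}^{t-1}(s+1)\gamma'^s(1+h_{t-1-s}),
\]
with $\gamma' := \lambda_{k+1}^+/\mu$. Using \eqref{eq:noisecond_alpha} and $\lambda_k^+ \geqslant \lambda_k/2$, the constant satisfies $\eta' \leqslant c_1 c\lambda_k\Delta'\varepsilon/\mu \lesssim \varepsilon(\lambda_k-\lambda_{k+1})/\lambda_k^+$. By convexity, $1-\gamma' \geqslant (1-c)(1-\lambda_{k+1}^+/\lambda_k^+) =: (1-c)\delta$. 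The generating-function argument of \cref{lemma:ht_exp_decay} needs only $\gamma'<1$ and $\eta' \ll (1-\gamma')^2$. With these it yields $h_t \leqslant \varepsilon/2 + h_0(1-\delta/2)^t$, which gives the claimed $T$ since $-\log(1-\delta/2)\geqslant \delta/2$.

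\textbf{Main obstacle.} In \cref{th:napm}, the bound $\eta \lesssim \varepsilon\Delta$ compared favourably with $(1-\gamma)^2 \gtrsim \Delta$. Here the comparison must be between $\eta' \sim \varepsilon(\lambda_k-\lambda_{k+1})/\lambda_k$ and $\delta^2$. Only the bound $\eta' \lesssim \varepsilon\delta$ may be available, which is what the one-step power-method analysis needs, and $\sqrt{\eta'}$ terms in $1/\rho_-$ then threaten the rate. I would first prove the key inequality $\lambda_k^+-\lambda_{k+1}^+ \geqslant \lambda_k-\lambda_{k+1}$, which follows from $\tfrac{d}{dx}x^+ = \tfrac12 + \tfrac{x}{2\sqrt{x^2-4\beta}} \geqslant 1$. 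Then I would redo the bound on $1/\rho_-$ more carefully: use $\sqrt{\eta\gamma + \eta^2/4}$ together with the fact that $\eta' \leqslant c''\varepsilon\,\delta$ has a small constant. If the $\sqrt{\eta}$ term is still too large, I would replace the $(s+1)$ weights in $q_s$ by the sharper real-root bound $q_s(x) \leqslant (x^+)^{s}/(1-x^-/x^+)$, which removes the double pole and makes the recurrence behave like the non-accelerated NPM.
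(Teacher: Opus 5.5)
Everything up to the bound on $q_s$ matches the paper: the same $\Delta'$, the same $r_\pm$ and $\kappa$, the same $\mu=(1-c)\lambda_k^++c\lambda_{k+1}^+$, and the same bound on $p_t$. The gap is in the last step. The condition $\eta'\ll(1-\gamma')^2$, which you need to run \cref{lemma:ht_exp_decay}, does not follow from the hypotheses. With $\eta'=c_1c(\lambda_k-\lambda_{k+1})\varepsilon/\mu$ and $1-\gamma'=(1-c)(\lambda_k^+-\lambda_{k+1}^+)/\mu$,
\[
\frac{\eta'}{(1-\gamma')^2}=\frac{c_1c\,\varepsilon}{(1-c)^2}\cdot\frac{(\lambda_k-\lambda_{k+1})\,\mu}{(\lambda_k^+-\lambda_{k+1}^+)^2}.
\]
When $\beta\ll\lambda_{k+1}^2$ one has $\lambda^+\approx\lambda$, so this ratio is $\approx c_1c\,\varepsilon/((1-c)^2\Delta')$. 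Hence once $\varepsilon$ exceeds a constant multiple of $\Delta'$, you get $\eta'>(1-\gamma')^2$ and $1/\rho_->1$. The stationary term $\kappa_1$ is then not $O(\varepsilon)$ either; your ``main obstacle'' paragraph only addresses the rate.

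The cause is the bound $\Vert q_s(\bLambda_{-k})\Vert_2\le(s+1)(\lambda_{k+1}^+)^s$. It creates a spurious double pole and is tight only when $\lambda_{k+1}^-\approx\lambda_{k+1}^+$, whereas $q_s(x)\to x^s$ as $\beta\to0$. Your repairs do not close this:
\begin{itemize}
\item The inequality $\lambda_k^+-\lambda_{k+1}^+\ge\lambda_k-\lambda_{k+1}$ is true, but it only gives $\eta'\lesssim\varepsilon\delta$. Then $\sqrt{\eta'}\sim\sqrt{\varepsilon\delta}\gg\delta$, however small the constant.
\item The fallback $q_s(x)\le(x^+)^{s+1}/(x^+-x^-)$ fixes the small-$\beta$ regime but breaks the opposite one. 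The new coefficient $\eta''=\eta'\lambda_{k+1}^+/\sqrt{\lambda_{k+1}^2-4\beta}$ must satisfy $\eta''\ll1-\gamma'$. Take $2\sqrt\beta=1$, $\lambda_{k+1}=1+\tau$, $\lambda_k=1+\tau+h$ with $\tau\ll h\ll1$. Then $\eta''/(1-\gamma')$ is of order $\varepsilon\sqrt{h/\tau}$, which is unbounded as $\lambda_{k+1}\downarrow2\sqrt\beta$.
\end{itemize}
Each bound fails exactly where the other works.

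The missing idea, which is what the paper does, is not to bound $q_s$ at all. Write $q_s(x)=\sqrt\beta^{\,s}U_s(x/(2\sqrt\beta))$ with $U_s$ the Chebyshev polynomial of the second kind. This gives $|q_s(x)|\le q_s(\lambda_{k+1})$ on $[0,\lambda_{k+1}]$ and $q_s(\lambda_{k+1})\ge0$. So the comparison recurrence can carry the weights $q_s(\lambda_{k+1})b^{-s}$, with $b=\mu$ and $\eta=b\eta'$.

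The generating function of these weights is $[(1-\lambda_{k+1}^+z/b)(1-\lambda_{k+1}^-z/b)]^{-1}$. Its factor $1-\gamma z$, where $\gamma=\lambda_{k+1}^+/b$, cancels the pole of the $h_0$ term. What remains are simple poles at $1$ and at the roots of $\frac{\beta}{b^2}z^2-\frac{\lambda_{k+1}+\eta}{b}z+1$, which give
\[
\kappa_1=\frac{\eta}{b-\lambda_{k+1}+\beta/b-\eta},\qquad \frac{1}{\rho_-}=\frac{(\lambda_{k+1}+\eta)^+}{b}.
\]
These interpolate between the two regimes.

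Estimate both quantities directly:
\begin{itemize}
\item The denominator factors as $b-\lambda_{k+1}+\beta/b=(b-\lambda_{k+1}^+)(1-\lambda_{k+1}^-/b)\ge(1-c)^2(\lambda_k-\lambda_{k+1})$.
\item With $a:=\eta/(\lambda_k-\lambda_{k+1})\le c_1c$, one has $(\lambda_{k+1}+a(\lambda_k-\lambda_{k+1}))^+\le\lambda_{k+1}^++\sqrt a\,(\lambda_k^+-\lambda_{k+1}^+)$.
\end{itemize}
The natural convexity arguments (convexity of $x+\beta/x$, concavity of $x\mapsto x^+$) do not give these inequalities in the needed direction with uniform constants. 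With them you get $\kappa_1\le\varepsilon/2$ and $1/\rho_-\le1-(1-c-\sqrt{c_1c})\,\delta\le1-\delta/2$, which yields the claimed $T$.
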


\begin{remark}
  For all $\beta \in (0, \lambda_{k+1}^2/4)$, we have that 
  \begin{align*}
    \frac{\lambda_k^+}{\lambda_k^+ - \lambda_{k+1}^+} = \frac{1}{1 - \frac{\lambda_{k+1}}{\lambda_k}\frac{1 + \sqrt{1 - 4\beta/\lambda_{k+1}^2}}{1 + \sqrt{1 - 4\beta/\lambda_k^2}}} < \frac{\lambda_k}{\lambda_k-\lambda_{k+1}},
  \end{align*}
  so that the convergence rate of ANPM in this regime is better than that of the non-accelerated Noisy Power Method shown by \citet{hardt14}.
\end{remark}

\begin{proof}
  It is easy to see that under the noise condition on $\bU_k^\top\bXi_t$ \eqref{eq:noisecond_beta} and the assumption that $\beta < \lambda_{k+1}^2/4$, the proofs of \cref{lemma:well_defined,lemma:ht_3term,lemma:htct} can be adapted and that the results are valid with $\Delta$ defined to be $\Delta := 1 - \lambda_{k+1}/\lambda_k \in (0,1)$ instead of $1-2\sqrt{\beta}/\lambda_k$. We get in particular from that that all of the sequences introduced in \cref{appdx:napm} are well defined, and that for all $t\geqslant 0$,
  \begin{align}
    \bH_t = p_t(\bLambda_{-k}) \bH_0 \bC_t^{-1} + \sum_{s=0}^{t-1} q_s(\bLambda_{-k}) \bPsi_{t-1-s} \bC_{t-1-s} \bC_t^{-1}, \label{eq:ht_formula}
  \end{align}
  where $\bC_t$ and $\bPsi_t$ were defined in \cref{lemma:ht_3term}, and $p_t$ and $q_t$ are the scaled Chebyshev polynomials defined in \eqref{eq:pt_def_appdx}-\eqref{eq:qt_def_appdx}. Furthermore, the bounds shown in \cref{lemma:gt_inv_bound} are also valid with our new definition of $\Delta$, and we have that for all $t\geqslant 0$,
  \begin{align*}
    & \Vert\bG_t\Vert_2 \leqslant \frac{r_+^t + \kappa r_-^t}{r_+^{t+1} + \kappa_-^{t+1}},\\
    & r_\pm := \frac{(1-c\Delta) \pm \sqrt{(1-c\Delta)^2 - 4\beta/\lambda_k^2}}{2},\\
    & \kappa \in [0,16/15].
  \end{align*}
  Then, the proof for the bounds on $\Vert\bC_t^{-1}\Vert_2$ and $\Vert\bC_{t-1-s}\bC_t^{-1}\Vert_2$ shown in \cref{lemma:ctpt_bound} can be adapted to our new setting, using the definition of $\bC_t = \bLambda_k \bG_{t-1}^{-1}\dots \bLambda_k \bG_{0}^{-1}$. We then have for all $t\geqslant 1$, $s \in \{0,\dots,t-1\}$,
  \begin{align*}
    & \Vert \bC_t^{-1} \Vert_2 \leqslant \frac{c_1}{\lambda_k^t r_+^t},\\
    & \Vert \bC_{t-1-s} \bC_t^{-1} \Vert_2 \leqslant \frac{c_1}{\lambda_k^{s+1} r_+^{s+1}}.
  \end{align*}
  where $c_1 := 31/15$. We can then bound $\lambda_k r_+$ as
  \begin{align*}
    \lambda_k r_+ & = \frac{1}{2}\left( (1 - c\Delta)\lambda_k + \sqrt{(1 - c\Delta)^2\lambda_k^2  - 4\beta} \right) \\
    & = \frac{1}{2}\left( (1 - c\Delta)\lambda_k + \sqrt{\left((1 - c)\lambda_k + c\sqrt{\beta}\right)^2  - 4\beta} \right) \\
    & \geqslant \frac{1}{2} \left((1 - c\Delta)\lambda_k  + (1 - c) \sqrt{\lambda_k^2 - 4\beta} \right) \\
    & = (1 - c) \lambda_k^+ + c \lambda_{k+1} \geqslant (1 - c) \lambda_k^+ + c \lambda_{k+1}^+,
  \end{align*}
  where the first inequality is due to the concavity of the function $u \mapsto \sqrt{u^2 - 4\beta}$ over $[2\sqrt{\beta}, +\infty)$. The final bound we need is on $\Vert p_t(\bLambda_{-k}) \Vert_2$. Since $\lambda_{k+1} > 2\sqrt{\beta}$, the bound from \cref{lemma:ctpt_bound} is not valid anymore, and we instead have
  \begin{align*}
    & \Vert p_t(\bLambda_{-k}) \Vert_2 = p_t(\lambda_{k+1}) = \frac{(\lambda_{k+1}^+)^t + (\lambda_{k+1}^-)^t}{2} \leqslant (\lambda_{k+1}^+)^t.
  \end{align*}
  We now upper bound $h_t := \Vert \bH_t \Vert_2$ using \eqref{eq:ht_formula} and the bounds we just obtained. We then have, using the same techniques as those used in the proof of \cref{lemma:ht_bound}, for all $t\geqslant 1$,
  \begin{align}\label{eq:ht_rec_ineq}
    h_t \leqslant c_1 \gamma^t h_0 + \frac{\eta}{b} \sum_{s=0}^{t-1} q_s(\lambda_{k+1})\frac{1}{b^{s}} (1 + h_{t-1-s}),
  \end{align}
  where 
  \begin{align*}
  & b := (1-c)\lambda_k^+ + c \lambda_{k+1}^+ , \qquad \gamma := \lambda_{k+1}^+ / b \in (0, 1), \\
  & \eta := c_1 c \lambda_k \Delta \varepsilon, \qquad c_2 := 2/15.
  \end{align*}
  Note that in \eqref{eq:ht_rec_ineq}, we do not upper bound $q_s(\lambda_{k+1})$ as in \cref{lemma:ht_bound}, which would lead to a loose bound since in the case where $\lambda_{k+1} > 2\sqrt{\beta}$, $\lambda_{k+1}^+$ and $\lambda_{k+1}^-$ can be very different from each other. We will thus rely on a finer analysis of the above recurrence inequality, directly involving $q_s(\lambda_{k+1})$. Letting $y_t$ be the sequence defined as $y_0 = h_0$ and for all $t\geqslant 1$,
  \begin{align*}
    y_t & = c_1 \gamma^t h_0 + \frac{\eta}{b} \sum_{s=0}^{t-1} q_s(\lambda_{k+1})\frac{1}{b^{s}} (1 + y_{t-1-s}),
  \end{align*}
  we have for all $t\geqslant 0$, $h_t \leqslant y_t$, since $q_s(\lambda_{k+1}) \geqslant 0$ for all $s$. We can then analyze the generating function $Y(z) := \sum_{t=0}^{+\infty} y_t z^t$ of the sequence $\{y_t\}$. Using the same techniques as those used in the proof of \cref{lemma:ht_exp_decay}, we have
  \begin{align*}
    Y(z) - h_0 = c_1 h_0 \frac{\gamma z}{1 - \gamma z} + \frac{\eta z }{b} Q(z) \left(\frac{1}{1-z} + Y(z)\right),
  \end{align*}
  where $Q(z)$ is defined as 
  \begin{align*}
    Q(z) = \sum_{s=0}^{+\infty} q_s(\lambda_{k+1}) \left(\frac{z}{b}\right)^s = \frac{1}{1 - \lambda_{k+1}z/b + \beta z^2 / b^2}.
  \end{align*}
  Here, the expression of the generating function of the sequence $\{q_s(\lambda_{k+1})\}$ comes from the proof of \cref{lemma:ptqt_closed}. We deduce the following closed-form expression for $Y(z)$:
  \begin{align}\label{eq:yz_closed}
    Y(z) & = \frac{h_0 \left(1 + c_1 \frac{\gamma z}{1 - \gamma z}\right) + \frac{\eta z}{b} \frac{1}{1-z} \frac{1}{1 - \lambda_{k+1}z/b + \beta z^2 / b^2}}{1 - \frac{\eta z}{b} \frac{1}{1 - \lambda_{k+1}z/b + \beta z^2 / b^2}}.
  \end{align}
  From this expression, we can decompose $Y(z)$ into partial fractions:
  \begin{align}\label{eq:yz_partial}
    Y(z) = \frac{\kappa_1}{1-z} + \frac{\kappa_+}{1 - z/\rho_+} + \frac{\kappa_-}{1 - z/\rho_-},
  \end{align}
  where $\rho_\pm$ are the roots of the polynomial $\frac{\beta}{b^2} z^2 - \frac{\lambda_{k+1} + \eta}{b} z + 1$, i.e.,
  \begin{align*}
    \rho_\pm := b \frac{\lambda_{k+1} + \eta \pm \sqrt{(\lambda_{k+1} + \eta)^2 - 4\beta}}{2\beta}.
  \end{align*}
  Multiplying both sides of the expressions of $Y(z)$ in \eqref{eq:yz_closed} and \eqref{eq:yz_partial} by $1-z$ and evaluating at $z=1$ gives
  \begin{align}\label{eq:kappa_plusminus_smallbeta}
    \kappa_1 & = \frac{\eta / b}{1 - \lambda_{k+1}/b + \beta / b^2 - \eta / b} = \frac{\eta}{b - \lambda_{k+1} + \beta / b - \eta}.
  \end{align}
  We will first upper bound $\kappa_1$, to show that it is of order $\varepsilon$. To do so, consider the function $g(x) = x + \beta / x$ for $x > 0$. This function is convex, and we have the following inequality:
  \begin{align*}
    g(b) - g(\lambda_{k+1}^+) & \geqslant g'(\lambda_{k+1}^+)(b - \lambda_{k+1}^+) \\
    & = g'(\lambda_{k+1}^+)(1-c) (\lambda_k^+ - \lambda_{k+1}^+)
  \end{align*}
  where the equality is from the definition of $b = (1-c)\lambda_k^+ + c \lambda_{k+1}^+$. Notice that because $\lambda_{k+1}^+$ is a root of $z^2 - \lambda_{k+1} z + \beta$, we have $g(\lambda_{k+1}^+) = \lambda_{k+1}$. Similarly, we have $g(\lambda_k^+) = \lambda_k$. The above inequality can be rewritten as
  \begin{align}\label{eq:b_interm}
    b - \lambda_{k+1} + \beta / b & \geqslant g'(\lambda_{k+1}^+)(1-c) (\lambda_k^+ - \lambda_{k+1}^+).
  \end{align}
  Next, by the mean value theorem, we have that
  \begin{align*}
    0 \leqslant \frac{g(\lambda_k^+) - g(\lambda_{k+1}^+)}{\lambda_k^+ - \lambda_{k+1}^+} \leqslant \sup_{x \in [\lambda_{k+1}^+, \lambda_k^+]} \vert g'(x)\vert = g'(\lambda_{k+1}^+),
  \end{align*}
  so that we finally deduce the following lower bound on $b - \lambda_{k+1} + \beta / b$ using \eqref{eq:b_interm}:
  \begin{align*}
    b - \lambda_{k+1} + \beta / b & \geqslant (1-c) \frac{g(\lambda_k^+) - g(\lambda_{k+1}^+)}{\lambda_k^+ - \lambda_{k+1}^+} (\lambda_k^+ - \lambda_{k+1}^+) = (1-c)(\lambda_k - \lambda_{k+1}) = (1-c)\lambda_k\Delta.
  \end{align*}
  Plugging this lower bound into the expression of $\kappa_1$ in \eqref{eq:kappa_plusminus_smallbeta} gives
  \begin{align*}
    \kappa_1 & \leqslant \frac{\eta}{(1-c)\lambda\Delta - \eta} = \frac{c_1 c \lambda \Delta \varepsilon}{(1-c)\lambda\Delta - c_1c \lambda \Delta \varepsilon} = \frac{c_1 c}{1-c - c_1 c} \varepsilon \leqslant \varepsilon/2.
  \end{align*}
  We now focus on the other terms of the partial fraction decomposition of $Y(z)$ in \eqref{eq:yz_partial}. We can show that $\kappa_+ \geqslant 0$ by multiplying both sides of \eqref{eq:yz_partial} by $1 - z/\rho_+$ and taking the limit $z \to \rho_+$. Furthermore, evaluating \eqref{eq:yz_partial} at $z=0$ gives
  \begin{align*}
    \kappa_+ + \kappa_-  + \kappa _1 & = Y(0) = h_0.
  \end{align*}
  Then, we get from the partial fraction decomposition of $Y(z)$ in \eqref{eq:yz_partial} the following upper bound on $h_t$:
  \begin{align}\label{eq:ht_geometric_decay}
    h_t & \leqslant y_t \leqslant \kappa_1 + (\kappa_+ + \kappa_-) \left(\frac{1}{\rho_-}\right)^t \leqslant \varepsilon/2 + h_0 \left(\frac{1}{\rho_-}\right)^t.
  \end{align}
  What remains to show is that $\rho_-^{-t}$ decays geometrically with a rate $\tilde{\mathcal{O}}\left(\frac{\lambda_k^+}{\lambda_k^+ - \lambda_{k+1}^+}\right)$. We can write $1/\rho_-$ as
  \begin{align*}
    \frac{1}{\rho_-} & = \frac{2\beta}{b}\frac{1}{(\lambda_{k+1} + \eta) + \sqrt{(\lambda_{k+1} + \eta)^2 - 4\beta}}\\
    &  = \frac{1}{2}\frac{(\lambda_{k+1} + \eta) + \sqrt{(\lambda_{k+1} + \eta)^2 - 4\beta}}{(1-c)\lambda_k^+ + c\lambda_{k+1}^+}.
  \end{align*}
  Since $\eta \leqslant cc_1(\lambda_k - \lambda_{k+1})$, we have
  \begin{align*}
    \frac{1}{\rho_-} & \leqslant \frac{1}{2}\frac{(1-cc_1)\lambda_{k+1} + cc_1\lambda_k + \sqrt{((1-cc_1)\lambda_{k+1} + cc_1\lambda_k)^2 - 4\beta}}{(1-c)\lambda_k^+ + c\lambda_{k+1}^+}.
  \end{align*}
  Then, by concavity of the function $u \mapsto \sqrt{u^2 - 4\beta}$ over $[2\sqrt{\beta}, +\infty)$, we have
  \begin{align*}
    \frac{1}{\rho_-} & \leqslant \frac{(1-cc_1)\lambda_{k+1}^+ + c_1 c \lambda_k^+}{(1-c) \lambda_k^+ + c\lambda_{k+1}^+} \leqslant \frac{(1-cc_1)\lambda_{k+1}^+ + c_1 c \lambda_k^+}{(1-c_1c)\lambda_k^+} \leqslant 1 - (1-c_1-cc_1) \frac{\lambda_k^+ - \lambda_k^-}{\lambda_k^+}.
  \end{align*}
  Denoting $c_2 := 1 - c_1 - cc_1 > 0$, we finally have
  \begin{align*}
    h_t & \leqslant \varepsilon/2 + h_0 \left(1 - c_2 \frac{\lambda_k^+ - \lambda_{k+1}^+}{\lambda_k^+}\right)^t,
  \end{align*}
  so that for all $t \geqslant T$, $\sin \theta_k(\bU_k, \bX_t) \leqslant h_t \leqslant \varepsilon$, where $T$ is defined as 
  \begin{align*}
    T = \frac{1}{- \log\left(1 - c_2 \frac{\lambda_k^+ - \lambda_{k+1}^+}{\lambda_k^+}\right)} \log\left(\frac{2h_0}{\varepsilon} \right)= \mathcal{O}\left({\frac{\lambda_k^+}{\lambda_k^+ - \lambda_{k+1}^+}} \log\left(\frac{\tan\theta_k(\bU_k, \bX_0)}{\varepsilon}\right)\right).
  \end{align*}

\end{proof}

\subsection{Proofs of \cref{th:napm_complexity_lb,th:noisecond_necessary_1,th:noisecond_necessary_2}}\label{apdx:napm_opt_proofs}

We recall the notations introduced in \cref{sec:napm_opt} related to the adversarial examples we use to prove the tightness of our analysis in the proof of \cref{th:napm}. Let $d\geqslant k\geqslant 1$, $\lambda_k > 2\sqrt{\beta} > 0$ and $\Delta = \frac{\lambda_k-2\sqrt{\beta}}{\lambda_k} \in (0,1)$. We define $\bA := \mathrm{diag}(\lambda_k,\dots,\lambda_k,2\sqrt{\beta},\dots,2\sqrt{\beta}) \in \R^{d\times d}$, where $\lambda_k$ is repeated $k$ times and $2\sqrt{\beta}$ is repeated $d-k$ times. We denote by $\bU_k \in \R^{d\times k}$ the matrix whose columns are the first $k$ standard basis vectors of $\R^d$, and by $\bU_{-k} \in \R^{d\times (d-k)}$ the matrix whose columns are the last $d-k$ standard basis vectors of $\R^d$. We also denote $\bLambda_k := \lambda_k\bI_k$ and $\bLambda_{-k} := 2\sqrt{\beta}\bI_{d-k}$. It will also be convenient to introduce $\bv_1,\dots, \bv_d$ the standard basis vectors of $\R^d$, and $\vtilde_1, \vtilde_2$ the standard basis vectors of $\R^2$. All of the proofs in this section will consider ANPM instances with the matrix $\bA$, aiming to approximate the subspace spanned by $\bU_k$.

We restate the theorems from \cref{sec:napm_opt} before proving them. 

\begin{theorem}[\cref{th:napm_complexity_lb}]
   Let $\varepsilon \in (0,1)$ and $\bX_0 \in \mathrm{St}(d,k)$ such that $\cos\theta_k(\bU_k, \bX_0) > 0$, and consider the ANPM iterates $\{\bX_t\}_{t\geqslant 0}$ defined by \eqref{eq:anpm} with momentum parameter $\beta$ and perturbations $\bXi_t \equiv \mathbf{0}$. Then, for all $t < T$, $\tan\theta_k(\bU_k, \bX_t) > \varepsilon$, where
  \begin{align*}
    T = \Omega\left(\frac{1}{\sqrt{\Delta}} \log\left(\frac{\tan\theta_k(\bU_k, \bX_0)}{\varepsilon}\right)\right).
  \end{align*}
\end{theorem}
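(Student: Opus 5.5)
In the noiseless setting on the diagonal matrix $\bA=\mathrm{diag}(\lambda_k\bI_k,2\sqrt{\beta}\bI_{d-k})$, I would compute $\bH_t$ exactly. With $\bXi_t\equiv\mathbf{0}$ we have $\bE_t=\mathbf{0}$ and $\bPsi_t=\mathbf{0}$. Since $\bLambda_k=\lambda_k\bI_k$, the recursion \eqref{eq:gt_def_appdx} gives $\bG_t=g_t\bI_k$ with scalars $g_t$. Hence $\bC_t=c_t\bI_k$ is a scalar matrix. Its recursion $\bC_{t+1}=\bLambda_k\bC_t-\beta\bC_{t-1}$, with $\bC_0=\bI_k$ and $\bC_1=\bLambda_k\bG_0^{-1}=\tfrac{\lambda_k}{2}\bI_k$, gives $c_t=p_t(\lambda_k)$. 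Then \cref{lemma:htct} yields the exact formula
\begin{align*}
\bH_t=\frac{p_t(2\sqrt{\beta})}{p_t(\lambda_k)}\bH_0,\qquad\text{so}\qquad \tan\theta_k(\bU_k,\bX_t)=\frac{p_t(2\sqrt{\beta})}{p_t(\lambda_k)}\tan\theta_k(\bU_k,\bX_0).
\end{align*}

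The next step is to evaluate both polynomials with \cref{lemma:ptqt_closed}. At $x=2\sqrt{\beta}$ we have $x^\pm=\sqrt{\beta}$, so $p_t(2\sqrt{\beta})=\sqrt{\beta}^t$. At $x=\lambda_k$, since $0<\lambda_k^-<\lambda_k^+$, we get $p_t(\lambda_k)=\tfrac12((\lambda_k^+)^t+(\lambda_k^-)^t)\leqslant(\lambda_k^+)^t$. Therefore
\begin{align*}
h_t\geqslant h_0\left(\frac{\sqrt{\beta}}{\lambda_k^+}\right)^t .
\end{align*}

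It remains to show that $-\log(\sqrt{\beta}/\lambda_k^+)=\mathcal{O}(\sqrt{\Delta})$. By \eqref{eq:1_gamma_fdelta}, $1-\sqrt{\beta}/\lambda_k^+=\sqrt{\Delta}f(\Delta)$, and by \cref{lemma:f_delta_bounds}, $f(\Delta)\leqslant\sqrt2$. When $\Delta$ is bounded away from $1$, say $\sqrt{2\Delta}\leqslant 1/2$, we have $-\log(1-u)\leqslant 2u$ for $u\leqslant1/2$, which gives $-\log(\sqrt\beta/\lambda_k^+)\leqslant 2\sqrt2\sqrt{\Delta}$. Consequently $h_t>\varepsilon$ whenever $t<\log(h_0/\varepsilon)/(2\sqrt{2\Delta})$, which is the claimed $T=\Omega(\Delta^{-1/2}\log(h_0/\varepsilon))$.

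For large $\Delta$ (that is, $\beta$ small relative to $\lambda_k$), the ratio $\sqrt\beta/\lambda_k^+$ can tend to $0$, so the $\log$ factor is no longer $\mathcal{O}(\sqrt\Delta)$. This case needs care with the $\Omega$ constant. I would handle it either by restricting to $\Delta\leqslant 1/8$ or by noting that the bound then reads $T=\Omega(\log(h_0/\varepsilon)/\log(\lambda_k/\sqrt\beta))$. I expect this regime, together with verifying that the $\bG_t,\bC_t$ scalarization is legitimate (including at $t=0$ with $\bX_{-1}$), to be the main obstacle; the rest is direct.
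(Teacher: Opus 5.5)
Your argument is correct, and its core matches the paper's proof. Both rest on the exact identity $\tan\theta_k(\bU_k,\bX_t)=\frac{p_t(2\sqrt{\beta})}{p_t(\lambda_k)}\tan\theta_k(\bU_k,\bX_0)$, the bound $p_t(\lambda_k)\leqslant(\lambda_k^+)^t$, and $1-\sqrt{\beta}/\lambda_k^+=\sqrt{\Delta}f(\Delta)$ with $f\leqslant\sqrt2$.

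The paper obtains the identity more directly. With $\bXi_t\equiv\mathbf{0}$, the unnormalized iterates $\bZ_t:=\bX_t\bR_t\cdots\bR_1$ satisfy $\bZ_{t+1}=\bA\bZ_t-\beta\bZ_{t-1}$ and $\bZ_1=\frac12\bA\bZ_0$, so $\bZ_t=p_t(\bA)\bX_0$. The quantity $\tan\theta_k$ is unchanged by right multiplication by the invertible matrix $\bR_t\cdots\bR_1$. This avoids all the $\bG_t,\bC_t$ bookkeeping you flagged as an obstacle. Your route through \cref{lemma:htct} is nonetheless legitimate:
\begin{itemize}
\item $\bG_0=2\bI_k$ holds by definition.
\item $\bG_t=g_t\bI_k$ with $g_t\leqslant 2$, since $\beta/\lambda_k^2<1/4$, so every inverse exists.
\item $\bX_{-1}$ only appears in the closed form \eqref{eq:gt_closed_form}, which you never need.
\end{itemize}

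Your worry about large $\Delta$ is more than a technicality: it exposes an error in the paper's final step. For $\Delta\geqslant 1/2$ the paper asserts $\sqrt{\beta}/\lambda_k^+\geqslant 1-\alpha\sqrt{\Delta}>0$ with $\alpha=\sqrt3-\sqrt2$. This relies on the claim $f(1/2)=\sqrt3-\sqrt2$ in \cref{lemma:f_delta_bounds}. In fact $f(1/2)=2/(\sqrt{3/2}+\sqrt{1/2})=\sqrt6-\sqrt2>1$, and that claim also contradicts the lemma's own bound $f\geqslant 1$.

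No inequality of that form with a fixed $\alpha<1$ can hold, because $\sqrt{\beta}/\lambda_k^+\to 0$ as $\Delta\to 1$. The theorem with a universal constant is genuinely false in this regime. We have $\tan\theta_k(\bU_k,\bX_1)=(1-\Delta)\tan\theta_k(\bU_k,\bX_0)$, which is already $\leqslant\varepsilon$ once $1-\Delta\leqslant\varepsilon/\tan\theta_k(\bU_k,\bX_0)$. Meanwhile $\Delta^{-1/2}\log(\tan\theta_k(\bU_k,\bX_0)/\varepsilon)$ is unbounded as $\varepsilon\to 0$.

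So your fixes are the right ones; the small-gap regime is what the result is meant to capture anyway:
\begin{itemize}
\item Restrict to $\Delta\leqslant 1/8$. More generally, $\Delta\leqslant 1-\delta$ works with a $\delta$-dependent constant, using $\sqrt{\beta}/\lambda_k^+\geqslant\sqrt{\beta}/\lambda_k=(1-\Delta)/2$.
\item Alternatively, state the bound as $\Omega(\log(h_0/\varepsilon)/\log(\lambda_k/\sqrt{\beta}))$.
\end{itemize}
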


\begin{proof}

In the case where $\bXi_t \equiv \mathbf{0}$, the ANPM iterations read:
\begin{align*}
  & \bX_1, \bR_1 = \mathrm{QR}\left(\frac{1}{2}\bA \bX_0\right), \\
  \forall t \geqslant 1, \quad & \bY_{t+1} = \bA \bX_t - \beta \bX_{t-1} \bR_t^{-1}, \\
  & \bX_{t+1}, \bR_{t+1} = \mathrm{QR}(\bY_{t+1}),
\end{align*}
so that defining $\bZ_0 := \bX_0$ and $\bZ_t := \bX_t\bR_t \dots \bR_1$ for all $t\geqslant 1$, we have
\begin{align*}
  \forall t \geqslant 1, \quad & \bZ_{t+1} = \bA \bZ_t - \beta \bZ_{t-1}, \\
  & \bZ_1 = \frac{1}{2}\bA\bZ_0.
\end{align*}
We can then write $\bZ_t$ as
\begin{align*}
  \bZ_t = p_t(\bA) \bX_0,
\end{align*}
where $p_t$ is the polynomial defined in \eqref{eq:pt_def_appdx}. Since $\bR_t\dots\bR_1$ is non-singular, $\tan \theta_k(\bU_k, \bX_t)$ can conveniently be written as:
\begin{align*}
  \tan\theta_k(\bU_k,\bX_t) & = \left\Vert \left(\bU_{-k}^\top \bX_t\right) \left(\bU_k^\top \bX_t\right)^{-1} \right\Vert_2 = \left\Vert \left(\bU_{-k}^\top \bZ_t\right) \left(\bU_k^\top \bZ_t\right)^{-1} \right\Vert_2.
\end{align*}
We deduce from it that for all $t\geqslant 0$,
\begin{align*}
  \tan\theta_k(\bU_k,\bX_t) & = \left\Vert \left(\bU_{-k}^\top p_t(\bA) \bX_0\right) \left(\bU_k^\top p_t(\bA) \bX_0\right)^{-1} \right\Vert_2 \\
  & = \underbrace{\Vert p_t(\bLambda_{-k})\Vert_2}_{= p_t(2\sqrt{\beta})} \left\Vert \left(\bU_{-k}^\top \bX_0\right) \left(\bU_k^\top \bX_0\right)^{-1} \right\Vert_2 \underbrace{\Vert p_t(\bLambda_k)^{-1} \Vert_2}_{=1/p_t(\lambda_k)}, \\
  & = \frac{2\sqrt{\beta}^t}{(\lambda_k^+)^t + (\lambda_k^{-})^t} \tan \theta_k(\bU_k, \bX_0)\\
  & \geqslant\left(\frac{\sqrt{\beta}}{\lambda_k^+}\right)^t \tan \theta_k(\bU_k, \bX_0),
\end{align*}
where the last equality is from \cref{lemma:ptqt_closed}, and $\lambda_k^\pm$ are defined in \eqref{eq:def_xpm}. Then, using \eqref{eq:1_gamma_fdelta} and \cref{lemma:f_delta_bounds}, we know that
\begin{align*}
  \frac{\sqrt{\beta}}{\lambda_k^+} = 1 - \sqrt{\Delta} f(\Delta) \geqslant 1 - \alpha\sqrt{\Delta} > 0,
\end{align*}
where $f$ is defined in \cref{lemma:f_delta_bounds} and $\alpha = \sqrt{3} - \sqrt{2}$ if $\Delta \geqslant 1/2$, and $\alpha = \sqrt{2}$ otherwise. Thus, for all $t\geqslant 0$,
\begin{align*}
  \tan\theta_k(\bU_k,\bX_t) & \geqslant (1 - \alpha\sqrt{\Delta})^t \tan \theta_k(\bU_k, \bX_0).
\end{align*}
We deduce from it that for all $t < T$, where
\begin{align*}
  T = \frac{1}{-\log(1 - \alpha\sqrt{\Delta})} \log\left(\frac{\tan\theta_k(\bU_k, \bX_0)}{\varepsilon}\right) = \Omega\left(\frac{1}{\sqrt{\Delta}} \log\left(\frac{\tan\theta_k(\bU_k, \bX_0)}{\varepsilon}\right)\right),
\end{align*}
we have $\tan\theta_k(\bU_k, \bX_t) > \varepsilon$. This concludes the proof.
\end{proof}

\begin{theorem}[\cref{th:noisecond_necessary_1}]
  Let $\varepsilon \in (0,1)$. There exists $\bX_0 \in \mathrm{St}(d,k)$ such that $\cos\theta_k(\bU_k, \bX_0) > 0$ and a sequence of perturbations $\{\bXi_t\}_{t\geqslant 0}$ verifying
  \begin{align*}
    & \Vert \bU_{-k}^\top\bXi_t \Vert_2 \leqslant 8 (\lambda_k - 2\sqrt{\beta})\varepsilon,\\
    & \Vert \bU_k^\top \bXi_t \Vert_2 = 0,
  \end{align*}
  such that the ANPM iterates $\{\bX_t\}_{t\geqslant 0}$ defined by \eqref{eq:anpm} with momentum parameter $\beta$ verify $\tan\theta_k(\bU_k, \bX_t) > \varepsilon$ for all $t\geqslant 0$. 
\end{theorem}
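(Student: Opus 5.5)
We will build the counterexample on the diagonal matrix $\bA=\mathrm{diag}(\lambda_k\bI_k,2\sqrt{\beta}\bI_{d-k})$ introduced above, assuming $d\geqslant k+1$. The idea is to push the iterates off $\mathrm{range}(\bU_k)$ through the noise term of \cref{lemma:htct}. We choose every ingredient to be a nonnegative multiple of the single rank-one direction $\bm{e}_1\bm{e}_1^\top$, where $\bm{e}_1$ denotes the first standard basis vector (of $\R^{d-k}$ on the left, of $\R^k$ on the right). Then no cancellation can occur in the expression \eqref{eq:ht_pq}, and a lower bound on a positive sum suffices.

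\textbf{Construction.} Fix $a>\varepsilon$ and let $\bX_0$ have columns $(\bv_1+a\bv_{k+1})/\sqrt{1+a^2},\bv_2,\dots,\bv_k$. Then $\bU_k^\top\bX_0$ is invertible, and $\bH_0=a\,\bm{e}_1\bm{e}_1^\top$, so $h_0=a>\varepsilon$, which handles $t=0$. For $t\geqslant0$ set $\bU_k^\top\bXi_t=\mathbf{0}$ and $\bU_{-k}^\top\bXi_t=\xi\,\bm{e}_1\bm{w}_t^\top$, where $\xi:=8\lambda_k\Delta\varepsilon=8(\lambda_k-2\sqrt\beta)\varepsilon$ and $\bm{w}_t^\top:=\bm{e}_1^\top(\bU_k^\top\bX_t)/\Vert \bm{e}_1^\top\bU_k^\top\bX_t\Vert_2$. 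This choice satisfies both noise constraints of the statement. It also gives
\begin{align*}
  \bPsi_t=\frac{\xi}{\Vert\bm{e}_1^\top\bU_k^\top\bX_t\Vert_2}\,\bm{e}_1\bm{e}_1^\top ,
\end{align*}
a nonnegative multiple of $\bm{e}_1\bm{e}_1^\top$ with coefficient at least $\xi$, since $\Vert\bU_k^\top\bX_t\Vert_2\leqslant1$. Because $\bE_t=\mathbf{0}$, \cref{lemma:well_defined} applies and every quantity is well defined.

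\textbf{Identifying $\bC_t$.} With $\bE_t=\mathbf{0}$, relation \eqref{eq:ukty_relation} gives $\bC_t(\bU_k^\top\bX_0)=\bU_k^\top\bX_t\bR_t\cdots\bR_1$. On the other hand, the top block evolves noiselessly: $\bU_k^\top\bZ_{t+1}=\lambda_k\bU_k^\top\bZ_t-\beta\bU_k^\top\bZ_{t-1}$. Hence $\bC_t=p_t(\lambda_k)\bI_k$, and by \cref{lemma:ptqt_closed} this is a positive scalar with
\begin{align*}
  \tfrac12(\lambda_k^+)^t\leqslant p_t(\lambda_k)\leqslant(\lambda_k^+)^t .
\end{align*}
Also $p_t(\bLambda_{-k})=\sqrt\beta^{\,t}\bI$ and $q_s(\bLambda_{-k})=(s+1)\sqrt\beta^{\,s}\bI$, since $x^\pm=\sqrt\beta$ at $x=2\sqrt\beta$. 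Plugging all of this into \eqref{eq:ht_pq}, $\bH_t$ becomes a sum of nonnegative multiples of $\bm{e}_1\bm{e}_1^\top$. Dropping the $\bH_0$ term, we get
\begin{align*}
  h_t\geqslant\xi\sum_{s=0}^{t-1}(s+1)\sqrt\beta^{\,s}\frac{p_{t-1-s}(\lambda_k)}{p_t(\lambda_k)}\geqslant\frac{\xi}{2\lambda_k^+}\sum_{s=0}^{t-1}(s+1)\gamma^s,\qquad \gamma:=\frac{\sqrt\beta}{\lambda_k^+}.
\end{align*}

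\textbf{Main obstacle: the finite sum.} The infinite sum equals $(1-\gamma)^{-2}$, and \eqref{eq:1_gamma_fdelta} with \cref{lemma:f_delta_bounds} gives $1-\gamma=\sqrt\Delta f(\Delta)\leqslant\sqrt{2\Delta}$. So the limiting bound is at least $\xi/(4\Delta\lambda_k^+)\geqslant 2\varepsilon$, using $\lambda_k^+\leqslant\lambda_k$. For small $t$, however, the truncated sum is far smaller, so the noise alone does not give the bound uniformly in $t$. The plan is to keep the $\bH_0$ term, which is also nonnegative: it contributes at least $a\sqrt\beta^{\,t}/p_t(\lambda_k)\geqslant a\gamma^t$. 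We then show that
\begin{align*}
  a\gamma^t+\frac{\xi}{2\lambda_k^+}\sum_{s<t}(s+1)\gamma^s>\varepsilon
  \quad\text{for all } t .
\end{align*}
The simplest route is to choose $a$ large enough that the first term stays above $\varepsilon$ until the noise sum has exceeded $\varepsilon$. For instance, take $a$ close to the largest admissible value, i.e.\ $\bX_0$ nearly orthogonal to $\bv_1$, and then compare the two terms. If this turns out to be delicate, a fallback is to check that the right-hand side is unimodal or monotone in $t$ once $a$ is at least the fixed point $\xi/(2\lambda_k^+(1-\gamma)^2)$. This completes the argument that $\tan\theta_k(\bU_k,\bX_t)>\varepsilon$ for all $t$.
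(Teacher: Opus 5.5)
Your proposal is correct. Up to the closing step it is the paper's proof. You use the same diagonal $\bA$ and perturb a single column of $\bX_0$ toward $\bv_{k+1}$. Your noise is effectively $8(\lambda_k-2\sqrt{\beta})\varepsilon\,\bv_{k+1}$ in that column: the adaptive $\bm{w}_t$ is always $\bm{e}_1$ here, and it lets you skip the paper's reduction to a two-dimensional recursion. You then identify $\bC_t=p_t(\lambda_k)\bI_k$ and reach the same bound
\[
h_t\geqslant a\gamma^t+\tfrac{\xi}{2\lambda_k^+}\sum_{s<t}(s+1)\gamma^s,
\qquad \tfrac{\xi}{2\lambda_k^+}\geqslant 4\Delta\varepsilon .
\]

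The only difference is how you finish. Your ``simple route'' is already a complete argument and needs no fallback. The partial sums are nondecreasing with limit at least $2\varepsilon>\varepsilon$. So there is a finite $t_0$ beyond which the noise term alone exceeds $\varepsilon$, and any $a>\varepsilon\gamma^{-t_0}$ gives $a\gamma^t>\varepsilon$ for all $t\leqslant t_0$. There is no ``largest admissible'' $a$: every $a>0$ gives $\cos\theta_k(\bU_k,\bX_0)>0$, so you simply take $a$ large.

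The paper instead fixes $h_0=2\varepsilon$. It uses $\Delta\geqslant(1-\gamma)^2/2$ to merge both terms into $2\varepsilon\bigl(1-t(1-\gamma)\gamma^t\bigr)$. Minimizing over $t$ at $t^*=1/\log(1/\gamma)$ then gives $h_t\geqslant 2\varepsilon(1-1/e)>\varepsilon$.

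The paper's route buys an explicit, spectrum-independent initialization only a factor $2$ above the target. This makes the tightness claim sharper: the iterate starts near the target precision and provably never reaches it. Your $a$ depends on $\gamma$ and is left unquantified. Your fixed-point fallback $a\geqslant \frac{\xi}{2\lambda_k^+(1-\gamma)^2}$, once you evaluate the bound at its minimum, reduces to essentially the paper's computation.
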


\begin{proof}
  Let $\theta_0 := \arctan(2\varepsilon)$ and let $\bX_0 \in \mathrm{St}(d,k)$ be the matrix whose $k-1$ first columns are $\bv_1,\dots,\bv_{k-1}$, and whose last column is $\cos\theta_0\bv_k + \sin\theta_0 \bv_{k+1}$. Notice that since $\bU_k$ is the matrix whose columns are $\bv_1,\dots,\bv_k$, we have that $\bU_k^\top\bX_0 = \mathrm{diag}(1,\dots,1,\cos\theta_0)$, so that $\cos\theta_k(\bU_k,\bX_0) = \cos\theta_0$ and $\tan \theta_k(\bU_k, \bX_0) = \tan \theta_0 = 2\varepsilon > \varepsilon$.

  We consider the perturbations $\bXi_t$ defined as
  \begin{align*}
    \bXi_t \equiv 8(\lambda_k - 2\sqrt{\beta})\varepsilon [\mathbf{0}, \dots, \mathbf{0}, \bv_{k+1}] \in \R^{d\times k}.
  \end{align*}
  $\bXi_t$ verifies $\Vert \bU_{-k}^\top \bXi_t \Vert_2 \leqslant 8(\lambda_k - 2\sqrt{\beta})\varepsilon$ and $\bU_k^\top \bXi_t = \mathbf{0}$.

  Since the $k-1$ first columns of $\bX_0$ are aligned with the top-$k$ eigenvectors of $\bA$ and the $k-1$ first columns of $\bXi_t$ are zero, the $k-1$ first columns of $\bX_t$ remain constantly equal to $\bv_1,\dots,\bv_{k-1}$ throughout the iterations. Furthermore, the $k$-th columns of $\bX_0$ and $\bXi_t$ are contained in $\mathrm{Span}(\bv_k,\bv_{k+1})$, which is stable by multiplication with $\mathbf{A}$. Thus, we only need to analyze the evolution of the $k$ and $k+1$-th components of the $k$-th column of $\bX_t$, which we denote by $\bx_t \in \R^2$. The dynamics then become:
  \begin{align*}
    & \by_1  = \frac{1}{2}\Atilde \bx_0, \quad \bx_1= \by_1/\Vert\by_1\Vert_2, \\
    \forall t \geqslant 1, \quad&  \by_{t+1}  = \Atilde \bx_t - \beta \bx_{t-1} /\Vert\by_t\Vert_2 + 8(\lambda_k - 2\sqrt{\beta})\varepsilon \vtilde_2, \\
    & \bx_{t+1} = \by_{t+1}/\Vert \by_{t+1} \Vert_2,
  \end{align*}
  and we have that $\theta_k(\bU_k, \bX_t) = \theta_1(\vtilde_1, \bx_t) = \arccos(\langle\vtilde_1,\bx_t \rangle)$. Here $\Atilde := \mathrm{diag}(\lambda_k, 2\sqrt{\beta})$. This is the ANPM dynamics in dimension $2$ with perturbations $\tilde{\boldsymbol{\xi}}_t := 8(\lambda_k - 2\sqrt{\beta})\varepsilon \vtilde_2$. 
  
  In this setting, the sequence $\{\bG_t\}$ defined in \eqref{eq:gt_def_appdx} reduces to a scalar sequence $\{g_t\}$ defined by the following Riccati recurrence:
  \begin{align*}
    \forall t\geqslant 0, \quad g_{t+1} = \frac{1}{1 - \frac{\beta}{\lambda_k^2} g_t}, \quad g_0 = 2.
  \end{align*}
  It is easy to prove that in this case, for all $t\geqslant 0$,
  \begin{align*}
    g_t = \lambda_k\frac{p_t(\lambda_k)}{p_{t+1}(\lambda_k)},
  \end{align*}
  where $p_t$ is the polynomial defined in \eqref{eq:pt_def_appdx} (a simple way to prove it would be to notice that the sequence $m_t := \lambda_k^t/(g_{t-1} \cdot\hdots\cdot g_0)$ verifies the linear recurrence $m_{t+1} = \lambda_k m_t - \beta m_{t-1}$, with $m_0=1$ and $m_1=\lambda_k/2$).
  
  The noise condition \eqref{eq:noisecond2} holds, since $\vtilde_1^\top\tilde{\boldsymbol{\xi}}_t = 0$, so \cref{lemma:well_defined,lemma:ht_3term,lemma:htct} still hold. In particular, from \cref{lemma:htct}, denoting $h_t := \tan \theta_1(\vtilde_1, \bx_t)$, we have for all $t\geqslant 0$,
  \begin{align*}
    h_t = \frac{p_t(2\sqrt{\beta})}{p_t(\lambda_k)}h_0 + 8(\lambda_k-2\sqrt{\beta})\varepsilon\sum_{s=0}^{t-1} \frac{q_s(2\sqrt{\beta})}{p_t(\lambda_k)} \frac{p_{t-1-s}(\lambda_k)}{\cos\theta_1(\vtilde_1, \bx_{t-1-s})},
  \end{align*}
  where $q_t$ is the polynomial defined in \eqref{eq:qt_def_appdx}. Then, using \cref{lemma:ptqt_closed} and the fact that $1/\cos\theta\geqslant 1$, we have for all $t\geqslant 0$,
  \begin{align*}
    h_t & \geqslant \left(\frac{\sqrt{\beta}}{\lambda_k^+}\right)^t h_0 + 4\varepsilon\frac{(\lambda_k-2\sqrt{\beta})}{\lambda_k^+}\sum_{s=0}^{t-1}(s+1)\left(\frac{\sqrt{\beta}}{\lambda_k^+}\right)^s \\
    & \stackrel{\lambda_k^+\leqslant\lambda_k}{\geqslant} \gamma^th_0 +  4\Delta\varepsilon \sum_{s=0}^{t-1} (s+1) \gamma^s \\
    & = \gamma^t \underbrace{h_0}_{=2\varepsilon} + 4\Delta\varepsilon \frac{1 - (t+1)\gamma^t + t\gamma^{t+1}}{(1-\gamma)^2},
  \end{align*}
  where we defined $\gamma := \sqrt{\beta}/\lambda_k^+ \in (0,1)$. From \eqref{eq:1_gamma_fdelta}, we have $1-\gamma = \sqrt{\Delta} f(\Delta)$ where $f$ is defined and bounded in \cref{lemma:f_delta_bounds}, so that $1-\gamma \leqslant \sqrt{\Delta} \sqrt{2}$. Thus, $\Delta \geqslant (1-\gamma)^2/2$ and we have for all $t\geqslant 0$,
  \begin{align*}
    h_t & \geqslant  2\varepsilon \left(1 - t(1-\gamma)\gamma^t\right) =: \varphi(t).
  \end{align*}
  $\varphi$ is a differentiable function over $\R_+$ with derivative
  \begin{align*}
    \varphi'(t) = 2\varepsilon(1-\gamma) (-1 + t\log(1/\gamma))\gamma^t.
  \end{align*}
  Letting $t^*$ be defined as
  \begin{align*}
    t^* := \frac{1}{\log(1/\gamma)} > 0,
  \end{align*}
  we have that $\varphi'(t) < 0$ for all $t \in [0,t^*)$ and $\varphi'(t) > 0$ for all $t > t^*$. Thus, $\varphi$ is decreasing over $[0,t^*]$ and increasing over $[t^*,+\infty)$, and is minimized at $t^*$. Thus, for all $t\geqslant 0$,
  \begin{align*}
    h_t \geqslant \varphi(t^*) = 2\varepsilon \left(1 - \frac{1-\gamma}{e \log(1/\gamma)}\right).
  \end{align*}
  Since $\gamma \in (0,1)$, we have $\log(1/\gamma) \geqslant 1-\gamma$ so that for all $t\geqslant 0$,
  \begin{align*}
    h_t \geqslant 2\varepsilon \left(1 - \frac{1}{e}\right) > \varepsilon,
  \end{align*}
  which concludes the proof.
\end{proof}

\begin{theorem}[\cref{th:noisecond_necessary_2}]
  Let $\varepsilon \in (0,1)$. There exists $\bX_0 \in \mathrm{St}(d,k)$ such that $\cos\theta_k(\bU_k, \bX_0) > 0$ and a sequence of perturbations $\{\bXi_t\}_{t\geqslant 0}$ verifying
  \begin{align*}
    & \Vert \bU_{-k}^\top\bXi_t \Vert_2 = 0,\\
    & \Vert \bU_k^\top \bXi_t \Vert_2 \leqslant (\lambda_k - 2\sqrt{\beta}) \cos\theta_k(\bU_k, \bX_t),
  \end{align*}
  such that the ANPM iterates $\{\bX_t\}_{t\geqslant 0}$ defined by \eqref{eq:anpm} with momentum parameter $\beta$ verify $\tan\theta_k(\bU_k, \bX_t) > \varepsilon$ for all $t\geqslant 0$.
\end{theorem}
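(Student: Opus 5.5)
The plan is to reuse the two-dimensional reduction from the proof of \cref{th:noisecond_necessary_1}. We then choose the perturbation so that, in the direction $\bv_k$, it exactly cancels the excess $\lambda_k - 2\sqrt{\beta}$ of the top eigenvalue. ANPM then behaves as the noiseless Power Method with momentum applied to $2\sqrt{\beta}\bI$, a matrix with zero eigengap, so it never moves the iterate.

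\textbf{Construction.} Let $\theta_0 := \arctan(2\varepsilon)$. Let $\bX_0$ have first $k-1$ columns $\bv_1,\dots,\bv_{k-1}$ and last column $\cos\theta_0\bv_k + \sin\theta_0\bv_{k+1}$. Then $\bU_k^\top\bX_0 = \mathrm{diag}(1,\dots,1,\cos\theta_0)$ and $\tan\theta_k(\bU_k,\bX_0) = 2\varepsilon$. Define
\begin{align*}
  \bXi_0 := -\tfrac{1}{2}(\lambda_k - 2\sqrt{\beta})\,[\mathbf{0},\dots,\mathbf{0},(\bv_k^\top \bX_0\vtilde)\bv_k], \qquad \bXi_t := -(\lambda_k - 2\sqrt{\beta})\,[\mathbf{0},\dots,\mathbf{0},(\bv_k^\top \bx^{(t)})\bv_k] \quad (t\geqslant 1),
\end{align*}
where $\bx^{(t)}$ denotes the last column of $\bX_t$ and $\vtilde$ is the $k$-th standard basis vector of $\R^k$, so that $\bX_0\vtilde$ is the last column of $\bX_0$. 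By construction $\bU_{-k}^\top\bXi_t = \mathbf{0}$.

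\textbf{Reduction to two dimensions.} As in the previous proof, the first $k-1$ columns stay equal to $\bv_1,\dots,\bv_{k-1}$ and the last column stays in $\mathrm{Span}(\bv_k,\bv_{k+1})$. That plane is orthogonal to the first $k-1$ columns and invariant under $\bA$. The QR factors are therefore diagonal, and the last column evolves by the scalar-normalized two-dimensional ANPM of \cref{th:noisecond_necessary_1}, with the same identification of $\theta_k(\bU_k,\bX_t)$ as the angle between the two-dimensional iterate $\bx_t$ and $\vtilde_1$.

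\textbf{Key identity.} Adding the noise to $\bA$ replaces $\Atilde = \mathrm{diag}(\lambda_k,2\sqrt{\beta})$ by $2\sqrt{\beta}\bI_2$. Hence $\by_1 = \sqrt{\beta}\bx_0$ and $\by_{t+1} = 2\sqrt{\beta}\bx_t - \beta\bx_{t-1}/\Vert\by_t\Vert_2$. The unnormalized iterates then satisfy $\bz_t = p_t(2\sqrt{\beta})\bx_0 = \sqrt{\beta}^t\bx_0$, using the closed form $p_t(2\sqrt{\beta}) = \sqrt{\beta}^t\cos(0)$ from the proof of \cref{prop:min_inf_norm_apdx}. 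Since this multiple is positive, $\bx_t = \bx_0$ for all $t$; this also settles well-definedness, as $\by_t \neq \mathbf{0}$. I would verify $\bx_t = \bx_0$ by induction, checking directly that $\Vert\by_t\Vert_2 = \sqrt{\beta}$ for all $t\geqslant 1$.

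\textbf{Conclusion.} Since $\bx_t = \bx_0$, the last column of $\bX_t$ is $\cos\theta_0\bv_k+\sin\theta_0\bv_{k+1}$ for every $t$. Thus $\bU_k^\top\bX_t = \mathrm{diag}(1,\dots,1,\cos\theta_0)$, so $\cos\theta_k(\bU_k,\bX_t) = \cos\theta_0$, and $\Vert\bU_k^\top\bXi_t\Vert_2 = (\lambda_k-2\sqrt{\beta})\cos\theta_0$, with half that value at $t=0$. The noise condition of the statement therefore holds, and $\tan\theta_k(\bU_k,\bX_t) = 2\varepsilon > \varepsilon$ for all $t$.

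\textbf{Main obstacle.} The delicate point is that the condition uses $\cos\theta_k(\bU_k,\bX_t) = \sigma_{\min}(\bU_k^\top\bX_t)$ rather than $\Vert\bU_k^\top\bX_t\Vert_2$. A naive choice such as $\bXi_t = -(\lambda_k-2\sqrt{\beta})\bU_k\bU_k^\top\bX_t$ would violate it. This is why the noise must act only on the last column, whose $\bU_k$-component is exactly the smallest singular value.
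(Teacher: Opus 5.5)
Your proposal is correct and follows essentially the same route as the paper. You use the same $\bX_0$ with $\tan\theta_k(\bU_k,\bX_0)=2\varepsilon$, the same reduction to $\mathrm{Span}(\bv_k,\bv_{k+1})$, and the same cancellation $\Atilde-(\lambda_k-2\sqrt{\beta})\vtilde_1\vtilde_1^\top = 2\sqrt{\beta}\bI_2$ that freezes the iterate. The only minor difference is that you scale the noise by the $\bv_k$-coordinate of the last column instead of by $\cos\theta_k(\bU_k,\bX_t)$. This makes the cancellation exactly linear and avoids the paper's implicit identification $\cos\theta_k(\bU_k,\bX_t)=\vtilde_1^\top\bx_t$, which needs that coordinate to be nonnegative. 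The two choices coincide here because that coordinate stays equal to $\cos\theta_0>0$.
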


\begin{proof}
  We define $\bX_0$ similarly as in the proof of \cref{th:noisecond_necessary_1}, with $\theta_0 := \arctan(2{\varepsilon})$. $\bX_0 \in \mathrm{St}(d,k)$ is the matrix whose $k-1$ first columns are $\bv_1,\dots,\bv_{k-1}$, and whose last column is $\cos\theta_0\bv_k + \sin\theta_0 \bv_{k+1}$. Then, we have $\cos\theta_k(\bU_k,\bX_0) = \cos\theta_0$ and $\tan \theta_k(\bU_k, \bX_0) = \tan \theta_0 = 2{\varepsilon} > \varepsilon$.

  We define recursively the sequences $\{\bX_t\}$ and $\{\bXi_t\}$ starting from $\bX_0$:
  \begin{align*}
    & \bXi_0 = -\frac{1}{2}(\lambda_k-2\sqrt{\beta})\cos\theta_k(\bU_k,\bX_0) [\mathbf{0}, \dots, \mathbf{0}, \bv_{k}] \in \R^{d\times k}, \\
    & \bX_1, \bR_1 = \mathrm{QR}\left(\frac{1}{2}\bA \bX_0 + \bXi_0\right), \\
    \forall t \geqslant 1, \quad & \bXi_t = - (\lambda_k - 2\sqrt{\beta})\cos\theta_k(\bU_k,\bX_t) [\mathbf{0}, \dots, \mathbf{0}, \bv_{k}] \in \R^{d\times k}, \\
    & \bY_{t+1} = \bA \bX_t - \beta \bX_{t-1} \bR_t^{-1} + \bXi_t, \\
    & \bX_{t+1}, \bR_{t+1} = \mathrm{QR}(\bY_{t+1}).
  \end{align*}

  Then, $\{\bX_t\}$ follows the ANPM dynamics with perturbations $\{\bXi_t\}$. Notice in particular that for all $t\geqslant 0$,
  \begin{align*}
    & \Vert \bU_{k}^\top \bXi_t \Vert_2 \leqslant (\lambda_k - 2\sqrt{\beta})\cos\theta_k(\bU_k,\bX_t), \\
    & \Vert \bU_{-k}^\top \bXi_t \Vert_2 = 0.
  \end{align*}
  
  The same argument stated for the proof of \cref{th:noisecond_necessary_1} holds: since the $k-1$ first columns of $\bX_0$ are aligned with the top-$k$ eigenvectors of $\bA$ and the $k-1$ first columns of $\bXi_t$ are zero, the $k-1$ first columns of $\bX_t$ remain constantly equal to $\bv_1,\dots,\bv_{k-1}$ throughout the iterations. Furthermore, the $k$-th columns of $\bX_0$ and $\bXi_t$ are contained in $\mathrm{Span}(\bv_k,\bv_{k+1})$, which is stable by multiplication with $\mathbf{A}$. Thus, we only need to analyze the evolution of the $k$ and $k+1$-th components of the $k$-th column of $\bX_t$, which we denote by $\bx_t \in \R^2$. The dynamics then become:
  \begin{align*}
    & \by_1  = \frac{1}{2}\Atilde \bx_0 - \frac{1}{2} (\lambda_k-2\sqrt{\beta}) \cos\theta_k(\bU_k,\bX_0) \vtilde_1,\\
    & \bx_1= \by_1/\Vert\by_1\Vert_2, \\
    \forall t \geqslant 1, \quad&  \by_{t+1}  = \Atilde \bx_t - \beta \bx_{t-1} /\Vert\by_t\Vert_2 - (\lambda_k - 2\sqrt{\beta}) \cos\theta_k(\bU_k, \bX_t) \vtilde_1, \\
    & \bx_{t+1} = \by_{t+1}/\Vert \by_{t+1} \Vert_2,
  \end{align*}
  and we have that $\theta_k(\bU_k, \bX_t) = \theta_1(\vtilde_1, \bx_t) = \arccos(\langle\vtilde_1,\bx_t \rangle)$. Here $\Atilde := \mathrm{diag}(\lambda_k, 2\sqrt{\beta})$. Then, $\cos\theta_k(\bU_k,\bX_t) = \vtilde_1^\top\bx_t$, and the dynamics of $\bx_t$ can be rewritten as:
  \begin{align*}
    & \by_1  = \frac{1}{2} 2\sqrt{\beta} \bx_0, \quad \bx_1= \by_1/\Vert\by_1\Vert_2, \\
    \forall t \geqslant 1, \quad&  \by_{t+1}  = 2\sqrt{\beta} \bx_t - \beta \bx_{t-1} /\Vert\by_t\Vert_2, \\
    & \bx_{t+1} = \by_{t+1}/\Vert \by_{t+1} \Vert_2,
  \end{align*}
  since $\Atilde - (\lambda_k-2\sqrt{\beta})\vtilde_1 \vtilde_1^\top = 2\sqrt{\beta} \mathbf{I}_2$. From this, we deduce that for all $t\geqslant 0$, $\bx_t$ is a unit norm vector in $\mathrm{Span}(\bx_{0})$, so that $\theta_1(\vtilde_1, \bx_t) = \theta_1(\vtilde_1, \bx_0) = \theta_k(\bU_k, \bX_0)$ remains constant for all $t\geqslant 0$. Thus, for all $t\geqslant 0$,
  \begin{align*}
    \tan\theta_k(\bU_k,\bX_t) = 2\varepsilon > \varepsilon.
  \end{align*}

\end{proof}

\section{Proof for \cref{sec:adepm}}\label{apdx:adepm_proof}

\subsection{Proof of \cref{th:adepm}}

We first restate \cref{th:adepm} with an explicit condition on the number of gossip iterations per step.

\begin{theorem}\label{th:adepm_apdx}
  Let $\varepsilon \in (0,1)$. Let $\{\bA_i\}_{i=1}^n \in (\R^{d \times d})^n$ such that $\bA := n^{-1}\sum_{i=1}^n \bA_i \succeq \mathbf{0}$ is PSD with eigenvalues $\lambda_1 \geqslant \lambda_k > \lambda_{k+1} \geqslant \dots \geqslant \lambda_d$ and let $\bU_k \in \mathrm{St}(d,k)$ be its top-$k$ eigenvectors. Let $\mathbf{X}_0 \in \mathrm{St}(d,k)$ such that $\cos\theta_k(\bU_k,\bX_0) > 0$, let $\beta > 0$ such that $\lambda_k > 2\sqrt{\beta} \geqslant \lambda_{k+1}$, and let $\mathbf{W}$ be a gossip matrix. Then, running \cref{alg:adepm} with $L$ gossip communications per step, where $L$ verifies 
  \begin{align}\label{eq:L_condition}
    L \geqslant \frac{c_1}{\sqrt{\gamma_\mathbf{W}}} \log\left(c_2\sqrt{nk}\frac{M}{\lambda_k}\frac{\lambda_k}{\lambda_k - 2\sqrt{\beta}}\frac{1}{\alpha_0}\frac{1}{\varepsilon}\right),
  \end{align}
  returns for all $t\geqslant T$ and for all $i\in \{1, \dots, n\}$ an estimate $\bX_{i,t} \in \mathrm{St}(d,k)$ such that $\sin \theta_k(\bU_k, \bX_{i,t}) \leqslant 2\varepsilon$, where
  \begin{align*}
    T = \mathcal{O}\left(\sqrt{\frac{\lambda_k}{\lambda_k - \lambda_{k+1}}} \log\left(\frac{\tan \theta_k(\bU_k, \bX_0)}{\varepsilon}\right)\right).
  \end{align*}
  Here $c_1 := 6$ and $c_2 := 11$ are universal constants, and $\alpha_0$ and $M$ are defined as:
  \begin{align*}
    & \alpha_0 := \frac{1}{\sqrt{1 + \left(\varepsilon/2 + \tan\theta_k(\bU_k,\bX_0)\right)^2}},\\
    & M := \max_{i\in \{1, \dots, n\}} \Vert \bA_i \Vert_2.
  \end{align*}
\end{theorem}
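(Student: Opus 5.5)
We will reduce ADePM to an instance of ANPM on $\bA$ and then invoke \cref{th:napm}. Define the network-average quantities $\bar\bY_{t+1} := n^{-1}\sum_i \bY_{i,t+1/2}$ (with $\bY_{i,1/2} := \frac12\bA_i\bX_0$). Gossip preserves the average, so $\bar\bY_{t+1} = n^{-1}\sum_i \bY_{i,t+1}$. Set $\Xbar_{t+1},\bar\bR_{t+1} := \mathrm{QR}(\bar\bY_{t+1})$. The idea is to write
\begin{align*}
\bar\bY_{t+1} = \bA\Xbar_t - \beta\Xbar_{t-1}\bar\bR_t^{-1} + \bXi_t,
\end{align*}
with
\begin{align*}
\bXi_t := n^{-1}\sum_i\big[\bA_i(\bX_{i,t}-\Xbar_t) - \beta(\bX_{i,t-1}\bR_{i,t}^{-1} - \Xbar_{t-1}\bar\bR_t^{-1})\big].
\end{align*}
Then $\{\Xbar_t\}$ is exactly an ANPM sequence with perturbations $\bXi_t$. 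Indeed, $n^{-1}\sum_i\bA_i\Xbar_t = \bA\Xbar_t$, and the same holds at $t=0$ with $\bXi_0=0$. It therefore suffices to show two things: that $\Vert\bXi_t\Vert_2 \leqslant c(\lambda_k-2\sqrt\beta)\min(\alpha_0,\varepsilon)$, which implies \eqref{eq:noiseconda}--\eqref{eq:noisecondb}; and that each $\bX_{i,t}$ stays within $\varepsilon$ of $\Xbar_t$, so that $\sin\theta_k(\bU_k,\bX_{i,t}) \leqslant \sin\theta_k(\bU_k,\Xbar_t) + \Vert\bX_{i,t}-\Xbar_t\Vert_2 \leqslant 2\varepsilon$.

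We will run an induction on $t$ with the hypothesis $\max_i \Vert\bX_{i,s}-\Xbar_s\Vert_\mathrm{F} \leqslant \delta$ and $\max_i\Vert\bR_{i,s}^{-1}-\bar\bR_s^{-1}\Vert_2\leqslant\delta'$ for all $s\leqslant t$, with $\delta$ of order $(\lambda_k-2\sqrt\beta)\min(\alpha_0,\varepsilon)/M$. Under this hypothesis, $\Vert\bXi_t\Vert_2$ is bounded by $M\delta + \beta(\delta\Vert\bar\bR_t^{-1}\Vert+\delta')$, which satisfies the noise conditions for $\Xbar$ up to time $t$. \Cref{lemma:ht_exp_decay} then gives $\tan\theta_k(\bU_k,\Xbar_s)\leqslant \varepsilon/2+\tan\theta_k(\bU_k,\bX_0)$, hence $\cos\theta_k(\bU_k,\Xbar_s)\geqslant\alpha_0$. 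This is exactly why $\alpha_0$ appears in \eqref{eq:L_condition}.

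We also need a lower bound $\sigma_{\min}(\bar\bY_{t+1}) \geqslant \sigma_{\min}(\bU_k^\top\bar\bY_{t+1})$ (by \cref{th:sv_delete}). Using \eqref{eq:ukty_relation} together with \eqref{eq:gt_bound}, this is at least of order $\lambda_k\alpha_0$. For the upper bound, $\Vert\bar\bR_{t}^{-1}\Vert_2 = 1/\sigma_{\min}(\bar\bY_t)\lesssim 1/(\lambda_k\alpha_0)$. Next, \cref{prop:accgossip} gives $\Vert\bY_{i,t+1}-\bar\bY_{t+1}\Vert_\mathrm{F}\leqslant(1-\sqrt{\gamma_\bW})^L\sqrt n\max_j\Vert\bY_{j,t+1/2}-\bar\bY_{t+1}\Vert_\mathrm{F}$. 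The dispersion of the local $\bY_{j,t+1/2}$ is at most $O(\sqrt k(M + \beta/(\lambda_k\alpha_0)))$, since $\Vert\bX\Vert_\mathrm{F}=\sqrt k$. Applying \cref{lemma:q-bound,lemma:r-bound} with $\Vert\bar\bY_{t+1}^\dagger\Vert_2\lesssim 1/(\lambda_k\alpha_0)$ transfers this bound to $\Vert\bX_{i,t+1}-\Xbar_{t+1}\Vert_\mathrm{F}$ and $\Vert\bR_{i,t+1}-\bar\bR_{t+1}\Vert_\mathrm{F}$. For the inverse, $\bR^{-1}-\bar\bR^{-1} = -\bR^{-1}(\bR-\bar\bR)\bar\bR^{-1}$. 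Requiring $(1-\sqrt{\gamma_\bW})^L\leqslant e^{-L\sqrt{\gamma_\bW}}$ to be smaller than the product of these factors yields the logarithmic condition \eqref{eq:L_condition}. The factors $\sqrt{nk}$, $M/\lambda_k$, $1/\Delta$, $1/\alpha_0$ and $1/\varepsilon$ all land inside the log, with a constant like $6$ absorbing the powers of the polynomial factors.

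The main obstacle is closing the induction with consistent constants. The noise bound at time $t$ depends on $\Vert\bar\bR_t^{-1}\Vert$, which requires the lower bound on $\sigma_{\min}(\bar\bY_t)$. That lower bound in turn uses the ANPM guarantees for $\Xbar$ up to time $t-1$, which require the noise conditions already verified there. We will handle this by proving the conditions for $\Xbar$ and the deviation bounds jointly in one induction. The Lemma~\ref{lemma:well_defined}/\ref{lemma:ht_exp_decay} arguments only use the conditions at times $\leqslant t$, so they apply to the truncated sequence. A secondary care point is the $\bR_{i,t}^{-1}$ in the momentum term: its perturbation carries an extra $1/(\lambda_k\alpha_0)^2$ factor. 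We will absorb this into the polynomial inside the log; this is what forces the constant $c_1=6$ rather than $1$.
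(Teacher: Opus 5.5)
Your proposal follows the paper's proof essentially step for step. It views the network average $\Xbar_t$ as ANPM on $\bA$ with gossip-induced noise $\bXi_t$, and runs one joint induction in which \cref{lemma:ht_exp_decay} on the truncated sequence gives $\cos\theta_k(\bU_k,\Xbar_s)\geqslant\alpha_0$. It bounds $\Vert\Rbar_t^{-1}\Vert_2\lesssim 1/(\lambda_k\alpha_0)$ via \cref{th:sv_delete} and \eqref{eq:ukty_relation}, combines \cref{prop:accgossip} with the QR perturbation bounds, and closes with the triangle inequality on $\sin\theta_k$. Tracking $\bR_{i,s}^{-1}-\Rbar_s^{-1}$ instead of $\bY_{i,t}-\Ybar_t$ is only a bookkeeping difference.

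One calibration needs fixing when you write out the constants. The term $\beta\Vert\Rbar_t^{-1}\Vert_2\,\delta$ in your bound on $\Vert\bXi_t\Vert_2$ carries a factor $1/\alpha_0$, so a radius $\delta$ of order $(\lambda_k-2\sqrt{\beta})\min(\alpha_0,\varepsilon)/M$ is too large. You need $\delta$ of order $(\lambda_k-2\sqrt{\beta})\,\alpha_0\min(\alpha_0,\varepsilon)/M$ (the paper uses $\alpha_0^2\varepsilon$). This only adds a power of $1/\alpha_0$ inside the logarithm, which stays within the budget $c_1=6$.
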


The idea for this proof is to define an iterate $\Xbar_t$ which represents an "average" of the local iterates $\{\bX_{i,t}\}_{i=1}^n$ at each step $t$, and to show that (i) $\Xbar_t$ follows the ANPM dynamics of \eqref{eq:anpm} and thus enjoys the convergence guarantees of \cref{th:napm}, and (ii) each local iterate $\bX_{i,t}$ stays close to $\Xbar_t$ throughout the algorithm, so that the convergence of $\Xbar_t$ implies the convergence of each $\bX_{i,t}$. Both of these properties hold under the assumption that the number of gossip iterations $L$ per step is sufficiently large. We  first recall the dynamics of the local variables $\{\bX_{i,t}, \bY_{i,t}, \bR_{i,t}\}_{i=1}^n$ given by \cref{alg:adepm} for all $t\geqslant 1$:
\begin{align*}
  \forall i \in \{1, \dots, n\}, \quad & \bY_{i,t+1/2} = \bA_i \bX_{i,t} - \beta \bX_{i,t-1} \bR_{i,t}^{-1}, \\
  & \bY_{i,t+1} = \mathrm{AccGossip}(\{\bY_{j,t+1/2}\}_{j=1}^n, \mathbf{W}, L, i), \\
  & \bX_{i,t+1}, \bR_{i,t+1} = \mathrm{QR}(\bY_{i,t+1}),
\end{align*}

where $\mathrm{AccGossip}(\{\bY_{j,t+1/2}\}_{j=1}^n, \mathbf{W}, L, i)$ refers to the output of \cref{alg:acc_gossip} on node $i$ after $L$ iterations, with gossip matrix $\mathbf{W}$ and initialization $\{\bY_{j,t+1/2}\}_{j=1}^n$. We can then define the global quantities that will be the backbone of our analysis. These quantities will typically be represented by an overline symbol:
\begin{align}
  & \Xbar_0 := \bX_0, \qquad \bXi_0 := \mathbf{0} \nonumber\\
  & \Ybar_1 := \frac{1}{n} \sum_{i=1}^n \frac{1}{2}\bA_i \bX_{i,0} = \frac{1}{2}\bA\Xbar_0, \qquad \Xbar_1, \Rbar_1 := \mathrm{QR}(\Ybar_1), \nonumber\\
  & \forall t \geqslant 1, \quad \begin{cases}
    & \Ybar_{t+1} := \frac{1}{n}\sum_{i=1}^n \bY_{i,t+1/2} = \frac{1}{n} \sum_{i=1}^n \left( \bA_i\bX_{i,t} - \beta\bX_{i,t-1}\bR_{i,t}^{-1} \right), \\
  & \Xbar_{t+1}, \Rbar_{t+1} := \mathrm{QR}(\Ybar_{t+1}), \\
  & \bXi_{t} := \Ybar_{t+1} - \left( \bA \Xbar_t - \beta \Xbar_{t-1} \Rbar_t^{-1} \right).
  \end{cases} \label{eq:adepm_anpm}
\end{align}

Then, $(\Xbar_t, \Rbar_t, \Ybar_t)$ follows the ANPM dynamics with noise $\bXi_t$:
\begin{align}
  & \Ybar_1 = \frac{1}{2}\bA \Xbar_0 + \bXi_0, \quad \Xbar_1, \Rbar_1 = \mathrm{QR}(\Ybar_1),\nonumber \\
  & \forall t \geqslant 1, \quad \begin{cases}
    & \Ybar_{t+1} = \bA \Xbar_t - \beta \Xbar_{t-1} \Rbar_t^{-1} + \bXi_t, \\
  & \Xbar_{t+1}, \Rbar_{t+1} := \mathrm{QR}(\Ybar_{t+1}).
  \end{cases} \label{eq:xbar_anpm}
\end{align}

Furthermore, from \cref{prop:accgossip}, we have the following upper bound on the approximation error $\Vert\bY_{i,t} - \Ybar_t\Vert_\F$ after $L$ gossip iterations at each step $t$:
\begin{align}\label{eq:yibar_bound}
  \forall i \in \{1, \dots, n\}, \quad \Vert \bY_{i,t} - \Ybar_t \Vert_\F \leqslant (1-\sqrt{\gamma_\bW})^L \sqrt{n} \max_{j=1,\dots,n} \Vert \bY_{j,t-1/2} - \Ybar_{t} \Vert_\F.
\end{align}

Accordingly with the notations used for the proof of \cref{th:napm} in \cref{appdx:napm}, we introduce the sequence $\{\bG_t\}$ defined as:
\begin{align*}
  & \bG_0 := \frac{1}{2}\bI_k, \qquad \bG_{t+1} = (\bI_k - \beta \bLambda_k^{-1} \bG_t \bLambda_k^{-1} + \bE_{t+1})^{-1}, \quad && \forall t \geqslant 0,\\
  \text{where} \quad & \bE_{t} := \bLambda_k^{-1} (\bU_k^\top\bXi_t) (\bU_k^\top \Xbar_t)^{-1}, \quad && \forall t \geqslant 0.
\end{align*}

To prove \cref{th:adepm_apdx}, we use the following proposition, which shows that $\bXi_t$ satisfies the noise conditions \eqref{eq:noiseconda} and \eqref{eq:noisecondb} under the assumption that $L$ satisfies \eqref{eq:L_condition}. This guarantees that $\{\bG_t\}$ is well-defined, as shown in \cref{lemma:well_defined}, and will allow us to apply \cref{th:napm} to $\Xbar_t$. This proposition also shows that each local iterate $\bX_{i,t}$ stays close to $\Xbar_t$, which will allow us to derive the convergence of each $\bX_{i,t}$ from the convergence of $\Xbar_t$.

\begin{proposition}\label{lemma:adepm_lemma}
  Under the assumptions of \cref{th:adepm_apdx}, for all $t \geqslant 1$, we have that
    \begin{align*}
    (\mathcal{P}_t): \begin{cases}
      \max_{i=1, \dots, n} \Vert \bY_{i,t} - \Ybar_t \Vert_\F \leqslant c_3\frac{\lambda_k^2 \alpha_0^5}{M} \frac{\lambda_k-2\sqrt{\beta}}{\lambda_k}\varepsilon, \\
      \max_{i=1,\dots,n}\Vert\bX_{i,t} - \Xbar_t\Vert_\F \leqslant \frac{c_4}{M} (\lambda_k-2\sqrt{\beta})\alpha_0^2\varepsilon,\\ 
      \Vert\bXi_t\Vert_2 \leqslant c (\lambda_k - 2\sqrt{\beta}) \cos \theta_k(\bU_k, \Xbar_t)\varepsilon.
    \end{cases}
  \end{align*}
  where $c := 1/32$ is the universal constant from \cref{th:napm}, and $c_3 := 1/1250$ and $c_4 := 1/200$ are universal constants.
\end{proposition}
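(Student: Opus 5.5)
The plan is a strong induction on $t\geqslant 1$: assume $(\mathcal{P}_s)$ for all $1\leqslant s\leqslant t$ and prove $(\mathcal{P}_{t+1})$. The base case $t=1$ is handled the same way with $\bXi_0=\mathbf{0}$, $\bX_{i,0}=\Xbar_0=\bX_0$ and $\Ybar_1=\frac12\bA\bX_0$. The first step is to record what the hypothesis already gives. Since $\Vert\bXi_s\Vert_2\leqslant c(\lambda_k-2\sqrt\beta)\cos\theta_k(\bU_k,\Xbar_s)\varepsilon$ for $s\leqslant t$, both noise conditions \eqref{eq:noiseconda}-\eqref{eq:noisecondb} hold up to step $t$ for the averaged ANPM dynamics \eqref{eq:xbar_anpm}. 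The proofs of \cref{lemma:well_defined} and \cref{lemma:ht_exp_decay} are causal, using only the noise up to the current step, so we get $\tan\theta_k(\bU_k,\Xbar_s)\leqslant\varepsilon/2+\tan\theta_k(\bU_k,\bX_0)$ for $s\leqslant t+1$. Hence $\cos\theta_k(\bU_k,\Xbar_s)\geqslant\alpha_0$, and so $\sigma_{\min}(\Xbar_s)=1$ and $\sigma_{\min}(\bU_k^\top\Xbar_s)\geqslant\alpha_0$. From \eqref{eq:ukty_relation} and $\Vert\bG_s^{-1}\Vert_2$ being bounded below (the inverse of the $\Vert\bG_s\Vert_2\leqslant(1/2-c\Delta)^{-1}$ bound, i.e.\ $\sigma_{\min}(\bG_s^{-1})\geqslant 1/2-c$), we also get $\sigma_{\min}(\Ybar_{s+1})\geqslant\sigma_{\min}(\bU_k^\top\Ybar_{s+1})\gtrsim\lambda_k\alpha_0$, hence $\Vert\Rbar_{s+1}^{-1}\Vert_2\lesssim 1/(\lambda_k\alpha_0)$ and $\Vert\Ybar_{s+1}^\dagger\Vert_2\lesssim 1/(\lambda_k\alpha_0)$.

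\textbf{Step 1: gossip error.} We apply \eqref{eq:yibar_bound} at step $t+1$. The spread $\max_j\Vert\bY_{j,t+1/2}-\Ybar_{t+1}\Vert_\F$ is bounded crudely by $\sqrt k\,(2M+2\beta\max_j\Vert\bR_{j,t}^{-1}\Vert_2)$. Since $\bR_{j,t}$ is close to $\Rbar_t$ by the perturbation bound \cref{lemma:r-bound}, and $\beta\leqslant\lambda_k^2/4$, this is $\lesssim\sqrt k\,M/\alpha_0$. The factor $(1-\sqrt{\gamma_\bW})^L\sqrt n$ is then made smaller than $c_3\lambda_k^2\alpha_0^6\Delta\varepsilon/(\sqrt{nk}M^2)$ by the choice \eqref{eq:L_condition}; the constant $c_1=6$ in the exponent absorbs the various powers of $M/\lambda_k$, $1/\alpha_0$ and $1/\Delta$. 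This gives the first line of $(\mathcal{P}_{t+1})$.

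\textbf{Step 2: local Q-factors.} We apply \cref{lemma:q-bound} with $\bX=\Ybar_{t+1}$ and $\Delta\bX=\bY_{i,t+1}-\Ybar_{t+1}$. The smallness condition $\Vert\Ybar_{t+1}^\dagger\Vert_2\Vert\Delta\bX\Vert_2<1/2$ follows from Step 1 and $\Vert\Ybar_{t+1}^\dagger\Vert_2\lesssim1/(\lambda_k\alpha_0)$. It yields $\Vert\bX_{i,t+1}-\Xbar_{t+1}\Vert_\F\leqslant 2\sqrt2\,\Vert\Delta\bX\Vert_\F/(\lambda_k\alpha_0)$, which is the second line once the constants are matched. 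Similarly, \cref{lemma:r-bound} gives $\Vert\bR_{i,t+1}-\Rbar_{t+1}\Vert_\F$, and via $\bR^{-1}-\Rbar^{-1}=\bR^{-1}(\Rbar-\bR)\Rbar^{-1}$ a bound on $\Vert\bR_{i,t+1}^{-1}-\Rbar_{t+1}^{-1}\Vert_2\lesssim\Vert\Delta\bR\Vert/(\lambda_k\alpha_0)^2$.

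\textbf{Step 3: the effective noise.} We expand
\[
\bXi_{t+1}=\frac1n\sum_i\Big(\bA_i(\bX_{i,t+1}-\Xbar_{t+1})-\beta(\bX_{i,t}\bR_{i,t+1}^{-1}-\Xbar_t\Rbar_{t+1}^{-1})\Big),
\]
using $\frac1n\sum_i\bA_i\Xbar_{t+1}=\bA\Xbar_{t+1}$. The first term is at most $M\max_i\Vert\bX_{i,t+1}-\Xbar_{t+1}\Vert$. The second term splits as $(\bX_{i,t}-\Xbar_t)\bR_{i,t+1}^{-1}+\Xbar_t(\bR_{i,t+1}^{-1}-\Rbar_{t+1}^{-1})$ and is bounded using Step 2 at steps $t$ and $t+1$ together with $\beta\leqslant\lambda_k^2/4$. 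Every contribution ends up proportional to $(\lambda_k-2\sqrt\beta)\alpha_0\varepsilon$ times a small constant. Since $\alpha_0\leqslant\cos\theta_k(\bU_k,\Xbar_{t+1})$, this gives the third line.

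The main obstacle is the bookkeeping of the powers of $\alpha_0$ and of $M/\lambda_k$. The $\bR^{-1}$ perturbation costs two factors of $1/(\lambda_k\alpha_0)$ on top of the $1/(\lambda_k\alpha_0)$ from the Q-factor, which is why the first line carries $\alpha_0^5$. A subtle point is that $\cos\theta_k(\bU_k,\Xbar_{t+1})\geqslant\alpha_0$ must be established \emph{before} bounding $\bXi_{t+1}$. This is legitimate because $\Xbar_{t+1}$ depends only on $\bXi_0,\dots,\bXi_t$. I would present it as part of the induction and check that constants such as $c_3=1/1250$ and $c_4=1/200$ close the loop.
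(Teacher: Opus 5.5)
Your proposal is correct and follows the paper's proof essentially step for step: induction on $t$, the gossip bound from \cref{prop:accgossip} with the crude spread estimate absorbed by \eqref{eq:L_condition}, the Q- and R-factor perturbation bounds, and the same two-term decomposition of $\bXi_{t+1}$. The only deviation is that you control $\Vert\bR_{j,t}^{-1}\Vert_2$ through \cref{lemma:r-bound}, whereas the paper (\cref{lemma:riinv_ub}) applies Weyl's inequality directly to $\sigma_{\min}(\bY_{j,t})$; both work, but Weyl's inequality is simpler because it needs no bound on $\Vert\Rbar_t\Vert_2$.

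One correction to your bookkeeping remark: \cref{lemma:r-bound} also carries the factor $\Vert\Rbar_{t+1}\Vert_2=\Vert\Ybar_{t+1}\Vert_2\lesssim M/\alpha_0$. The $\bR^{-1}$ path therefore costs four powers of $1/\alpha_0$ and a factor $M$, not three, and that is exactly what the $\lambda_k^2\alpha_0^5/M$ in the first line of $(\mathcal{P}_t)$ compensates to leave a single $\alpha_0$ in the bound on $\bXi_{t+1}$.
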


The proof of this proposition relies on multiple technical lemmas, which relate various bounds on the norms of quantities involved in the algorithm.

\begin{lemma}\label{lemma:bxi_implies_cos}
  Let $t\geqslant 1$ and assume that 
  \begin{align*}
    \forall s \in \{1, \dots, t-1\}, \quad \Vert\bXi_s\Vert_2 \leqslant c (\lambda_k - 2\sqrt{\beta}) \cos \theta_k(\bU_k, \Xbar_s) \varepsilon.
  \end{align*}
  Then, for all $s \in \{0,\dots,t\}$, we have that
  \begin{align}
    \cos\theta_k(\bU_k,\Xbar_s) & \geqslant \alpha_0. \label{eq:cos_theta_k_lb}
  \end{align}
\end{lemma}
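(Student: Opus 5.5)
We apply the machinery behind \cref{th:napm} to the averaged ANPM sequence $\{\Xbar_s\}$ of \eqref{eq:xbar_anpm}, truncated at time $t$. The key observation is that $\Xbar_s$ for $s\leqslant t$ depends only on $\bXi_0,\dots,\bXi_{t-1}$. The statement assumes a bound only for $s\in\{1,\dots,t-1\}$, but $\bXi_0=\mathbf{0}$ by definition, so it holds trivially for $s=0$ as well. Moreover, since $\cos\theta_k(\bU_k,\Xbar_s)\leqslant 1$, the hypothesis gives, for all $s\leqslant t-1$,
\begin{align*}
\Vert\bU_{-k}^\top\bXi_s\Vert_2 &\leqslant \Vert\bXi_s\Vert_2\leqslant c(\lambda_k-2\sqrt{\beta})\varepsilon,\\
\Vert\bU_k^\top\bXi_s\Vert_2 &\leqslant \Vert\bXi_s\Vert_2\leqslant c(\lambda_k-2\sqrt{\beta})\cos\theta_k(\bU_k,\Xbar_s),
\end{align*}
where the second line also uses $\varepsilon<1$. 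These are exactly the noise conditions \eqref{eq:noisecond1}--\eqref{eq:noisecond2}, but only up to index $t-1$.

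To use the full-sequence results, we define a modified perturbation sequence $\tilde\bXi_s:=\bXi_s$ for $s\leqslant t-1$ and $\tilde\bXi_s:=\mathbf{0}$ for $s\geqslant t$. The resulting ANPM iterates coincide with $\Xbar_s$ for $s\leqslant t$, and the modified sequence satisfies \eqref{eq:noisecond1}--\eqref{eq:noisecond2} for every $s\geqslant 0$. Since $\cos\theta_k(\bU_k,\bX_0)>0$ by assumption, \cref{lemma:well_defined} shows that all quantities are well defined. \cref{lemma:ht_exp_decay} then applies and gives, for every $s\leqslant t$,
\[
\tan\theta_k(\bU_k,\Xbar_s)\leqslant \varepsilon/2+h_0(1-\sqrt{\Delta}/2)^s\leqslant \varepsilon/2+\tan\theta_k(\bU_k,\bX_0).
\]
Using $\cos\theta=(1+\tan^2\theta)^{-1/2}$, which is decreasing in $\tan\theta$, we conclude $\cos\theta_k(\bU_k,\Xbar_s)\geqslant \alpha_0$.

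The only delicate point is the truncation. The lemmas in \cref{appdx:napm} are stated under noise conditions holding for all $t\geqslant 0$, whereas here they hold only up to $t-1$. Replacing the later noise by zero settles this, because each lemma's bound at index $s$ depends only on noise at indices strictly below $s$ (together with $\bE_s$, which vanishes once $\tilde\bXi_s=\mathbf{0}$).
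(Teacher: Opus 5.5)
Your proposal is correct and follows the paper's proof. You check that the hypotheses (together with $\bXi_0=\mathbf{0}$, $\cos\theta_k\leqslant 1$ and $\varepsilon<1$) give \eqref{eq:noisecond1}--\eqref{eq:noisecond2} up to index $t-1$, apply \cref{lemma:ht_exp_decay} to $\{\Xbar_s\}_{s\leqslant t}$, and turn $\tan\theta_k(\bU_k,\Xbar_s)\leqslant \tan\theta_k(\bU_k,\bX_0)+\varepsilon/2$ into $\cos\theta_k(\bU_k,\Xbar_s)\geqslant\alpha_0$. Your truncation device (zeroing the noise from index $t$ on, which leaves $\Xbar_0,\dots,\Xbar_t$ unchanged) also makes rigorous a step the paper leaves implicit: applying a lemma stated under all-time noise conditions when those conditions are only known up to $t-1$.
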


\begin{proof}
  Under the hypothesis of the lemma, $\bXi_s$ satisfies the noise conditions \eqref{eq:noiseconda} and \eqref{eq:noisecondb} of \cref{th:napm} for all $s \in \{0, \dots, t-1\}$ (recall that $\bXi_0$ is defined to be $\mathbf{0}$). Thus, we can apply \cref{lemma:ht_exp_decay} to the sequence $\{\Xbar_s\}_{s=0}^t$ defined by \eqref{eq:xbar_anpm}, which gives for all $s \in \{0, \dots, t\}$,
  \begin{align*}
    \tan \theta_k(\bU_k, \Xbar_s) & \leqslant  (1-\sqrt{\Delta}/2)^s\tan \theta_k(\bU_k, \Xbar_0) + \frac{\varepsilon}{2} \\
    & \leqslant \tan \theta_k(\bU_k, \Xbar_0) + \frac{\varepsilon}{2},
  \end{align*}
  where $\Delta = 1 - 2\sqrt{\beta}/\lambda_k \in (0,1)$. Then, using the fact that $\cos\theta = 1/\sqrt{1+\tan^2\theta}$ for all $\theta \in [0,\pi/2)$, we have for all $s \in \{0, \dots, t\}$,
  \begin{align*}
    \cos\theta_k(\bU_k, \Xbar_s) & \geqslant \frac{1}{\sqrt{1 + \left(\tan \theta_k(\bU_k, \Xbar_0) + \frac{\varepsilon}{2}\right)^2}} = \alpha_0,
  \end{align*}
  which concludes the proof.
\end{proof}

\begin{lemma}\label{lemma:bxi_implies_ydagger}
  Let $t\geqslant 1$ and assume that 
  \begin{align*}
    & \forall s \in \{1, \dots, t-1\}, \quad \Vert\bXi_s\Vert_2 \leqslant c (\lambda_k - 2\sqrt{\beta}) \cos \theta_k(\bU_k, \Xbar_s) \varepsilon.
  \end{align*}
  Then,
  \begin{align}\label{eq:Ybardagger_UB}
    \Vert\Ybar_t^\dagger\Vert_2 = \Vert \Rbar_t^{-1}\Vert_2 \leqslant \frac{1}{\lambda_k\alpha_0}\frac{1}{1/2 - c}.
  \end{align}
\end{lemma}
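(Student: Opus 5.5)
Since $\Ybar_t = \Xbar_t\Rbar_t$ with $\Xbar_t$ column-orthonormal, the first equality is immediate: $\Ybar_t^\dagger = \Rbar_t^{-1}\Xbar_t^\top$, so $\Vert\Ybar_t^\dagger\Vert_2 = \Vert\Rbar_t^{-1}\Vert_2 = 1/\sigma_{\min}(\Rbar_t) = 1/\sigma_{\min}(\Ybar_t)$. It therefore suffices to lower bound $\sigma_{\min}(\Ybar_t)$ by $\lambda_k\alpha_0(1/2-c)$.

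\textbf{Step 1.} I reduce to the top-$k$ block. Multiplying by $\bU_k^\top$ (a row deletion in the eigenbasis), \cref{th:sv_delete} applied repeatedly gives $\sigma_{\min}(\Ybar_t)\geqslant \sigma_{\min}(\bU_k^\top\Ybar_t)$.

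\textbf{Step 2.} I use the relation \eqref{eq:ukty_relation} of \cref{lemma:well_defined}, applied to the ANPM sequence $\{\Xbar_s\}$ of \eqref{eq:xbar_anpm}: $\bU_k^\top\Ybar_t = \bLambda_k\bG_{t-1}^{-1}(\bU_k^\top\Xbar_{t-1})$. Applying this lemma requires condition \eqref{eq:noisecond2} up to step $t-1$. That condition follows from the hypothesis, since $\Vert\bU_k^\top\bXi_s\Vert_2\leqslant\Vert\bXi_s\Vert_2\leqslant c(\lambda_k-2\sqrt\beta)\cos\theta_k(\bU_k,\Xbar_s)\varepsilon$ with $\varepsilon<1$, and $\bXi_0=\mathbf 0$. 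The lemma's induction only uses the condition at steps $\leqslant t-1$ to reach step $t-1$. Then
\[
\sigma_{\min}(\bU_k^\top\Ybar_t)\geqslant \lambda_k\,\sigma_{\min}(\bG_{t-1}^{-1})\,\sigma_{\min}(\bU_k^\top\Xbar_{t-1}) = \lambda_k\,\Vert\bG_{t-1}\Vert_2^{-1}\cos\theta_k(\bU_k,\Xbar_{t-1}).
\]

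\textbf{Step 3.} I bound both factors. By \eqref{eq:gt_bound}, $\Vert\bG_{t-1}\Vert_2^{-1}\geqslant 1/2-c\Delta\geqslant 1/2-c$. By \cref{lemma:bxi_implies_cos}, whose hypothesis is exactly ours, $\cos\theta_k(\bU_k,\Xbar_{t-1})\geqslant\alpha_0$. Combining gives $\Vert\Rbar_t^{-1}\Vert_2\leqslant 1/(\lambda_k\alpha_0(1/2-c))$.

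The main obstacle is bookkeeping in the case $t=1$, where $\bG_0$ is defined here as $\tfrac12\bI_k$, consistent with $\bE_0=\mathbf 0$. There the bound $\Vert\bG_0^{-1}\Vert$ is checked directly: $\bU_k^\top\Ybar_1=\tfrac12\bLambda_k\bU_k^\top\Xbar_0$, with $\sigma_{\min}\geqslant \tfrac12\lambda_k\cos\theta_k(\bU_k,\bX_0)\geqslant\lambda_k\alpha_0(1/2-c)$. The other point to verify is that the hypothesis indexes suffice for \cref{lemma:well_defined} up to index $t-1$.
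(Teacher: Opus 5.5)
Your proposal is correct and follows the paper's proof step for step. Both use the row-deletion bound $\sigma_{\min}(\Ybar_t)\geqslant\sigma_{\min}(\bU_k^\top\Ybar_t)$, the identity $\bU_k^\top\Ybar_t=\bLambda_k\bG_{t-1}^{-1}\bU_k^\top\Xbar_{t-1}$ with $\Vert\bG_{t-1}\Vert_2\leqslant(1/2-c)^{-1}$ from \cref{lemma:well_defined}, and $\cos\theta_k(\bU_k,\Xbar_{t-1})\geqslant\alpha_0$ from \cref{lemma:bxi_implies_cos}.

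One small correction: with $\bE_0=\mathbf 0$ the consistent initialization is $\bG_0=(\tfrac12\bI_k)^{-1}=2\bI_k$, not $\tfrac12\bI_k$; the value $\tfrac12\bI_k$ written in the decentralized section is a slip. Your direct check $\bU_k^\top\Ybar_1=\tfrac12\bLambda_k\bU_k^\top\Xbar_0$ is exactly the identity with $\bG_0=2\bI_k$, so the case $t=1$ is already covered by the general argument.
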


\begin{proof}
  First, from \cref{lemma:bxi_implies_cos}, we have that $\cos \theta_k(\bU_k, \Xbar_{t-1}) \geqslant \alpha_0$. Furthermore, from the hypothesis of the lemma, we have that the noise condition \eqref{eq:noisecond2} is satisfied, so that \cref{lemma:well_defined} holds. In particular, we have the following bound on $\Vert\bG_{t-1}\Vert_2$:
  \begin{align}\label{eq:gt_norm_bound}
    \Vert \bG_{t-1} \Vert_2 \leqslant \frac{1}{1/2 - c},
  \end{align}
  and the relationship \eqref{eq:ukty_relation}:
  \begin{align*}
    \bU_k^\top \Ybar_t = \bLambda_k \bG_{t-1}^{-1} \bU_k^\top \Xbar_{t-1}.
  \end{align*}
  To prove the upper bound on $\Vert\Ybar_t^\dagger\Vert_2$, we prove a positive lower bound on $\sigma_{\min}(\Ybar_t)$. Because of \cref{th:sv_delete}, we have $\sigma_{\min}(\Ybar_t) = \sigma_{\min}\left(\begin{bmatrix}
    \bU_k^\top \Ybar_t \\ \bU_{-k}^\top \Ybar_t
  \end{bmatrix}\right) \geqslant \sigma_{\min}(\bU_k^\top \Ybar_t)$. We can then lower bound $\sigma_{\min}(\bU_k^\top \Ybar_t)$ as follows: 
  \begin{align}
    \sigma_{\min}(\bU_k^\top \Ybar_t) \geqslant \sigma_{\min}(\bLambda_k \bG_{t-1}^{-1} \bU_k^\top \Xbar_{t-1}) \geqslant \frac{\lambda_k \cos \theta_k(\bU_k, \Xbar_{t-1})}{\Vert \bG_{t-1} \Vert_2} \stackrel{\eqref{eq:cos_theta_k_lb}, \eqref{eq:gt_norm_bound}}{\geqslant} {\lambda_k \alpha_0}(1/2 - c) > 0.
  \end{align}
  Since $\Rbar_t$ is the R-factor of $\Ybar_t$, we immediately have that 
  \begin{align*}
    \Vert\Ybar_t^\dagger\Vert_2 = \Vert \Rbar_t^{-1}\Vert_2 \leqslant \frac{1}{\lambda_k\alpha_0}\frac{1}{1/2 - c}.
  \end{align*}
\end{proof}

\begin{lemma}\label{lemma:riinv_ub}
  Let $t\geqslant 1$ and assume that for all $s \in \{1, \dots, t-1\}$,
  \begin{align*}
    \Vert\bXi_s\Vert_2 \leqslant c (\lambda_k - 2\sqrt{\beta}) \cos \theta_k(\bU_k, \Xbar_s) \varepsilon,
  \end{align*}
  and that for all $i\in\{1,\dots,n\}$,
  \begin{align*}
    \Vert\bY_{i,t} - \Ybar_t \Vert_\F \leqslant c_3\frac{\lambda_k^2 \alpha_0^5}{M} \frac{\lambda_k-2\sqrt{\beta}}{\lambda_k}\varepsilon.
  \end{align*}
  Then, for all $i\in\{1,\dots,n\}$,
  \begin{align}
    \Vert\bR_{i,t}^{-1}\Vert_2 & \leqslant \frac{1}{c_5}\frac{1}{\alpha_0\lambda_k}, \label{eq:ri_inv_ub}
  \end{align}
  where $c_5 := \frac{1}{2} - c - c_3$.
\end{lemma}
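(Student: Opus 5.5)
We bound $\sigma_{\min}(\bY_{i,t})$ from below and then use that $\bR_{i,t}$ is the R-factor of $\bY_{i,t}$, so $\Vert\bR_{i,t}^{-1}\Vert_2 = 1/\sigma_{\min}(\bR_{i,t}) = 1/\sigma_{\min}(\bY_{i,t})$. The natural comparison point is the network average $\Ybar_t$, for which \cref{lemma:bxi_implies_ydagger} already gives, under the same hypothesis on $\{\bXi_s\}_{s<t}$, the lower bound $\sigma_{\min}(\Ybar_t) = 1/\Vert\Ybar_t^\dagger\Vert_2 \geqslant \lambda_k\alpha_0(1/2 - c)$.

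Next we apply Weyl's inequality (\cref{th:weyl}) to $\bY_{i,t}$ and $\Ybar_t$, with $i=k$ being the smallest singular value:
\begin{align*}
  \sigma_{\min}(\bY_{i,t}) \geqslant \sigma_{\min}(\Ybar_t) - \Vert \bY_{i,t} - \Ybar_t\Vert_2 \geqslant \lambda_k\alpha_0(1/2-c) - \Vert \bY_{i,t} - \Ybar_t\Vert_\F .
\end{align*}
The hypothesis bounds the last term by $c_3\frac{\lambda_k^2\alpha_0^5}{M}\Delta\varepsilon$ with $\Delta = (\lambda_k-2\sqrt{\beta})/\lambda_k\in(0,1)$. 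We then reduce it to $c_3\lambda_k\alpha_0$ using $\alpha_0\leqslant 1$, $\varepsilon<1$ and $\lambda_k \leqslant M$. The last inequality holds because $\lambda_k\leqslant\lambda_1 = \Vert n^{-1}\sum_i\bA_i\Vert_2 \leqslant \max_i\Vert\bA_i\Vert_2 = M$. Hence $\sigma_{\min}(\bY_{i,t}) \geqslant \lambda_k\alpha_0(1/2 - c - c_3) = c_5\lambda_k\alpha_0 > 0$.

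This lower bound shows that $\bY_{i,t}$ has full column rank, so its QR decomposition is unique and $\bR_{i,t}$ is invertible. Since $\bR_{i,t}^\top\bR_{i,t} = \bY_{i,t}^\top\bY_{i,t}$, the singular values of $\bR_{i,t}$ and $\bY_{i,t}$ coincide, which gives \eqref{eq:ri_inv_ub}.

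The only point needing care is the inequality $\lambda_k\leqslant M$ and the observation that the very small noise budget with $\alpha_0^5$ and $\varepsilon$ only helps. I see no real obstacle. The case $t=1$ is covered in the same way, because \cref{lemma:bxi_implies_ydagger} holds vacuously there.
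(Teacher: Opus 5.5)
Your proof is correct and matches the paper's argument step for step: Weyl's inequality comparing $\bY_{i,t}$ with $\Ybar_t$, the lower bound $\sigma_{\min}(\Ybar_t)\geqslant\lambda_k\alpha_0(1/2-c)$ from \cref{lemma:bxi_implies_ydagger}, and the reduction of the perturbation term to $c_3\lambda_k\alpha_0$ using $\alpha_0,\varepsilon,\Delta\leqslant 1$ and $\lambda_k\leqslant M$. Your remarks that $\bR_{i,t}^\top\bR_{i,t}=\bY_{i,t}^\top\bY_{i,t}$ and that the $t=1$ case holds vacuously are small clarifications the paper leaves implicit.
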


\begin{proof}

  Let $i \in \{1,\dots,n\}$. $\bR_{i,t}$ is the R-factor of $\bY_{i,t}$. We will thus prove the lemma by lower bounding $\sigma_{\min}(\bY_{i,t})$. Using \cref{th:weyl}, we have
  \begin{align*}
    \sigma_{\min}(\bY_{i,t}) & \geqslant \sigma_{\min}(\Ybar_t) - \Vert \bY_{i,t} - \Ybar_t \Vert_2 \geqslant \sigma_{\min}(\Ybar_t) - \Vert \bY_{i,t} - \Ybar_t \Vert_\F
  \end{align*}
  The assumptions of \cref{lemma:bxi_implies_ydagger} are satisfied, so that we can use \eqref{eq:Ybardagger_UB} to bound $\sigma_{\min}(\Ybar_t)$. Thus, using both \eqref{eq:Ybardagger_UB} and the assumption on $\Vert \bY_{i,t} - \Ybar_t \Vert_\F$, we have
  \begin{align*}
       \sigma_{\min}(\bY_{i,t}) & \geqslant \lambda_k \alpha_0 (1/2 - c) - c_3\frac{\lambda_k^2 \alpha_0^5}{M} \frac{\lambda_k-2\sqrt{\beta}}{\lambda_k}\varepsilon \\
    & \geqslant c_5 \lambda_k\alpha_0 ,
  \end{align*}
  where the last inequality is due to $\alpha_0 \leqslant 1$,  $\frac{\lambda_k-2\sqrt{\beta}}{\lambda_k} \leqslant 1$, $\varepsilon\leqslant 1$, and 
    \begin{align}
      \lambda_k \leqslant \lambda_1 = \Vert \bA \Vert_2 = \left\Vert \frac{1}{n}\sum_{i=1}^n \bA_i \right\Vert_2 \leqslant \frac{1}{n}\sum_{i=1}^n \Vert \bA_i \Vert_2 \leqslant M. \label{eq:lambda_k_leq_M} 
  \end{align}
  We thus obtain
  \begin{align*}
    \Vert \bR_{i,t}^{-1} \Vert_2 & \leqslant \frac{1}{c_5 \lambda_k \alpha_0}.
  \end{align*}
  
\end{proof}

\begin{lemma}\label{lemma:allbounds_adepm}
  Let $t\geqslant 1$ and assume that for all $s \in \{1, \dots, t-1\}$,
  \begin{align*}
    \Vert\bXi_s\Vert_2 \leqslant c (\lambda_k - 2\sqrt{\beta}) \cos \theta_k(\bU_k, \Xbar_s) \varepsilon,
  \end{align*}
  and that for all $i\in\{1,\dots,n\}$,
  \begin{align*}
    \Vert\bY_{i,t} - \Ybar_t \Vert_\F \leqslant c_3\frac{\lambda_k^2 \alpha_0^5}{M} \frac{\lambda_k-2\sqrt{\beta}}{\lambda_k}\varepsilon.
  \end{align*}
  Then, for all $i\in\{1,\dots,n\}$, the following inequalities hold:
  \begin{align}
    & \Vert\Ybar_t^\dagger\Vert_2\Vert\bY_{i,t} - \Ybar_t\Vert_\F \leqslant \frac{1}{2} < 1, \label{eq:ydag_ydiff_half}\\
    & \frac{\sqrt{2}\Vert\Ybar_t^\dagger\Vert_2\Vert\bY_{i,t} - \Ybar_t\Vert_\F}{1 - \Vert\Ybar_t^\dagger\Vert_2\Vert\bY_{i,t} - \Ybar_t\Vert_\F} \leqslant c_4\frac{\lambda_k^2 \alpha_0^4}{M} \frac{\lambda_k-2\sqrt{\beta}}{\lambda_k}\varepsilon, \label{eq:qr_bound_yfactor} \\
    & \Vert \bX_{i,t} - \Xbar_t \Vert_\F \leqslant c_4 \frac{\alpha_0^2}{M} (\lambda_k-2\sqrt{\beta})\varepsilon, \label{eq:xdiff_bound}\\
    & \Vert \bR_{i,t} - \Rbar_t \Vert_2 \leqslant c_4c_6 (\lambda_k-2\sqrt{\beta})\alpha_0^3 \varepsilon, \label{eq:rdiff_bound}
  \end{align}
  where $c_3$ and $c_4$ are the universal constants defined in \cref{lemma:adepm_lemma}, $c_5$ is the universal constant defined in \cref{lemma:riinv_ub}, and $c_6 := 1 +\frac{1}{4c_5}$.
\end{lemma}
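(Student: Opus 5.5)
We prove the four inequalities in order, each feeding the next, starting from the two available bounds: \cref{lemma:bxi_implies_ydagger}, which gives $\Vert\Ybar_t^\dagger\Vert_2 \leqslant \frac{1}{\lambda_k\alpha_0(1/2-c)}$, and the hypothesis $\Vert\bY_{i,t}-\Ybar_t\Vert_\F \leqslant c_3\frac{\lambda_k^2\alpha_0^5}{M}\Delta\varepsilon$, where $\Delta = (\lambda_k-2\sqrt{\beta})/\lambda_k$.

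\textbf{Step 1: inequality \eqref{eq:ydag_ydiff_half}.} Multiply the two bounds:
\begin{align*}
\Vert\Ybar_t^\dagger\Vert_2\Vert\bY_{i,t}-\Ybar_t\Vert_\F \leqslant \frac{c_3}{1/2-c}\frac{\lambda_k}{M}\alpha_0^4\Delta\varepsilon .
\end{align*}
Since $\lambda_k\leqslant M$ by \eqref{eq:lambda_k_leq_M}, and $\alpha_0,\Delta,\varepsilon\leqslant 1$, the right-hand side is at most $c_3/(1/2-c)$. This is far below $1/2$ for $c_3=1/1250$.

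\textbf{Step 2: inequality \eqref{eq:qr_bound_yfactor}.} By Step 1 the denominator is at least $1/2$, so the left-hand side is at most $2\sqrt2$ times the product bounded in Step 1, namely
\begin{align*}
\frac{2\sqrt2\,c_3}{1/2-c}\frac{\lambda_k\alpha_0^4}{M}\Delta\varepsilon .
\end{align*}
To match the stated form $c_4\lambda_k^2\alpha_0^4\Delta\varepsilon/M$, the dimensions must be reconciled. Presumably the intended quantity carries a $\lambda_k$ scaling, or the bound is meant as $c_4\frac{\lambda_k}{M}\alpha_0^4\Delta\varepsilon$. I would use the latter, dimensionally consistent form and check numerically that $2\sqrt2\,c_3/(1/2-c)\leqslant c_4=1/200$. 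It is: the left side is about $0.0024$.

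\textbf{Step 3: inequality \eqref{eq:xdiff_bound}.} Apply \cref{lemma:q-bound} with $\bX=\Ybar_t$ and $\Delta\bX=\bY_{i,t}-\Ybar_t$. The applicability condition holds by Step 1, since $\Vert\cdot\Vert_2\leqslant\Vert\cdot\Vert_\F$. Because QR factorizations are unique for full-rank matrices, the perturbed Q-factor is $\bX_{i,t}$, so $\Vert\bX_{i,t}-\Xbar_t\Vert_\F$ is bounded by the Step 2 quantity. That quantity equals $c_4\frac{\alpha_0^4}{M}(\lambda_k-2\sqrt\beta)\varepsilon$, which is at most $c_4\frac{\alpha_0^2}{M}(\lambda_k-2\sqrt\beta)\varepsilon$ since $\alpha_0\leqslant1$.

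\textbf{Step 4: inequality \eqref{eq:rdiff_bound}.} Apply \cref{lemma:r-bound}, which gives
\begin{align*}
\Vert\bR_{i,t}-\Rbar_t\Vert_2 \leqslant \Vert\bR_{i,t}-\Rbar_t\Vert_\F \leqslant (\text{Step 2 quantity})\cdot\Vert\Rbar_t\Vert_2 .
\end{align*}
This needs an upper bound on $\Vert\Rbar_t\Vert_2=\Vert\Ybar_t\Vert_2$, which is the main obstacle.

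For $t=1$, $\Vert\Ybar_1\Vert_2\leqslant M/2$. For $t\geqslant2$, $\Ybar_t$ is an average of $\bA_i\bX_{i,t-1}-\beta\bX_{i,t-2}\bR_{i,t-1}^{-1}$. Each term satisfies
\begin{align*}
\Vert\bA_i\bX_{i,t-1}-\beta\bX_{i,t-2}\bR_{i,t-1}^{-1}\Vert_2 \leqslant M+\beta\Vert\bR_{i,t-1}^{-1}\Vert_2 .
\end{align*}
By \cref{lemma:riinv_ub} at step $t-1$ (assuming the induction in \cref{lemma:adepm_lemma} supplies its hypotheses there), $\beta\Vert\bR_{i,t-1}^{-1}\Vert_2\leqslant\frac{\beta}{c_5\lambda_k\alpha_0}\leqslant\frac{\lambda_k}{4c_5\alpha_0}$, using $\beta\leqslant\lambda_k^2/4$. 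Hence
\begin{align*}
\Vert\Rbar_t\Vert_2\leqslant M+\frac{\lambda_k}{4c_5\alpha_0}\leqslant \frac{M}{\alpha_0}\left(1+\frac{1}{4c_5}\right)=\frac{c_6M}{\alpha_0}.
\end{align*}
Multiplying by the Step 2 bound gives $c_4c_6(\lambda_k-2\sqrt\beta)\alpha_0^3\varepsilon$, where the factor $M$ cancels as desired.

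The delicate point is that this step uses information from step $t-1$, which the lemma's stated hypotheses do not literally include. If needed, I would either weaken the estimate (bounding $\Vert\Ybar_t\Vert_2$ directly through the recursion) or note that in the application inside \cref{lemma:adepm_lemma} the inductive hypothesis $(\mathcal P_{t-1})$ provides it.
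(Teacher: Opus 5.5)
Your proposal is correct and follows the paper's proof step for step, and both issues you flag are real: the paper's own proof derives the $\lambda_k$ (not $\lambda_k^2$) form of \eqref{eq:qr_bound_yfactor}, and it bounds $\Vert\Rbar_t\Vert_2$ by writing $\Ybar_t$ with the step-$(t+1)$ formula involving $\bR_{i,t}^{-1}$, so your indexing via $\bR_{i,t-1}^{-1}$ is the correct one. Your fallback in fact keeps the constant $c_6$ using only the stated hypotheses---for $t\geqslant 2$, writing $\Ybar_t=\bA\Xbar_{t-1}-\beta\Xbar_{t-2}\Rbar_{t-1}^{-1}+\bXi_{t-1}$ and using \cref{lemma:bxi_implies_ydagger} at step $t-1$ and $\beta=\lambda_k^2(1-\Delta)^2/4$ gives $\Vert\Ybar_t\Vert_2\leqslant M+\frac{\lambda_k}{4\alpha_0(1/2-c)}\leqslant c_6M/\alpha_0$---and one arithmetic slip should be corrected: $2\sqrt{2}\,c_3/(1/2-c)\approx 0.0048$, not $0.0024$, which is still below $c_4=0.005$.
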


\begin{proof}
  Let $i\in\{1,\dots,n\}$. Because of the assumption on the norm of $\bXi_s$ for all $s \in \{1, \dots, t-1\}$, we can apply \cref{lemma:bxi_implies_ydagger} to get the bound \eqref{eq:Ybardagger_UB} on $\Vert\Ybar_t^\dagger\Vert_2$. Then, using the assumption on $\Vert \bY_{i,t} - \Ybar_t \Vert_\F$, we have 
  \begin{align*}
    \Vert\Ybar_t^\dagger\Vert_2\Vert\bY_{i,t} - \Ybar_t\Vert_\F & \leqslant \frac{1}{\lambda_k\alpha_0}\frac{1}{1/2 - c} c_3\frac{\lambda_k^2 \alpha_0^5}{M} \frac{\lambda_k-2\sqrt{\beta}}{\lambda_k}\varepsilon \\
    & = \frac{c_3}{(1/2 - c)} \frac{\lambda_k \alpha_0^4}{M} \frac{\lambda_k-2\sqrt{\beta}}{\lambda_k}\varepsilon \leqslant \frac{c_3}{(1/2 - c)} \leqslant \frac{1}{2},
  \end{align*}
  where the second-to-last inequality is because $\alpha_0\leqslant 1$, $\frac{\lambda_k-2\sqrt{\beta}}{\lambda_k} \leqslant 1$, $\varepsilon\leqslant1$, and $\lambda_k\leqslant M$ from \eqref{eq:lambda_k_leq_M}. This proves \eqref{eq:ydag_ydiff_half}. 
  
  Then, \eqref{eq:qr_bound_yfactor} can be obtained by lower bounding the denominator by $1/2$, and upper bounding the numerator using the assumption on $\Vert\bY_{i,t} - \Ybar_t \Vert_\F$ and the bound \eqref{eq:Ybardagger_UB} on $\Vert\Ybar_t^\dagger\Vert_2$:
  \begin{align*}
    \frac{\sqrt{2}\Vert\Ybar_t^\dagger\Vert_2\Vert\bY_{i,t} - \Ybar_t\Vert_\F}{1 - \Vert\Ybar_t^\dagger\Vert_2\Vert\bY_{i,t} - \Ybar_t\Vert_\F} & \leqslant 2\sqrt{2}\Vert\Ybar_t^\dagger\Vert_2\Vert\bY_{i,t} - \Ybar_t\Vert_\F \\
    & \leqslant 2\sqrt{2} \frac{1}{\lambda_k\alpha_0}\frac{1}{1/2 - c} c_3\frac{\lambda_k^2 \alpha_0^5}{M} \frac{\lambda_k-2\sqrt{\beta}}{\lambda_k}\varepsilon \\
    & \leqslant c_4 \frac{\lambda_k\alpha_0^4}{M} \frac{\lambda_k-2\sqrt{\beta}}{\lambda_k}\varepsilon,
  \end{align*}
  where the last inequality holds since $2\sqrt{2} \frac{c_3}{(1/2 - c)} \leqslant c_4$.

  The two last bounds of the lemma are proved using the perturbation bounds in \cref{lemma:q-bound,lemma:r-bound}, which are both applicable because of \eqref{eq:ydag_ydiff_half}. First, from \cref{lemma:q-bound}, we have
  \begin{align*}
    \Vert \bX_{i,t} - \Xbar_t \Vert_\F & \leqslant \frac{\sqrt{2}\Vert\Ybar_t^\dagger\Vert_2\Vert\bY_{i,t} - \Ybar_t\Vert_\F}{1 - \Vert\Ybar_t^\dagger\Vert_2\Vert\bY_{i,t} - \Ybar_t\Vert_\F} \leqslant c_4 \frac{\lambda_k \alpha_0^4}{M} \frac{\lambda_k-2\sqrt{\beta}}{\lambda_k}\varepsilon \\
    & \leqslant c_4 \frac{\alpha_0^2}{M} (\lambda_k-2\sqrt{\beta})\varepsilon,
  \end{align*} 
  where we used in the last inequality $\alpha_0 \leqslant 1$. This proves \eqref{eq:xdiff_bound}. Then, from \cref{lemma:r-bound}, we have
  \begin{align}\label{eq:rdiffbound_inter}
    \Vert \bR_{i,t} - \Rbar_t \Vert_2 & \leqslant \frac{\sqrt{2}\Vert\Ybar_t^\dagger\Vert_2\Vert\bY_{i,t} - \Ybar_t\Vert_\F}{1 - \Vert\Ybar_t^\dagger\Vert_2\Vert\bY_{i,t} - \Ybar_t\Vert_\F}\Vert\Rbar_t\Vert_2.
  \end{align}
  We thus need to upper bound $\Vert \Rbar_t \Vert_2$ to conclude. Since $\Rbar_t$ is the R-factor of $\Ybar_t$, we have $\Vert \Rbar_t \Vert_2 = \Vert \Ybar_t \Vert_2$. Then, from the definition of $\Ybar_t$ and the triangle inequality, we have
  \begin{align}
    \Vert \Ybar_t \Vert_2 & = \left\Vert \frac{1}{n}\sum_{i=1}^n \left( \bA_i \bX_{i,t} - \beta \bX_{i,t-1} \bR_{i,t}^{-1} \right) \right\Vert_2 \leqslant \frac{1}{n}\sum_{i=1}^n \left( \Vert \bA_i \Vert_2 \Vert \bX_{i,t} \Vert_2 + \beta \Vert \bX_{i,t-1} \Vert_2 \Vert \bR_{i,t}^{-1} \Vert_2 \right)\nonumber \\
    \Vert \Rbar_t \Vert_2& \leqslant M + \frac{\beta }{c_5 \lambda_k \alpha_0} \leqslant c_6 \frac{M}{\alpha_0}, \label{eq:rbarnorm_ub}
  \end{align}
  where the second-to-last inequality is due to the bound \eqref{eq:ri_inv_ub} on $\Vert\bR_{i,t}^{-1}\Vert_2$ from \cref{lemma:riinv_ub}, and the last inequality is due to $\alpha_0 \leqslant 1$ and $\beta/\lambda_k \leqslant \lambda_k/4 \leqslant M/4$. Plugging \eqref{eq:rbarnorm_ub} and \eqref{eq:qr_bound_yfactor} into \eqref{eq:rdiffbound_inter} concludes the proof of \eqref{eq:rdiff_bound}:
  \begin{align*}
    \Vert \bR_{i,t} - \Rbar_t \Vert_2 & \leqslant c_4 \frac{\lambda_k \alpha_0^4}{M} \frac{\lambda_k-2\sqrt{\beta}}{\lambda_k}\varepsilon c_6 \frac{M}{\alpha_0} = c_4c_6 (\lambda_k-2\sqrt{\beta})\alpha_0^3 \varepsilon.
  \end{align*}
\end{proof}

\begin{lemma}\label{lemma:xiri_xbarrbar_ub}
  Let $t\geqslant 1$ and assume that for all $s \in \{1, \dots, t-1\}$,
  \begin{align*}
    \Vert\bXi_s\Vert_2 \leqslant c (\lambda_k - 2\sqrt{\beta}) \cos \theta_k(\bU_k, \Xbar_s) \varepsilon,
  \end{align*}
  and that for all $i\in\{1,\dots,n\}$,
  \begin{align*}
    \Vert\bY_{i,t} - \Ybar_t \Vert_\F \leqslant c_3\frac{\lambda_k^2 \alpha_0^5}{M} \frac{\lambda_k-2\sqrt{\beta}}{\lambda_k}\varepsilon.
  \end{align*}
  Then, for all $i\in\{1,\dots,n\}$, the following bound holds:
  \begin{align}\label{eq:xiri_xbarrbar_ub}
    \Vert \bX_{i,t-1}\bR_{i,t}^{-1} - \Xbar_{t-1}\Rbar_{t}^{-1} \Vert_\F 
    & \leqslant c_7 \frac{1}{\lambda_k^2} (\lambda_k - 2\sqrt{\beta}) \alpha_0 \varepsilon,
  \end{align}
  where $c_7 := \frac{c_4}{c_5}\left(\frac{c_6}{1/2-c} + 1\right)$.
\end{lemma}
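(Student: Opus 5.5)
We split the difference with the standard add-and-subtract decomposition
\begin{align*}
  \bX_{i,t-1}\bR_{i,t}^{-1} - \Xbar_{t-1}\Rbar_{t}^{-1} = (\bX_{i,t-1} - \Xbar_{t-1})\bR_{i,t}^{-1} + \Xbar_{t-1}\left(\bR_{i,t}^{-1} - \Rbar_t^{-1}\right),
\end{align*}
and write $\bR_{i,t}^{-1} - \Rbar_t^{-1} = \bR_{i,t}^{-1}(\Rbar_t - \bR_{i,t})\Rbar_t^{-1}$. Since $\Xbar_{t-1}$ has orthonormal columns, $\Vert\Xbar_{t-1}\bM\Vert_\F = \Vert\bM\Vert_\F$. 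Then, using $\Vert \bA\bB\Vert_\F\leqslant \Vert\bA\Vert_\F\Vert\bB\Vert_2$ (and the spectral-norm versions where applicable), we get
\begin{align*}
  \Vert \bX_{i,t-1}\bR_{i,t}^{-1} - \Xbar_{t-1}\Rbar_{t}^{-1} \Vert_\F \leqslant \Vert \bX_{i,t-1} - \Xbar_{t-1}\Vert_\F \Vert\bR_{i,t}^{-1}\Vert_2 + \sqrt{k}\,\Vert\bR_{i,t}^{-1}\Vert_2\Vert\bR_{i,t}-\Rbar_t\Vert_2\Vert\Rbar_t^{-1}\Vert_2.
\end{align*}
To avoid the $\sqrt{k}$ factor, we should bound $\Vert \bR_{i,t} - \Rbar_t\Vert_\F$ directly. \cref{lemma:r-bound} actually gives this Frobenius bound, and \eqref{eq:rdiff_bound} was derived from it, so the same constant $c_4c_6$ applies in Frobenius norm.

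We plug in the available bounds. By \cref{lemma:riinv_ub}, $\Vert\bR_{i,t}^{-1}\Vert_2\leqslant 1/(c_5\alpha_0\lambda_k)$, and by \cref{lemma:bxi_implies_ydagger}, $\Vert\Rbar_t^{-1}\Vert_2\leqslant 1/((1/2-c)\lambda_k\alpha_0)$. By \eqref{eq:rdiff_bound}, $\Vert\bR_{i,t}-\Rbar_t\Vert \leqslant c_4c_6(\lambda_k-2\sqrt\beta)\alpha_0^3\varepsilon$. The second term is therefore at most
\begin{align*}
  \frac{c_4c_6}{c_5(1/2-c)}\frac{(\lambda_k-2\sqrt\beta)\alpha_0\varepsilon}{\lambda_k^2},
\end{align*}
which matches the first summand of $c_7$. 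For the first term, we need $\Vert\bX_{i,t-1}-\Xbar_{t-1}\Vert_\F\leqslant c_4\alpha_0^2(\lambda_k-2\sqrt\beta)\varepsilon/M$, which is \eqref{eq:xdiff_bound} at step $t-1$. Combined with $1/(c_5\alpha_0\lambda_k)$ and $1/M\leqslant 1/\lambda_k$ from \eqref{eq:lambda_k_leq_M}, it yields $\frac{c_4}{c_5}\frac{(\lambda_k-2\sqrt\beta)\alpha_0\varepsilon}{\lambda_k^2}$. Summing the two terms gives exactly $c_7$.

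The main obstacle is that the bound on $\Vert\bX_{i,t-1}-\Xbar_{t-1}\Vert_\F$ lives at step $t-1$, while the hypotheses of the lemma only control $\bY_{i,t}-\Ybar_t$. There are two cases.
\begin{itemize}
  \item If $t=1$, we have $\bX_{i,0}=\Xbar_0=\bX_0$, so the first term vanishes.
  \item If $t\geqslant2$, we need the gossip-error hypothesis at step $t-1$ in order to invoke \cref{lemma:allbounds_adepm} at $t-1$. Its noise hypothesis for $s\leqslant t-2$ is implied by ours.
\end{itemize}
I expect this lemma to be used inside the induction proving \cref{lemma:adepm_lemma}, where $(\mathcal{P}_{t-1})$ is available. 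So I would either add that assumption (justified by the induction), or state the lemma as using $(\mathcal{P}_{t-1})$'s second line directly. I would first check whether the intended statement implicitly assumes the bound on $\bX_{i,t-1}-\Xbar_{t-1}$, and otherwise make the dependence explicit.
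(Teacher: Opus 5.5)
Your argument is the paper's proof. It uses the same add-and-subtract decomposition, with the resolvent identity written with the inverses in the other order, and the same bounds from \cref{lemma:riinv_ub}, \cref{lemma:bxi_implies_ydagger}, \eqref{eq:xdiff_bound}, and the Frobenius form of \eqref{eq:rdiff_bound}, which, as you note, is what \cref{lemma:r-bound} delivers and what the paper silently uses; this yields exactly $c_7$. Your caveat is also well founded: the paper cites \eqref{eq:xdiff_bound} for $\Vert\bX_{i,t-1}-\Xbar_{t-1}\Vert_\F$ although the lemma's hypotheses only give that bound at step $t$, so the statement implicitly needs either $t=1$ (where the term vanishes) or the second line of $(\mathcal{P}_{t-1})$, which is available when the lemma is invoked inside the induction proving \cref{lemma:adepm_lemma}.
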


\begin{proof}

  Let $i\in\{1,\dots,n\}$. We decompose $ \bX_{i,t-1}\bR_{i,t}^{-1} - \Xbar_{t-1}\Rbar_{t}^{-1} $ into the following terms:
  \begin{align*}
    \bX_{i,t-1}\bR_{i,t}^{-1} - \Xbar_{t-1}\Rbar_{t}^{-1} & =  \bX_{i,t-1}\bR_{i,t}^{-1} - \Xbar_{t-1}\bR_{i,t}^{-1} + \Xbar_{t-1}\bR_{i,t}^{-1} - \Xbar_{t-1}\Rbar_{t}^{-1} \\
    & = (\bX_{i,t-1} - \Xbar_{t-1})\bR_{i,t}^{-1} + \Xbar_{t-1}(\bR_{i,t}^{-1} - \Rbar_{t}^{-1})\\
    & = (\bX_{i,t-1} - \Xbar_{t-1})\bR_{i,t}^{-1} + \Xbar_{t-1}\Rbar_t^{-1}(\Rbar_t - \bR_{i,t})\bR_{i,t}^{-1}.
  \end{align*}
  We can thus bound its norm as 
  \begin{align*}
    \Vert \bX_{i,t-1}\bR_{i,t}^{-1} - \Xbar_{t-1}\Rbar_{t}^{-1} \Vert_\F \leqslant \Vert\bR_{i,t}^{-1}\Vert_2 \Vert \bX_{i,t-1} - \Xbar_{t-1} \Vert_\F + \Vert \Xbar_{t-1} \Vert_2 \Vert \Rbar_t^{-1} \Vert_2 \Vert \Rbar_t - \bR_{i,t} \Vert_\F \Vert \bR_{i,t}^{-1} \Vert_2.
  \end{align*}
  Each of these factors have been upper bounded in previous lemmas, whose assumptions are satisfied. First, from \cref{lemma:riinv_ub}, we have the bound \eqref{eq:ri_inv_ub} on $\Vert\bR_{i,t}^{-1}\Vert_2$. From \cref{lemma:bxi_implies_ydagger}, we have the bound \eqref{eq:Ybardagger_UB} on $\Vert \Rbar_t^{-1} \Vert_2$. Then, from \eqref{eq:xdiff_bound} in \cref{lemma:allbounds_adepm}, we have the bound on $\Vert \bX_{i,t-1} - \Xbar_{t-1} \Vert_\F$ and from \eqref{eq:rdiff_bound} in \cref{lemma:allbounds_adepm}, we have the bound on $\Vert \Rbar_t - \bR_{i,t} \Vert_\F$. Applying all these bounds on the above inequality yields the wanted result:
  \begin{align*}
    \Vert \bX_{i,t-1}\bR_{i,t}^{-1} - \Xbar_{t-1}\Rbar_{t}^{-1} \Vert_\F & \leqslant \frac{1}{c_5}\frac{1}{\alpha_0\lambda_k} c_4 \frac{\alpha_0^2}{M} (\lambda_k-2\sqrt{\beta})\varepsilon  +  \frac{1}{\lambda_k\alpha_0}\frac{1}{1/2 - c} c_4c_6 (\lambda_k-2\sqrt{\beta})\alpha_0^3 \varepsilon \frac{1}{c_5}\frac{1}{\alpha_0\lambda_k} \\
    & \leqslant c_7 \frac{1}{\lambda_k^2} (\lambda_k - 2\sqrt{\beta}) \alpha_0 \varepsilon,
  \end{align*}
  where the last inequality is due to $\lambda_k\leqslant M$.

\end{proof}

We can now prove \cref{lemma:adepm_lemma}.

\begin{proof}[Proof of \cref{lemma:adepm_lemma}]

    We will prove this result by induction. We recall the definition of the proposition $(\mathcal{P}_t)$ for all $t \geqslant 1$:
    \begin{align*}
      (\mathcal{P}_t): \begin{cases}
      \max_{i=1, \dots, n} \Vert \bY_{i,t} - \Ybar_t \Vert_\F \leqslant c_3\frac{\lambda_k^2 \alpha_0^5}{M} \frac{\lambda_k-2\sqrt{\beta}}{\lambda_k}\varepsilon, \\
      \max_{i=1,\dots,n}\Vert\bX_{i,t} - \Xbar_t\Vert_\F \leqslant \frac{c_4}{M} (\lambda_k-2\sqrt{\beta})\alpha_0^2\varepsilon,\\ 
      \Vert\bXi_t\Vert_2 \leqslant c (\lambda_k - 2\sqrt{\beta}) \cos \theta_k(\bU_k, \Xbar_t)\varepsilon.
    \end{cases}
    \end{align*}

  \textbf{Base case:} We first show $(\mathcal{P}_1)$. $\{\bY_{i,1}\}_{i=1}^n$ is the output of \cref{alg:acc_gossip} after $L$ gossip iterations on the initial values $\{\frac{1}{2}\bA_i \bX_{i,0}\}_{i=1}^n$. Thus, from \cref{prop:accgossip}, we have that for all $i\in \{1, \dots, n\}$,
  \begin{align*}
    \Vert \bY_{i,1} - \Ybar_1 \Vert_\F & \leqslant (1-\sqrt{\gamma_\bW})^L \sqrt{n} \max_{i=1, \dots, n} \left\Vert \frac{1}{2}\bA_i \bX_{i,0} - \frac{1}{2}\bA\bX_0\right\Vert_\F \\
    & \leqslant (1-\sqrt{\gamma_\bW})^L \sqrt{n} \max_{i=1,\dots, n} \frac{1}{2}\left(\underbrace{\Vert \bA_i \Vert_2}_{\leqslant M} \underbrace{\Vert \bX_{i,0} \Vert_\F}_{=\sqrt{k}} + \underbrace{\Vert \bA \Vert_2}_{\leqslant n^{-1}\sum_i \Vert\bA_i\Vert_2 \leqslant M} \underbrace{\Vert \bX_{0} \Vert_\F}_{=\sqrt{k}}  \right) \\
    & \leqslant (1-\sqrt{\gamma_\bW})^L \sqrt{nk} M.
  \end{align*}

  From the assumption on the number of gossip iterations $L$ in \eqref{eq:L_condition}, since $c_1 \geqslant 5$, $c_2^5 \geqslant 1/c_3 $ and $\frac{1}{c_3}\sqrt{nk}\frac{\lambda_k}{\lambda_k-2\sqrt{\beta}}\frac{1}{\varepsilon} \geqslant1$, we have
  \begin{align*}
    L \geqslant \frac{1}{-\log(1-\sqrt{\gamma_\bW})} \log\left(\frac{1}{c_3}\sqrt{nk} \frac{M^2}{\lambda_k^2} \frac{\lambda_k}{\lambda_k-2\sqrt{\beta}}\frac{1}{\alpha_0^5} \frac{1}{\varepsilon}\right)
  \end{align*}
  so that
  \begin{align}\label{eq:yi1_ybar1_bound}
    \Vert \bY_{i,1} - \Ybar_1 \Vert_\F & \leqslant c_3\frac{\lambda_k^2 \alpha_0^5}{M} \frac{\lambda_k-2\sqrt{\beta}}{\lambda_k}\varepsilon.
  \end{align}
  This proves the first point of $(\mathcal{P}_1)$. The second point of is the bound \eqref{eq:xdiff_bound} from \cref{lemma:allbounds_adepm}, whose assumptions are satisfied because of the bound on $\Vert \bY_{i,1} - \Ybar_1 \Vert_\F$ in \eqref{eq:yi1_ybar1_bound}.

  Finally, we control the norm of $\bXi_1$ to prove the last point of $(\mathcal{P}_1)$. From the definition of $\bXi_1$ in \eqref{eq:adepm_anpm}, we have
  \begin{align}
    & \bXi_1 = \Ybar_2 - \left( \bA \Xbar_1 - \beta \Xbar_0 \Rbar_1^{-1} \right) = \frac{1}{n} \sum_{i=1}^n \left( \left(\bA_i\bX_{i,1} - \beta \bX_{i,0} \bR_{i,1}^{-1} \right) - \left( \bA_i \Xbar_1 - \beta \Xbar_0 \Rbar_1^{-1} \right)\right), \nonumber \\
    & \Vert\bXi_1\Vert_2 \leqslant \Vert\bXi_1\Vert_\mathrm{F} \leqslant \frac{1}{n}\sum_{i=1}^n \Vert\bA_i\Vert_2 \Vert \bX_{i,1} - \Xbar_1 \Vert_\mathrm{F} + \frac{\beta}{n}\sum_{i=1}^{n} \Vert\bX_{i,0}\bR_{i,1}^{-1} - \Xbar_0 \Rbar_1^{-1} \Vert_\F. \label{eq:Xi1_boundL}
  \end{align}
  $\Vert \bX_{i,1} - \Xbar_1 \Vert_\mathrm{F}$ is upper-bounded in \eqref{eq:xdiff_bound} from \cref{lemma:allbounds_adepm}, and $\Vert\bX_{i,0}\bR_{i,1}^{-1} - \Xbar_0 \Rbar_1^{-1} \Vert_\F$ is upper-bounded in \eqref{eq:xiri_xbarrbar_ub} from \cref{lemma:xiri_xbarrbar_ub}. Both of these lemmas have their assumptions satisfied because of \eqref{eq:yi1_ybar1_bound}. Plugging these two bounds into \eqref{eq:Xi1_boundL} yields
  \begin{align*}
    \Vert \bXi_1 \Vert_2 & \leqslant M \frac{c_4}{M} (\lambda_k-2\sqrt{\beta})\alpha_0^2\varepsilon + \beta c_7 \frac{1}{\lambda_k^2} (\lambda_k - 2\sqrt{\beta}) \alpha_0 \varepsilon \\
    & \leqslant c (\lambda_k - 2\sqrt{\beta}) \alpha_0 \varepsilon,
  \end{align*}
  where the last inequality is due to $\beta < \lambda_k^2/4$ and $c_4 + c_7/4 \leqslant c$. This concludes the proof of $(\mathcal{P}_1)$.

  \textbf{Induction:} Let $t \geqslant 2$ and assume that $(\mathcal{P}_s)$ holds for all $s \in \{1, \dots, t-1\}$. We will show that $(\mathcal{P}_t)$ also holds. First, from the induction hypothesis, we have that for all $s \in \{1, \dots, t-1\}$,
  \begin{align*}
    \Vert\bXi_s\Vert_2 \leqslant c (\lambda_k - 2\sqrt{\beta}) \cos \theta_k(\bU_k, \Xbar_s) \varepsilon.
  \end{align*}

  We can prove the upper-bound on $\Vert \bY_{i,t} - \Ybar_t \Vert_\F$ in the first point of $(\mathcal{P}_t)$ using a similar reasoning as in the base case. Since $\{\bY_{i,t}\}_{i=1}^n$ is the output of \cref{alg:acc_gossip} after $L$ gossip iterations on the initial values $\{\bA_i \bX_{i,t-1} - \beta \bX_{i,t-2} \bR_{i,t-1}^{-1}\}_{i=1}^n$, we have from \cref{prop:accgossip} that for all $i\in \{1, \dots, n\}$,
  \begin{align*}
    \Vert \bY_{i,t} - \Ybar_t \Vert_\F & \leqslant (1-\sqrt{\gamma_\bW})^L \sqrt{n} \max_{j=1, \dots, n} \left\Vert \bA_i \bX_{j,t-1} - \beta \bX_{j,t-2} \bR_{j,t-1}^{-1} - \frac{1}{n}\sum_{m=1}^n\left(\bA_m\bX_{m,t-1} - \beta \bX_{m,t-2} \bR_{m,t-1}^{-1}\right) \right\Vert_\F\\
    & \leqslant (1-\sqrt{\gamma_\bW})^L \sqrt{n} \max_{j=1, \dots, n} \Bigg( \Vert \bA_i \Vert_2 \Vert \bX_{j,t-1} \Vert_\F + \beta \Vert \bX_{j,t-2} \Vert_\F \Vert \bR_{j,t-1}^{-1} \Vert_2 + \\
    & \hspace{6cm} \frac{1}{n}\sum_{m=1}^n \left(\Vert \bA_m \Vert_2 \Vert \bX_{m,t-1} \Vert_\F + \beta \Vert \bX_{m,t-2} \Vert_\F \Vert \bR_{m, t-1}^{-1} \Vert_2 \right) \Bigg)\\
    & \leqslant (1-\sqrt{\gamma_\bW})^L \sqrt{nk} \left(2M + 2 \beta \frac{1}{c_5 \lambda_k \alpha_0} \right) \\
    & \leqslant (1-\sqrt{\gamma_\bW})^L \sqrt{nk}\frac{c_8 M}{\alpha_0},
  \end{align*}
  where the second-to-last inequality is due to the bounds \eqref{eq:ri_inv_ub} on $\Vert \bR_{m, t-1}^{-1} \Vert_2$ in \cref{lemma:riinv_ub}, whose assumptions are verified because of the induction hypothesis, and where the last inequality is due to $\beta/\lambda_k < \lambda_k/4 \leqslant M/4$ and $c_8 := 2 + \frac{1}{2c_5}$. Using the assumption on the number of gossip iterations $L$ in \eqref{eq:L_condition}, since $c_1 = 6$, $\lambda_k/(\lambda_k-2\sqrt{\beta}) \geqslant 1$, $M/\lambda_k \geqslant 1$, $\alpha_0 \leqslant 1$, $\varepsilon\leqslant 1$, $\sqrt{nk}\geqslant 1$ and $c_2^6 \geqslant c_8/c_3$, we have
  \begin{align*}
    L \geqslant \frac{1}{-\log(1-\sqrt{\gamma_\bW})} \log\left(\frac{c_8}{c_3}\sqrt{nk} \frac{M^2}{\lambda_k^2} \frac{\lambda_k}{\lambda_k-2\sqrt{\beta}}\frac{1}{\alpha_0^6} \frac{1}{\varepsilon}\right),
  \end{align*}
  so that we can upper bound $(1-\sqrt{\gamma_\bW})^L$ and obtain the wanted bound on $\Vert \bY_{i,t} - \Ybar_t \Vert_\F$:
  \begin{align*}
    \Vert \bY_{i,t} - \Ybar_t \Vert_\F & \leqslant c_3\frac{\lambda_k^2 \alpha_0^5}{M} \frac{\lambda_k-2\sqrt{\beta}}{\lambda_k}\varepsilon.
  \end{align*}

  Then, just like in the base case, the second point of $(\mathcal{P}_t)$ is  \eqref{eq:xdiff_bound} from \cref{lemma:allbounds_adepm}, whose assumptions are satisfied because of the bound we just proved on $\Vert \bY_{i,t} - \Ybar_t \Vert_\F$ and because of the bounds on $\bXi_s$ for $s \in \{1, \dots, t-1\}$ from the induction hypothesis.

  Finally, we control the norm of $\bXi_t$ to prove the last point of $(\mathcal{P}_t)$. We proceed similarly as in the base case: from the definition of $\bXi_t$ in \eqref{eq:adepm_anpm}, we have
  \begin{align*}
    & \bXi_t = \Ybar_{t+1} - \left( \bA \Xbar_t - \beta \Xbar_{t-1} \Rbar_t^{-1} \right) = \frac{1}{n} \sum_{i=1}^n \left( \left(\bA_i\bX_{i,t} - \beta \bX_{i,t-1} \bR_{i,t}^{-1} \right) - \left( \bA_i \Xbar_t - \beta \Xbar_{t-1} \Rbar_t^{-1} \right)\right), \\
    & \Vert\bXi_t\Vert_2 \leqslant \Vert\bXi_t\Vert_\mathrm{F} \leqslant \frac{1}{n}\sum_{i=1}^n \Vert\bA_i\Vert_2 \Vert \bX_{i,t} - \Xbar_t \Vert_\mathrm{F} + \frac{\beta}{n}\sum_{i=1}^{n} \Vert\bX_{i,t-1}\bR_{i,t}^{-1} - \Xbar_{t-1} \Rbar_t^{-1} \Vert_\F.
  \end{align*}
  Then, using the bounds \eqref{eq:xdiff_bound} on $\Vert \bX_{i,t} - \Xbar_t \Vert_\mathrm{F}$ from \cref{lemma:allbounds_adepm} and \eqref{eq:xiri_xbarrbar_ub} on $\Vert\bX_{i,t-1}\bR_{i,t}^{-1} - \Xbar_{t-1} \Rbar_t^{-1} \Vert_\F$ from \cref{lemma:xiri_xbarrbar_ub}, whose assumptions are satisfied because of the bound we just proved on $\Vert \bY_{i,t} - \Ybar_t \Vert_\F$ and because of the bounds on $\bXi_s$ for $s \in \{1, \dots, t-1\}$ from the induction hypothesis, we have
  \begin{align*}
    \Vert \bXi_t \Vert_2 & \leqslant M \frac{c_4}{M} (\lambda_k-2\sqrt{\beta})\alpha_0^2\varepsilon + \beta c_7 \frac{1}{\lambda_k^2} (\lambda_k - 2\sqrt{\beta}) \alpha_0 \varepsilon \\
    & \leqslant c (\lambda_k - 2\sqrt{\beta}) \alpha_0 \varepsilon,
  \end{align*}
  where the last inequality is due to $\beta < \lambda_k^2/4$ and $c_4 + c_7/4 \leqslant c$. This concludes the proof of $(\mathcal{P}_t)$ and thus the induction.

\end{proof}

We can now prove \cref{th:adepm_apdx}.

\begin{proof}[Proof of \cref{th:adepm_apdx}]
  According to \cref{lemma:adepm_lemma}, we have that for all $t \geqslant 0$,
  \begin{align*}
    \Vert\bXi_t\Vert_2 & \leqslant c (\lambda_k - 2\sqrt{\beta}) \cos \theta_k(\bU_k, \Xbar_t) \varepsilon.
  \end{align*}
  As such, the ANPM dynamics \eqref{eq:xbar_anpm} followed by $\{\Xbar_t\}_{t\geqslant 0}$ satisfy the noise conditions in \cref{th:napm}. Thus, for all $t\geqslant T$, we have $\sin\theta_k(\bU_k,\Xbar_t) \leqslant \varepsilon$, where $T$ is such that
  \begin{align*}
    T & = \mathcal{O}\left( \sqrt{\frac{\lambda_k}{\lambda_k - 2\sqrt{\beta}}} \log\left( \frac{\tan\theta_k(\bU_k,\bX_0)}{\varepsilon } \right)\right).
  \end{align*}
  Furthermore, according to \cref{lemma:adepm_lemma}, we have that for all $t\geqslant T$, and for all $i\in \{1, \dots, n\}$,
  \begin{align*}
    \Vert\bX_{i,t} - \Xbar_t\Vert_\F & \leqslant c_4\frac{1}{M} (\lambda_k - 2\sqrt{\beta}) \alpha_0^2 \varepsilon \leqslant \frac{\lambda_k-2\sqrt{\beta}}{\lambda_k}\varepsilon \\
    \Vert\bX_{i,t} - \Xbar_t\Vert_2 & \leqslant \varepsilon,
  \end{align*}
  where the second inequality is from $c_4\leqslant 1$, $\lambda_k\leqslant M$ and $\alpha_0 \leqslant 1$. Then, using \cref{prop:tantheta}, we have for all $t\geqslant T$,
  \begin{align*}
    \sin \theta_k(\bU_k, \bX_{i,t}) & = \Vert\bU_{-k}^\top\bX_{i,t}\Vert_2 \leqslant \Vert\bU_{-k}^\top\Xbar_{t}\Vert_2 + \Vert\bU_{-k}^\top(\bX_{i,t} - \Xbar_t)\Vert_2 \\
    & = \sin\theta_k(\bU_k,\Xbar_t) + \Vert\bX_{i,t} - \Xbar_t\Vert_2 \leqslant 2\varepsilon.
  \end{align*}
  This concludes the proof.  
\end{proof}

\section{Experimental Details}\label{apdx:exp}

\subsection{Experimental Details for ANPM}\label{apdx:exp_anpm}

For all of the experiments shown in \cref{fig:anpm_experiments}, we generate a single orthogonal basis of eigenvectors $\bU \in \mathrm{St}(d,d)$ (by taking the Q-factor of a $d\times d$ matrix with i.i.d. standard normal entries). The matrix $\bA$ is then constructed as
\begin{align*}
  \bA = \bU \mathrm{diag}(\lambda_1,\dots\lambda_1,\lambda_k,\lambda_{k+1},\lambda_d,\dots,\lambda_d) \bU^\top,
\end{align*}
where $\lambda_1 = \lambda_k = 1$, $\lambda_d = 0.5$ and $\lambda_{k+1}$ is varied to obtain different eigengaps $\Delta_k = 1-\lambda_{k+1}$. In our experiments, we set $d=1000$ and $k=10$. $\bX_0$ is generated as the Q-factor of a $d\times k$ matrix with i.i.d. standard normal entries. 

Note that the adaptive tuning heuristic $\beta_t$ requires to run ANPM on $k+1$ columns instead of $k$. In that case, for each plot, the initialization $\bX_0$ is chosen so that its first $k$ columns are the same as the initialization used for the other methods. Furthermore, the quantity plotted corresponds to $\sin$ of the $k$-th principal angle between $\bU_k$ and the $k$ first columns of $\bX_t$, so that the comparison with other methods is fair. Indeed, from \cite{balcan2016,xu2023anpm}, we expect the convergence of ANPM to the top-$k$ eigenspace to improve when running it on more than $k$ columns.

We sample the noise matrices $\bXi_t$ from a so-called "adversarial" distribution, inspired by the adversarial examples used in the proofs in \cref{apdx:napm_opt_proofs}. The adversarial examples from \cref{apdx:napm_opt_proofs} were designed to be in a single direction corresponding to the eigenvectors of $\bA$, hindering as much as possible the convergence of NPM/ANPM. In our experiments, given a noise norm $\xi$, we sample $\bXi_t$ as
\begin{align*}
  \bXi_t = -\xi \frac{\bU\bV}{\Vert\bV\Vert_2}, \qquad \bV = [\vert v_{i,j}\vert] \in \R^{d\times k}, \qquad v_{i,j} \stackrel{\mathrm{i.i.d}}{\sim} \mathcal{N}(0, 1).
\end{align*}
This ensures that 1) $\bXi_t$ is of spectral norm $\xi$ and 2) all the columns $\bxi_i$ of $\bXi_t$ verify$\langle\bxi_i, \bm{u}_j\rangle \leqslant 0$ for all $j\in\{1,\dots,d\}$, which tends to hinder the convergence of ANPM in a similar way as the adversarial examples from \cref{apdx:napm_opt_proofs}. 

\subsection{Additional experiments for ANPM}

\subsubsection{ANPM with mean-centered noise}

In the same setting as described in the previous section, we perform experiments where $\bXi_t$ is sampled from a centered distribution. We argue that mean-centered noise models behave differently from the adversarial noise used in \cref{apdx:exp_anpm}. Indeed, we show in \cref{fig:anpm_experiments_stochastic} the results of the same experiments as in \cref{fig:anpm_experiments}, but where $\bXi_t$ is sampled uniformly on the sphere of radius $\xi$ for the spectral norm:
\begin{align*}
  \bXi_t = \xi \frac{\bV}{\Vert\bV\Vert_2}, \qquad \bV = [ v_{i,j}] \in \R^{d\times k}, \qquad v_{i,j} \stackrel{\mathrm{i.i.d}}{\sim} \mathcal{N}(0, 1).
\end{align*}

\begin{figure}
  \centering
  \includegraphics[width=\textwidth]{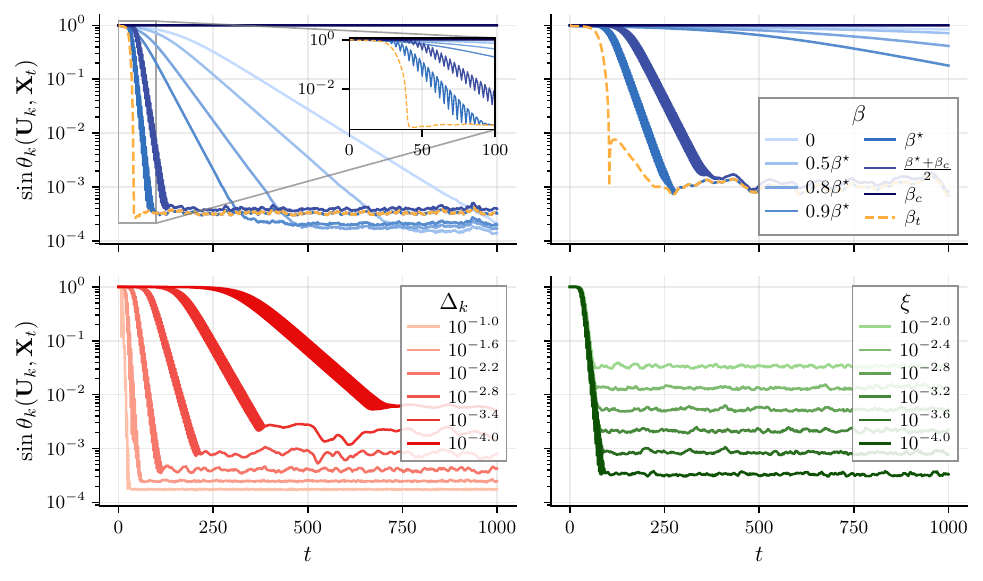}
  \caption{Experimental results for (A)NPM with stochastic mean-centered noise. From left to right: fixed noise norm and large gap, varying momentum; fixed noise norm and small gap, varying momentum; optimal momentum and fixed noise norm, varying gap; optimal momentum and fixed gap, varying noise norm.}
  \label{fig:anpm_experiments_stochastic}
\end{figure}
We observe notably that 1) ANPM's evolution is still separated into a transient exponentially decaying regime followed by a stationary regime, 2) the level of the stationary regime (i.e. the final precision reached) seems to depend on the momentum parameter $\beta$ (which was not the case with adversarial noise), and 3) the level of the stationary regime is not proportional to the gap $\Delta_k$. These two last points suggest that the behavior of ANPM with mean-centered noise is different from the one with adversarial noise. 

\subsubsection{Large-scale experiments for ANPM with Gaussian noise}

We show in the next experiment that ANPM can be used for very large-scale matrices with favorable structure. We perform spectral clustering on the Amazon0302 graph dataset \cite{leskovec07}, which consists of $d=262111$ nodes and $s = 1234877$ edges. The resulting matrix is very large, of size $d\times d$, but sparse with $s + d \ll d^2$ non-zero entries. This allows for efficient matrix-vector products even though storing in memory the matrix in dense format would be impossible for many devices. We consider i.i.d centered Gaussian noise $[\bXi_t]_{i,j} \stackrel{\mathrm{i.i.d}}{\sim} \mathcal{N}(0, \sigma^2)$ for two different values of $\sigma$, and compare ANPM using the tuning heuristic $\beta_t$ described in \eqref{eq:beta_heuristic}, and non-accelerated NPM. We let $k=30$, and show the results in terms of reconstruction error $\Vert (\bI_d - \bX_t\bX_t^\top)\bA\Vert_2$ rather than using the approximation error $\sin\theta_k(\bU_k, \bX_t)$, since we do not have access with high precision to $\bU_k$. The results are given in \cref{fig:anpm_amazon}. We observe that for smaller noises, ANPM with the tuning heuristic $\beta_t$ converges faster than NPM, while for larger noises, ANPM performs similarly to NPM.

\begin{figure}
  \centering
  \includegraphics[width=\textwidth]{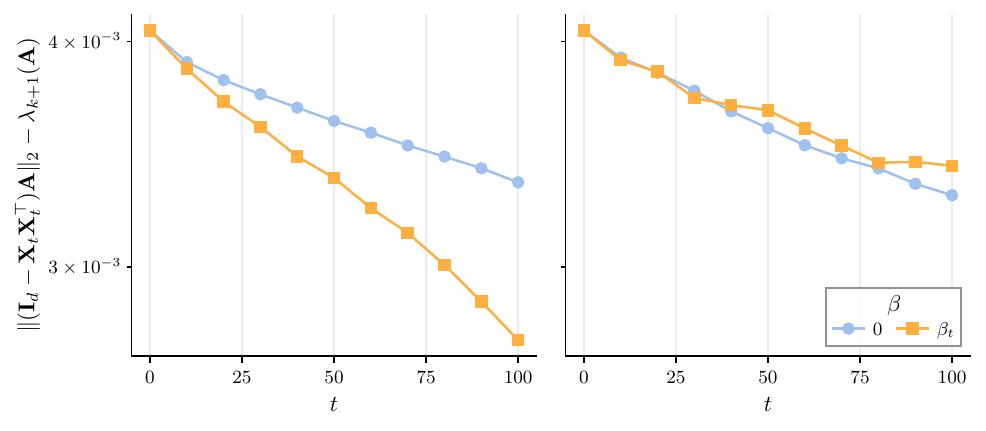}
  \caption{Experimental results for ANPM with Gaussian noise on the Amazon0302 dataset. \textbf{(Left)} $\sigma = 10^{-3}$. \textbf{(Right)} $\sigma = 2\times 10^{-3}$.}
  \label{fig:anpm_amazon}
\end{figure}

\subsection{Experimental Details for ADePM}\label{apdx:exp_adepm}

\subsubsection{Details for Fed-Heart-Disease}

Fed-Heart-Disease is a dataset from the FLamby collection \cite{flamby}, which consists of tabular data of dimension $d=13$ partitioned into $n=4$ hospitals. We perform decentralized PCA on rescaled local covariance matrices. Letting $\bPhi_i \in \R^{m_i\times d}$ be the local data matrix of agent $i$ for all $i=1,\dots,4$, we let 
\begin{align*}
  \bA_i := \frac{n}{m} \bPhi_i^\top \bPhi_i, \qquad \bA := \frac{1}{n}\sum_{i=1}^n \bA_i = \frac{1}{m} \sum_{i=1}^n \bPhi_i^\top \bPhi_i,
\end{align*}
where $m = \sum_{i=1}^n m_i = 486$ is the total number of samples. This way, $\bA$ is the empirical covariance matrix of the full dataset. We choose a ring graph topology for our communication network, with a gossip matrix $\bW$ defined as 
\begin{align*}
  \bW := \begin{bmatrix}
    1/2 & 1/4 & 0 & 1/4 \\
    1/4 & 1/2 & 1/4 & 0 \\
    0 & 1/4 & 1/2 & 1/4 \\
    1/4 & 0 & 1/4 & 1/2
  \end{bmatrix}.
\end{align*}

\subsubsection{Details for Ego-Facebook}

Ego-Facebook \cite{leskovec12} is a graph dataset consisting of $4039$ Facebook users, connected by edges representing friendships. We pick a subset of size $n=d=50$ of this graph by selecting the nodes labeled $0$ through $49$. This subset constitutes a connected graph $G$. The goal of spectral clustering is to find a low-dimensional embedding of the nodes of $G$ by using the bottom eigenvectors of its normalized Laplacian matrix, defined as
\begin{align*}
  \bL_{\mathrm{norm}} = \bI_n - \bD^{-1/2} \bS \bD^{-1/2},
\end{align*}
where $\bS$ is the adjacency matrix of $G$, such that $s_{i,j} = s_{j,i} = 1$ if there is an edge between nodes $i$ and $j$ and $0$ otherwise, and where $\bD$ is the diagonal degree matrix of $G$, such that $d_{i,i} = \sum_{j=1}^n s_{i,j}$. This matrix has eigenvalues in $[0,2]$, with $0$ being the smallest eigenvalue. Thus, finding the bottom $k$ eigenvectors of $\bL_{\mathrm{norm}}$ is equivalent to finding the top $k$ eigenvectors of the PSD matrix
\begin{align*}
  \bA := 2\bI_n - \bL_{\mathrm{norm}} = \bI_n + \bD^{-1/2} \bS \bD^{-1/2} \succeq \mathbf{0}.
\end{align*}
The agents $i \in \{1,\dots,n\}$ can construct matrices $\bA_i$ such that $\bA = n^{-1}\sum_{i=1}^n \bA_i$ by using only knowledge of their neighbors and their degrees. Letting $d_i$ be the degree of node $i$, we let $\bA_i$ be the matrix such that for all neighbor $j$ of $i$ in $G$,
\begin{align*}
  [\bA_i]_{i,i} = 1, \quad [\bA_i]_{i,j} = [\bA_i]_{j,i} = \frac{1}{2\sqrt{d_i d_j}},
\end{align*}
and all other entries of $\bA_i$ are $0$. We can then use the decentralized PCA algorithms on the family of matrices $\{\bA_i\}_{i=1}^n$ to find the bottom $k$ eigenvectors of $\bL_{\mathrm{norm}}$. 

To reduce the communication costs of our experiment, we suppose that the communication network $G'$ is given by the graph $G$, to which we add $\sim n\log(n)$ edges uniformly at random among the pairs of nodes that are not already connected in $G$, in order to increase the connectivity of the graph. We then define the gossip matrix $\bW$ with the Metropolis-Hastings weights on the communication graph $G'$: letting $\mathcal{N}_i$ be the set of neighbors of node $i$ in $G'$, and $d_i'$ its degree in $G'$, we let
\begin{align*}
  w_{i,j} := \begin{cases}
    \frac{1}{1+\max(d_i', d_j')} & \text{if } j \in \mathcal{N}_i, \\
    1 - \sum_{s \in \mathcal{N}_i} w_{i,s} \quad & \text{if } i=j, \\
    0 & \text{otherwise}.
  \end{cases}
\end{align*}

\subsubsection{Details for Digits}

The digits dataset \cite{alpaydin1998} is a dataset of $1797$ grayscale $8 \times 8$ images of handwritten digits. We consider two different ways of splitting the dataset between $n=10$ agents. In the homogeneous split, the data is distributed uniformly at random among the agents, so that the agents have similar local covariance matrices. In the heterogeneous split, the data is distributed between the agents according to their labels, so that each agent has data of a single digit and thus very different local covariance matrices. In both cases, we perform decentralized PCA on rescaled local covariance matrices, as in the case of Fed-Heart-Disease. The communication network is a ring graph of size $10$, with a similar gossip matrix $\bW$ as the one defined for Fed-Heart-Disease (i.e. a circulant matrix with coefficients $(1/2, 1/4, 0 , \dots, 0, 1/4)$).

\subsubsection{Adapting the Tuning Heuristic \eqref{eq:beta_heuristic} to ADePM}

We adapt the tuning heuristic \eqref{eq:beta_heuristic} for ANPM to ADePM as follows. Recall that the tuning heuristic for ANPM is, given an iterate $\bX_t$ with $k+1$ columns, 
\begin{align*}
  \beta_t = \min_{j=1,\dots,k+1} \left[ \bX_t^\top(\bA\bX_t + \bXi_t)\right]_{j,j}^2/4.
\end{align*}
For ADePM, each agent $i$ approximates $\beta_t$ locally as
\begin{align*}
  \beta_{i,t} = \min_{j=1,\dots,k+1} \left[ \bX_{i,t}^\top(\bY_{i,t-1} + \beta_{i,t-1}\bX_{i,t-2}\bR_{i,t-1}^{-1})\right]_{j,j}^2/4,
\end{align*}
since 
\begin{align*}
  \bY_{i,t-1} + \beta_{i,t-1}\bX_{i,t-2}\bR_{i,t-1}^{-1} = \bA\Xbar_{t-1} + (- \beta_{i,t-1}\Xbar_{t-2}\Rbar_{t-1}^{-1} + \beta_{i,t-1}\bX_{i,t-2}\bR_{i,t-1}^{-1}) + \bXi_t + (\bY_{i,t-1} - \Ybar_{t-1})
\end{align*}
is a good approximation of $\bA\bX_{i,t-1}$ for large gossip communications $L$. Here, we reused the notations from \cref{apdx:adepm_proof}. This method of choosing $\beta_{i,t}$ does not require any additional communication between the agents compared to vanilla ADePM.


\end{document}